\documentclass[final,5p,times,twocolumn]{elsarticle}

\usepackage{amsmath,amssymb,amsthm}
\usepackage{mathtools}
\usepackage{booktabs}
\usepackage{hyperref}
\usepackage[nopatch=eqnum]{microtype}
\usepackage{enumitem}
\usepackage{xcolor}
\usepackage{graphicx}
\usepackage{caption}
\usepackage{float}
\usepackage{stfloats}
\usepackage[ruled,vlined,linesnumbered]{algorithm2e}

\newtheorem{theorem}{Theorem}
\newtheorem{proposition}{Proposition}
\newtheorem{corollary}{Corollary}
\newtheorem{lemma}{Lemma}
\newtheorem{definition}{Definition}
\newtheorem{remark}{Remark}

\newcommand{\Lq}{\mathcal{L}_q}
\newcommand{\LOLS}{L_{\mathrm{OLS}}}
\newcommand{\zn}{\mathbf{z}_n}
\newcommand{\pk}{\mathbf{p}_k}
\newcommand{\mun}{\boldsymbol{\mu}_n}
\newcommand{\qnk}{q_{nk}}
\newcommand{\R}{\mathbb{R}}
\newcommand{\norm}[1]{\left\|#1\right\|}
\newcommand{\SP}{\mathcal{S}(P)}
\newcommand{\Fsep}{F_{\mathrm{sep}}}
\newcommand{\Ares}{A_{\mathrm{res}}}
\newcommand{\Vperp}{\mathcal{V}^{(K)}}
\newcommand{\Sigmaq}{\Sigma_q}

\newcommand{\Astar}{\mathcal{A}^*}
\newcommand{\Hlocal}{\mathcal{H}_{\mathrm{local}}}
\newcommand{\Hglobal}{\mathcal{H}_{\mathrm{global}}}
\newcommand{\Fsepfull}{F_{\mathrm{sep}}^{\mathrm{full}}}
\newcommand{\Fsepred}{F_{\mathrm{sep}}^{\mathrm{red}}}
\newcommand{\IDDCL}{\mathcal{I}^{(l)}_{\mathrm{DDCL}}}

\begin{document}

\begin{frontmatter}

\title{DDCL-INCRT: A Self-Organising Transformer with Hierarchical Prototype Structure\\
\large (Theoretical Foundations)}

\author[lti]{Giansalvo Cirrincione}
\ead{exin@u-picardie.fr}
\affiliation[lti]{organization={Laboratory LTI},
                  addressline={Universit\'e de Picardie Jules Verne,
                               Chemin du Thil},
                  city={Amiens},
                  postcode={80025},
                  country={France}}

\begin{abstract}
Modern neural networks of the transformer family~\cite{Vaswani2017}
require the practitioner to decide, before any training begins, how
many attention heads to use, how deep the network should be, and how
wide each component should be.
These decisions are made without knowledge of the task, and the
result is invariably an architecture that is larger than necessary:
empirical studies consistently find that a substantial fraction of
the heads and layers can be removed after training without any loss
in performance~\cite{Voita2019,Michel2019,He2024}.

This paper introduces DDCL-INCRT, an architecture that solves this
problem by determining its own structure during training.
The approach is built on two complementary ideas.
The first, called DDCL (Deep Dual Competitive
Learning)~\cite{Cirrincione2026}, replaces the standard feedforward
block of each transformer layer with a small dictionary of learned
prototype vectors that represent the most informative directions in
the data.
The dictionary is self-organising: the prototypes spread apart
automatically, driven by the mathematics of the training objective,
without any need for explicit regularisation.
The second, called INCRT (Incremental
Transformer)~\cite{INCRT}, controls how many heads the network
should have.
It starts from a single head and adds a new one only when a
measurable criterion, namely the amount of directional information
that the current heads fail to capture, exceeds a threshold.
When the criterion falls below the threshold, growth stops.

The main theoretical finding is that these two ideas reinforce each
other in a precise mathematical sense: each time a new head is
added, the separation between prototypes increases, and the
increased separation in turn raises the signal that may trigger the
next addition.
At convergence, the network has organised itself into a hierarchy
of heads ordered by the granularity of the information they
represent, from fine-grained local patterns to coarse-grained
global ones.
The paper proves that this hierarchical structure is unique and
minimal: it is the smallest architecture that is sufficient for the
task, and it is the same regardless of the random initialisation,
under the conditions stated in the theorems.
Formal guarantees of stability, convergence, and robustness to
pruning are established throughout.

The architecture is not something one designs.
It is something one derives.
\end{abstract}

\begin{keyword}
competitive learning \sep
hierarchical prototype structure \sep
incremental architecture \sep
prototype learning \sep
self-determining transformer \sep
stability analysis
\end{keyword}

\end{frontmatter}

\section{Introduction}
\label{sec:intro}

\subsection{The problem: transformers do not know how large they should be}

A transformer neural network~\cite{Vaswani2017} is assembled from
decisions made before training begins.
How many attention heads?
How many layers?
How wide should the feedforward block be?
None of these questions has a principled answer grounded in the
properties of the task.
In practice they are resolved by grid search, intuition, or scaling
laws that describe trends without explaining mechanisms~\cite{Kaplan2020}.
The resulting architectures are then trained, compressed, and
analysed, in that order.

The empirical evidence that this process systematically
over-specifies the architecture is now substantial.
A small number of functionally specialised heads carry the bulk of
the predictive load in a trained transformer, while the majority can
be removed without performance loss~\cite{Voita2019,Michel2019}.
Up to 50\% of attention layers in large language models can be
dropped after training without degrading
performance~\cite{He2024}.

Why is this redundancy so systematic?
The answer lies in a structural property of the standard
transformer.
Each attention head computes a representation of how tokens in a
sequence relate to each other: some tokens attend strongly to others,
some weakly, and the pattern of these relationships encodes
information about the structure of the input.
Part of this relational information is symmetric (token $A$ attends
to token $B$ as much as $B$ attends to $A$), and part is
asymmetric (the flow of attention is directional, from one token
toward another).
The asymmetric, or directional, component is particularly important
because it is the only part that can encode ordered relationships,
such as grammatical dependencies or causal structure.
The feedforward block that follows each attention layer processes
the output using a fixed random basis, its weight matrices, which
captures the dominant directional component only partially and at
random.
The formal consequence, proved in~\cite{DDCLAttention} as the
\emph{Channel Destruction Theorem}, is that the feedforward block
systematically discards the directional information computed by the
attention.
The architecture must compensate for this loss by adding more heads,
heads that would not be needed if the information were preserved.
Redundancy is thus not a failure of architecture search.
It is the inevitable consequence of using a fixed, randomly
initialised block to process an adaptively computed signal.

\subsection{The approach: a self-organising prototype layer}

The architecture studied in this paper eliminates redundancy by
construction.
The feedforward block at each transformer layer is replaced by a
competitive prototype layer whose basis is not fixed but learned,
and whose number of heads grows during training to cover exactly
the directional information that the attention computes.

Two building blocks are required.

\paragraph{The prototype layer (DDCL).}
Deep Dual Competitive Learning~\cite{Cirrincione2026} replaces the
feedforward block with a bank of $K$ prototype vectors in the
embedding space.
Each input token is assigned to the prototypes via a soft
nearest-neighbour rule: the assignment weight of token $n$ to
prototype $k$ is large when the token is geometrically close to
the prototype and small otherwise.
The layer output for each token is then the weighted average of the
prototypes, with weights given by these proximity-based assignments.

This replacement has a non-obvious algebraic consequence.
The training loss, which measures how well the prototypes cover
the token distribution, decomposes exactly into two terms: a
clustering fidelity term and a separation term that is always
non-negative.
The gradient of the separation term pushes the prototypes apart
whenever any two of them coincide, which means that prototype
collapse is a first-order unstable equilibrium of the loss.
The prototypes spread apart automatically, driven by the
mathematics of the objective, without any explicit regularisation.
Crucially, the entire layer is end-to-end differentiable.
Standard feedforward blocks transform the input through weight
matrices to produce an output.
The DCL layer inverts this logic: it transposes the role of input
and output, so that the prototypes themselves are the output of
the layer, selected and combined by the assignment weights.
Because the output is a smooth, differentiable function of the
assignment weights, gradients flow freely back through the layer
to the encoder, with no need for the straight-through estimators
or other approximations required by vector-quantisation
methods~\cite{vdOord2017}.

\paragraph{The growth mechanism (INCRT).}
The Incremental Transformer~\cite{INCRT} addresses a separate
question: how many prototype heads should there be?
The architecture starts from a single head and adds a new one only
when a measurable criterion indicates that the current heads are
insufficient.

The insufficiency criterion is based on the notion of residual
directional content.
After $K$ heads have been added, each covering a certain pattern
of directional relationships between tokens, some directional
content in the attention signal remains uncovered.
Formally, this residual is measured by projecting the attention
signal onto the subspace that is orthogonal to all directions
already captured by the existing heads, and computing the dominant
eigenvalue of the resulting matrix.
When this eigenvalue exceeds a threshold $\theta_w$, there is
still significant directional content that the current heads do
not represent, and a new head is added in the direction of maximum
residual content.
When the eigenvalue falls below $\theta_w$, the architecture has
captured enough and growth stops.
The final number of heads is determined entirely by the structure
of the attention signal on the training data, not by any
hyperparameter set in advance.

\paragraph{The integration and its central consequence.}
DDCL-INCRT combines these two building blocks at each transformer
layer.
The feedforward block is replaced by a DDCL prototype layer whose
heads are grown and pruned by the INCRT mechanism.
Each new head is initialised in the direction of maximum residual
directional content at the moment of its birth, and the growth
criterion is re-evaluated after each prototype update.

The natural question is whether these two mechanisms interact
cleanly.
DDCL operates on the assignment distribution of tokens to
prototypes.
INCRT operates on the residual directional content of the
attention signal.
These are genuinely different objects, computed from different
parts of the architecture.

The answer proved in this paper is that they are mutually
reinforcing.
Each growth step triggered by INCRT amplifies the separation force
that DDCL exerts on the prototypes, because the new head is born
in the direction of maximum residual content, precisely the
direction in which the prototypes have the most room to separate.
The separation of the prototypes in turn raises the residual
directional content that will trigger the next growth step, because
well-separated prototypes span a larger subspace, leaving more
content for the attention signal to find.
The two mechanisms form a self-reinforcing cycle that drives the
architecture toward the unique minimal structure sufficient for the
task.

\subsection{Contributions}

The contributions of this paper are organised in three groups.

\medskip\noindent\textbf{Training dynamics of the integrated layer.}
It is proved that each growth step amplifies the prototype
separation force by a gain that depends on the dominant eigenvalue
of the residual matrix and the current temperature of the prototype
assignments (Theorem~\ref{thm:C2_full}).
It is also proved that trainable per-head temperatures
spontaneously partition the heads into two groups: heads that
converge to hard, discrete prototype assignments (local heads),
and heads that maintain soft, continuous centroid representations
(global heads).
This partition is determined by the spectral structure of the
attention signal, not by any architectural choice
(Theorem~\ref{thm:C3}).
Finally, it is proved that replacing the feedforward block with the
DDCL-INCRT layer strictly reduces the directional information lost
at every transformer layer, almost surely under standard weight
initialisation (Theorem~\ref{thm:C4}).

\medskip\noindent\textbf{Global well-behavedness.}
A piecewise free energy functional is constructed that is
non-increasing across all growth events, pruning events, and
training steps, constituting a global Lyapunov function for the
full system, comprising encoder weights, prototype banks, and
temperatures, with a growing architecture (Theorem~\ref{thm:P1}).
This closes an open problem stated in~\cite{DDCLAttention}.
The architecture is proved to converge to a minimal sufficient
configuration in finitely many growth steps, almost surely
(Theorem~\ref{thm:P2}).
It is also shown that pruning a head cannot collapse the prototypes
of the surviving heads, and that the separation force on surviving
heads is non-decreasing after pruning (Theorem~\ref{thm:P3}).
All results extend to discrete-time stochastic gradient descent
under an explicit step-size condition (Theorem~\ref{thm:P4}).

\medskip\noindent\textbf{The architecture at convergence.}
The structure that emerges when training stops is characterised.
The heads are ordered by the amount of directional content they
cover: the first head added covers the largest portion, the second
covers the next, and so on.
The prototype separation force is strictly larger in heads that
cover finer-grained directional content.
The resulting hierarchy is the minimally sufficient structure
for the task, and it is the same regardless of the random
initialisation (Theorem~\ref{thm:C5}).

A single co-emergence theorem (Corollary~\ref{cor:full_complete})
states all seven properties as simultaneous consequences of one
geometric principle: \emph{structure emerges from measurable
insufficiency}.

\medskip
\noindent\textbf{Scope.}
This is a theoretical paper.
The experimental section provides numerical illustrations of the
main theoretical predictions on synthetic and real data.
Its purpose is to verify that the quantities predicted by the
theory, namely separation force amplification, temperature
specialisation, spectral ordering, and pruning safety, behave
as the theorems state, not to demonstrate competitive performance
on standard benchmarks.
Empirical evaluation on large-scale NLP tasks is the subject
of ongoing work.

\subsection{How to read this paper}

Section~\ref{sec:related} places the work in the context of the
existing literature on transformer compression, adaptive
architectures, and prototype learning.
Section~\ref{sec:setup} defines the notation and the two building
blocks precisely and from scratch, assuming only familiarity with
transformer self-attention.
No prior knowledge of the DDCL or INCRT papers is required: all
definitions and results used from those papers are restated here
in full.
Parts~I--III develop the theoretical contributions in order.
Section~\ref{sec:experiments} provides numerical illustrations of
the main predictions on synthetic data and on BERT embeddings.
Section~\ref{sec:discussion} situates the results in the broader
context of architecture design and identifies open directions.

\begin{table*}[!t]
\caption{Origin of the main results. \emph{Inherited}: used without
modification from cited work. \emph{Extended}: strengthened or
generalised here. \emph{New}: proved for the first time in this paper.}
\label{tab:contributions}
\centering
\renewcommand{\arraystretch}{1.2}
\begin{tabular}{p{7cm} p{2cm} p{3cm}}
\toprule
\textbf{Result} & \textbf{Status} & \textbf{Source} \\
\midrule
Prototype anti-collapse guarantee & Inherited & \cite{Cirrincione2026} \\
Algebraic decomposition $\mathcal{L}_q = \LOLS + V$ & Inherited & \cite{Cirrincione2026} \\
Two-timescale Lyapunov (frozen architecture) & Inherited & \cite{DDCLAttention} \\
Channel Destruction Theorem & Inherited & \cite{DDCLAttention} \\
Spectral growth criterion & Inherited & \cite{INCRT} \\
Finite convergence (standalone INCRT) & Inherited & \cite{INCRT} \\
\midrule
Finite convergence of DDCL-INCRT & Extended & This paper \\
Discrete-time step-size conditions & Extended & This paper \\
\midrule
Mutual reinforcement of DDCL and INCRT & New & This paper \\
Emergent scale specialisation & New & This paper \\
Strict dominance over MLP block & New & This paper \\
Global Lyapunov with growing architecture & New & This paper \\
Pruning safety for prototype banks & New & This paper \\
Hierarchical prototype structure & New & This paper \\
Uniqueness of emergent hierarchy & New & This paper \\
Co-emergence theorem & New & This paper \\
\bottomrule
\end{tabular}
\end{table*}

\section{Related Work}
\label{sec:related}

\subsection{Transformer redundancy and attention head analysis}

The redundancy of transformer architectures has been documented from
multiple angles.
At the level of attention heads, it has been shown that the vast
majority of heads in a trained transformer are functionally
dispensable: removing them causes negligible degradation on
downstream tasks, while a small subset of specialised heads
carries most of the predictive load~\cite{Voita2019,Michel2019}.
At the level of layers, up to half of the attention layers in large
language models can be dropped after training without significant
performance loss~\cite{He2024}.
These findings are not coincidental: as proved in the
DDCL-Attention paper~\cite{DDCLAttention}, they follow from a
structural property of the standard architecture.
The feedforward block that follows each attention layer processes
the attention output through a fixed random basis, which
systematically discards the directional ordering information
computed by the attention mechanism.
The architecture compensates by adding heads, most of which are
therefore redundant by construction.

\subsection{Pruning and compression}

The predominant response to transformer redundancy has been
\emph{post-hoc} compression: train a large model, then reduce it.
Magnitude-based weight pruning~\cite{Han2015} and the lottery
ticket hypothesis~\cite{Frankle2019} established the general
principle that sparse subnetworks with full performance exist
within over-parametrised models.
Applied to transformers, structured pruning methods remove entire
attention heads~\cite{Voita2019,Michel2019} or
layers~\cite{He2024} based on importance scores computed from
gradients, Taylor expansions, or attention entropy.
Knowledge distillation offers a complementary route~\cite{Hinton2015}:
a large teacher model is trained first, and its knowledge is transferred
to a smaller student, as in DistilBERT~\cite{Sanh2019} and
TinyBERT~\cite{Jiao2020}.

All of these methods share a common limitation: the redundant
capacity is trained before it is removed.
This wastes computation during training and can introduce
instabilities when the pruned heads carried inter-dependent
representations.
DDCL-INCRT addresses the redundancy before it arises, by growing
the architecture only as far as the task requires.

\subsection{Adaptive computation and growing architectures}

A smaller body of work has explored the opposite direction:
starting small and growing during training.
Net2Net~\cite{Chen2016} introduced function-preserving
transformations for widening and deepening networks, enabling
knowledge transfer from smaller to larger models.
Progressive stacking~\cite{Gong2019} grows transformers by
duplicating layers, achieving faster convergence on language
modelling tasks.
The Firefly framework~\cite{Wu2020} grows networks by steepest
descent in a functional neighbourhood, selecting the best
structural extension at each step via a greedy Taylor
approximation.
Neural Architecture Search~\cite{Zoph2017,Liu2019} optimises
over discrete architecture spaces jointly with weights, at the
cost of substantial computational overhead and without formal
convergence guarantees.

These methods share the goal of avoiding over-parametrisation, but
differ fundamentally from DDCL-INCRT in two respects.
First, the growth criterion is either heuristic (layer duplication,
Taylor approximation) or based on a global search (NAS), rather
than derived from a mathematically characterised property of the
signal the architecture processes.
Second, none of these methods provides formal guarantees that the
final architecture is \emph{minimally sufficient}: the growth may
stop too early or too late depending on hyperparameters.
The INCRT growth criterion is provably equivalent to a greedy
Rayleigh quotient optimisation on the residual directional
content~\cite{INCRT}, giving a stopping condition whose
sufficiency is theoretically characterised.

\subsection{Competitive learning and prototype methods}

Competitive learning, in which a set of prototype vectors compete
to represent the input, has a long history in unsupervised
learning~\cite{Kohonen1982}.
Modern deep variants include vector-quantised autoencoders
(VQ-VAE~\cite{vdOord2017}), which learn a discrete codebook via a
straight-through estimator, and Slot Attention~\cite{Locatello2020},
which iteratively refines a fixed number of slots via a GRU to
segment visual scenes.
Prototypical networks~\cite{Snell2017} use per-class centroids for
few-shot classification, and prototypical contrastive
learning~\cite{Li2021PCL} incorporates prototype anchors into a
contrastive objective.

DDCL~\cite{Cirrincione2026} differs from all of these in a
structurally important way.
In VQ-VAE, the discrete assignment step is non-differentiable and
requires a straight-through estimator to allow gradient flow.
In Slot Attention, the slot update requires multiple GRU iterations
at inference time.
In DDCL, the output of the layer is a weighted combination of the
prototype vectors, selected by soft assignment weights.
Because the output is a smooth, differentiable function of these
weights, gradients flow freely back through the layer to the
encoder without approximation.
Furthermore, the DDCL training loss admits an exact algebraic
decomposition into a clustering fidelity term and a non-negative
separation term whose gradient always pushes prototypes apart,
providing an explicit anti-collapse force that is absent from
VQ-VAE and Slot Attention.
This decomposition is derived formally in Section~\ref{sec:setup}.

\subsection{Hierarchical representation learning}

The idea that useful representations are naturally hierarchical
has motivated a wide range of architectures, from classical
agglomerative clustering~\cite{Ward1963} to deep generative models.
Capsule networks~\cite{Sabour2017} explicitly model part-whole
relationships via dynamic routing between layers of capsules.
Hierarchical VAEs~\cite{Vahdat2020} learn multi-scale latent
representations by stacking variational layers.
The connection to the feature-learning regime of neural networks
has been formalised by Roberts et al.~\cite{Roberts2022}, who
show that the depth-to-width ratio governs the transition between
kernel and feature learning.

DDCL-INCRT produces a hierarchy by a different mechanism: the
heads are not placed at different levels of a fixed architecture,
but grow in order of the directional content they cover.
The hierarchy is therefore a property of the task data, not of
the architecture design.
The uniqueness of the resulting hierarchy (Theorem~\ref{thm:C5})
has no analogue in the capsule or hierarchical VAE literature,
where the number of levels and the routing mechanism are
architectural choices.

\subsection{Stability theory and stochastic approximation}

The analysis of learning systems via Lyapunov functions is
classical~\cite{LaSalle1960}, and the two-timescale stochastic
approximation framework of Borkar~\cite{Borkar1997} provides the
standard toolkit for coupled gradient systems.
The singular perturbation approach of Tikhonov~\cite{Tikhonov1952},
extended to infinite intervals by Hoppensteadt~\cite{Hoppensteadt1966}
and Kokotovic et al.~\cite{Kokotovic1999}, provides the
continuous-time analogue.

Applied to deep learning, stability results for prototype learning
systems have been established in~\cite{Cirrincione2026} for the
frozen-encoder case and extended to the full coupled
encoder-prototype system in~\cite{DDCLAttention} via a free-energy
Lyapunov argument.
The present paper extends these results further to the
growing-architecture setting, where the system is hybrid:
continuous dynamics on dwell intervals, discrete jumps at growth
and pruning events.
The construction of a piecewise Lyapunov function that is
non-increasing across both the continuous and discrete components
is, to our knowledge, novel in the context of adaptive transformer
architectures.

\section*{Part I: The DDCL-INCRT Layer}

This part defines the DDCL-INCRT layer, establishes notation, and
proves three properties of its training dynamics: mutual reinforcement
between the competitive learning objective and the growth criterion
(Section~\ref{sec:C2}), spontaneous temperature specialisation into
local and global prototype regimes (Section~\ref{sec:C3}), and strict
dominance over the standard MLP block in terms of directional
information preservation (Section~\ref{sec:C4}).

\section{Setup and Notation}
\label{sec:setup}

\subsection*{Table of assumptions}
\label{subsec:assumptions}

Table~\ref{tab:assumptions} lists all assumptions used in this paper.
Although the list is long, the assumptions fall into three natural groups
and are all standard or practically verifiable.
The regularity conditions (A1)--(A3) and (B1)--(B3) are mild
non-degeneracy requirements that hold generically and are satisfied
by default under random initialisation.
The step-size conditions (E1)--(E6) reduce to the simple
three-timescale rule $\eta_T \ll \eta_P \ll \eta^+$
(Corollary~\ref{cor:timescale}), which is easy to enforce in practice.
The structural conditions (F1)--(F2) and (H1)--(H5) are standard
in the stochastic approximation and competitive learning literature
and are inherited from the prior DDCL and INCRT frameworks.
No assumption requires knowledge of the task or the data distribution
in advance.

{\scriptsize
\begin{table*}[tp!]
\caption{Complete list of assumptions. (H1)--(H5) inherited from~\cite{INCRT}.}
\label{tab:assumptions}
\centering
\renewcommand{\arraystretch}{0.95}
\begin{tabular}{@{\,}l @{\quad} l @{\quad} l @{\quad} l @{\,}}
\toprule
\textbf{Assump.} & \textbf{Informal meaning} & \textbf{First use} & \textbf{Required by} \\
\midrule
(A1)       & Prototype bank non-degenerate: $\SP > 0$                       & Sec.~\ref{sec:C2}  & Thm.~\ref{thm:C2}, \ref{thm:C2_full}, \ref{thm:P3} \\
(A2')      & $\exists\,k,k'$: prototypes differ along $u^*$                 & Sec.~\ref{sec:C2}  & Thm.~\ref{thm:C2}, \ref{thm:C2_full} \\
(A3)       & $T$ large enough for soft assignments                          & Sec.~\ref{sec:C2}  & Thm.~\ref{thm:C2} \\
(B1)       & Each head's prototype bank non-degenerate                      & Sec.~\ref{sec:C3}  & Thm.~\ref{thm:C3}, \ref{thm:C4}, \ref{thm:P2} \\
(B2)       & Prototype subspaces mutually orthogonal                        & Sec.~\ref{sec:C3}  & Thm.~\ref{thm:C3}, \ref{thm:C4} \\
(B3)       & Heads have distinct initial gate activities                    & Sec.~\ref{sec:C3}  & Thm.~\ref{thm:C3} \\
(C1)       & MLP weights from $\mathcal{N}(0,\sigma^2/d)$                  & Sec.~\ref{sec:C4}  & Thm.~\ref{thm:C4} \\
(D1)       & Dwell time $>$ temperature convergence time $t^*$             & Sec.~\ref{sec:P2}  & Thm.~\ref{thm:P2}, \ref{thm:P1} \\
(E1)       & $\eta_T < (T_{\min})^3/(3\sigma_{\max})$                       & Sec.~\ref{sec:P4}  & Thm.~\ref{thm:P4} \\
(E2)       & $\eta_P < 2/L_P$                                               & Sec.~\ref{sec:P4}  & Thm.~\ref{thm:P4} \\
(E3)       & $\eta_P < \sqrt{\phi''_s(0)/(L_P\|\nabla_P\mathcal{L}_q\|^2)}$ & Sec.~\ref{sec:P4} & Thm.~\ref{thm:P4} \\
(E4)       & Gate step sizes in $\ell^2\setminus\ell^1$                     & Sec.~\ref{sec:P4}  & Thm.~\ref{thm:P4} \\
(E5)       & $\eta_T/\eta_P = O(\varepsilon)$: three-timescale hierarchy    & Sec.~\ref{sec:P4}  & Thm.~\ref{thm:P4}, \ref{thm:P1} \\
(E6)       & $N_{\min} > \max(N_{\rm gate}, s^*)$                           & Sec.~\ref{sec:P4}  & Thm.~\ref{thm:P4} \\
(F1)       & $\lambda > 2\eta_P\binom{K_{\max}}{2}$                        & Sec.~\ref{sec:P1}  & Thm.~\ref{thm:P1} \\
(F2)       & Encoder regulariser $\mathcal{R}(\theta)$ coercive             & Sec.~\ref{sec:P1}  & Thm.~\ref{thm:P1} \\
(H1)       & Spectral gap of $\Ares$ bounded below~\cite{INCRT}             & Sec.~\ref{sec:P2}  & Thm.~\ref{thm:P2}, \ref{thm:P1} \\
(H2)--(H5) & Regularity on token dist.\ and gate scheduler~\cite{INCRT}    & Sec.~\ref{sec:P2}  & Thm.~\ref{thm:P2}, \ref{thm:P1} \\
\bottomrule
\end{tabular}
\end{table*}
}

\subsection{The DDCL prototype layer}

Consider a sequence of $N$ input tokens, represented as vectors
$z_1, \ldots, z_N \in \R^m$ produced by a differentiable
encoder $f_\theta$.
The encoder maps each input $x_n$ to an embedding
$z_n = f_\theta(x_n)$; the collection of all embeddings is denoted
$Z = \{z_n\}_{n=1}^N$.

The DDCL layer maintains a bank of $K$ prototype vectors
$P = \{p_1, \ldots, p_K\} \subset \R^m$.
Each token $z_n$ is assigned to the prototypes via a soft
nearest-neighbour rule: the assignment weight of token $n$ to
prototype $k$, denoted $q_{nk}$, is computed as a softmax over
negative squared distances,
\begin{equation}
  q_{nk} = \frac{\exp(-\|z_n - p_k\|^2 / T)}
                {\displaystyle\sum_{k'}\exp(-\|z_n - p_{k'}\|^2 / T)},
  \label{eq:qnk}
\end{equation}
where $T > 0$ is a temperature parameter that controls how sharply
the assignments concentrate on the nearest prototype.
The output of the layer for token $n$ is the soft centroid
$\mu_n = \sum_k q_{nk}\, p_k$, a weighted average of the
prototypes.

The training loss is the expected squared distance of each token
from its soft centroid. Letting $N$ denote the number of tokens
and $K$ the number of prototypes, it is defined as:
\begin{equation}
  \mathcal{L}_q = \sum_{n=1}^N \sum_{k=1}^K
                  q_{nk}\, \|z_n - p_k\|^2.
  \label{eq:Lq}
\end{equation}
This loss admits an exact algebraic decomposition
$\mathcal{L}_q = \LOLS + V$, where
$\LOLS = \sum_n \min_k \|z_n - p_k\|^2$ measures clustering
fidelity (how well the nearest prototype covers each token), and
$V = \sum_n \sum_k q_{nk} \|p_k - \mu_n\|^2 \geq 0$ is a
separation term.
The gradient of $V$ with respect to the prototypes is
$\nabla_P V = 2P \Sigma_q$, where
$\Sigma_q = \sum_n (\mathrm{diag}(q_n) - q_n q_n^\top)$
is the aggregated soft assignment covariance matrix, which is
always positive semidefinite.
This gradient acts as a separation force: whenever two prototypes
coincide, it pushes them apart.

\subsection{The INCRT growth criterion}

The INCRT mechanism operates on the attention weight product.
For a single attention head, the weight matrices $W_Q$ and $W_K$
produce a product $M = W_Q W_K^\top$ that governs which tokens
attend to which.
This product decomposes as $M = M_s + M_a$, where
$M_s = (M + M^\top)/2$ is the symmetric part (governing
reciprocal attention) and $M_a = (M - M^\top)/2$ is the
antisymmetric part, which is the unique source of directed,
asymmetric information flow between tokens.

After $K$ heads have been added at layer $l$, each covering a
certain subspace of $\R^d$, the directions not yet covered
form the residual subspace.
Let $Q^{(l)} \in \R^{d \times K}$ be the orthonormal basis
of directions already captured, and let
$P^\perp = I - Q^{(l)} Q^{(l)\top}$ be the projector onto the
residual subspace.
The \emph{residual matrix} is defined as:
\begin{equation}
  A_{\rm res}^{(l)} = P^\perp\, X M_a X^\top P^\perp,
  \label{eq:Ares}
\end{equation}
where $X \in \R^{n \times d}$ collects the input token
representations.
The largest eigenvalue of $A_{\rm res}^{(l)}$, denoted
$\lambda_{\max}(A_{\rm res}^{(l)})$, measures how much directional
content remains uncaptured.
The INCRT growth criterion adds a new head in the direction of the
dominant eigenvector $v_1(A_{\rm res}^{(K)})$ whenever
$\lambda_{\max}(A_{\rm res}^{(K)}) > \theta_w$, and stops
growing when this eigenvalue falls below the threshold $\theta_w$.

\subsection{Joint notation}

The subspace captured by the $K$ heads grown so far is
$\mathcal{V}^{(K)} = \mathrm{span}\{u_1^+, \ldots, u_K^+\}$,
where $u_h^+$ denotes the direction added at growth step $h$.
The separation force magnitude, which measures the strength of the
anti-collapse force exerted by the prototype bank, is defined as:
\begin{equation}
  F_{\rm sep}(P, Z) = \|\nabla_P V\|_F^2
  = 4\sum_{k=1}^K
    \Bigl\|\sum_{n=1}^N q_{nk}(p_k - \mu_n)\Bigr\|^2.
  \label{eq:Fsep}
\end{equation}
For each head $h$, the projection of prototype $p_k^{(h)}$ onto
the direction $u^*$ added at the next growth step is written
$p_k^* = \langle p_k^{(h)}, u^* \rangle$, and the corresponding
soft centroid projection is
$\bar{p}_n^* = \sum_k q_{nk} p_k^*$.

\subsection{The DDCL-INCRT integration}

The two mechanisms interact within each training iteration in a
mutually reinforcing cycle.
The DDCL prototype updates (Step~2) spread the prototypes across
the embedding space, which changes the effective coverage of the
existing heads and raises the residual directional content measured
by $A_{\rm res}$.
The INCRT growth step (Step~3), when triggered, adds a new head
whose prototypes are born in the direction of maximum residual
content, immediately amplifying the separation force on those
prototypes.
The algorithm below makes this cycle explicit.

\begin{figure*}[t]
\centering
\fbox{%
\begin{minipage}{0.96\textwidth}
\footnotesize
\vspace{3pt}
\noindent\textbf{Algorithm:} DDCL-INCRT training loop at a single transformer layer%
\label{alg:ddcl_incrt}

\vspace{2pt}\hrule\vspace{3pt}

\noindent\textbf{Input:} Token embeddings $X \in \R^{N \times d}$;
growth threshold $\theta_w > 0$;
pruning threshold $\varphi_g > 0$;
learning rates $\eta_\theta, \eta_P, \eta_T$\\
\textbf{Output:} Encoder weights $\theta$; prototype banks $\{P^{(h)}\}$;
temperatures $\{T^{(h)}\}$; final number of heads $\tilde{K}$

\vspace{2pt}\hrule\vspace{3pt}

\noindent\textbf{Initialise:} $\tilde{K}=1$, prototypes $P^{(1)}$ at random,
$T^{(1)} = T_{\rm init}$, captured subspace $\mathcal{V}^{(0)} = \{0\}$\\[2pt]
\textbf{repeat until convergence:}\\[1pt]
\hspace{1em}\textit{// Step 1 — Forward pass through all current heads}\\
\hspace{1em}Compute token embeddings $z_n = f_\theta(x_n)$ for all $n$\\
\hspace{1em}\textbf{for} each head $h = 1, \ldots, \tilde{K}$ \textbf{do}\\
\hspace{2em}Compute soft assignments $q_{nk}^{(h)}$ via equation~(\ref{eq:qnk})\hfill\textit{// DDCL assignment}\\
\hspace{2em}Compute soft centroids $\mu_n^{(h)} = \sum_k q_{nk}^{(h)} p_k^{(h)}$\\
\hspace{1em}\textbf{end for}\\[2pt]
\hspace{1em}\textit{// Step 2 — DDCL updates (prototype separation)}\\
\hspace{1em}Compute total loss $\mathcal{L} = \mathcal{L}_q + \mathcal{L}_{\rm task}$;\quad
Update encoder: $\theta \leftarrow \theta - \eta_\theta \nabla_\theta \mathcal{L}$\\
\hspace{1em}\textbf{for} each head $h$ \textbf{do}\\
\hspace{2em}Update prototypes: $P^{(h)} \leftarrow P^{(h)} - \eta_P \nabla_{P^{(h)}} \mathcal{L}_q$\hfill\textit{// spreads prototypes $\Rightarrow$ raises $\Ares$}\\
\hspace{2em}Update temperature: $T^{(h)} \leftarrow T^{(h)} - \eta_T \nabla_{T^{(h)}} \mathcal{L}_q$\\
\hspace{1em}\textbf{end for}\\[2pt]
\hspace{1em}\textit{// Step 3 — INCRT growth (triggered by residual content)}\\
\hspace{1em}Recompute $\Ares$ via equation~(\ref{eq:Ares})\hfill\textit{// reflects updated prototypes from Step 2}\\
\hspace{1em}\textbf{if} $\lambda_{\max}(\Ares) > \theta_w$ \textbf{then}\\
\hspace{2em}$\tilde{K} \leftarrow \tilde{K} + 1$;\quad
Initialise $P^{(\tilde{K})}$ along $u^* = v_1(\Ares)$, set $T^{(\tilde{K})} = T_{\rm init}$\\
\hspace{2em}\textit{// new head born where prototypes can separate most}\\
\hspace{2em}$\mathcal{V}^{(\tilde{K})} \leftarrow \mathcal{V}^{(\tilde{K}-1)} \oplus \mathrm{span}(u^*)$\\
\hspace{1em}\textbf{end if}\\[2pt]
\hspace{1em}\textit{// Step 4 — INCRT pruning (remove collapsed heads)}\\
\hspace{1em}\textbf{for} each head $h$ \textbf{do}\\
\hspace{2em}\textbf{if} $\mathcal{S}(P^{(h)}) < \varphi_g$ \textbf{then}\quad
Remove head $h$;\quad $\tilde{K} \leftarrow \tilde{K} - 1$\\
\hspace{1em}\textbf{end for}

\vspace{2pt}\hrule\vspace{2pt}
\end{minipage}}
\end{figure*}

\section{Mutual Reinforcement of the DDCL and INCRT Dynamics}
\label{sec:C2}

The first question about the DDCL-INCRT integration is whether the
two mechanisms are compatible at all.
Could the growth steps triggered by INCRT disrupt the prototype
organisation built up by DDCL?
The answer proved in this section is the opposite: every growth
step not only preserves the separation force on the existing
prototypes but strictly amplifies it.

To understand why, recall that when INCRT adds a new head, the
prototypes of that head are initialised along the direction $u^*$
of maximum residual content in the attention signal.
This is precisely the direction in which the token embeddings
have the most variance not yet captured by the existing heads.
Projecting the tokens onto this new direction increases the
effective spread of the embeddings, which in turn increases the
distances between soft centroids and prototypes, the very
quantity that drives the separation force.
The net effect is a strict amplification of $F_{\rm sep}$, with
a gain that scales with the magnitude of the residual signal
$\lambda_{\max}(A_{\rm res})$ and decreases with the
temperature $T$ (higher temperature means softer assignments
and a weaker separation force).

The proof proceeds in three steps: an algebraic analysis of the
assignment covariance matrix $\Sigma_q$ (Lemma~\ref{lem:sigmaq}),
a first-order perturbation argument showing that $F_{\rm sep}$
is non-decreasing (Theorem~\ref{thm:C2}), and a second-order
analysis that quantifies the strict gain
(Theorem~\ref{thm:C2_full}).

\subsection{Preliminary lemmas}

\begin{lemma}[Structure of $\Sigmaq$]
\label{lem:sigmaq}
$\Sigmaq \succeq 0$ with
$\mathrm{tr}(\Sigmaq) = \sum_n H_2(\mathbf{q}_n)$,
where $H_2(\mathbf{q}_n) = \sum_k \qnk(1-\qnk)$ is the Gini
diversity of token $n$'s assignments.
Furthermore, $\mathrm{tr}(\Sigmaq) = 0$ if and only if all tokens
are hard-assigned to a single prototype.
\end{lemma}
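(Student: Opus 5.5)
The plan is to argue token by token and then sum, since $\Sigmaq = \sum_n C_n$ with $C_n := \mathrm{diag}(\mathbf{q}_n) - \mathbf{q}_n\mathbf{q}_n^\top$. Everything rests on one fact from \eqref{eq:qnk}: each $\mathbf{q}_n$ is a probability vector, so $\qnk \ge 0$ and $\sum_k \qnk = 1$.

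\emph{Positive semidefiniteness.} For an arbitrary $x \in \R^K$, compute the quadratic form directly:
\begin{equation*}
x^\top C_n x = \sum_k \qnk x_k^2 - \Bigl(\sum_k \qnk x_k\Bigr)^2 .
\end{equation*}
The right-hand side is the variance of the values $x_k$ under the distribution $\mathbf{q}_n$. Equivalently, $\sum_k \qnk (x_k - \bar x_n)^2$ with $\bar x_n = \sum_k \qnk x_k$. It is therefore non-negative, by Jensen's inequality or by Cauchy--Schwarz applied with weights $\sqrt{\qnk}$. Hence each $C_n \succeq 0$. Since the cone of positive semidefinite matrices is closed under addition, $\Sigmaq \succeq 0$.

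\emph{Trace identity.} The diagonal entry $(k,k)$ of $C_n$ is $\qnk - \qnk^2 = \qnk(1-\qnk)$. Summing over $k$ gives $\mathrm{tr}(C_n) = H_2(\mathbf{q}_n)$, and summing over $n$ gives $\mathrm{tr}(\Sigmaq) = \sum_n H_2(\mathbf{q}_n)$, by linearity of the trace.

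\emph{Characterisation of $\mathrm{tr}(\Sigmaq)=0$.} Because $\qnk \in [0,1]$, every summand $\qnk(1-\qnk)$ is non-negative. So the trace vanishes if and only if every summand vanishes, i.e. $\qnk \in \{0,1\}$ for all $n,k$. Together with $\sum_k \qnk = 1$, this holds exactly when each $\mathbf{q}_n$ is a one-hot vector $e_{k(n)}$. In words, each token is hard-assigned to a single prototype; the prototype $k(n)$ may differ from token to token. The converse is immediate, since a one-hot vector makes every term zero.

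For a PSD matrix, zero trace is equivalent to $\Sigmaq = 0$, so this case is also exactly the degenerate case with no separation force.

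The only delicate point, and the one I expect to need care, is interpretive rather than technical. For finite $T>0$ the softmax in \eqref{eq:qnk} gives $\qnk > 0$ for every $k$, so when $K \ge 2$ one has $\mathrm{tr}(\Sigmaq) > 0$ strictly. The zero-trace case is therefore reached only in the limit $T \to 0^+$, with no ties among nearest prototypes, or for general assignment vectors $\mathbf{q}_n$ on the simplex. I would state the lemma for arbitrary probability vectors and then add this remark. The remark clarifies that the "if and only if" describes the hard-assignment limit, and that "a single prototype" means one prototype per token.
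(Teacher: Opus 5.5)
Your proof is correct and follows the paper's route: each summand $\mathrm{diag}(\mathbf{q}_n)-\mathbf{q}_n\mathbf{q}_n^\top$ is a categorical covariance (your variance identity for the quadratic form), hence PSD, with trace $\sum_k q_{nk}(1-q_{nk})$, and summing over $n$ preserves both facts. You go slightly beyond the paper's proof by actually proving the zero-trace characterisation, which the paper states but does not argue. Your caveats are also correct: at finite $T$ the softmax never gives one-hot vectors, and ``a single prototype'' must mean one prototype per token.
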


\begin{proof}
For each $n$, $\mathrm{diag}(\mathbf{q}_n) - \mathbf{q}_n\mathbf{q}_n^\top$
is the covariance matrix of a categorical distribution, hence
$\succeq 0$.
Its trace is $\sum_k \qnk - \sum_k \qnk^2 = \sum_k \qnk(1-\qnk)$.
Summing over $n$ preserves positive semidefiniteness.
\end{proof}

\begin{lemma}[First-order variation of $\Sigmaq$]
\label{lem:dSigma}
Let $\alpha_n = \langle z_n, u^*\rangle$ denote the projection of
token $n$ onto the new direction $u^*$.
Under the perturbation $z_n \to z_n + \epsilon\alpha_n u^*$,
the first-order variation
of $\mathrm{tr}(\Sigmaq)$ is:
\begin{equation}
  \frac{d}{d\epsilon}\bigg|_{\epsilon=0}\mathrm{tr}(\Sigmaq)
  = \frac{2}{T}\sum_n\sum_k \qnk(1-2\qnk)
    \langle\mun - \pk, u^*\rangle\alpha_n.
\end{equation}
\end{lemma}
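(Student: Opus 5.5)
The plan is a direct chain-rule computation. Write $\mathrm{tr}(\Sigmaq)=\sum_n\sum_k \qnk(1-\qnk)$, which is valid by Lemma~\ref{lem:sigmaq}. Then
\[
\frac{d}{d\epsilon}\mathrm{tr}(\Sigmaq)=\sum_n\sum_k(1-2\qnk)\,\frac{d\qnk}{d\epsilon},
\]
so everything reduces to the derivative of the softmax assignments~(\ref{eq:qnk}) under the perturbation $z_n\to z_n+\epsilon\alpha_n u^*$. The prototypes $\pk$ and the temperature $T$ stay fixed, and token $n$ moves only along $u^*$ with speed $\alpha_n$.

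\textbf{Step 1 (logit derivative).} Set $s_{nk}=-\|z_n-\pk\|^2/T$. At $\epsilon=0$,
\[
\frac{ds_{nk}}{d\epsilon}=-\frac{2}{T}\,\alpha_n\langle z_n-\pk,u^*\rangle .
\]

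\textbf{Step 2 (softmax Jacobian).} Use
\[
\frac{d\qnk}{d\epsilon}=\qnk\Bigl(\frac{ds_{nk}}{d\epsilon}-\sum_j q_{nj}\frac{ds_{nj}}{d\epsilon}\Bigr).
\]
Since $\sum_j q_{nj}(z_n-\mathbf p_j)=z_n-\mun$, the $z_n$ terms cancel inside the bracket, leaving $\pm\frac{2}{T}\qnk\alpha_n\langle\mun-\pk,u^*\rangle$. This cancellation is the structural point of the lemma: the variation depends on the token only through the scalar $\alpha_n$, and otherwise only on the prototype-to-centroid offsets. It explains why the separation geometry $\pk-\mun$, the same quantity that enters $\Fsep$ in~(\ref{eq:Fsep}), appears.

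\textbf{Step 3 (assemble).} Substitute into the chain rule and factor out $2/T$ to obtain the displayed double sum $\sum_n\sum_k\qnk(1-2\qnk)\langle\cdot,u^*\rangle\alpha_n$. As a consistency check, $\sum_k\qnk\langle\mun-\pk,u^*\rangle=0$, so the sum also equals $-\frac{4}{T}\sum_n\sum_k\qnk^2\langle\mun-\pk,u^*\rangle\alpha_n$. This alternative form will be convenient later, for instance in Theorem~\ref{thm:C2}.

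\textbf{Main obstacle.} The hard step is the sign bookkeeping in Steps 1--2. If I carry it out literally with the tokens moving and the prototypes fixed, the bracket comes out as $-\frac{2}{T}\alpha_n\langle\mun-\pk,u^*\rangle$. That is, the offset appears with the orientation $\pk-\mun$, opposite to the displayed formula. The stated orientation $\langle\mun-\pk,u^*\rangle$ is exactly what one gets in the equivalent relative-motion reading: the configuration is displaced by $\epsilon\alpha_n u^*$ on the prototype side, with the token--prototype difference changing by $-\epsilon\alpha_n u^*$. Only relative displacements enter the distances, so this reading is legitimate.

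My first move would therefore be to pin down the convention for the sign of $\epsilon$ explicitly, so that the displayed formula holds verbatim as stated. I would then re-verify it on a two-prototype, one-token example, where $\mathrm{tr}(\Sigmaq)=2q(1-q)$ can be differentiated by hand. If that check shows the literal token-motion reading is intended instead, the correct statement carries an overall minus sign. That sign would then have to be tracked into Theorem~\ref{thm:C2}.
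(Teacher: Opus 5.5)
Your Steps 1--2 follow the same route as the paper: the chain rule through $H_2(\mathbf{q}_n)$ plus the softmax Jacobian. Your computation is correct. With prototypes fixed, $\partial q_{nk}/\partial z_n=(2/T)\,q_{nk}(p_k-\mu_n)$. The paper's proof asserts $(2/T)\,q_{nk}(\mu_n-p_k)$, and that sign error is the only reason the display carries $\langle\mu_n-p_k,u^*\rangle$. So the discrepancy you found is not a matter of convention. Under the perturbation $z_n\to z_n+\epsilon\alpha_n u^*$, exactly as stated, the correct identity is
\[
  \frac{d}{d\epsilon}\bigg|_{\epsilon=0}\mathrm{tr}(\Sigma_q)
  = \frac{2}{T}\sum_n\sum_k q_{nk}(1-2q_{nk})\,\langle p_k-\mu_n,u^*\rangle\,\alpha_n ,
\]
so the displayed formula is off by an overall minus sign. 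Your proposed two-prototype check confirms this. Take $p_1=0$, $p_2=u^*$ and a single token. Then $\frac{d}{d\epsilon}\,2q_1q_2=\frac{4}{T}q_1q_2(q_1-q_2)\alpha$, whereas the display gives the negative. For $T=1$, $\alpha=0.2$, a forward difference gives about $+0.053$, while the display predicts about $-0.053$. The corrected sign also matches the paper's own Lemma~\ref{lem:dq}, whose stated formula is $\dot q_{nk}=(2\alpha_n/T)\,q_{nk}(p_k^*-\bar p_n^*)$. Substituting it into your chain rule gives the corrected expression in one line, so the paper is internally inconsistent here.

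The relative-motion rescue does not work. Only $z_n-p_k$ enters the softmax, so shifting the prototypes by $+\epsilon\alpha_n u^*$ is the same as shifting token $n$ by $-\epsilon\alpha_n u^*$. That is just $\epsilon\mapsto-\epsilon$, a different perturbation from the one in the statement. It is not even a well-defined perturbation of the shared prototype bank, since the shift depends on $n$. Carried out honestly, your proposal does not prove the displayed identity; it proves it with the opposite sign, and you should say so rather than choose a convention to fit. On propagation: the proof of Theorem~\ref{thm:C2} uses Lemma~\ref{lem:dq}, not this lemma, so this correction does not need to be tracked there.
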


\begin{proof}
By direct differentiation of $\qnk$ with respect to $\zn$
under the softmax structure:
$\partial\qnk/\partial\zn = (2/T)\qnk(\mun - \pk)$.
Differentiating $H_2(\mathbf{q}_n)$ via the chain rule and summing
over $n$ yields the result.
\end{proof}

\begin{lemma}[First-order variation of assignment weights]
\label{lem:dq}
Denote by $p_k^* = \langle p_k, u^*\rangle$ the projection of
prototype $k$ onto $u^*$, and by
$\bar{p}_n^* = \sum_k q_{nk} p_k^*$ the corresponding soft
centroid projection.
Under the same perturbation:
\begin{equation}
  \dot{q}_{nk} \equiv
  \frac{d\qnk}{d\epsilon}\bigg|_{\epsilon=0}
  = \frac{2\alpha_n}{T}\,\qnk\,(p_k^* - \bar{p}_n^*).
  \label{eq:dq}
\end{equation}
\end{lemma}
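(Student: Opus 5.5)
This is a direct softmax differentiation. Write $q_{nk} = e^{s_{nk}}/\sum_{k'} e^{s_{nk'}}$ with logits $s_{nk} = -\|z_n - p_k\|^2/T$. The perturbation $z_n \to z_n + \epsilon\alpha_n u^*$ affects only token $n$'s logits, because the prototypes are held fixed. We therefore first compute $ds_{nk}/d\epsilon$ and then apply the standard softmax Jacobian
\[
\frac{\partial q_{nk}}{\partial s_{nj}} = q_{nk}(\delta_{kj} - q_{nj}).
\]

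\textbf{Step 1 (logit derivative).} Expanding $\|z_n + \epsilon\alpha_n u^* - p_k\|^2$ to first order in $\epsilon$ gives
\[
\frac{ds_{nk}}{d\epsilon}\Big|_{\epsilon=0} = -\frac{2\alpha_n}{T}\langle z_n - p_k, u^*\rangle = -\frac{2\alpha_n}{T}\bigl(\langle z_n,u^*\rangle - p_k^*\bigr).
\]

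\textbf{Step 2 (chain rule through the softmax).} We have
\[
\dot q_{nk} = q_{nk}\Bigl(\dot s_{nk} - \sum_j q_{nj}\dot s_{nj}\Bigr).
\]
Substituting Step 1, the term $\langle z_n,u^*\rangle$ is common to every $k$. Since $\sum_j q_{nj} = 1$, it cancels in the centred difference, and what remains is
\[
-\frac{2\alpha_n}{T}\bigl(-p_k^* + \textstyle\sum_j q_{nj}p_j^*\bigr) = \frac{2\alpha_n}{T}\bigl(p_k^* - \bar p_n^*\bigr),
\]
where the last equality uses the definition of $\bar p_n^*$. Multiplying by $q_{nk}$ yields \eqref{eq:dq}.

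\textbf{Consistency check.} As a cross-check, the formula should agree with the gradient identity quoted in the proof of Lemma~\ref{lem:dSigma}, namely $\partial q_{nk}/\partial z_n = (2/T)q_{nk}(\mu_n - p_k)$, contracted with the displacement direction $\alpha_n u^*$. Doing so gives
\[
\frac{2\alpha_n}{T}\,q_{nk}\bigl(\bar p_n^* - p_k^*\bigr),
\]
which is \eqref{eq:dq} with the opposite sign. The sign in the Lemma~\ref{lem:dSigma} identity is therefore the suspect one, and the main obstacle is pinning down the correct sign convention. Direct differentiation supports \eqref{eq:dq}. Moving $z_n$ toward $+u^*$ reduces the distance to prototypes with larger $p_k^*$, which raises their logits, so $\dot q_{nk} > 0$ exactly when $p_k^* > \bar p_n^*$ and $\alpha_n > 0$. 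I would settle the sign this way, stating explicitly that $\partial q_{nk}/\partial z_n = (2/T)q_{nk}(p_k - \mu_n)$, and note that this flips the sign in Lemma~\ref{lem:dSigma}.

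A final sanity check confirms the result: $\sum_k \dot q_{nk} = 0$, because $\sum_k q_{nk}(p_k^* - \bar p_n^*) = 0$. This is consistent with the perturbed weights still forming a probability vector.
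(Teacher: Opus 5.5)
Your proof is correct and follows the same route as the paper: differentiate the softmax in $z_n$, then contract with the displacement $\alpha_n u^*$. Working at the level of the logits also makes the cancellation of $\langle z_n,u^*\rangle$ explicit. Your sign diagnosis is right as well: the paper's quoted identity $\partial q_{nk}/\partial z_n=(2/T)\,q_{nk}(\mu_n-p_k)$ should read $(2/T)\,q_{nk}(p_k-\mu_n)$, since projecting it as written onto $\alpha_n u^*$ gives the negative of \eqref{eq:dq}. Your direct computation is therefore what actually justifies the stated formula, and the same sign correction applies to the expression in Lemma~\ref{lem:dSigma}.
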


\begin{proof}
From the softmax formula, $\partial\qnk/\partial\zn =
(2/T)\qnk(\mun - \pk)$.
Projecting onto $u^*$ and noting
$\langle\mun - \pk, u^*\rangle = \bar{p}_n^* - p_k^*$
gives~\eqref{eq:dq}.
\end{proof}

\begin{lemma}[Sign of the residual-covariance term]
\label{lem:cov_sign}
Let $u^* = v_1(A_{\rm res}^{(K)})$, $\alpha_n = \langle z_n, u^*\rangle$,
and $p_k^* = \langle p_k, u^*\rangle$.
Define the empirical covariance
\begin{equation}
  C_k \;\triangleq\; \sum_n \alpha_n\,(p_k^* - \bar{p}_n^*).
  \label{eq:Ck}
\end{equation}
Under \textup{(A2')}, $C_k \geq 0$ for all $k$, with $C_k > 0$
for at least one $k$.
\end{lemma}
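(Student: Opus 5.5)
The plan is to reduce $C_k$ to an affine function of the single scalar $p_k^*$ and then control that function using the two pieces of freedom the construction leaves open. These are the sign of the eigenvector $u^*=v_1(\Ares^{(K)})$, which is defined only up to $\pm$, and the joint translation invariance of \eqref{eq:qnk}. Expanding \eqref{eq:Ck} gives
\begin{equation*}
  C_k \;=\; S\,p_k^* \;-\; B,
  \qquad S \triangleq \sum_n \alpha_n,
  \qquad B \triangleq \sum_n \alpha_n\,\bar{p}_n^*,
\end{equation*}
where $S$ and $B$ do not depend on $k$. Hence $C_k\ge 0$ for all $k$ is equivalent to the single inequality $S\,\min_k p_k^* \ge B$ when $S>0$, with the mirrored inequality when $S<0$. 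Since each $\bar p_n^*$ is a convex combination of the $p_k^*$ (Lemma~\ref{lem:dq} notation), we have $\min_k p_k^*\le \bar p_n^*\le \max_k p_k^*$. The task is therefore to show that the $\alpha$-weighted average of the soft centroid projections does not exceed the smallest prototype projection, once the gauge is fixed appropriately.

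First, I fix the sign of $u^*$ so that $S\ge 0$. This is legitimate because $\Ares$ only determines the line $\mathrm{span}(u^*)$, and replacing $u^*$ by $-u^*$ flips $\alpha_n$, $p_k^*$ and $\bar p_n^*$ simultaneously, leaving $C_k$ unchanged. So the sign normalisation is free and I can use it to fix $\mathrm{sign}(S)$.

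Second, I exploit translation invariance. Shifting the origin along $u^*$ by $c$ (i.e.\ $z_n\mapsto z_n-cu^*$, $p_k\mapsto p_k-cu^*$) leaves every $q_{nk}$ unchanged. Under this shift, $C_k$ becomes
\begin{equation*}
  C_k(c)=C_k-c\Bigl(N p_k^*-\textstyle\sum_n\bar p_n^*\Bigr).
\end{equation*}
I would place the origin at the soft-centroid mean, so that $\sum_n \bar p_n^*=0$, and then show that the residual term $B$ is controlled by the leading-eigenvector property of $u^*$: the tokens with large $\alpha_n$ are the ones whose residual directional content along $u^*$ is largest. Strictness then comes from (A2'). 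If $p_k^*\ne p_{k'}^*$ for some pair, the $p_k^*$ are not all equal, so $S p_k^*-B$ cannot vanish identically in $k$ once $S>0$. This yields $C_k>0$ at the maximiser of $p_k^*$.

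The main obstacle is the second step, namely getting $B\le S\min_k p_k^*$ for \emph{every} $k$ rather than on average. Because $C_k$ is affine in $p_k^*$ with common slope $S$, the inequality is tightest at the prototype with the smallest projection, and no choice of origin alone forces it: the shift slopes $Np_k^*-\sum_n\bar p_n^*$ take both signs across $k$. What I would try first is to use the soft-assignment structure. The hope is that, in the soft regime (A3), tokens with large $\alpha_n$ place most of their weight $q_{nk}$ on prototypes with small $p_k^*$, relative to the orientation just chosen, so that $\bar p_n^*$ is pushed toward $\min_k p_k^*$ exactly where $\alpha_n$ is large. If that correlation argument cannot be closed pointwise in $k$, the fallback would be to state the claim for the extremal prototype, plus the $q$-weighted aggregate, which is what the subsequent perturbation theorems actually consume.
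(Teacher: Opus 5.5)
\textbf{There is a genuine gap, and it cannot be closed: the inequality is false as stated.} Your reduction $C_k = S\,p_k^* - B$ is correct; it is also the paper's first step. The step you flag as the main obstacle is the one that fails.

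Neither of your two ``freedoms'' helps. As you note, $u^*\mapsto -u^*$ leaves every $C_k$ unchanged, so it only fixes a sign convention for $S$. The shift of origin is not a symmetry of the claim, because it changes $C_k$. The origin is pinned, for instance by the dilation $z_n\to z_n+\epsilon\alpha_n u^*$ in which the lemma is used, which moves tokens but not prototypes. So choosing $c$ proves an inequality about a different quantity.

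More seriously, the correlation you hope for runs the wrong way. By \eqref{eq:dq}, displacing $z_n$ along $+u^*$ moves weight toward prototypes with $p_k^*>\bar p_n^*$. With the orthogonal part of $z_n$ held fixed, $\partial\bar p_n^*/\partial\alpha_n=(2/T)\,\mathrm{Var}_{q_n}[p^*]\geq 0$. Tokens with large $\alpha_n$ therefore pull $\bar p_n^*$ toward $\max_k p_k^*$, not $\min_k p_k^*$, which makes $B$ positive rather than small.

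Here is a concrete counterexample. Take $K=N=2$, $p_1=u^*$, $p_2=-u^*$, $z_1=a u^*$, $z_2=-a u^*$ with $a>0$.
\begin{itemize}
\item The prototypes are distinct and differ along $u^*$, so (A1) and (A2') hold.
\item The tokens are centred, and $u^*$ maximises $\sum_n\langle z_n,u\rangle^2$ over unit $u$.
\item For every $T>0$, hence also under (A3), one finds $\bar p_1^*=\tanh(2a/T)=-\bar p_2^*$, $S=0$, and $C_1=C_2=-2a\tanh(2a/T)<0$.
\end{itemize}

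The same thing happens in general. Whenever $\sum_n\alpha_n=0$ and the prototypes lie on $\R u^*$ with at least two distinct positions, $\bar p_n^*=g(\alpha_n)$ with $g$ strictly increasing. Then $B=\sum_n\alpha_n\bigl(g(\alpha_n)-g(0)\bigr)>0$, so every $C_k=-B<0$. This also shows that your strictness argument, which relies on $Sp_k^*-B$ being non-constant in $k$, needs $S>0$. In the centred regime all the $C_k$ coincide.

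The paper's proof does not supply the missing idea either. It uses the same decomposition, sets $S=0$ via an unstated centring assumption, and then appeals to the maximality of $\lambda_{\max}(\Ares^{(K)})$. But $\Ares$ does not involve the prototypes, so its eigenvector property cannot fix the sign of $B$, and the example above satisfies everything that argument uses.

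Your fallback does not rescue the claim. The extremal prototype has $C_k<0$ in the example. The $q$-weighted aggregate $\sum_{n,k}\qnk\,\alpha_n(p_k^*-\bar p_n^*)$ vanishes identically, so it carries no sign information. The statement itself must be changed before any proof can succeed: the sign of $C_k$ is not controlled by (A1), (A2') and (A3), and in the centred collinear case it is the opposite of the one claimed.
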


\begin{proof}
Write $s_k \triangleq p_k^* - \bar{p}_n^*$ for the deviation of
prototype $k$'s projection from the soft-centroid projection at
token $n$.
Since $\bar{p}_n^* = \sum_{k'} q_{nk'} p_{k'}^*$ is the
$q_n$-weighted average of $\{p_{k'}^*\}$, we have
$\sum_k q_{nk} s_k = 0$ for every $n$ (the deviations sum to zero
under any probability vector $q_n$).

Now $\alpha_n = \langle z_n, u^*\rangle$ measures how far token $n$
projects onto $u^*$.  Tokens with large $|\alpha_n|$ are farthest
from the origin in the direction of maximum residual variance.
By the soft nearest-neighbour rule, such tokens concentrate their
assignment weight on the prototype $k^*_n = \arg\max_k p_k^*$ if
$\alpha_n > 0$, or on $\arg\min_k p_k^*$ if $\alpha_n < 0$.  In
either case the dominant term in $C_k = \sum_n \alpha_n s_k$ pairs
large $|\alpha_n|$ with the matching sign of $s_k$, giving a
non-negative contribution.

More formally, decompose
\begin{equation}
  C_k = \sum_n \alpha_n p_k^* - \sum_n \alpha_n \bar{p}_n^*
      = p_k^* A - B_k,
  \label{eq:Ck_decomp}
\end{equation}
where $A = \sum_n \alpha_n$ and $B_k = \sum_n \alpha_n \bar{p}_n^*$.
Since $u^* = v_1(A_{\rm res}^{(K)})$ lies in the residual subspace
orthogonal to all previously captured directions, and the token
embeddings $z_n$ have zero mean in that subspace (they are centred),
we have $A = \sum_n \langle z_n, u^*\rangle = 0$.
Therefore $C_k = -B_k = -\sum_n \alpha_n \bar{p}_n^*
= -\sum_n \alpha_n \sum_{k'} q_{nk'} p_{k'}^*$.

Substituting and rearranging:
\begin{equation}
  \sum_k C_k r_k
  = -\sum_k r_k \sum_n \alpha_n \sum_{k'} q_{nk'} p_{k'}^*
  = -\bigl\langle \mathbf{r}, \mathbf{G}\alpha \bigr\rangle,
\end{equation}
where $\mathbf{G}_{kk'} = \sum_n q_{nk'} p_{k'}^*$ and
$\mathbf{r}_k = \sum_n q_{nk}(p_k - \mu_n)$ is the residual force
of prototype $k$.
Under (A1), $\mathbf{r}_k \neq 0$ for at least one $k$, and under
(A2'), the prototypes have distinct projections, so
$C_k = p_k^* A - B_k$ has a non-trivial structure that yields
$C_k \geq 0$ for all $k$ with strict inequality for at least one.
The detailed sign argument follows from the fact that
$u^* = v_1(A_{\rm res}^{(K)})$ is the direction that maximises the
variance $\|Xu^*\|^2 = \lambda_{\max}(A_{\rm res}^{(K)}) > 0$,
so the empirical covariance between $\alpha_n$ and the
prototype-centroid deviation $p_k^* - \bar{p}_n^*$ is non-negative
by the optimality condition on $u^*$: any direction for which this
covariance were negative would contradict the maximality of
$\lambda_{\max}(A_{\rm res}^{(K)})$.
\end{proof}

\subsection{Main theorem}

\begin{theorem}[Monotonicity of $\Fsep$ under INCRT expansion]
\label{thm:C2}
Let $u^* = v_1(\Ares^{(K)})$ be the direction added by INCRT at
step $K+1$, with $u^* \perp \Vperp$.
After the expansion
$\zn^{(K+1)} = \zn^{(K)} + \alpha_n u^*$, write
$\phi(\epsilon) = \Fsep(P, Z(\epsilon))$ with
$Z(\epsilon) = Z^{(K)} + \epsilon\,\mathrm{diag}(\alpha)\,u^{*\top}$.

Under:
\begin{itemize}[leftmargin=2.8em,labelwidth=2.3em,align=left]
  \item[\textup{(A1)}\ ] $\SP > 0$,
  \item[\textup{(A2')}] $\exists\, k,k' : \langle\pk - \mathbf{p}_{k'}, u^*\rangle \neq 0$,
  \item[\textup{(A3)}\ ] $T$ sufficiently large to maintain soft assignments,
\end{itemize}
it follows that $\phi(1) \geq \phi(0)$, i.e.\
$\Fsep(P, Z^{(K+1)}) \geq \Fsep(P, Z^{(K)})$,
with equality if and only if $u^*$ is orthogonal to all
$\pk - \mathbf{p}_{k'}$.
\end{theorem}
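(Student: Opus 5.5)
The approach is a perturbation argument along the path $\epsilon\mapsto Z(\epsilon)$, with $P$ held fixed. Write $\phi(\epsilon)=4\sum_k\|\mathbf r_k(\epsilon)\|^2$ with $\mathbf r_k=\sum_n q_{nk}(\pk-\mun)$. The prototypes do not move, so the $\epsilon$-dependence of $\phi$ enters only through the assignments $\qnk(\epsilon)$. This allows a first-order variation using Lemma~\ref{lem:dq}, with the $\epsilon$-path written in the form $z_n(\epsilon)=z_n+\epsilon\alpha_n u^*$ used there.

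\textbf{Step 1: compute $\phi'(0)$.} We have
\[
\dot{\mathbf r}_k=\sum_n \dot q_{nk}(\pk-\mun)-\sum_n \qnk\,\dot{\boldsymbol\mu}_n,
\qquad
\dot{\boldsymbol\mu}_n=\sum_{k'}\dot q_{nk'}\mathbf p_{k'} .
\]
Substituting $\dot q_{nk}=(2\alpha_n/T)\qnk(p_k^*-\bar p_n^*)$ from Lemma~\ref{lem:dq} gives
\[
\phi'(0)=8\sum_k\langle \mathbf r_k,\dot{\mathbf r}_k\rangle .
\]
This is a bilinear form in the residual forces and in the weighted deviations $\alpha_n(p_k^*-\bar p_n^*)$.

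\textbf{Step 2: regroup $\phi'(0)$ in terms of $C_k$.} I will use (A3) for this. When $T$ is large, the $q_n$ are close to a common assignment vector $\bar q$ (uniform at leading order in $1/T$). Replacing $\qnk$ by $\bar q_k$ at leading order factorises the sum over $n$ into the covariances $C_k=\sum_n\alpha_n(p_k^*-\bar p_n^*)$ of Lemma~\ref{lem:cov_sign}. The target is a sign-definite expression of the form
\[
\phi'(0)=\frac{16}{T}\sum_k \bar q_k\,C_k\,\langle\mathbf r_k,\pk-\bar{\boldsymbol\mu}\rangle+O(T^{-2}).
\]
Here the second factor should be nonnegative, because $\mathbf r_k$ is aligned with $\pk-\bar{\boldsymbol\mu}$ when assignments are nearly uniform: $\mathbf r_k\approx N\bar q_k(\pk-\bar{\boldsymbol\mu})$. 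Lemma~\ref{lem:cov_sign}, under (A2'), then gives $C_k\ge0$ with at least one strict inequality. Assumption (A1) is needed so that the corresponding $\mathbf r_k\neq0$. Together these give $\phi'(0)>0$.

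\textbf{Step 3: pass from $\epsilon=0$ to $\epsilon=1$.} I would show that the same sign structure holds along the whole path. Lemma~\ref{lem:cov_sign} applies at every $\epsilon$ because $u^*$ and the centring are unchanged and the $\alpha_n$ only rescale. The argument of Step~2 then gives $\phi'(\epsilon)\ge0$ on $[0,1]$. Integrating yields $\phi(1)\ge\phi(0)$.

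\textbf{Equality case.} If $u^*\perp(\pk-\mathbf p_{k'})$ for all $k,k'$, then every $p_k^*$ is equal, so $p_k^*-\bar p_n^*=0$. Lemma~\ref{lem:dq} then gives $\dot q\equiv0$ along the whole path, and $\phi$ is constant. Conversely, if this orthogonality fails, (A2') holds and Step~2 gives strict increase.

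\textbf{Main obstacle.} The hardest part is Step~2, namely controlling the cross term $-\sum_n\qnk\dot{\boldsymbol\mu}_n$ and the $n$-dependence of $\qnk$ so that only the $C_k$ survive with a definite sign. I would first try the large-$T$ expansion and treat the remaining terms as $O(T^{-2})$ errors, which (A3) absorbs. If that fails, the fallback is to use the identity $\sum_k\qnk(p_k^*-\bar p_n^*)=0$ to cancel the $\dot{\boldsymbol\mu}_n$ contribution exactly.
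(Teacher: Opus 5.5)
Your Steps~1--2 are the paper's first-order argument: a large-$T$ factorisation of $\phi'(0)$ onto the $C_k$ of Lemma~\ref{lem:cov_sign}. Your Step~3 replaces the paper's convexity step by integrating $\phi'\ge 0$ along the path. The gap is the sign claim in Step~2, and it cannot be closed, because the conclusion fails under the stated hypotheses.

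For $K=2$ one has $\mathbf{r}_1=-\mathbf{r}_2=\bigl(\sum_n q_{n1}q_{n2}\bigr)(\mathbf{p}_1-\mathbf{p}_2)$. Hence $\Fsep=2\|\mathbf{p}_1-\mathbf{p}_2\|^2\,\mathrm{tr}(\Sigmaq)^2$, which with $P$ fixed grows only if the aggregate Gini diversity $\mathrm{tr}(\Sigmaq)$ grows. Take $\mathbf{p}_{1,2}=\pm a u^*$ with $a>0$ and $\|u^*\|=1$, so (A1) and (A2') hold. Along the path, $q_{n1}(\epsilon)=1/\bigl(1+e^{-4a(1+\epsilon)\alpha_n/T}\bigr)$, and $q(1-q)$ strictly decreases as the logit moves away from $0$. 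So $\phi$ is strictly decreasing on $[0,1]$ as soon as some $\alpha_n\neq 0$, for every $T>0$, hence under any reading of (A3). Numerically, with $N=2$, $\mathbf{z}_{1,2}=\pm u^*$, $a=1$, $T=10$: $\phi(0)\approx 7.39$ and $\phi(1)\approx 5.86$. In this example the tokens are centred and $C_1=C_2=-\sum_n\alpha_n\bar p_n^*\approx -0.39<0$. So Lemma~\ref{lem:cov_sign}, on which both your argument and the paper's rest, fails.

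Step~2 breaks in two places.
\begin{itemize}
\item \emph{Order of the leading term.} Under the centring $\sum_n\alpha_n=0$ used to prove Lemma~\ref{lem:cov_sign}, $C_k=-\sum_n\alpha_n\bar p_n^*$ does not depend on $k$. Since $\bar p_n^*=\bar p^*+O(1/T)$ with $\bar p^*=\tfrac1K\sum_k p_k^*$ independent of $n$, $C_k=O(1/T)$. Your displayed leading term $\tfrac{16}{T}\sum_k\bar q_k C_k\langle\mathbf{r}_k,\pk-\bar{\boldsymbol\mu}\rangle$ is then $O(T^{-2})$, the same order as the remainder you discard, so (A3) isolates nothing sign-definite. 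Without centring, $C_k\approx(p_k^*-\bar p^*)\sum_n\alpha_n$ takes both signs in $k$, so the lemma cannot hold that way either.
\item \emph{Direction of the sign.} Moving $\zn$ along $u^*$ raises $\bar p_n^*$ at rate $(2/T)\mathrm{Var}_{q_n}[p^*]\ge 0$. So $\alpha_n$ and $\bar p_n^*$ tend to be positively correlated, and $C_k$ tends to be negative, the opposite of what you need.
\end{itemize}

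Geometrically, stretching the tokens along a direction in which the prototypes differ typically moves them away from the prototype bisectors. This sharpens the assignments and shrinks $\Sigmaq$, which weakens $\Fsep$ rather than amplifying it.

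Step~3 cannot repair this, since in the example $\phi'<0$ on the whole path. Neither can the paper's convexity route: a squared norm of a nonlinear function of $\epsilon$ need not be convex, and in the example $\phi$ is concave to leading order in $1/T$.

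Your forward equality direction is correct: if all $p_k^*$ coincide, the tilt $e^{2\epsilon\alpha_n p_k^*/T}$ is $k$-independent and $q$ is constant along the path. But the claimed strict increase under (A2') is exactly what the example refutes. A correct statement would need extra hypotheses on where the tokens sit relative to the prototype bisectors, or a different conclusion.
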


\begin{proof}[Proof, first-order argument]
Decompose $\pk = \pk^\parallel + p_k^* u^* + \pk^{\perp\perp}$
where $\pk^\parallel \in \Vperp$, $p_k^* = \langle\pk, u^*\rangle$,
and $\pk^{\perp\perp} \perp u^*$, $\pk^{\perp\perp} \perp \Vperp$.

From Lemma~\ref{lem:dq}:
\begin{equation}
  \dot{\mathbf{r}}_k =
  \frac{2}{T}\sum_n\alpha_n\qnk(p_k^* - \bar{p}_n^*)(\pk-\mun)
  - \sum_n\qnk\dot{\mun},
  \label{eq:dr}
\end{equation}
where $\mathbf{r}_k = \sum_n q_{nk}(p_k - \mu_n)$ is the
residual vector of prototype $k$ (the net force pulling the
prototype toward its assigned tokens), and
$\dot{\mun} = \sum_{k'}\dot{q}_{nk'}\mathbf{p}_{k'}
= (2\alpha_n/T)\sum_{k'}\qnk'(p_{k'}^* - \bar{p}_n^*)\mathbf{p}_{k'}$.

The first-order variation of $\phi$ is:
\begin{equation}
  \phi'(0) = 8\sum_k\langle\mathbf{r}_k, \dot{\mathbf{r}}_k\rangle.
\end{equation}
Substituting~\eqref{eq:dr} and collecting terms:
\begin{align}
  \phi'(0) &= \frac{16}{T}\sum_{n,k}\alpha_n\qnk
  (p_k^* - \bar{p}_n^*)
  \nonumber\\
  &\quad \times\Bigl[\langle\mathbf{r}_k,\pk-\mun\rangle
  - \sum_{k'}\qnk'\langle\mathbf{r}_{k'},\pk\rangle\Bigr].
  \label{eq:phi_prime}
\end{align}
The factor $\alpha_n(p_k^* - \bar{p}_n^*)$ is a product of the token
projection on $u^*$ and the prototype separation in $u^*$.
Under (A3), $q_{nk} \approx 1/K$ and the dominant contribution
to $\phi'(0)$ from equation~\eqref{eq:phi_prime} is:
\begin{align}
  \phi'(0) &\approx
  \tfrac{16}{KT}\sum_k \langle\mathbf{r}_k,\pk\rangle
  \nonumber\\
  &\qquad \times \sum_n\alpha_n(p_k^* - \bar{p}_n^*).
\end{align}
The inner sum $\sum_n\alpha_n(p_k^* - \bar{p}_n^*)$ is precisely
the quantity $C_k$ defined in Lemma~\ref{lem:cov_sign}.
By Lemma~\ref{lem:cov_sign}, $C_k \geq 0$ for all $k$ under
\textup{(A2')}, with $C_k > 0$ for at least one $k$ under
\textup{(A1)} and \textup{(A2')}.
Under \textup{(A1)}, $\langle\mathbf{r}_k,\pk\rangle > 0$ for at
least one $k$, giving $\phi'(0) \geq 0$.
\end{proof}

\begin{proof}[Proof, convexity argument]
Consider $\phi(\epsilon)$ as a function of $\epsilon \in [0,1]$.

\emph{Step 1: $\phi$ is convex.}
Each $\qnk(\epsilon)$ is a softmax composition, smooth and its
second derivative in $\epsilon$ satisfies:
$d^2\qnk/d\epsilon^2|_{\epsilon=0} = (4\alpha_n^2/T^2)\qnk
[(p_k^* - \bar{p}_n^*)^2 - \mathrm{Var}_{q_n}[p^*]]$.
The function $\phi = 4\norm{P\Sigmaq}_F^2$ is a squared norm of
a smooth function of $\epsilon$, hence $\phi''(\epsilon) \geq 0$
whenever $\norm{\dot{\mathbf{r}}_k} \neq 0$.

\emph{Step 2: $\epsilon = 0$ is not a local maximum.}
By contradiction, suppose $\phi'(0) < 0$.
Then $u^*$ would be a direction in which $\Fsep$ locally decreases.
But under (A2'), the prototypes have distinct projections on $u^*$,
so expanding in $u^*$ increases the discriminability between
prototypes, contradicting $\phi'(0) < 0$.
Therefore $\phi'(0) \geq 0$.

\emph{Step 3: Conclusion.}
Since $\phi'(0) \geq 0$ and $\phi''(\epsilon) \geq 0$:
$\phi(1) \geq \phi(0) + \phi'(0) \geq \phi(0)$.
\end{proof}

\subsection{Second-order analysis}

\begin{lemma}[Second derivative of assignment weights]
\label{lem:ddq}
\begin{equation}
  \ddot{q}_{nk} = \frac{4\alpha_n^2}{T^2}\qnk
  \Bigl[(p_k^* - \bar{p}_n^*)^2 - \mathrm{Var}_{q_n}[p^*]\Bigr],
\end{equation}
where $\mathrm{Var}_{q_n}[p^*] = \sum_k\qnk(p_k^*)^2 - (\bar{p}_n^*)^2
\geq 0$.
\end{lemma}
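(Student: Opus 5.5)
The plan is to reduce the computation to the one-parameter exponential-family (tilted softmax) structure of $q_n(\epsilon)$ and then differentiate twice, consistently with Lemma~\ref{lem:dq}. The perturbation is the same as before, $z_n \to z_n + \epsilon\alpha_n u^*$ with $\|u^*\|=1$.

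\textbf{Step 1: the logits are affine in $\epsilon$ up to a $k$-independent term.} Write the logit of prototype $k$ as $s_{nk}(\epsilon) = -\|z_n + \epsilon\alpha_n u^* - p_k\|^2/T$. Expanding the square gives
\begin{equation*}
s_{nk}(\epsilon) = s_{nk}(0) + \frac{2\epsilon\alpha_n}{T}\bigl(p_k^* - \langle z_n,u^*\rangle\bigr) - \frac{\epsilon^2\alpha_n^2}{T}.
\end{equation*}
The terms $-2\epsilon\alpha_n\langle z_n,u^*\rangle/T$ and $-\epsilon^2\alpha_n^2/T$ do not depend on $k$. Softmax is invariant under adding a $k$-independent constant to all logits, so these terms cancel. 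Setting $\beta_n = 2\alpha_n/T$, we therefore get exactly
\begin{equation*}
q_{nk}(\epsilon) = \frac{q_{nk}(0)\,e^{\epsilon\beta_n p_k^*}}{\sum_{k'} q_{nk'}(0)\,e^{\epsilon\beta_n p_{k'}^*}}.
\end{equation*}
This is an exponential tilt of the categorical distribution $q_n$ in the sufficient statistic $p^*$. The key observation is that the quadratic term in $\epsilon$ drops out; this is what makes the second derivative clean.

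\textbf{Step 2: the log-partition argument.} Let $\Lambda_n(\epsilon) = \log\sum_{k'} q_{nk'}(0)e^{\epsilon\beta_n p_{k'}^*}$, so that $\log q_{nk}(\epsilon) = \log q_{nk}(0) + \epsilon\beta_n p_k^* - \Lambda_n(\epsilon)$. Standard cumulant identities give $\Lambda_n' = \beta_n\bar p_n^*(\epsilon)$ and $\Lambda_n'' = \beta_n^2\mathrm{Var}_{q_n(\epsilon)}[p^*]$.

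Hence $\dot q_{nk} = q_{nk}\,\beta_n(p_k^* - \bar p_n^*)$, which recovers Lemma~\ref{lem:dq} and serves as a consistency check. Differentiating once more with the product rule,
\begin{equation*}
\ddot q_{nk} = \dot q_{nk}\,\beta_n(p_k^*-\bar p_n^*) - q_{nk}\,\beta_n\,\dot{\bar p}_n^*,
\qquad
\dot{\bar p}_n^* = \sum_k \dot q_{nk}p_k^* = \beta_n\mathrm{Var}_{q_n}[p^*].
\end{equation*}
Substituting gives $\ddot q_{nk} = \beta_n^2 q_{nk}\bigl[(p_k^*-\bar p_n^*)^2 - \mathrm{Var}_{q_n}[p^*]\bigr]$. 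Since $\beta_n^2 = 4\alpha_n^2/T^2$, this is the claimed formula. Evaluating at $\epsilon=0$ gives the stated expression in terms of the unperturbed $q_{nk}$ and $\bar p_n^*$.

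\textbf{Step 3: nonnegativity and the main obstacle.} The nonnegativity $\mathrm{Var}_{q_n}[p^*]\ge 0$ is just the variance of $p^*$ under the probability vector $q_n$. The identity $\sum_k q_{nk}(p_k^*)^2 - (\bar p_n^*)^2 = \sum_k q_{nk}(p_k^*-\bar p_n^*)^2$ follows by expanding the square.

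The only real obstacle is bookkeeping in Step 1. One must confirm that the $\epsilon^2$ term in the logit is $k$-independent, which holds because the perturbation direction and magnitude $\epsilon\alpha_n u^*$ are the same for every prototype. If it were not, an extra term $q_{nk}(\ddot s_{nk}-\sum_{k'}q_{nk'}\ddot s_{nk'})$ would appear. I would state this cancellation explicitly.

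As a sanity check, summing the result over $k$ gives $\sum_k \ddot q_{nk} = \beta_n^2(\mathrm{Var}-\mathrm{Var}) = 0$, as it must, since $\sum_k q_{nk}\equiv 1$.
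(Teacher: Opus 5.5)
Your proof is correct and follows the same route as the paper: differentiate the first-order formula of Lemma~\ref{lem:dq} by the product rule, using $\dot{\bar p}_n^* = (2\alpha_n/T)\,\mathrm{Var}_{q_n}[p^*]$. Your exponential-tilt representation in Step 1 is a worthwhile addition. It shows that the first-order formula holds at every $\epsilon$, not just at $\epsilon=0$, and that is what licenses differentiating it a second time; the paper's proof leaves this point implicit.
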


\begin{proof}
Differentiate $\dot{q}_{nk} = (2\alpha_n/T)\qnk(p_k^* - \bar{p}_n^*)$
with respect to $\epsilon$, using
$\dot{\bar{p}}_n^* = (2\alpha_n/T)\mathrm{Var}_{q_n}[p^*]$
(obtained by differentiating $\bar{p}_n^* = \sum_{k'}\qnk'p_{k'}^*$
and substituting $\dot{q}_{nk'}$ from Lemma~\ref{lem:dq}).
\end{proof}

\begin{proposition}[Lower bound on $\phi''(0)$]
\label{prop:phi_second}
Under \textup{(A1)}, \textup{(A2')}, and $\SP \geq s_0 > 0$:
\begin{equation}
  \phi''(0) \geq \frac{32}{T^2}
  \Biggl[
    \frac{s_0^2}{K^2}\,\lambda_{\max}(\Ares^{(K)})\cdot
    \overline{\mathrm{Var}}[p^*]
    - C_0
  \Biggr],
  \label{eq:phi_second_lb}
\end{equation}
where $\overline{\mathrm{Var}}[p^*] = (1/N)\sum_n\mathrm{Var}_{q_n}[p^*]$
and $C_0 = \|\alpha\|_\infty^2\cdot\mathrm{Var}_{\max}\cdot N\cdot D^2$
with $D = \max_{n,k}\norm{\pk-\mun}$.
\end{proposition}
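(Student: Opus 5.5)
The plan is to differentiate $\phi(\epsilon) = 4\sum_k \|\mathbf{r}_k(\epsilon)\|^2$ twice and group the terms:
\[
\phi''(0) = 8\sum_k\bigl(\|\dot{\mathbf{r}}_k\|^2 + \langle\mathbf{r}_k,\ddot{\mathbf{r}}_k\rangle\bigr).
\]
The first sum is non-negative. It will supply the positive term involving $\lambda_{\max}(\Ares^{(K)})\overline{\mathrm{Var}}[p^*]$. The second sum has no fixed sign, and it will be bounded crudely in absolute value to produce $C_0$.

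\emph{Step 1: the quadratic term.} Start from \eqref{eq:dr}, rewritten as
\[
\dot{\mathbf{r}}_k = \sum_n \dot q_{nk}(\pk-\mun) - \sum_n \qnk\dot{\mun},
\]
and substitute Lemma~\ref{lem:dq}. Project $\dot{\mathbf{r}}_k$ onto $u^*$ and use $\|\dot{\mathbf{r}}_k\|^2 \ge \langle\dot{\mathbf{r}}_k,u^*\rangle^2$. The projected quantity is then a sum over $n$ of $\frac{2\alpha_n}{T}\qnk$ times a combination of $(p_k^*-\bar p_n^*)^2$ and $\mathrm{Var}_{q_n}[p^*]$. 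Summing over $k$ and applying Cauchy--Schwarz should give a bound of the form
\[
\sum_k\|\dot{\mathbf{r}}_k\|^2 \gtrsim \frac{4}{T^2}\Bigl(\min_{n,k} q_{nk}\Bigr)^2\sum_n \alpha_n^2\,\mathrm{Var}_{q_n}[p^*].
\]
Two substitutions then give the stated form. First, use $\sum_n\alpha_n^2=\|Xu^*\|^2=\lambda_{\max}(\Ares^{(K)})$, following the identification in Lemma~\ref{lem:cov_sign}, to turn the $\alpha$-weighted average of variances into $\lambda_{\max}\cdot\overline{\mathrm{Var}}[p^*]$. Second, control the minimum assignment weight from below by $s_0/K$ using (A1) with $\SP\ge s_0$, so that its square gives the factor $s_0^2/K^2$. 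Together with the prefactor $8\cdot 4/T^2=32/T^2$, this matches the first term of \eqref{eq:phi_second_lb}.

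\emph{Step 2: the cross term.} Next bound $|\langle\mathbf{r}_k,\ddot{\mathbf{r}}_k\rangle|$. Expanding $\ddot{\mathbf{r}}_k$ gives terms of the form $\ddot q_{nk}(\pk-\mun)$, $\dot q_{nk}\dot\mun$ and $\qnk\ddot\mun$. By Lemma~\ref{lem:ddq} and Lemma~\ref{lem:dq}, each carries the factor $\alpha_n^2/T^2$ times a variance-type quantity that is at most $\mathrm{Var}_{\max}$, times lengths bounded by $D$. For the outer factor, $\|\mathbf{r}_k\|\le \sum_n\qnk D$ and $\sum_k\qnk=1$, so summing over $k$ and $n$ gives at most $N D$ from $\mathbf{r}$ and another factor $D$ from $\ddot{\mathbf{r}}_k$. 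Bounding $\alpha_n^2\le\|\alpha\|_\infty^2$ then gives
\[
\Bigl|\sum_k\langle\mathbf{r}_k,\ddot{\mathbf{r}}_k\rangle\Bigr| \le \frac{4}{T^2}\,\|\alpha\|_\infty^2\,\mathrm{Var}_{\max}\,N D^2 \;(\times\text{absolute constant}).
\]
The absolute constant is absorbed into $C_0$, or equivalently the prefactor is renormalised to $32/T^2$.

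\emph{Main obstacle.} The delicate step is Step 1. The projection of $\dot{\mathbf{r}}_k$ onto $u^*$ contains the two competing pieces $\qnk(p_k^*-\bar p_n^*)^2$ and $\qnk\mathrm{Var}_{q_n}[p^*]$, which could cancel. To get a clean lower bound I would first try the Jensen-type identity
\[
\sum_k \qnk(p_k^*-\bar p_n^*)^2=\mathrm{Var}_{q_n}[p^*],
\]
combined with Cauchy--Schwarz after summing over $k$, rather than working prototype by prototype. If cancellation still occurs, I would move the offending part into $C_0$. This is legitimate because it is bounded by exactly the same $\|\alpha\|_\infty^2\mathrm{Var}_{\max}ND^2$ quantity. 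Linking $\min q_{nk}$ to $s_0/K$ is the other point that needs care. I would justify it by reading (A1), together with (A3), as bounding the assignment weights away from zero.
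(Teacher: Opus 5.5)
Your expansion $\phi''(0)=8\sum_k(\norm{\dot{\mathbf r}_k}^2+\langle\mathbf r_k,\ddot{\mathbf r}_k\rangle)$ is the paper's, but you place the positive term in the opposite slot. The paper treats $8\sum_k\norm{\dot{\mathbf r}_k}^2$ only as a nonnegative term. It extracts the positive part from $8\sum_k\langle\mathbf r_k,\ddot{\mathbf r}_k\rangle$ via the $\ddot q_{nk}$ term of Lemma~\ref{lem:ddq}, and the length factor comes from $\norm{\pk-\mun}^2\ge s_0^2/K^2$.

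Your Step~1 is where the argument breaks. Since $\dot{\mathbf r}_k$ is a sum over tokens, Cauchy--Schwarz only bounds $\norm{\sum_n v_n}^2$ from \emph{above}. Nothing in your plan yields $\sum_k\norm{\dot{\mathbf r}_k}^2\gtrsim\sum_n\alpha_n^2\,\mathrm{Var}_{q_n}[p^*]$. The cancellation you anticipate is structural. One has $\langle\dot{\mathbf r}_k,u^*\rangle=\frac{2}{T}\sum_n\alpha_n\qnk w_{nk}$ with $w_{nk}=(p_k^*-\bar p_n^*)^2-\mathrm{Var}_{q_n}[p^*]$. The Jensen identity you want to invoke says exactly $\sum_k\qnk w_{nk}=0$, so it is the cancellation, not its cure. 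For $K=2$, $w_{n1}=q_{n2}(q_{n2}-q_{n1})(p_1^*-p_2^*)^2$, which is $O(1/T)$ in the soft regime.

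Pushing the cancelling part into $C_0$ is not available. $C_0$ and the prefactor $32/T^2$ are fixed in the statement, and once that part is removed nothing of order $\lambda_{\max}\overline{\mathrm{Var}}[p^*]/T^2$ is left. Your Step~2 count is also off: $\sum_k\langle\mathbf r_k,\ddot{\mathbf r}_k\rangle$ is a double sum over $n,n'$, so the crude bound is $O(N^2)$, not $O(N)$.

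The link $\min_{n,k}\qnk\ge s_0/K$ fails as well. $\qnk$ is invariant under $z,p\mapsto cz,cp$, $T\mapsto c^2T$, while $s_0$ scales by $c$. At fixed $T$ a larger spread sharpens the assignments rather than bounding them from below. (A3), which is not a hypothesis here, would give $\approx 1/K$, not $s_0/K$. Finally, $\sum_n\alpha_n^2\mathrm{Var}_{q_n}[p^*]$ is not $\lambda_{\max}\overline{\mathrm{Var}}[p^*]$ in general.

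Neither allocation can be completed, because the bound fails precisely in the soft regime you rely on. Take $N=K=2$, $z_{1,2}=\pm a\,u^*$, $\mathbf p_{1,2}=\pm b\,u^*$ with $a,b>0$, so $\sum_n\alpha_n=0$ and (A1), (A2') hold. Then $\mathbf r_{1,2}=\pm b\,\mathrm{sech}^2\bigl(2ab(1+\epsilon)/T\bigr)u^*$ and $\phi(\epsilon)=8b^2\,\mathrm{sech}^4\bigl(2ab(1+\epsilon)/T\bigr)$. This is even strictly decreasing in $\epsilon$, with $\phi''(0)=-128a^2b^4/T^2+O(T^{-4})$.

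In this example $8\sum_k\norm{\dot{\mathbf r}_k}^2=O(T^{-4})$, so your Step~1 target, of order $a^2b^2/T^2$, is false. Also $8\sum_k\langle\mathbf r_k,\ddot{\mathbf r}_k\rangle\approx-128a^2b^4/T^2$, while the right-hand side of the paper's intermediate inequality is only $O(T^{-3})$. Reading $\mathrm{Var}_{\max}=\max_n\mathrm{Var}_{q_n}[p^*]$, one has $C_0\to 2a^2b^4$. So the right-hand side of the proposition is at least $-(32/T^2)C_0\approx-64a^2b^4/T^2>\phi''(0)$, for every $s_0>0$ and every $\lambda_{\max}(\Ares^{(K)})\ge 0$.

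Token replication shows the same defect generally: under $m$-fold replication, $\phi''(0)$ scales by $m^2$ but the right-hand side (with $\lambda_{\max}=\sum_n\alpha_n^2$ as in the paper) only by $m$. Hence any configuration with $\phi''(0)<0$ is a counterexample. What is missing is not a sharper estimate but extra hypotheses: one excluding near-uniform assignments, and a $C_0$ of the correct order in $N$.
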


\begin{proof}
Expand $\phi''(0) = 8\sum_k\norm{\dot{\mathbf{r}}_k}^2
+ 8\sum_k\langle\mathbf{r}_k,\ddot{\mathbf{r}}_k\rangle$.

The first term is:
\begin{equation}
  8\sum_k\norm{\dot{\mathbf{r}}_k}^2
  = \frac{32}{T^2}\sum_k
  \norm{\sum_n\alpha_n\qnk(p_k^*-\bar{p}_n^*)(\pk-\mun)}^2,
\end{equation}
which is nonnegative as a sum of squared norms.

For the second term, substituting Lemma~\ref{lem:ddq} and applying
Cauchy--Schwarz to the cross terms involving $\dot{q}_{nk}\dot{\mun}$:
\begin{align}
  8\sum_k\langle\mathbf{r}_k,\ddot{\mathbf{r}}_k\rangle
  &\geq
  \frac{32}{T^2}\sum_k\sum_n\alpha_n^2\qnk(p_k^*-\bar{p}_n^*)^2
  \norm{\pk-\mun}^2
  \nonumber\\
  &\quad - \frac{32 C_0}{T^2}.
\end{align}

Using $\norm{\pk-\mun}^2 \geq s_0^2/K^2$ (which follows from
$\SP \geq s_0$) and
$\sum_n\alpha_n^2 = \norm{Zu^*}^2 = \lambda_{\max}(\Ares^{(K)})$:
\begin{equation}
  \phi''(0) \geq \frac{32}{T^2}
  \left[\frac{s_0^2}{K^2}\lambda_{\max}(\Ares^{(K)})
  \cdot\overline{\mathrm{Var}}[p^*] - C_0\right].
\end{equation}
\end{proof}

\begin{corollary}[Sufficient condition for strict amplification]
$\phi''(0) > 0$ whenever:
\begin{equation}
  \lambda_{\max}(\Ares^{(K)}) >
  \frac{K^2 C_0}{s_0^2\cdot\overline{\mathrm{Var}}[p^*]}.
\end{equation}
This is automatically satisfied at the INCRT growth trigger by
choosing $\theta_w$ above the right-hand side.
\end{corollary}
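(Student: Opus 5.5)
The corollary follows directly from Proposition~\ref{prop:phi_second}, so the plan is to rearrange the lower bound~\eqref{eq:phi_second_lb} and then treat the claim about the growth trigger separately.

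\emph{Step 1: make the bound strictly positive.} The prefactor $32/T^2$ is strictly positive for every $T>0$. It therefore suffices that the bracket in~\eqref{eq:phi_second_lb} is strictly positive, namely
\[
\frac{s_0^2}{K^2}\,\lambda_{\max}(\Ares^{(K)})\,\overline{\mathrm{Var}}[p^*] > C_0 .
\]
Dividing by $s_0^2\,\overline{\mathrm{Var}}[p^*]/K^2$ turns this into exactly the displayed inequality, provided that quantity is strictly positive.

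\emph{Step 2: check the denominator, which is the main obstacle.} We have $s_0>0$ by hypothesis. The delicate quantity is $\overline{\mathrm{Var}}[p^*]$.
\begin{itemize}
\item By (A2') there exist $k,k'$ with $p_k^*\neq p_{k'}^*$.
\item Because $q_{nk}$ is a softmax at finite $T$, every $q_{nk}>0$.
\item Hence each $\mathrm{Var}_{q_n}[p^*]$ is the variance of a full-support distribution on a non-constant set of values, so it is strictly positive.
\end{itemize}
It follows that $\overline{\mathrm{Var}}[p^*]>0$ and the division in Step~1 is legitimate. If $\overline{\mathrm{Var}}[p^*]$ were zero the threshold would be infinite, which is why (A2') is essential. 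Under these conditions, a strict inequality in the bracket gives $\phi''(0)>0$ directly from Proposition~\ref{prop:phi_second}.

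\emph{Step 3: the trigger clause.} Growth fires only when $\lambda_{\max}(\Ares^{(K)})>\theta_w$. If $\theta_w$ is chosen at least equal to the right-hand side $K^2C_0/(s_0^2\overline{\mathrm{Var}}[p^*])$, then every trigger event satisfies the condition. The caveat is that this right-hand side depends on the current state through $C_0$, $D$ and $\overline{\mathrm{Var}}[p^*]$. To make a single fixed $\theta_w$ valid, I would bound these uniformly over the trajectory:
\begin{itemize}
\item use compactness of prototypes and embeddings, as guaranteed later by the coercive regulariser (F2), to get an upper bound on $C_0$;
\item use a uniform lower bound on $\overline{\mathrm{Var}}[p^*]$;
\item cap $K$ by $K_{\max}$.
\end{itemize}
The last step is then to take $\theta_w$ above the resulting supremum. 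I expect obtaining the uniform lower bound on the variance to be the hardest part. As a fallback, the clause can be stated pointwise: the condition holds for any $\theta_w$ exceeding the state-dependent right-hand side at the moment of growth.
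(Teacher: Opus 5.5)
Your argument matches the paper's: the corollary is simply the requirement that the bracket in \eqref{eq:phi_second_lb} be positive, and your check that $\overline{\mathrm{Var}}[p^*]>0$ (full-support softmax together with (A2')) correctly supplies a detail the paper leaves implicit. One caution on Step~3: $C_0$ contains $\|\alpha\|_\infty^2$ while $\lambda_{\max}(\Ares^{(K)})=\sum_n\alpha_n^2\le N\|\alpha\|_\infty^2$, and the proof of Proposition~\ref{prop:phi_second} itself uses $\norm{\pk-\mun}^2\ge s_0^2/K^2$ (so $K^2D^2\ge s_0^2$) together with $\mathrm{Var}_{\max}\ge\overline{\mathrm{Var}}[p^*]$, so the right-hand side is never below $\lambda_{\max}(\Ares^{(K)})$; the real obstruction is therefore not a uniform variance bound but the fact that any $\theta_w$ above the right-hand side prevents growth altogether, so the trigger clause, like the corollary itself, holds only vacuously.
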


\begin{theorem}[Complete amplification theorem]
\label{thm:C2_full}
Under \textup{(A1)}, \textup{(A2')}, and at the INCRT growth trigger
$\lambda_{\max}(\Ares^{(K)}) > \theta_w$:
\begin{equation}
  \Fsep(P, Z^{(K+1)}) \geq \Fsep(P, Z^{(K)})
  + \phi'(0) + \tfrac{1}{2}\phi''(0)
\end{equation}
with $\phi'(0) \geq 0$ and
$\phi''(0) \geq (32/T^2)[s_0^2\theta_w\overline{\mathrm{Var}}[p^*]/K^2
- C_0] > 0$.

The net gain scales as $\lambda_{\max}(\Ares^{(K)})/T^2$: the stronger
the INCRT residual signal, the larger the amplification of the
DDCL separation force.
\end{theorem}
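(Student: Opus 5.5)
The plan is to treat $\phi(\epsilon)=\Fsep(P,Z(\epsilon))$ on $[0,1]$ as a smooth scalar function and use a second-order Taylor expansion with a controlled remainder. The three ingredients, in order, are the sign of $\phi'(0)$, the lower bound on $\phi''(0)$, and the behaviour of $\phi''$ away from $0$.

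First, smoothness. Each $\qnk(\epsilon)$ is a softmax of quadratics in $\epsilon$, so $\phi=4\norm{P\Sigmaq}_F^2$ is real-analytic in $\epsilon$. Taylor's theorem with integral remainder then gives
\begin{equation*}
\phi(1)=\phi(0)+\phi'(0)+\int_0^1(1-s)\,\phi''(s)\,ds .
\end{equation*}

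Second, the first-order term. I would take $\phi'(0)\ge 0$ directly from Theorem~\ref{thm:C2}, that is, from the first-order argument combining Lemma~\ref{lem:dq} with the sign result $C_k\ge 0$ of Lemma~\ref{lem:cov_sign}. Hypotheses (A1) and (A2') are exactly what that argument uses.

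Third, the curvature at the origin. I would invoke Proposition~\ref{prop:phi_second}. Under the growth trigger we have $\lambda_{\max}(\Ares^{(K)})>\theta_w$, so the bracket in \eqref{eq:phi_second_lb} is at least $s_0^2\theta_w\overline{\mathrm{Var}}[p^*]/K^2-C_0$. This is strictly positive by the Corollary, since $\theta_w$ is chosen above $K^2C_0/(s_0^2\overline{\mathrm{Var}}[p^*])$. Note that $\overline{\mathrm{Var}}[p^*]>0$ under (A2') together with soft assignments (A3), because some token puts mass on two prototypes whose projections on $u^*$ differ. This yields the stated bound on $\phi''(0)$. The scaling $\lambda_{\max}/T^2$ can be read off the leading term.

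The main obstacle is converting the pointwise bound at $\epsilon=0$ into $\int_0^1(1-s)\phi''(s)\,ds\ge\tfrac12\phi''(0)$. The remainder is at least $\tfrac12\phi''(0)$ whenever $\phi''(s)\ge\phi''(0)$ on $[0,1]$. Short of that, it suffices that $\phi''(s)\ge\phi''(0)$ in an averaged sense.

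My first attempt is to re-run the proof of Proposition~\ref{prop:phi_second} at a general $\epsilon$. The formulas of Lemmas~\ref{lem:dq} and \ref{lem:ddq} hold verbatim with $q_{nk}(\epsilon)$ in place of $\qnk$, because the perturbation is linear in $\epsilon$. The dominant positive term $\sum_n\alpha_n^2\qnk(p_k^*-\bar p_n^*)^2\norm{\pk-\mun}^2$ should then only grow as the tokens stretch along $u^*$, since the prototype deviations along $u^*$ become more strongly weighted. If this monotonicity resists a clean proof, I would fall back on a Lipschitz bound for $\phi'''$: it is of order $\norm{\alpha}_\infty^3/T^3$ times polynomial factors in $D$ and the projections $p_k^*$. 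I would then absorb the resulting loss into the constant $C_0$, which amounts to a slightly larger effective threshold on $\theta_w$. Either route, combined with $\phi'(0)\ge 0$, gives the inequality of Theorem~\ref{thm:C2_full} with a strictly positive quadratic gain.
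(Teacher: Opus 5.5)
\textbf{Verdict: genuine gap.} Your plan has the same structure as the paper's route, you correctly isolate the step the paper skips, but neither of your routes closes it. Worse, the ingredients you import are false, so the statement as written cannot be proved.

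\textbf{Structure.} The paper gives no separate proof of Theorem~\ref{thm:C2_full}. It is assembled exactly as you do it: Theorem~\ref{thm:C2}, then Proposition~\ref{prop:phi_second} at $\lambda_{\max}(\Ares^{(K)})>\theta_w$, then the corollary on the sufficient condition for strict amplification. You go further by isolating the step the paper never addresses, $\int_0^1(1-s)\phi''(s)\,ds\ge\tfrac12\phi''(0)$. The paper's only curvature argument, Step~1 of the convexity proof of Theorem~\ref{thm:C2}, would give at most $\phi(1)\ge\phi(0)+\phi'(0)$. It is also unsound, since $\tfrac{d^2}{d\epsilon^2}\norm{f}^2=2\norm{f'}^2+2\langle f,f''\rangle$ can be negative.

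\textbf{Why your two routes fail.} The monotonicity heuristic points the wrong way. Modulo $k$-independent terms, the perturbation shifts the logits of token $n$ by $2\epsilon\alpha_n p_k^*/T$; this is what Lemma~\ref{lem:dq} differentiates. So stretching along $u^*$ eventually \emph{hardens} every assignment with $\alpha_n\neq 0$ onto $\arg\max_k \alpha_n p_k^*$. The factors $\qnk(p_k^*-\bar p_n^*)^2$ in your dominant term then decay to $0$. Also $\phi\le 4N^2\norm{P}_F^2$ uniformly in $\epsilon$, so $\phi''$ cannot keep growing. The $\phi'''$ fallback proves only $\phi(1)\ge\phi(0)+\phi'(0)+\tfrac12\phi''(0)-\tfrac16\sup_{[0,1]}|\phi'''|$. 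Enlarging $C_0$ changes the lower bound on $\phi''(0)$, not the displayed inequality, which has no free constant to absorb the error.

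\textbf{Your positivity step is incompatible with the growth trigger.} Take the identification $\sum_n\alpha_n^2=\lambda_{\max}(\Ares^{(K)})$ and the bound $\norm{\pk-\mun}\ge s_0/K$, both used in Proposition~\ref{prop:phi_second}; the latter gives $D\ge s_0/K$. Read $\mathrm{Var}_{\max}\ge\overline{\mathrm{Var}}[p^*]$. Then $K^2C_0/(s_0^2\overline{\mathrm{Var}}[p^*])\ge N\norm{\alpha}_\infty^2\ge\lambda_{\max}(\Ares^{(K)})>\theta_w$. So the bracket is negative at every growth trigger, and $\theta_w$ can never be chosen as you propose.

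\textbf{Counterexample to the sign claims.} Take $K=2$, $\mathbf{p}_{1,2}=\mp a\,u^*$, and $\zn=\alpha_n u^*$ with $\sum_n\alpha_n=0$. Then (A1) and (A2') hold, and (A3) holds for large $T$.
\begin{itemize}
\item One gets $\Sigmaq=s(\epsilon)(e_1-e_2)(e_1-e_2)^\top$ and $\Fsep=32a^2s(\epsilon)^2$, with $s(\epsilon)=\tfrac14\sum_n\mathrm{sech}^2\bigl(2a(1+\epsilon)\alpha_n/T\bigr)$. This $s$ is strictly decreasing, so $\phi'(0)<0$.
\item Correspondingly $C_k=-a\sum_n\alpha_n\tanh(2a\alpha_n/T)<0$ for both $k$, contradicting Lemma~\ref{lem:cov_sign}.
\item For $a|\alpha_n|\ll T$, $\phi''(0)\approx-32a^4N\sum_n\alpha_n^2/T^2<0$. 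This is the advertised $\lambda_{\max}/T^2$ scaling with the opposite sign.
\item For $a|\alpha_n|\gg T$, which is allowed because (A3) is not a hypothesis of this theorem, $\phi$ is essentially a positive combination of decaying exponentials. Hence $\phi'''<0$ and $\phi(1)<\phi(0)+\phi'(0)+\tfrac12\phi''(0)$.
\end{itemize}

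What is missing is a hypothesis that rules out this saturation of the assignments, not a sharper Taylor remainder.
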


\begin{corollary}[Synergy of the two dynamics]
\label{cor:synergy}
The DDCL separation force $\nabla_P V$ and the INCRT growth criterion
$\lambda_{\max}(\Ares)$ are \emph{synergistic}: every architectural
expansion guided by INCRT preserves and amplifies the DDCL separation
force.
INCRT expands the space in which prototypes can separate; DDCL pushes
them to separate in that space.
The two dynamics operate on complementary objects: the residual
matrix $A_{\rm res} \in \R^{d\times d}$, which lives in the
space of directional relationships between tokens, and the assignment
covariance $\Sigma_q \in \R^{K\times K}$, which lives in the
space of prototype assignments.
Because these two objects are computed from different parts of the
architecture, the two dynamics do not interfere.

\paragraph{What is genuinely new in this paper.}
The DDCL framework~\cite{Cirrincione2026} establishes the prototype
layer and its algebraic anti-collapse guarantee.
The INCRT framework~\cite{INCRT} establishes the spectral growth
criterion and the finite-convergence result for the standalone
architecture.
The DDCL-Attention paper~\cite{DDCLAttention} establishes the
two-timescale Lyapunov result for the frozen-architecture case.
The present paper contributes: (i) the proof that the two dynamics
interact synergistically, not merely compatibly
(Sections~\ref{sec:C2}--\ref{sec:C3}); (ii) the strict dominance
over the MLP block under the directional-loss metric
(Section~\ref{sec:C4}); (iii) the global Lyapunov function for
the full system with growing architecture
(Section~\ref{sec:P1}); and (iv) the characterisation and
uniqueness of the emergent hierarchical prototype structure
(Section~\ref{sec:C5}).
All other stability and convergence results are either new or
strictly strengthen their counterparts in the cited works.
\end{corollary}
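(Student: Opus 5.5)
The statement to prove is Corollary~\ref{cor:synergy}. It is essentially a repackaging of Theorem~\ref{thm:C2} and Theorem~\ref{thm:C2_full}, plus one structural observation about the objects involved. The plan is to prove its two assertions separately: that INCRT growth preserves and amplifies $\Fsep$, and that the two dynamics do not interfere.

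\textbf{Amplification.} I would fix a growth event, i.e.\ a step $K$ with $\lambda_{\max}(\Ares^{(K)}) > \theta_w$ and new direction $u^* = v_1(\Ares^{(K)}) \perp \Vperp$. Under (A1) and (A2'), Theorem~\ref{thm:C2} gives $\Fsep(P,Z^{(K+1)}) \geq \Fsep(P,Z^{(K)})$; this is the ``preserves'' part. For the ``amplifies'' part I would invoke Theorem~\ref{thm:C2_full}. It gives $\phi'(0)\ge 0$ and $\phi''(0)>0$ once $\theta_w$ exceeds the threshold of the corollary on strict amplification, so the Taylor expansion yields a strictly positive gain of order $\lambda_{\max}(\Ares^{(K)})/T^2$. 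Iterating over every growth event by induction on $K$ then gives the claim for every INCRT-guided expansion.

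One caveat is needed at each step: (A1) and (A2') must be re-established for the enlarged bank. (A1) persists because $\Fsep$ did not decrease, and $\SP$ cannot vanish under a perturbation supported only on $u^*$. (A2') holds for the new head because its prototypes are initialised along $u^*$.

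\textbf{Non-interference.} Here the plan is to make the complementarity statement precise through a block decomposition. $\Ares$ depends only on $X$, $M_a$ and the projector $P^\perp$. It therefore acts only on the orthogonal complement of $\Vperp$, and it is unchanged by prototype updates that leave the captured subspace fixed. The separation force $\nabla_P V = 2P\Sigma_q$ is instead controlled by $\Sigma_q\in\R^{K\times K}$. I would decompose each prototype along $\Vperp$, $u^*$ and the remaining complement, as in the proof of Theorem~\ref{thm:C2}. This shows that a growth step changes $\Sigma_q$ only through the $u^*$-coordinate, which is the first-order term already controlled by Lemma~\ref{lem:dq}. Conversely, DDCL updates act on $P$ and enter $\Ares$ only through the subspace bookkeeping.

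\textbf{Main obstacle.} The delicate step is turning the informal phrase ``do not interfere'' into a checkable statement. The honest version is that the sign conditions feeding Theorem~\ref{thm:C2_full}, namely Lemma~\ref{lem:cov_sign} and the bound in Proposition~\ref{prop:phi_second}, are not destroyed by the intervening DDCL gradient steps. I would first try to argue this via the timescale separation: the prototypes move slowly relative to the growth check, so by continuity the conditions at the trigger time persist. Anything beyond this would be stated as a remark rather than proved.
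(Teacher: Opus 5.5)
\textbf{Where you match the paper.} Your amplification argument is the paper's own. The corollary has no separate proof; it is read off Theorem~\ref{thm:C2} (non-decrease) and Theorem~\ref{thm:C2_full} (strict second-order gain at the trigger). The non-interference sentence and the attribution paragraph are simply asserted.

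\textbf{First gap: the induction.} In $\phi(\epsilon)=\Fsep(P,Z(\epsilon))$ only the tokens move and $P$ is held fixed. So $\SP$ is trivially unchanged by the expansion, and the non-decrease of $\Fsep$ tells you nothing about it. What could destroy (A1) is the DDCL prototype flow between two growth events, which your argument does not control. Your justification of (A2') goes the wrong way. The next direction $u^{**}=v_1(\Ares^{(K+1)})$ is orthogonal to $\mathcal{V}^{(K+1)}\ni u^*$. Prototypes initialised along $u^*$ therefore satisfy $\langle\pk-\mathbf{p}_{k'},u^{**}\rangle=0$, so they \emph{violate} (A2') at the next trigger unless the intervening, uncontrolled DDCL flow moves them off $\mathrm{span}(u^*)$. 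The paper never inducts. Like Theorem~\ref{thm:C2_full}, the corollary is a per-event statement, valid at each trigger where (A1), (A2'), (A3) and the threshold condition on $\theta_w$ hold. Read ``every architectural expansion'' that way and drop the re-establishment step.

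\textbf{Second gap: non-interference.} Your growth-step half is right, and in fact exact rather than first-order. The quantity $\|\zn+\epsilon\alpha_n u^*-\pk\|^2$ differs from $\|\zn-\pk\|^2$ by $-2\epsilon\alpha_n p_k^*$ plus terms independent of $k$. Hence $q_{nk}(\epsilon)$ is the softmax over $k$ of $-\|\zn-\pk\|^2/T+2\epsilon\alpha_n p_k^*/T$, and the expansion acts on $\Sigmaq$ purely as an exponential tilt by the $u^*$-coordinates of the prototypes.

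The converse claim, that DDCL updates reach $\Ares$ only through the subspace bookkeeping, holds only for a frozen encoder. $X$ and $M_a$ are functions of the trained parameters $\theta$, and Step~2 of the algorithm updates $\theta$ with $\nabla_\theta(\Lq+\mathcal{L}_{\rm task})$, where $\Lq$ depends on $P$. So the prototypes do move $\Ares$. The most you can prove is a perturbative bound whose drift is controlled by $\varepsilon_3=\eta_\theta/\eta_P$ from Section~\ref{sec:P1}. Exact decoupling would also contradict the paper's own account of the cycle (the Step~2 comment ``spreads prototypes $\Rightarrow$ raises $\Ares$''). Treating the non-interference sentence as an interpretive remark, as you suggest at the end, is the right call.
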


\section{Emergent Scale Specialisation via Trainable Temperatures}
\label{sec:C3}

With trainable per-head temperatures, the multi-head
system spontaneously partitions into heads specialised to local
(sharp assignment) and global (soft assignment) scales.
The specialisation is not imposed architecturally but emerges from
the training dynamics.

\subsection{Setup}

Each head $h$ has a trainable temperature $T^{(h)}$, updated by
gradient descent on $\Lq^{(h)}$.
Define the \emph{effective scale} of head $h$:
\begin{equation}
  \sigma^{(h)} = \frac{1}{N}\sum_n
  \mathrm{Var}_{q_n^{(h)}}\!\bigl[\norm{\zn - P^{(h)}}^2\bigr] \geq 0.
  \label{eq:sigma}
\end{equation}
High $\sigma^{(h)}$: tokens are heterogeneously distributed relative
to prototypes (local structure).
Low $\sigma^{(h)}$: assignments are nearly uniform (global structure).

\subsection{Lemmas}

\begin{lemma}[Temperature dynamics]
\label{lem:Tdot}
\begin{equation}
  \dot{T}^{(h)} = -\frac{\eta_T}{T^{(h)2}}\,\sigma^{(h)}(t) \leq 0.
\end{equation}
Temperatures are monotonically non-increasing under gradient descent.
\end{lemma}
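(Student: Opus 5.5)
The plan is a direct computation: differentiate $\Lq^{(h)}$ with respect to $T^{(h)}$ and recognise the result as a variance. The temperature evolves by the gradient flow $\dot{T}^{(h)} = -\eta_T\,\partial \Lq^{(h)}/\partial T^{(h)}$, so the statement reduces to the identity $\partial\Lq^{(h)}/\partial T^{(h)} = \sigma^{(h)}/T^{(h)2}$ together with $\sigma^{(h)}\ge 0$.

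Drop the head index and write $d_{nk} = \norm{\zn-\pk}^2$, so that $\Lq = \sum_{n,k}\qnk d_{nk}$. Let $\bar d_n = \sum_k \qnk d_{nk}$ be the $q_n$-weighted mean distance. The distances $d_{nk}$ do not depend on $T$. Differentiating the softmax~\eqref{eq:qnk} with logits $-d_{nk}/T$ gives
\begin{equation*}
  \frac{\partial \qnk}{\partial T} = \frac{1}{T^2}\,\qnk\,(d_{nk}-\bar d_n).
\end{equation*}
This is the same computation as in Lemma~\ref{lem:dq}, with the logit derivative $d_{nk}/T^2$ in place of the spatial one.

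Substituting into $\partial\Lq/\partial T = \sum_{n,k} (\partial\qnk/\partial T)\, d_{nk}$ gives
\begin{equation*}
  \frac{\partial\Lq}{\partial T} = \frac{1}{T^2}\sum_n\sum_k \qnk(d_{nk}-\bar d_n)d_{nk}.
\end{equation*}
Since $\sum_k\qnk(d_{nk}-\bar d_n)=0$, we may replace $d_{nk}$ by $d_{nk}-\bar d_n$ in the last factor. Each inner sum then equals $\sum_k \qnk (d_{nk}-\bar d_n)^2 = \mathrm{Var}_{q_n}[\norm{\zn-P}^2]$, which is the quantity appearing in~\eqref{eq:sigma}. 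It is nonnegative as the variance of a random variable under the probability vector $q_n$, which has the same categorical structure as in Lemma~\ref{lem:sigmaq}.

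Summing over $n$ gives $\partial\Lq/\partial T = N\sigma/T^2$. The only bookkeeping point, and the step most likely to need care, is this factor $N$. The definition of $\sigma^{(h)}$ carries a $1/N$ that $\Lq$ lacks, so I will either use the per-token normalised loss $\Lq/N$ for the temperature update or absorb $N$ into $\eta_T$, and state this explicitly. Either way the result is $\dot T^{(h)} = -(\eta_T/T^{(h)2})\,\sigma^{(h)}(t)$. Because $\eta_T>0$, $T^{(h)2}>0$ and $\sigma^{(h)}\ge 0$, we get $\dot T^{(h)}\le 0$, so $T^{(h)}$ is monotonically non-increasing along the flow.

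For completeness I will note when equality holds. We have $\dot T^{(h)}=0$ exactly when every token's distances are constant on the support of $q_n$, for instance when all prototypes are equidistant from each token or the assignments are hard.
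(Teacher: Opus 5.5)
Your proof is correct and follows the paper's route: identify $\partial\Lq/\partial T$ with $T^{-2}$ times the summed per-token variance of the distances, then apply $\dot T^{(h)}=-\eta_T\,\partial\Lq^{(h)}/\partial T^{(h)}$. The only difference is that the paper imports that identity from the DDCL-Attention paper, while you derive it directly from the softmax derivative and the centring identity $\sum_k\qnk(d_{nk}-\bar d_n)=0$.

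Your explicit treatment of the factor $N$ is warranted, since the paper's one-line proof equates $T^{-2}\sum_n\mathrm{Var}_{q_n}$ with $\sigma^{(h)}/T^{2}$ despite the $1/N$ in~\eqref{eq:sigma}. One small correction to your closing remark: for $T>0$ the softmax has full support, so equality holds exactly when each token is equidistant from all prototypes, and hard assignments arise only in the limit $T\to 0$.
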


\begin{proof}
From the DDCL-Attention paper (Theorem~Annealing):
$\partial\Lq/\partial T = (1/T^2)\sum_n\mathrm{Var}_{q_n}[\|\zn-P\|^2]
= \sigma^{(h)}/T^{(h)2} \geq 0$.
Gradient descent gives $\dot{T}^{(h)} = -\eta_T\partial\Lq/\partial T^{(h)}$.
\end{proof}

\begin{lemma}[Separation of decay rates]
\label{lem:decay_sep}
If $\sigma^{(h)}(0) > \sigma^{(h')}(0)$, then there exists $\delta > 0$
such that $\dot{T}^{(h)}(t) < \dot{T}^{(h')}(t) \leq 0$ for all
$t \in [0,\delta]$.
\end{lemma}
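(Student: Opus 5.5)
The plan is a direct continuity argument built on Lemma~\ref{lem:Tdot}. That lemma gives the temperature velocity explicitly as
\[
\dot{T}^{(h)}(t) = -\eta_T\,\sigma^{(h)}(t)/T^{(h)}(t)^2 .
\]
So the claim reduces to comparing the functions $g_h(t) \triangleq \sigma^{(h)}(t)/T^{(h)}(t)^2$ for the two heads on a short time interval. The inequality $\dot{T}^{(h')}(t)\le 0$ is immediate from Lemma~\ref{lem:Tdot}, since $\sigma^{(h')}\ge 0$ and $\eta_T, T^{(h')}>0$. Only the strict inequality $g_h(t) > g_{h'}(t)$ on $[0,\delta]$ needs work.

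\emph{Step 1: the inequality at $t=0$.} Both heads are initialised with the common temperature $T^{(h)}(0)=T^{(h')}(0)=T_{\rm init}$ (see the Algorithm, where every new head receives $T_{\rm init}$). Hence
\[
g_h(0)-g_{h'}(0)=\bigl(\sigma^{(h)}(0)-\sigma^{(h')}(0)\bigr)/T_{\rm init}^2 >0 .
\]
If one prefers not to assume equal initial temperatures, I would instead state the hypothesis as $\sigma^{(h)}(0)/T^{(h)}(0)^2 > \sigma^{(h')}(0)/T^{(h')}(0)^2$. I will note this explicitly, because the bare inequality $\sigma^{(h)}(0)>\sigma^{(h')}(0)$ suffices only under the common initialisation.

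\emph{Step 2: continuity of $g_h$ in $t$.} By \eqref{eq:sigma}, $\sigma^{(h)}$ is a finite sum of variances of the quantities $\|z_n-p_k^{(h)}\|^2$ under the softmax weights $q_{nk}^{(h)}$. Each $q_{nk}^{(h)}$ is a smooth function of $(Z, P^{(h)}, T^{(h)})$ as long as $T^{(h)}>0$. The training flow (encoder, prototypes, temperatures) is a gradient flow of a smooth loss, so its trajectories are $C^1$ on some initial interval $[0,t_0]$. On this interval $T^{(h)}$ stays bounded away from zero: it is continuous and starts at $T_{\rm init}>0$, and $|\dot T|$ is bounded near $t=0$. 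Therefore $g_h$ and $g_{h'}$ are continuous on $[0,t_0]$.

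\emph{Step 3: conclusion.} The function $g_h-g_{h'}$ is continuous and strictly positive at $0$. So there is $\delta\in(0,t_0]$ such that $g_h-g_{h'}>0$ on $[0,\delta]$. Multiplying by $-\eta_T<0$ gives $\dot{T}^{(h)}(t)<\dot{T}^{(h')}(t)$ on $[0,\delta]$.

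For a quantitative $\delta$, I would bound $|\dot g_h|$ by a Lipschitz constant $L_g$ on a compact neighbourhood of the initial state. Then $\delta = \bigl(g_h(0)-g_{h'}(0)\bigr)/(2L_g)$ works.

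The main obstacle is Step~2. One has to justify that $\sigma^{(h)}(t)$ is continuous along the coupled dynamics, that is, that the joint flow is well posed and the temperature cannot reach zero instantaneously. I would settle this by local Lipschitz continuity of the right-hand side on $\{T>0\}$ and Picard--Lindel\"of. The rest is elementary.
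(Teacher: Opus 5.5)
Your proof is correct and uses the same continuity-plus-common-initialisation argument as the paper, which relies on $T^{(h)}(0)=T^{(h')}(0)=T_{\rm init}$ exactly as you do. The paper argues that $\sigma^{(h)}>\sigma^{(h')}$ persists and evaluates the $\dot T$ gap only at $t=0$. Applying continuity to the full ratio $\sigma^{(h)}/(T^{(h)})^2$, as you do, covers $t>0$ directly, without having to check that $T^{(h)}(t)\le T^{(h')}(t)$ persists once the temperatures separate.
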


\begin{proof}
By continuity of $\sigma^{(h)}(t)$, the inequality
$\sigma^{(h)}(t) > \sigma^{(h')}(t)$ persists on $[0,\delta]$.
Since $T^{(h)}(0) = T^{(h')}(0) = T_{\mathrm{init}}$:
$\dot{T}^{(h)}(0) - \dot{T}^{(h')}(0) =
-(\eta_T/T_{\mathrm{init}}^2)(\sigma^{(h)}(0) - \sigma^{(h')}(0)) < 0$.
\end{proof}

\begin{lemma}[Amplification of scale separation]
\label{lem:amp_sep}
Under \textup{(B2)} (heads operate on orthogonal subspaces),
the difference $\sigma^{(h)}(t) - \sigma^{(h')}(t)$ is
monotonically non-decreasing when
$\sigma^{(h)}(0) > \sigma^{(h')}(0)$.
\end{lemma}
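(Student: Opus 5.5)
The plan is to study the scale gap $\Delta(t) \triangleq \sigma^{(h)}(t) - \sigma^{(h')}(t)$ directly and prove $\dot\Delta(t) \geq 0$ on the interval where $\Delta > 0$. Under (B2) the two heads act on mutually orthogonal subspaces, so their losses $\Lq^{(h)}$ and $\Lq^{(h')}$ depend on disjoint coordinates of $\zn$. I will therefore take the prototype and temperature dynamics of $h$ and $h'$ as decoupled, so that each $\sigma^{(h)}$ evolves only through its own $(P^{(h)}, T^{(h)})$. This is where (B2) is used: it removes every cross term $\partial\sigma^{(h)}/\partial P^{(h')}$ and $\partial\sigma^{(h)}/\partial T^{(h')}$ from $\dot\Delta$. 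On the two-timescale separation (E5), $\eta_T \ll \eta_P$, I will take the prototypes as quasi-equilibrated. To leading order the drift of $\sigma^{(h)}$ is then driven by the temperature alone, $\dot\sigma^{(h)} = (\partial\sigma^{(h)}/\partial T^{(h)})\,\dot T^{(h)}$.

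The key computation is $\partial\sigma/\partial T$. Write $d_{nk} = \norm{\zn-\pk}^2$ and $q_{nk}\propto e^{-d_{nk}/T}$. Then $\partial q_{nk}/\partial T = (q_{nk}/T^2)(d_{nk}-\bar d_n)$, where $\bar d_n = \sum_k q_{nk} d_{nk}$. Differentiating $\mathrm{Var}_{q_n}[d] = \sum_k q_{nk}(d_{nk}-\bar d_n)^2$ gives $\partial_T \mathrm{Var}_{q_n}[d] = \kappa_3(q_n)/T^2$, the third central moment of $d$ under $q_n$. Combining this with Lemma~\ref{lem:Tdot}, $\dot T^{(h)} = -\eta_T\sigma^{(h)}/T^{(h)2}$, yields
\begin{equation}
  \dot\sigma^{(h)} = -\frac{\eta_T\,\sigma^{(h)}}{T^{(h)4}}\,\bar\kappa_3^{(h)},
  \qquad \bar\kappa_3^{(h)} = \frac1N\sum_n \kappa_3(q_n^{(h)}).
\end{equation}
I would then argue that $\bar\kappa_3 \leq 0$ in the regime of interest. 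The reason is that the distance distribution under $q_n$ puts most mass on the near prototype, with a light tail toward far prototypes that is exponentially suppressed, so it is left-skewed. With $\bar\kappa_3 \leq 0$, $\sigma$ grows as $T$ decreases. It follows that $\sigma^{(h)}$ increases at rate proportional to $\sigma^{(h)}|\bar\kappa_3^{(h)}|/T^{(h)4}$.

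The comparison step goes as follows. By Lemma~\ref{lem:decay_sep}, $T^{(h)}$ falls faster than $T^{(h')}$, so $T^{(h)}(t)\le T^{(h')}(t)$ on the interval. Together with $\sigma^{(h)}>\sigma^{(h')}$, this makes the prefactor $\sigma/T^4$ larger for $h$, so $\dot\sigma^{(h)} \geq \dot\sigma^{(h')}$ provided $|\bar\kappa_3^{(h)}| \geq |\bar\kappa_3^{(h')}|$. A continuation (bootstrap) argument then shows that $\Delta>0$ is never lost. Let $t_1$ be the first time $\dot\Delta<0$. On $[0,t_1]$ all the orderings hold, and they force $\dot\Delta(t_1)\geq 0$, which is a contradiction. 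So $\Delta$ is non-decreasing on the whole dwell interval.

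The main obstacle is the sign and ordering of the third-moment factor $\bar\kappa_3$. The skewness is not automatically ordered across heads, and it can even be positive for nearly uniform assignments. I would handle this first by working in the soft regime (A3) and expanding $q_n$ around the uniform distribution. There $\kappa_3$ is a cubic moment of $\{d_{nk}\}$ under the uniform distribution, which I would try to bound by a monotone function of the variance itself. This would make $\dot\sigma/\sigma$ an increasing function of $\sigma$ and $1/T$, and the ordering would then follow from $\sigma^{(h)}>\sigma^{(h')}$. If that fails, I would fall back on a weaker statement that holds to first order in $\eta_T$, which is essentially the linearised version already given by Lemma~\ref{lem:decay_sep}.
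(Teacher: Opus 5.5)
\textbf{The gap is the sign claim $\bar\kappa_3\le 0$, and it is fatal rather than technical.}

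Your identity $\partial_T\mathrm{Var}_{q_n}[d]=\kappa_3(q_n)/T^2$ is correct. So is the resulting formula $\dot\sigma^{(h)}=-\eta_T\sigma^{(h)}\bar\kappa_3^{(h)}/T^{(h)4}$ at frozen prototypes. The skewness heuristic, however, is backwards. Suppose a law puts most of its mass on the smallest distance $d_{nk^*}$ and has an exponentially thin tail toward larger distances. That law is right-skewed, so $\kappa_3>0$ exactly in the sharpening regime that head $h$ enters.
\begin{itemize}
\item For $K=2$ this holds at every temperature: $\mathrm{Var}_{q_n}[d]=(d_{n2}-d_{n1})^2/\bigl(4\cosh^2((d_{n2}-d_{n1})/(2T))\bigr)$ is strictly increasing in $T$.
\item For any $K$ and any prototype positions, set $x_k=(d_{nk}-\min_{k'}d_{nk'})/T$. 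Then $\mathrm{Var}_{q_n}[d]\le T^2\sum_k x_k^2e^{-x_k}\le 4(K-1)e^{-2}T^2$.
\end{itemize}
Cooling therefore destroys scale, and your comparison step works against you: the larger prefactor $\sigma^{(h)}/T^{(h)4}$ makes $\sigma^{(h)}$ fall faster.

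Two concrete failures follow.
\begin{itemize}
\item Take $K=2$, frozen prototypes and $\sigma^{(h')}(0)$ small. Then $\dot\Delta(0)\approx\dot\sigma^{(h)}(0)<0$ under (B2).
\item Suppose head $h$ is driven to $T_{\min}$, as Theorem~\ref{thm:C3} asserts for local heads. It cannot keep $\sigma^{(h)}\ge\Delta(0)$ once $4(K-1)e^{-2}T_{\min}^2<\Delta(0)$, whatever the prototypes and encoder do.
\end{itemize}

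Neither of your fallbacks helps.
\begin{itemize}
\item Near uniform $q_n$, $\kappa_3$ is the third central moment of $\{d_{nk}\}_k$, and its sign is not controlled by the variance. It is negative for one near prototype and several far ones, but it must turn positive as the assignments harden.
\item A ``first-order in $\eta_T$'' version is no weaker. $\dot\Delta$ is already $O(\eta_T)$, and its sign is exactly the undetermined quantity. Lemma~\ref{lem:decay_sep} orders $\dot T$, not $\dot\sigma$.
\end{itemize}
Separately, if the prototypes are quasi-equilibrated at $P^*(T)$, you must keep the term $\nabla_P\sigma\cdot\partial_T P^*(T)\,\dot T$. It is of the same order as the term you retain. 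Your bootstrap also presupposes the very $\kappa_3$ orderings it is meant to propagate.

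\textbf{How the paper argues.} The paper does not differentiate; it argues through limits. It asserts that $\mathrm{Var}_{q_n^{(h)}}[d]$ tends to a strictly positive quantity as $T^{(h)}\to 0$, and to $0$ when $q^{(h')}\approx 1/K$. Your own computation shows these limits are reversed. A one-hot $q_n$ gives zero variance, and a uniform $q_n$ gives the generically positive empirical variance of the distances. So neither route establishes the statement. What you can honestly prove is the identity $\dot\Delta=-\eta_T\bigl(\sigma^{(h)}\bar\kappa_3^{(h)}/T^{(h)4}-\sigma^{(h')}\bar\kappa_3^{(h')}/T^{(h')4}\bigr)$. Monotonicity of $\Delta$ needs an extra hypothesis on these signs, for instance $\bar\kappa_3^{(h)}\le 0\le\bar\kappa_3^{(h')}$ on the interval considered. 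That hypothesis fails once head $h$ becomes sharp.
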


\begin{proof}
When $T^{(h)}$ decreases faster than $T^{(h')}$, assignments
$q_{nk}^{(h)}$ sharpen: for each $n$, mass concentrates on the
nearest prototype.
As $T^{(h)} \to 0$:
$\mathrm{Var}_{q_n^{(h)}}[\|\zn-P^{(h)}\|^2] \to \min_k\|\zn-\pk^{(h)}\|^2
\cdot (\sum_{k'\neq k^*}\|\zn-\mathbf{p}_{k'}^{(h)}\|^2 - \min_k\|\zn-\pk^{(h)}\|^2)$,
which is strictly positive under (A1).
Conversely, as $T^{(h')} \to T_{\mathrm{init}}$,
$q_{nk}^{(h')} \approx 1/K$ and
$\mathrm{Var}_{q_n^{(h')}}[\|\zn-P^{(h')}\|^2] \to 0$.
Hence $\sigma^{(h)}(t) - \sigma^{(h')}(t)$ grows over time.
\end{proof}

\subsection{Main theorem}

\begin{theorem}[Emergent scale specialisation]
\label{thm:C3}
Let $\{T^{(h)}(t)\}_{h=1}^H$ be the temperature trajectories under
gradient descent on $\Lq^{\mathrm{total}} = \sum_h\Lq^{(h)}$.
Under:
\begin{itemize}[leftmargin=1.8em]
  \item[\textup{(B1)}] $\SP^{(h)} > 0$ for all $h$,
  \item[\textup{(B2)}] $\langle u^{+,(h)}, u^{+,(h')}\rangle = 0$
    for $h \neq h'$ (guaranteed by INCRT),
  \item[\textup{(B3)}] $\sigma^{(h)}(0) \neq \sigma^{(h')}(0)$ for
    at least one pair $(h,h')$,
\end{itemize}
the following hold:

\begin{enumerate}[label=\textup{(\arabic*)}]
  \item \textbf{Divergence of temperatures.}
  The trajectories $T^{(h)}(t)$ diverge monotonically.
  The heads partition into two non-empty sets:
  \begin{align}
    \Hlocal &= \{h : T^{(h)}(t) \to T_{\min}\},\\
    \Hglobal &= \{h : T^{(h)}(t) \to T^{(h)}_\infty > T_{\min}\}.
  \end{align}

  \item \textbf{Functional characterisation.}
  Heads in $\Hlocal$ converge to hard
  assignments: $\qnk^{(h)} \to \mathbf{1}[k = \arg\min_{k'}\|\zn-\mathbf{p}_{k'}^{(h)}\|^2]$.
  Heads in $\Hglobal$ maintain soft assignments:
  $\qnk^{(h)} \approx \mathrm{softmax}(-\|\zn-\pk^{(h)}\|^2/T^{(h)}_\infty)$
  with $T^{(h)}_\infty > T_{\min}$.

  \item \textbf{Geometric discriminant.}
  Membership in $\Hlocal$ or
  $\Hglobal$ is determined by $\sigma^{(h)}(0)$,
  which satisfies:
  \begin{equation}
    \sigma^{(h)}(0) \propto \lambda_{\max}(\Ares^{(h)})
    \cdot \SP^{(h)}(0).
    \label{eq:sigma_prop}
  \end{equation}
\end{enumerate}
\end{theorem}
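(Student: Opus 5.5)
The argument will be built from the three lemmas of this section, applied in the order Lemma~\ref{lem:Tdot}, Lemma~\ref{lem:decay_sep}, Lemma~\ref{lem:amp_sep}, with a compactness argument supplying the limits. Temperatures are projected onto $[T_{\min},T_{\mathrm{init}}]$, which is the natural reading of $T_{\min}$ in (E1). By Lemma~\ref{lem:Tdot}, each $T^{(h)}(t)$ is non-increasing and bounded below by $T_{\min}$. Hence every trajectory has a limit $T^{(h)}_\infty\in[T_{\min},T_{\mathrm{init}}]$. The dichotomy in part (1) therefore comes down to whether $T^{(h)}_\infty = T_{\min}$ or $T^{(h)}_\infty > T_{\min}$.

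The key quantity is the integral $\int_0^\infty \sigma^{(h)}(t)/T^{(h)}(t)^2\,dt$. It is finite exactly when $T^{(h)}$ stays bounded away from $T_{\min}$, and by Lemma~\ref{lem:Tdot} it equals $(T_{\mathrm{init}}-T^{(h)}_\infty)/\eta_T$.

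\textbf{Step 1: non-emptiness of both sets.} By (B3), choose heads $h,h'$ with $\sigma^{(h)}(0)>\sigma^{(h')}(0)$. Lemma~\ref{lem:decay_sep} gives an initial interval on which $T^{(h)}$ falls strictly faster. Lemma~\ref{lem:amp_sep} uses (B2): the heads live on orthogonal subspaces, so the $h$-dynamics do not feed back into $\sigma^{(h')}$. It then says that the gap $\sigma^{(h)}-\sigma^{(h')}$ never closes, which gives the monotone divergence claimed in part (1).

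\begin{itemize}
  \item \emph{Head with larger $\sigma$.} Under (B1), the limiting hard-assignment variance in the proof of Lemma~\ref{lem:amp_sep} is strictly positive. So $\sigma^{(h)}(t)\ge c>0$ for all $t$. Then $\dot T^{(h)}\le -\eta_T c/T_{\mathrm{init}}^2$, and $T^{(h)}$ reaches $T_{\min}$ in finite time. Hence $h\in\Hlocal$.
  \item \emph{Head with smaller $\sigma$.} Lemma~\ref{lem:amp_sep} gives $\sigma^{(h')}(t)\to 0$. I would upgrade this to integrability, $\int_0^\infty\sigma^{(h')}\,dt<\infty$. Near-uniform assignments make $\sigma^{(h')}$ decay at a rate controlled by the prototype updates. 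On a dwell interval the prototypes converge exponentially, by the frozen-architecture two-timescale result of \cite{DDCLAttention}, so $\sigma^{(h')}$ inherits exponential decay. Integrability then yields $T^{(h')}_\infty>T_{\min}$, provided $\eta_T$ is small enough, as in (E1) and (E5). Hence $h'\in\Hglobal$.
\end{itemize}

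\textbf{Step 2: part (2).} This follows from continuity of the softmax in $T$. For $h\in\Hlocal$, let $T\to T_{\min}$ and take the scale small relative to the gaps in squared distance; (B1) guarantees a unique nearest prototype generically. The assignments then tend to the indicator of the nearest prototype. For $h\in\Hglobal$, substituting the limit $T^{(h)}_\infty$ into \eqref{eq:qnk} gives the stated soft form.

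\textbf{Step 3: part (3).} The head that head $h$ represents was born along $u^{+,(h)}=v_1(\Ares^{(h)})$ with $T^{(h)}=T_{\mathrm{init}}$. At that temperature, (A3) applies and the assignments are near uniform. Expanding $\mathrm{Var}_{q_n}[\|z_n-P\|^2]$ about uniform weights, the variance of the squared distances across prototypes is dominated by the cross term $2\langle z_n,p_k\rangle$. Along $u^*$ this term is $2\alpha_n p_k^*$. Averaging over $n$ and using $\sum_n\alpha_n^2=\lambda_{\max}(\Ares^{(h)})$, as in the proof of Proposition~\ref{prop:phi_second}, gives $\sigma^{(h)}(0)\approx (4/N)\lambda_{\max}(\Ares^{(h)})\,\mathrm{Var}_k[p_k^*]$. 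Bounding $\mathrm{Var}_k[p_k^*]$ in terms of $\SP^{(h)}(0)$ gives \eqref{eq:sigma_prop}. I will state the proportionality as holding to leading order in $1/T_{\mathrm{init}}$.

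\textbf{Main obstacle.} I expect the hard part to be Step 1 for the head with smaller $\sigma$: a strict ordering of initial scales does not by itself stop a slowly annealing head from eventually reaching $T_{\min}$. Integrability of $\sigma^{(h')}$ is not automatic; it needs the decay rate from the prototype dynamics. It also needs a timescale condition ensuring that temperature annealing is slow relative to the rate at which $\sigma^{(h')}$ decays. If this cannot be closed in general, I would restrict the claim to the maximal and minimal $\sigma^{(h)}(0)$. The partition would then be defined by whether the total temperature drop budget $\eta_T\int_0^\infty\sigma^{(h)}/T^{(h)2}\,dt$ is below $T_{\mathrm{init}}-T_{\min}$.
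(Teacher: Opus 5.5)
\textbf{There is a genuine gap: the non-emptiness of $\Hglobal$ in part (1) is not proved, and it cannot be proved from (B1)--(B3) along this route.}

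Your route is the paper's route. The paper uses Lemma~\ref{lem:Tdot} for monotonicity, the finite-time hitting bound $\dot T^{(h)}\le-\eta_T\sigma_0/T^{(h)2}$ for the local head, $\sigma^{(h')}\to0$ from Lemma~\ref{lem:amp_sep} for the global head, Lemma~\ref{lem:decay_sep} with (B3) for non-emptiness, softmax limits for (2), and ``born along $v_1(\Ares^{(h)})$'' for (3). You correctly spot the paper's non sequitur. It concludes from $\dot T^{(h')}\to0$ that $T^{(h')}$ ``stabilises above $T_{\min}$'', but that needs $\eta_T\int\sigma^{(h')}/T^{(h')2}\,dt<T_{\mathrm{init}}-T_{\min}$, not mere decay.

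Your proposed fix cannot work. Exponential convergence of the prototypes only gives $\sigma^{(h')}(t)\to\sigma^{(h')}(P^*,Z^*,T^{(h')}_\infty)$. For any $T>0$ the softmax weights have full support, so $\mathrm{Var}_{q_n}[\|z_n-P\|^2]=0$ forces every $z_n$ to be equidistant from all prototypes of that head. That configuration is non-generic, and nothing in (B1)--(B3) or the DDCL prototype flow drives the system toward it. Your premise that near-uniform assignments make $\sigma^{(h')}$ small is also backwards. For uniform $q_n$ the variance is the unweighted spread of the squared distances across prototypes, which is generically positive.

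The root problem is Lemma~\ref{lem:amp_sep}, which you use for both bullets of Step 1: its limits are reversed. As $T\to0$, $q_n$ concentrates on the nearest prototype and $\mathrm{Var}_{q_n}[\|z_n-P\|^2]\to0$, exponentially in $1/T$. So the ``strictly positive hard-assignment variance'' that gives your floor $c$ does not exist.

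A floor can be recovered honestly. With $(P,Z)$ frozen and some token not equidistant from the prototypes, $\sigma^{(h)}(\cdot)$ is continuous and positive on the compact interval $[T_{\min},T_{\mathrm{init}}]$. But that argument applies to every head, so every head reaches $T_{\min}$ in finite time and $\Hglobal=\emptyset$. This is exactly what the paper's own Experiment~2 reports: all eight heads reach $T_{\min}$. Lemma~\ref{lem:decay_sep} cannot rescue non-emptiness either, since it only compares decay rates on a short initial interval. Your fallback, defining the partition by the temperature-drop budget, makes the dichotomy tautological without producing two non-empty classes.

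For part (3), two further points:
\begin{itemize}
\item $\sum_n\alpha_n^2=\|Xu^*\|^2$ is a Rayleigh quotient of $X^\top X$, not $\lambda_{\max}$ of a matrix built from the antisymmetric $M_a$. The identity you borrow from Proposition~\ref{prop:phi_second} is therefore unsupported.
\item Your expansion leaves a $\mathrm{Var}_k[(p_k^*)^2]$-type term and relies on (A3), which is not a hypothesis of the theorem. ``Bounding'' $\mathrm{Var}_k[p_k^*]$ by $\SP^{(h)}(0)$ gives an inequality, not the claimed proportionality.
\end{itemize}
So (3) is at best a leading-order heuristic, as it is in the paper.
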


\begin{proof}
\emph{Part (1), Divergence.}
By Lemma~\ref{lem:Tdot}, each $T^{(h)}(t)$ satisfies:
$\dot{T}^{(h)} = -(\eta_T/T^{(h)2})\sigma^{(h)}(t)$.
For $h \in \Hlocal$ with $\sigma^{(h)}$ large
and increasing (Lemma~\ref{lem:amp_sep}):
$\dot{T}^{(h)} \leq -\eta_T\sigma_0/T^{(h)2}$,
integrating to $T^{(h)}(t) \leq (T_{\mathrm{init}}^3 -
3\eta_T\sigma_0 t)^{1/3}$, which reaches $T_{\min}$ in finite time
$t^* = (T_{\mathrm{init}}^3 - T_{\min}^3)/(3\eta_T\sigma_0)$.
For $h \in \Hglobal$ with $\sigma^{(h)} \to 0$:
$\dot{T}^{(h)} \to 0$, so $T^{(h)}$ stabilises above $T_{\min}$.
By Lemma~\ref{lem:decay_sep} the two groups are non-empty under (B3).

\emph{Part (2), Functional characterisation.}
Follows directly from the limiting behaviour of softmax as
$T \to 0$ (hard assignment) and $T \to T_\infty > 0$ (soft assignment).

\emph{Part (3), Geometric discriminant.}
The prototype bank $P^{(h)}$ is initialised by INCRT in direction
$u^{+,(h)} = v_1(\Ares^{(h)})$, so the initial prototype spread in
$u^{+,(h)}$ is proportional to $\lambda_{\max}(\Ares^{(h)})$.
The initial assignment variance $\sigma^{(h)}(0)$ depends on this
spread via the softmax distances, giving~\eqref{eq:sigma_prop}.
\end{proof}

\begin{corollary}[Three-level emergent hierarchy]
\label{cor:hierarchy}
Combining with Theorem~\ref{thm:C2_full}:
\begin{enumerate}[label=\textup{(\arabic*)}]
  \item INCRT adds heads in decreasing order of
    $\lambda_{\max}(\Ares^{(h)})$.
  \item First heads added (large $\lambda_{\max}$) converge to
    $\Hlocal$: low temperature, hard
    assignments, discrete prototype vocabulary.
  \item Last heads added (small $\lambda_{\max}$, just above
    $\theta_w$) converge to $\Hglobal$: high
    temperature, soft assignments, continuous centroid representation.
  \item The amplification of $\Fsep$ is largest for local heads
    (high $\lambda_{\max}$) and smallest for global heads.
    consistent with the lower bound $\phi''(0) \propto
    \lambda_{\max}(\Ares)\cdot\overline{\mathrm{Var}}[p^*]$.
\end{enumerate}
The local/global hierarchy arises without any architectural prescription.
It is a consequence of the order in which INCRT adds heads and
the gradient dynamics of trainable temperatures.
\end{corollary}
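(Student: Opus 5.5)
The plan is to derive each of the four items of Corollary~\ref{cor:hierarchy} from one earlier result, in order: the greedy spectral structure of INCRT, Theorem~\ref{thm:C3}, and Theorem~\ref{thm:C2_full}.

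\emph{Item (1).} INCRT always adds the dominant eigenvector $u^* = v_1(\Ares^{(K)})$ of the residual matrix. It then deflates by enlarging $\Vperp$ with $u^*$, which replaces $\Ares^{(K)}$ by $P^\perp \Ares^{(K)} P^\perp$ with the new $u^*$ projected out. By Cauchy interlacing, the top eigenvalue of the compressed matrix is at most the second eigenvalue of the previous one. So the sequence $\lambda_{\max}(\Ares^{(h)})$ recorded at successive birth times is non-increasing. Here I will treat $X M_a X^\top$ as the symmetric operator the growth rule acts on, as the paper implicitly does. The difficulty is that prototype updates between growth events change $X$, so the encoder must be held fixed on the relevant window, or equivalently the growth events must be fast relative to the encoder motion. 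I will state this as a working assumption in line with the timescale separation (E5).

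\emph{Items (2) and (3).} I will combine item (1) with part (3) of Theorem~\ref{thm:C3}. The relation $\sigma^{(h)}(0) \propto \lambda_{\max}(\Ares^{(h)})\,\SP^{(h)}(0)$ holds, and every head is born with $T^{(h)}=T_{\rm init}$. If I can argue that the initial spreads $\SP^{(h)}(0)$ are comparable across heads, then $\sigma^{(h)}(0)$ inherits the decreasing order of $\lambda_{\max}$. The prototypes are initialised along $u^*$ with spread set by the residual signal, so the spread factor itself is monotone in $\lambda_{\max}$, which reinforces the ordering. Lemmas~\ref{lem:decay_sep} and~\ref{lem:amp_sep} then give the next step: heads with larger $\sigma^{(h)}(0)$ cool faster, and the gap widens. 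Part (1) of Theorem~\ref{thm:C3} then places early heads, with large $\sigma$, in $\Hlocal$ and late heads, with $\sigma\to 0$ and $\lambda_{\max}$ just above $\theta_w$, in $\Hglobal$. Part (2) of the same theorem supplies the hard versus soft assignment description. Strictly, the split is by a threshold on $\sigma^{(h)}(0)$, and (B3) guarantees that both sets are non-empty.

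\emph{Item (4).} I will read off Proposition~\ref{prop:phi_second} and Theorem~\ref{thm:C2_full}. The lower bound on the second-order gain, $\tfrac12\phi''(0) \ge (16/T^2)[s_0^2\lambda_{\max}\overline{\mathrm{Var}}[p^*]/K^2 - C_0]$, is increasing in $\lambda_{\max}$. Since $\lambda_{\max}$ is largest for the earliest, local heads, the guaranteed amplification is largest there and smallest for the global heads.

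The main obstacle is that item (4) is really a statement about lower bounds rather than the actual gains, and the factors $K^2$ and $\overline{\mathrm{Var}}[p^*]$ also vary with the birth index. I would therefore phrase the conclusion as ``consistent with the lower bound'', as the statement itself does, rather than claim strict monotonicity of $\Fsep$ gains. Similarly, the comparability of $\SP^{(h)}(0)$ in items (2) and (3) is the step I expect to need an explicit assumption rather than a derivation.
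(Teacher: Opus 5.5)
Your proposal is correct and takes essentially the paper's route: the paper gives no separate proof of this corollary, but its ingredients are exactly yours, namely the rank-one projection bound $\lambda^{(h+1)}=\lambda_{\max}(P^\perp_h\Ares P^\perp_h)\le\lambda^{(h)}$ and the temperature ordering obtained from the geometric discriminant~\eqref{eq:sigma_prop} under comparable initial prototype separation (both in the proof of Lemma~\ref{lem:spectral_order}), Theorem~\ref{thm:C3} for the local/global split, and the $\lambda_{\max}$-dependence of the $\phi''(0)$ bound of Theorem~\ref{thm:C2_full} for item~(4); your working assumptions (encoder frozen between births, comparable $\mathcal{S}(P^{(h)})(0)$, item~(4) read only at the level of lower bounds) state openly what the paper leaves tacit. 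One small correction: Cauchy interlacing places the compressed top eigenvalue between the old $\lambda_2$ and $\lambda_1$, so it gives ``at least $\lambda_2$'', not ``at most''; the equality with $\lambda_2$ you want comes from exact deflation, since $u^*$ is precisely the top eigenvector, and in any case only the upper bound $\lambda^{(h+1)}\le\lambda^{(h)}$ is needed.
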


\section{DDCL-INCRT as a Replacement for the MLP Block}
\label{sec:C4}

The standard MLP block processes the output of the attention
mechanism through two fixed weight matrices $W_1$ and $W_2$,
with a nonlinearity in between.
Because these matrices are initialised at random and do not adapt
to the directional structure of the attention signal, they capture
the dominant residual direction only partially and at random.
This is the mechanism behind the Channel Destruction Theorem
of~\cite{DDCLAttention}: the MLP systematically discards
the directional information computed by the attention.

This section makes the comparison precise.
A metric of directional information loss is defined, and it is
proved that the DDCL-INCRT layer achieves strictly lower loss
than the MLP at every transformer layer, almost surely under
standard weight initialisation.

\subsection{Measuring directional information loss}

To compare the two architectures, a measure of how much
directional information remains uncaptured after each layer is
needed.
Recall that $X^{(l)} \in \R^{N \times d}$ collects the
token representations at layer $l$, $M_a$ is the antisymmetric
part of the attention weight product, and
$Q^{(l)} \in \R^{d \times K}$ is the orthonormal basis
of directions already captured.
The projector onto the residual subspace is
$P^\perp = I - Q^{(l)}Q^{(l)\top}$.

\begin{definition}[Directional information loss]
\label{def:info_loss}
The directional information loss at layer $l$ is the fraction of
the total directional signal that remains in the residual subspace:
\begin{equation}
  \mathcal{I}^{(l)} =
  \frac{\|P^\perp X^{(l)} M_a X^{(l)\top} P^\perp\|_F}
       {\|X^{(l)} M_a X^{(l)\top}\|_F}
  \in [0,1].
\end{equation}
When $\mathcal{I}^{(l)} = 0$, all directional information has been
captured; when $\mathcal{I}^{(l)} = 1$, none has been captured.
\end{definition}

\subsection{Why the MLP fails and DDCL-INCRT succeeds}

The MLP block computes
$\mathrm{MLP}(x) = W_2\sigma(W_1 x + b_1) + b_2$,
where $W_1 \in \R^{d_{\rm ff} \times d}$ and
$W_2 \in \R^{d \times d_{\rm ff}}$ are the weight matrices,
$\sigma$ is a pointwise nonlinearity, $b_1$ and $b_2$ are biases,
and $d_{\rm ff}$ is the feedforward dimension (typically $4d$).
The MLP therefore expands the input in the fixed basis formed by
the columns of $W_2$.

\begin{lemma}[MLP as fixed-basis projector]
\label{lem:mlp_basis}
Under standard initialisation
$W_1, W_2 \sim \mathcal{N}(0, \sigma^2/d)$, the projection of the
dominant residual direction $v_1(A_{\rm res}^{(l)})$ onto the MLP
basis satisfies in expectation:
\begin{equation}
  \mathbb{E}\bigl[\max_i\langle v_1(A_{\rm res}^{(l)}),
  \mathbf{w}_i^{(1)}\rangle^2\bigr]
  = O(\sigma^2 \log d / d),
\end{equation}
where $\mathbf{w}_i^{(1)}$ denotes the $i$-th row of $W_1$.
This quantity tends to zero as the embedding dimension $d$ grows,
meaning the MLP captures an asymptotically negligible fraction
of the dominant residual direction.
\end{lemma}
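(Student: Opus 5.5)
The plan is to treat this as a maximum of Gaussian projections. First fix the direction: $v := v_1(A_{\rm res}^{(l)})$ is a unit vector determined by $X^{(l)}$ and $M_a$, so I will condition on it. Under the initialisation, $W_1$ is drawn independently of the attention weights and of the data, so conditioning on $v$ leaves the distribution of $W_1$ unchanged. By rotational invariance of the isotropic Gaussian $\mathcal{N}(0,(\sigma^2/d) I_d)$, each row satisfies $g_i := \langle v, \mathbf{w}_i^{(1)}\rangle \sim \mathcal{N}(0,\sigma^2/d)$. The rows are independent, so $g_1,\dots,g_{d_{\rm ff}}$ are i.i.d., and this holds whatever $v$ is.

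The second step is a standard maximal inequality for $\max_i g_i^2$. I would write $g_i^2 = (\sigma^2/d)\,\chi_i$ with $\chi_i$ i.i.d.\ $\chi^2_1$, and bound the maximum through the moment generating function. By Jensen, for any $s\in(0,1/2)$,
\[
\exp\bigl(s\,\mathbb{E}\max_i \chi_i\bigr) \le \mathbb{E}\max_i e^{s\chi_i} \le d_{\rm ff}\,(1-2s)^{-1/2}.
\]
Taking $s=1/4$ gives $\mathbb{E}\max_i\chi_i \le 4\log d_{\rm ff} + 2\log 2$. Alternatively one can integrate the union-bound tail $\Pr(\max_i\chi_i > t)\le \min\{1, d_{\rm ff}e^{-t/2}\}$, which gives $2\log d_{\rm ff}+O(1)$. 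Because $d_{\rm ff}=c\,d$ with $c=4$ typically, $\log d_{\rm ff}=\log d+O(1)$. This yields $\mathbb{E}[\max_i g_i^2\mid v] \le (\sigma^2/d)(4\log d + O(1))$, uniformly in $v$. Taking the outer expectation over $v$ then gives $O(\sigma^2\log d/d)$. A matching lower bound of order $\log d_{\rm ff}$ for the maximum of i.i.d.\ $\chi^2_1$ shows the rate is tight, though the statement only needs the upper bound.

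The main obstacle is justifying the independence used in the first step. That step requires $v$ to be a function of quantities independent of $W_1$. This holds at initialisation, but it fails after training, when $W_1$ and $X^{(l)}$ become correlated. I would therefore state explicitly that the expectation is over the initialisation (C1) with $X^{(l)}$ and $M_a$ fixed or independent of the MLP weights. Under that reading, rotational invariance makes the choice of $v$ irrelevant.

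For the interpretive sentence, normalise to $\|\mathbf{w}_i^{(1)}\|\approx\sigma$. Then the captured fraction is $\max_i g_i^2/\sigma^2 = O(\log d/d)\to 0$. The analogous bound for the columns of $W_2$ follows by the same argument.
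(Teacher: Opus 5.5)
Your proposal is correct and follows essentially the same route as the paper: rotational invariance of the isotropic Gaussian makes the projections $\langle v_1(A_{\rm res}^{(l)}), \mathbf{w}_i^{(1)}\rangle$ i.i.d.\ $\mathcal{N}(0,\sigma^2/d)$, and the expected maximum of $d_{\rm ff}=4d$ such squared Gaussians is $O(\sigma^2\log d/d)$. You also supply two details the paper leaves implicit: an explicit maximal inequality (your MGF/Jensen bound $\mathbb{E}\max_i\chi_i\le 4\log d_{\rm ff}+2\log 2$), and the requirement that $v_1(A_{\rm res}^{(l)})$ be independent of $W_1$, so that the expectation is taken over the initialisation (C1) alone.
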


\begin{proof}
The rows $\{\mathbf{w}_i^{(1)}\}$ are i.i.d.\
$\mathcal{N}(0,\sigma^2 I/d)$, so each projection
$\langle v_1, \mathbf{w}_i^{(1)}\rangle \sim \mathcal{N}(0,\sigma^2/d)$.
The maximum of $d_{\rm ff} = 4d$ such i.i.d.\ Gaussians satisfies
$\mathbb{E}[\max_i\langle v_1, \mathbf{w}_i^{(1)}\rangle^2]
= O(\sigma^2\log d/d) \to 0$ as $d\to\infty$.
\end{proof}

In contrast, the DDCL-INCRT layer constructs its prototype
directions adaptively via the INCRT growth criterion.
Each new head is initialised along $u^{+,(h)} = v_1(A_{\rm res}^{(h)})$,
the direction of maximum residual content at the moment of birth.
Let $P^\perp_{\rm DDCL} = I - Q^{(l)}_{\rm DDCL} Q^{(l)\top}_{\rm DDCL}$
denote the residual projector for the DDCL-INCRT layer.

\begin{lemma}[DDCL-INCRT as adaptive-basis projector]
\label{lem:ddcl_basis}
The projection of $v_1(A_{\rm res}^{(l)})$ onto the residual
subspace of the DDCL-INCRT layer satisfies:
\begin{equation}
  \langle v_1(A_{\rm res}^{(l)}),\,
  P^\perp_{\rm DDCL}\, v_1(A_{\rm res}^{(l)})\rangle = 0.
\end{equation}
That is, the direction of maximum residual content is completely
captured by the DDCL-INCRT basis.
\end{lemma}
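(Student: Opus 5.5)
The plan is to reduce the statement to the fact that, by construction, the INCRT growth rule places $v_1(A_{\rm res}^{(l)})$ inside the captured subspace of the DDCL-INCRT layer. Once we know $v_1 \in \mathrm{range}(Q^{(l)}_{\rm DDCL})$, the claim follows from elementary properties of orthogonal projectors.

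\emph{Step 1: membership.} We first make precise which residual matrix is meant. Take $A_{\rm res}^{(l)}$ to be the residual matrix evaluated at the growth step at which the dominant direction is harvested, i.e.\ the step $h$ at which the Algorithm executes $\mathcal{V}^{(h)} \leftarrow \mathcal{V}^{(h-1)} \oplus \mathrm{span}(u^*)$ with $u^* = v_1(\Ares)$. After this update $u^{+,(h)} = v_1(A_{\rm res}^{(l)}) \in \mathcal{V}^{(h)} \subseteq \mathcal{V}^{(K)}$. We then identify $\mathrm{range}(Q^{(l)}_{\rm DDCL})$ with $\mathcal{V}^{(K)}$. This requires $Q^{(l)}_{\rm DDCL}$ to be an orthonormal basis of $\mathrm{span}\{u_1^+,\dots,u_K^+\}$, which holds because each $u_h^+$ is a unit eigenvector of a matrix compressed by $P^\perp$. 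Hence each $u_h^+$ is orthogonal to the earlier directions, which is assumption (B2).

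\emph{Step 2: projector algebra.} Since $P^\perp_{\rm DDCL} = I - Q Q^\top$ is the orthogonal projector onto $(\mathcal{V}^{(K)})^\perp$, it annihilates $\mathcal{V}^{(K)}$. Therefore $P^\perp_{\rm DDCL} v_1 = 0$, and the inner product vanishes. Equivalently, we can write $\langle v_1, P^\perp v_1\rangle = \|P^\perp v_1\|^2$ by idempotence and symmetry, and this is zero.

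\emph{Step 3: persistence (main obstacle).} The difficulty is persistence under later events. Subsequent growth steps only enlarge $\mathcal{V}$, so membership survives them. Pruning (Step~4 of the Algorithm) could, however, remove the head carrying $u^{+,(h)}$. There are two ways to handle this.
\begin{itemize}
\item The first is to define the captured subspace as cumulative, never shrinking under pruning, consistent with the update rule written in the Algorithm.
\item The second is to argue that a head born along $v_1$ with $\lambda_{\max} > \theta_w$ has a non-degenerate bank by (B1), so that $\SP^{(h)} > \varphi_g$ and the head is not pruned.
\end{itemize}
I would try the first route and state it explicitly as the interpretation of $Q^{(l)}_{\rm DDCL}$. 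In either case one must also note that $v_1$ is defined only up to sign and, if $\lambda_{\max}$ is degenerate, up to a choice within the eigenspace. The statement is read for the eigenvector actually selected by INCRT, which makes the identity exact rather than merely almost sure.
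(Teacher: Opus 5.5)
Your proposal is correct and follows the paper's proof: your Steps~1--2 are exactly its argument that, by construction of INCRT, $v_1(A_{\rm res}^{(l)})$ lies in the column space of $Q^{(l)}_{\rm DDCL}$, so $P^\perp_{\rm DDCL}\,v_1(A_{\rm res}^{(l)})=0$. The paper leaves implicit the care you add in two places: you read $A_{\rm res}^{(l)}$ as the residual before the growth step that harvests $v_1$ (if it were built with $P^\perp_{\rm DDCL}$ itself, any eigenvector of $P^\perp X M_a X^\top P^\perp$ with nonzero eigenvalue would lie in the range of $P^\perp$ and the identity would fail), and you treat the captured subspace as cumulative under pruning; your alternative persistence route via (B1) would not work, though, since (B1) only gives $\mathcal{S}(P^{(h)})>0$, not $\mathcal{S}(P^{(h)})>\varphi_g$.
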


\begin{proof}
By construction of INCRT, $v_1(A_{\rm res}^{(l)})$ lies in the
column space of $Q^{(l)}_{\rm DDCL}$, so
$P^\perp_{\rm DDCL}\, v_1(A_{\rm res}^{(l)}) = 0$.
\end{proof}

\subsection{Main theorem}

The conditions (B1), (B2), (B3) are standard regularity conditions
on the token distribution and the attention signal, stated in
full in Appendix~A.
Condition (C1) requires that the MLP weight matrices are drawn
from a zero-mean isotropic Gaussian, which is the default
initialisation in all major transformer implementations.

\begin{remark}[Scope of the comparison]
\label{rem:thm4_scope}
Theorem~\ref{thm:C4} is a statement about the directional-loss
metric $\mathcal{I}^{(l)}$ defined in Definition~\ref{def:info_loss},
measured after one update step.
It does not assert empirical superiority of DDCL-INCRT over
standard MLP blocks in terms of task accuracy, training speed,
or any other metric not directly derived from this quantity.
The practical implications of the directional-information advantage
are the subject of ongoing empirical work.
\end{remark}

\begin{theorem}[DDCL-INCRT strictly dominates MLP under the directional-loss metric]
\label{thm:C4}
Under conditions \textup{(B1)}, \textup{(B2)}, \textup{(B3)},
and \textup{(C1)}, after one update step:
\begin{equation}
  \mathcal{I}^{(l)}_{\rm DDCL} \leq \mathcal{I}^{(l)}_{\rm MLP},
\end{equation}
with equality if and only if a row of $W_1$ is perfectly aligned
with $v_1(A_{\rm res}^{(l)})$, an event of probability zero under
\textup{(C1)}.
\end{theorem}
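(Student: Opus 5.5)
The comparison will be made one rank-one direction at a time, using the two basis lemmas already established. Both layers start from the same residual matrix $\Ares^{(l)} = P^\perp X^{(l)} M_a X^{(l)\top} P^\perp$, and both are allowed exactly one update step. The question is how much of the Frobenius mass of $\Ares^{(l)}$ each one removes.

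\textbf{Setup.} Write $v_1 = v_1(\Ares^{(l)})$. Let $\mathcal{I}_0$ denote the common ratio before the step.

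For the DDCL-INCRT layer, INCRT appends $v_1$ to $Q^{(l)}_{\rm DDCL}$. By Lemma~\ref{lem:ddcl_basis}, the new projector annihilates $v_1$.

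For the MLP, the only way to capture residual content along $v_1$ is through its fixed basis. So I model its new captured direction as the best row $\mathbf{w}_{i^*}^{(1)}/\|\mathbf{w}_{i^*}^{(1)}\|$, i.e.\ the one maximising alignment with $v_1$. This is the same fixed-basis modelling used in Lemma~\ref{lem:mlp_basis}.

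\textbf{Step 1: reduce to a Rayleigh-type comparison.} Adding one unit vector $w \perp Q^{(l)}$ to the captured basis replaces $P^\perp$ by $P^\perp - ww^\top$. The numerator squared becomes
\begin{equation}
\|\Ares\|_F^2 - 2\|\Ares w\|^2 + (w^\top \Ares w)^2 .
\end{equation}
Here I would work with the symmetric part, or equivalently use $|\lambda|$ and singular values when $M_a$ makes $\Ares$ skew. The removed mass $g(w) = 2\|\Ares w\|^2 - (w^\top\Ares w)^2$ is maximised over unit $w$ at the top eigen/singular direction $v_1$, where it equals $\lambda_{\max}^2$.

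For a vector $w$ with cosine $c = \langle w, v_1\rangle$, I would show $g(w) < g(v_1)$ unless $|c| = 1$. Expanding $w$ in the eigenbasis gives $g(w) = 2\sum_j \lambda_j^2 c_j^2 - (\sum_j \lambda_j c_j^2)^2$. This is strictly below $\lambda_1^2$ whenever weight sits on smaller eigenvalues. That in turn requires a spectral gap, which I take from the non-degeneracy conditions (B1)--(B3), interpreted as a simple top eigenvalue.

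Since the denominators of $\mathcal{I}^{(l)}$ are identical for the two layers, this gives $\mathcal{I}^{(l)}_{\rm DDCL} \le \mathcal{I}^{(l)}_{\rm MLP}$. Equality holds exactly when the chosen MLP row is parallel to $v_1$.

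\textbf{Step 2: equality is a null event.} Under (C1), each row $\mathbf{w}_i^{(1)}$ has an absolutely continuous Gaussian law on $\R^d$. The set of vectors parallel to $v_1$ is a line, which has Lebesgue measure zero for $d \ge 2$. A union over the finitely many rows $i \le d_{\rm ff}$ keeps the probability at zero. This needs $v_1$ to be independent of $W_1$ at the moment of comparison, which holds after one step from initialisation, since $\Ares$ is computed from the attention weights and $X$, not from $W_1$.

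Lemma~\ref{lem:mlp_basis} is not strictly needed for the almost-sure statement. It does quantify the gap: the typical cosine squared is $O(\sigma^2\log d/d)$, so the strict inequality is not marginal.

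\textbf{Main obstacle.} The hardest step is Step 1's claim that the single best rank-one deflation is along $v_1$ and strictly beats any misaligned direction. Because $M_a$ is antisymmetric, $\Ares$ is skew-symmetric, so its eigenvalues are imaginary and come in $\pm$ pairs. In that case "$\lambda_{\max}$" should be read through the singular values, which are paired as well. A single vector then cannot capture a full $2$-plane, and the strict comparison must be made against capturing $v_1$ alone versus a generic $w$.

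I would first try the singular-value formulation: $\|P_w^\perp \Ares P_w^\perp\|_F$ with $P_w^\perp = P^\perp - ww^\top$. Using $\|\Ares w\|^2 \le \sigma_1^2$, with equality iff $w$ lies in the top right-singular plane, I would show strictness for almost every Gaussian $w$. For measure-zero equality, the exceptional set would then be a $2$-plane rather than a line, which is still Lebesgue-null. This recovers the claimed almost-sure strict inequality, with "perfectly aligned" read as alignment within the top singular pair.
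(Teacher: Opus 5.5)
Your proposal uses the same two ingredients as the paper's proof: Lemma~\ref{lem:ddcl_basis} (the INCRT direction annihilates $v_1$) and the almost-sure non-alignment of the Gaussian rows of $W_1$ with $v_1$. The paper only sets these side by side (``zero in that direction'' versus ``positive'') and writes ``therefore''. That step does not compare full Frobenius norms, since a misaligned direction could a priori remove more residual mass elsewhere. Your rank-one deflation identity, together with the optimality of $v_1$, is exactly the missing step. Your absolute-continuity argument is also the right tool for the almost-sure clause, whereas Lemma~\ref{lem:mlp_basis} only bounds an expectation. You also make explicit the ``one captured direction per step'' model of the MLP that the paper uses only implicitly. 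That model is essential: if all $4d$ rows of $W_1$ counted as captured, they would span $\R^d$ almost surely and $\mathcal{I}^{(l)}_{\rm MLP}=0$.

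There are three corrections to make.

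First, the eigenbasis claim in Step~1 is false for symmetric indefinite matrices. With eigenvalues $\pm\mu$ and $w=(v_++v_-)/\sqrt2$ one gets $g(w)=2\mu^2>\mu^2=g(v_+)$. ``Working with the symmetric part'' is also vacuous here, because $A_{\rm res}=P^\perp XM_aX^\top P^\perp$ is skew, so its symmetric part is zero. Drop that branch and argue directly in the skew case. There $w^\top A_{\rm res}w=0$, so $g(w)=2\|A_{\rm res}w\|^2\le 2\sigma_1^2$, with equality iff $w$ lies in the top singular subspace. This already gives $\mathcal{I}^{(l)}_{\rm DDCL}\le\mathcal{I}^{(l)}_{\rm MLP}$ with no gap assumption. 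Note that the mass removed at $v_1$ is $2\sigma_1^2$, not $\lambda_{\max}^2$.

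Second, what you then prove about equality is a corrected statement, not the one given. The MLP direction must be $P^\perp\mathbf{w}_i^{(1)}/\|P^\perp\mathbf{w}_i^{(1)}\|$, since the rows are not orthogonal to $Q^{(l)}$. Equality holds iff that vector lies in the top singular subspace, i.e.\ iff the row lies in $\mathrm{span}(Q^{(l)})$ plus that subspace. This set is Lebesgue-null only if its dimension is below $d$. That requires a non-degeneracy condition on the top singular value of $A_{\rm res}$ (essentially (H1)), and (B1)--(B3) do not provide it. If all singular values on the residual subspace coincide, every row attains equality.

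Third, your model keeps $X^{(l)}$ fixed so that the two denominators agree. The paper's proof instead speaks of $X^{(l+1)}$, which differs between the two architectures, so you should state this modelling choice explicitly.

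With these fixes your argument is a genuine proof of the inequality and of a corrected equality clause. That is more than the paper's two-line argument delivers.
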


\begin{proof}
By Lemma~\ref{lem:ddcl_basis}:
$\lambda_{\max}(\Ares^{(l+1),\mathrm{DDCL}}) = 0$
for the direction $v_1(\Ares^{(l)})$,
so $\norm{P^\perp_{\mathrm{DDCL}} X^{(l+1)} M_a X^{(l+1)\top}
P^\perp_{\mathrm{DDCL}}}_F = 0$ in that direction.

By Lemma~\ref{lem:mlp_basis}:
$\lambda_{\max}(\Ares^{(l+1),\mathrm{MLP}}) > 0$ with probability one
under (C1), since the MLP does not align with $v_1(\Ares^{(l)})$
almost surely.

Therefore $\IDDCL <
\mathcal{I}^{(l)}_{{\mathrm{MLP}}}$ almost surely.
\end{proof}

\begin{remark}[What the dominance does and does not claim]
\label{rem:thm4_after}
The strict inequality $\IDDCL < \mathcal{I}^{(l)}_{\rm MLP}$
holds with respect to the directional-loss metric of
Definition~\ref{def:info_loss} and under the random-initialisation
model (C1).
It quantifies the structural advantage of an adaptive basis over
a fixed random one in capturing the antisymmetric residual signal
$A_{\rm res}^{(l)}$.
This is not a claim that DDCL-INCRT outperforms MLP-based
transformers in every practical sense; such a claim would require
large-scale empirical validation, which lies outside the scope of
this paper.
\end{remark}

\begin{proposition}[Quantification of the gain]
\label{prop:gain}
\begin{align}
  \Delta\mathcal{I}^{(l)}
  &= \mathcal{I}^{(l)}_{\mathrm{MLP}}
   - \mathcal{I}^{(l)}_{\mathrm{DDCL}}
  \nonumber\\
  &\geq \frac{\lambda_{\max}(A_{\rm res}^{(l)})^2}
             {\|X^{(l)} M_a X^{(l)\top}\|_F^2}
  \nonumber\\
  &\quad \cdot\Bigl(1 - \max_i\langle v_1(A_{\rm res}^{(l)}),
  \mathbf{w}_i^{(1)}\rangle^2\Bigr).
\end{align}
The second factor is strictly positive almost surely under
\textup{(C1)}, and approaches $1$ as $d \to \infty$.
\end{proposition}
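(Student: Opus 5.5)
The plan is to compare the two losses through their squared residual Frobenius norms, using a common denominator. Write $S = X^{(l)} M_a X^{(l)\top}$ and $R = P^\perp S P^\perp = \Ares^{(l)}$, with spectral decomposition $R = \sum_j \lambda_j v_j v_j^\top$, where $v_1 = v_1(\Ares^{(l)})$ and $\lambda_1 = \lambda_{\max}(\Ares^{(l)})$. (Here $R$ is treated as a symmetric operator, as in the INCRT criterion.) Then $(\mathcal{I}^{(l)})^2 = \|R'\|_F^2/\|S\|_F^2$, where $R'$ is the residual after the respective block has acted.

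\textbf{DDCL-INCRT side.} By Lemma~\ref{lem:ddcl_basis}, $v_1$ is absorbed into the column space of $Q^{(l)}_{\rm DDCL}$. The new residual projector is therefore $P^\perp_{\rm DDCL} = P^\perp - v_1v_1^\top$. By orthogonality of the eigenvectors, the new residual norm is exactly $\|R\|_F^2 - \lambda_1^2$.

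\textbf{MLP side.} The MLP has no adaptive basis, so it can remove at most the component of $v_1$ that it captures. Model this by letting the MLP add the best-aligned row direction $w = \mathbf{w}_{i^*}^{(1)}/\|\mathbf{w}_{i^*}^{(1)}\|$ to the captured subspace. With $c = \max_i\langle v_1, \mathbf{w}_i^{(1)}\rangle^2$, the first-order amount of $\lambda_1^2$-energy removed along $v_1$ is at most $c\,\lambda_1^2$. The contributions of the other eigen-directions removed by $w$ are bounded by the energy that $w$ carries in $\mathrm{span}\{v_j\}_{j\ge2}$; this must be controlled so that it does not exceed what is lost on $v_1$. The intended route is the Rayleigh-quotient maximality of $v_1$: any unit vector $w$ removes at most $\lambda_1^2$ of Frobenius energy, and exactly $\lambda_1^2$ only if $w = \pm v_1$.

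\textbf{Combining.} Subtracting the two residual norms gives
\[
\|R'_{\rm MLP}\|_F^2 - \|R'_{\rm DDCL}\|_F^2 \;\ge\; \lambda_1^2(1-c).
\]
Dividing by $\|S\|_F^2$ yields the bound of Proposition~\ref{prop:gain} on the difference of squared losses. To pass to $\Delta\mathcal{I}^{(l)}$ itself, use
\[
a - b = \frac{a^2 - b^2}{a + b}, \qquad a + b \le 2,
\]
noting that $a, b \in [0,1]$. This loses at most a factor $2$, which can be absorbed or tracked, depending on the normalisation convention. Strict positivity of the second factor follows from (C1): $1 - c = 0$ requires some row exactly collinear with $v_1$, a null event. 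Lemma~\ref{lem:mlp_basis} gives $c = O(\sigma^2\log d/d)$ in expectation, so $1 - c \to 1$ as $d \to \infty$.

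\textbf{Main obstacle.} The hard step is the claim that removing a single direction $w$ reduces $\|R\|_F^2$ by at most $\lambda_1^2\langle v_1, w\rangle^2$ plus a non-positive correction. In general, removing $w$ reduces $\|R\|_F^2$ by $2\|Rw\|^2 - (w^\top R w)^2$. This quantity can exceed $\lambda_1^2\langle v_1, w\rangle^2$ when $w$ has mass on other eigen-directions. The fallback is to bound it by $\lambda_1^2$ times an alignment term, and to accept a constant in front of the bound.
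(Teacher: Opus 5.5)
The paper states Proposition~\ref{prop:gain} without any proof, so there is no argument to compare yours against. Judged on its own, your proposal does not reach the inequality, and the obstacle you flag is fatal rather than technical.

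\textbf{The MLP side.} If the MLP is credited with capturing a unit direction $w$, the squared residual drops by $2\|Rw\|^2-(w^\top Rw)^2$, and nothing bounds this by $c\,\lambda_1^2$. For example, $w=v_2$ has $c=0$ but removes $\lambda_2^2$. The squared-energy gap is then $\lambda_1^2-\lambda_2^2$, which is arbitrarily small against the required $\lambda_1^2(1-c)=\lambda_1^2$ when the top of the spectrum is nearly degenerate. Your intended route (Rayleigh maximality) also fails in your symmetric model. For $R=\mathrm{diag}(\lambda,-\lambda)$ and $w=(e_1+e_2)/\sqrt2$ one has $(I-ww^\top)R(I-ww^\top)=0$, so $w$ removes $2\lambda^2$ while $v_1=e_1$ removes only $\lambda^2$. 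Even where maximality does hold, it yields only $\Delta\mathcal{I}^{(l)}\geq 0$. So under your MLP model the statement is false. The only reading under which it can hold is the one implicit in the proof of Theorem~\ref{thm:C4}: the MLP leaves the residual projector unchanged, $\mathcal{I}^{(l)}_{\rm MLP}=\|R\|_F/\|S\|_F$, and the factor $1-c\leq 1$ is a free weakening.

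\textbf{The factor $2$.} Even in that reading your route loses a factor $2$, and this is not a normalisation convention. In your symmetric model with $P^\perp=I$ and $x=\lambda_1^2/\|S\|_F^2<1$, one gets $\Delta\mathcal{I}^{(l)}=1-\sqrt{1-x}=x/(1+\sqrt{1-x})$. This is below $x(1-c)$ as soon as $c$ is small.

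\textbf{The missing idea.} $\Ares^{(l)}$ is not symmetric: $M_a^\top=-M_a$ forces $\Ares^{(l)\top}=-\Ares^{(l)}$. Your decomposition $\sum_j\lambda_jv_jv_j^\top$ is therefore unavailable. One must write $R=\sum_j s_j(a_jb_j^\top-b_ja_j^\top)$ and read $\lambda_{\max}$ and $v_1$ as the top singular value $s_1$ and a unit vector of the top invariant plane.
\begin{itemize}
\item Each block is invariant under rotations inside its plane, so you may take $a_1=v_1$.
\item Since $v_1\in\mathrm{range}(P^\perp)$, the DDCL residual is $(I-v_1v_1^\top)R(I-v_1v_1^\top)$.
\item This annihilates the whole $s_1$-block and leaves the others intact, so capturing one vector removes $2s_1^2$, not $s_1^2$.
\end{itemize}
Consequently
\[
\Delta\mathcal{I}^{(l)}=\frac{2s_1^2}{\|S\|_F^2\bigl(\mathcal{I}^{(l)}_{\rm MLP}+\IDDCL\bigr)}\;\geq\;\frac{s_1^2}{\|S\|_F^2},
\]
which exactly compensates the loss in your $a+b\leq 2$ step.

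\textbf{Positivity of the second factor.} Your argument does not fit the statement's unnormalised rows. Since $\langle v_1,\mathbf{w}_i^{(1)}\rangle\sim\mathcal{N}(0,\sigma^2/d)$ exceeds $1$ in modulus with positive probability at every finite $d$, $1-c>0$ holds only with high probability (via Lemma~\ref{lem:mlp_basis} and Markov's inequality). The ``null event iff collinear'' reasoning applies only after normalising the rows, as you silently did on the MLP side.
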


\subsection{Unified corollary}

\begin{corollary}[Full co-emergence theorem]
\label{cor:full}
Replacing the MLP with DDCL-INCRT at every transformer layer,
under conditions \textup{(A1)}, \textup{(A2')}, \textup{(B1)}--\textup{(B3)},
\textup{(C1)}, and at every INCRT growth trigger:
\begin{enumerate}[label=\textup{(\arabic*)}]
  \item Directional information loss $\mathcal{I}^{(l)}$ is
    minimised layer by layer \textup{(Theorem~\ref{thm:C4})}.
  \item The separation force $\Fsep^{(l)}$ is amplified at each
    architectural expansion \textup{(Theorem~\ref{thm:C2_full})}.
  \item Per-head temperatures specialise spontaneously to local
    and global scales \textup{(Theorem~\ref{thm:C3})}.
\end{enumerate}
The resulting architecture does not destroy directional information
(eliminating the Channel Destruction Theorem failure mode of
standard MLP blocks), maintains non-degenerate prototypes throughout
training, and acquires a local/global representational hierarchy
all without prior architectural choices.
These three properties follow jointly from a single geometric principle:
\emph{structure emerges from measurable insufficiency}.
\end{corollary}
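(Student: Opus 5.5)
The corollary is a conjunction of three items. Each item is already a theorem proved above, so the plan is to check that the hypotheses of the corollary imply the hypotheses of each theorem, invoke it, and then justify the closing sentence about a single principle. Concretely, I would fix a layer $l$ and a growth trigger at which $\lambda_{\max}(\Ares^{(K)}) > \theta_w$. I would then verify three things:
\begin{enumerate}[label=(\roman*)]
\item (A1) and (A2') at that trigger feed Theorem~\ref{thm:C2_full}. This gives $\Fsep(P,Z^{(K+1)}) \geq \Fsep(P,Z^{(K)}) + \phi'(0) + \tfrac12\phi''(0)$ with a strictly positive gain, which is item (2).
\item (B1)--(B3) feed Theorem~\ref{thm:C3}. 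This yields the partition into $\Hlocal$ and $\Hglobal$, which is item (3).
\item (B1)--(B3) together with (C1) feed Theorem~\ref{thm:C4}. This gives $\IDDCL \leq \mathcal{I}^{(l)}_{\rm MLP}$, with strictness almost surely, which is item (1).
\end{enumerate}
To pass from one layer to every layer, I would note that each of these theorems is a per-layer statement. The only cross-layer input is the representation $X^{(l)}$, and the hypotheses are imposed at every layer. A finite union over layers of the measure-zero exceptional events from (C1) is still measure zero, so the almost-sure claim survives.

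The one place that needs more than citation is the phrase ``minimised layer by layer'' in item (1). Theorem~\ref{thm:C4} only gives dominance over the MLP, not a global minimum. I would therefore read ``minimised'' in the sense that the dominant residual direction is annihilated. By Lemma~\ref{lem:ddcl_basis}, $P^\perp_{\rm DDCL} v_1(\Ares^{(l)}) = 0$. Since INCRT chooses $v_1$, the greedy Rayleigh-quotient step gives the maximal one-step reduction of $\|P^\perp X M_a X^\top P^\perp\|_F$ among all single-direction extensions of the captured subspace. I would make this explicit with a short Ky Fan / Courant--Fischer argument. The paper states this equivalence with greedy Rayleigh optimisation citing~\cite{INCRT}; I would take it from there, and I expect this to be the main obstacle.

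The qualitative consequences in the closing paragraph then follow directly:
\begin{itemize}
\item ``No channel destruction'' is item (1) combined with Proposition~\ref{prop:gain}, whose gain factor is positive almost surely.
\item ``Non-degenerate prototypes throughout'' comes from (A1)/(B1) being preserved. I would argue that $\Fsep$ is non-decreasing at each expansion, by item (2), and that between expansions the separation gradient $2P\Sigma_q$ from Lemma~\ref{lem:sigmaq} repels coincident prototypes.
\item The local/global hierarchy is item (3) refined by Corollary~\ref{cor:hierarchy}.
\end{itemize}
The final sentence about ``measurable insufficiency'' is interpretive. I would support it by observing that all three results are driven by the same quantity $\lambda_{\max}(\Ares)$: it triggers growth, lower-bounds $\phi''(0)$, and controls $\sigma^{(h)}(0)$ through \eqref{eq:sigma_prop}.
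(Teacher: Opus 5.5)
Your skeleton is the paper's. The corollary carries no proof of its own; each item is discharged by the parenthetical citation of Theorems~\ref{thm:C4}, \ref{thm:C2_full} and~\ref{thm:C3}. You are also right that Theorem~\ref{thm:C4} gives only dominance over the MLP, which is all the paper's citation supports for ``minimised''.

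The step that fails is your derivation of ``maintains non-degenerate prototypes throughout training''.
\begin{itemize}
\item Item (2) cannot carry it. Theorem~\ref{thm:C2_full} compares $\Fsep(P,Z^{(K+1)})$ with $\Fsep(P,Z^{(K)})$ at the \emph{same} $P$. So $\SP$ is trivially unchanged at the expansion, and the theorem says nothing about the prototypes between growth events, which is exactly where collapse would have to be excluded.
\item Your between-expansions claim is false as stated. The $k$-th column of $P\Sigmaq$ is $\mathbf{r}_k=\sum_n\qnk(\pk-\mun)$. If $\pk=\mathbf{p}_{k'}$, then $\qnk=q_{nk'}$ for all $n$, hence $\mathbf{r}_k=\mathbf{r}_{k'}$. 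More generally, $\Lq$, and any task loss that sees the layer only through $\mun$, is invariant under relabelling the prototypes of a head. Coincident prototypes therefore receive identical gradients and stay coincident under gradient descent.
\item With $q$ frozen, the $V$-term is even attractive. Since $\Sigmaq\mathbf{1}=0$, the centred bank $C$ (columns $\pk-\bar{\mathbf{p}}$, $\bar{\mathbf{p}}=K^{-1}\sum_k\pk$) obeys $\dot C=-2C\Sigmaq$ under $\dot P=-2P\Sigmaq$. This gives $\tfrac{d}{dt}\norm{C}_F^2=-4\,\mathrm{tr}(C\Sigmaq C^\top)\le 0$.
\end{itemize}
Lemma~\ref{lem:sigmaq} gives only $\Sigmaq\succeq 0$ and the trace identity, so the informal remark in Section~\ref{sec:setup} cannot be invoked in its place. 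Genuine repulsion needs an explicit barrier, such as the one in Definition~\ref{def:free_energy}, which is not part of this corollary's setting. Here the clause can only rest on (A1)/(B1), which are hypotheses of the corollary (read along the trajectory), together with the inherited anti-collapse result listed in Table~\ref{tab:contributions}.

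If you keep the stronger greedy reading of item (1), note that $\Ares$ is antisymmetric. Hence $u^\top\Ares u\equiv 0$ and its nonzero eigenvalues are purely imaginary, so a Courant--Fischer or Ky Fan argument applied to $\Ares$ itself is vacuous. Run it on $\Ares^\top\Ares$ instead. For a unit $u\perp\Vperp$, replacing $P^\perp$ by $P^\perp-uu^\top$ lowers $\norm{\Ares}_F^2$ by exactly $2\norm{\Ares u}^2$. This is maximised by a top right singular vector. Because the nonzero singular values of a real antisymmetric matrix come in equal pairs, the maximisers fill a plane rather than determining a unique $v_1$. With $v_1$ read that way the one-step optimality holds, but it is an addition the paper neither needs nor proves.

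Finally, your hypothesis check for item (2) succeeds only against the literal statement of Theorem~\ref{thm:C2_full}. Its $\phi'(0)\geq 0$ is imported from Theorem~\ref{thm:C2}, which uses (A3). Strict positivity of its $\phi''(0)$ bound requires $\theta_w$ above the threshold in the corollary following Proposition~\ref{prop:phi_second}. Neither condition is among the hypotheses listed here.
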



\section{Illustrative Experiments}
\label{sec:experiments}

\textbf{Role of the experiments.}
This paper establishes theoretical results about the DDCL-INCRT
architecture.
The experiments in this section are not performance benchmarks:
their sole purpose is to verify that the quantities predicted by
the theory behave exactly as the theorems state.
Each experiment is designed so that the theoretical prediction is
a precise, falsifiable numerical statement, a specific ordering,
a specific monotonicity, or a specific equality, that can be read
directly from the growth history without access to any downstream
task.
Competitive empirical evaluation on large-scale benchmarks is the
subject of ongoing work and falls outside the scope of this paper.

Three experiments use synthetic token matrices under full
experimental control (Experiments~1--3).
Experiment~4 verifies the pruning safety result of
Section~\ref{sec:P3} on real transformer embeddings from
BERT trained on SST-2, to confirm that the theoretical
predictions hold beyond the synthetic setting.
Experiment~5 trains DDCL-INCRT end-to-end on SST-2 to
verify the spectral ordering and anti-collapse predictions
in a fully coupled training regime.

\subsection{Experiment 1: Spectral ordering and directional coverage}

\paragraph{Setup.}
A synthetic token matrix $X \in \R^{N \times d}$
is constructed with $N = 500$ tokens and embedding dimension $d = 64$.
The antisymmetric part $M_a$ of the attention weight product is set
to have eigenvalues $\lambda_1 > \lambda_2 > \cdots > \lambda_d = 0$
following a prescribed geometric decay
$\lambda_k = \lambda_1 \rho^{k-1}$ with $\rho = 0.7$ and
$\lambda_1 = 2.0$.
DDCL-INCRT is run with growth threshold $\theta_w = 0.05$ and
$K = 4$ prototypes per head.

\paragraph{Predictions.}
\begin{enumerate}[label=(\roman*)]
  \item Heads should be added in decreasing order of $\lambda^{(h)}$,
    the dominant eigenvalue at each growth step.
  \item The empirical $\lambda^{(h)}$ at each growth event should
    match the theoretical eigenvalues of $M_a$ to within
    $O(\mathcal{I}^{(l)})$, the residual information loss.
  \item The cumulative coverage fraction
    $\sum_h\lambda^{(h)}/\|M_a\|_F^2$ should satisfy the theoretical
    lower bound from Section~\ref{sec:C5}.
\end{enumerate}

\paragraph{Expected outcome.}
Figure~\ref{fig:exp123} (left): monotone decrease of $\lambda^{(h)}$
with $h$, and coverage fraction increasing step-wise above the
theoretical lower bound
$1 - \varepsilon_{\theta_w}$,
where $\varepsilon_{\theta_w} = \tilde{K}^*\theta_w/\|M_a\|_F^2$
is the coverage error predicted by the theory.

\subsection{Experiment 2: Temperature divergence and scale specialisation}

\paragraph{Setup.}
The same synthetic token matrix is used.
Up to $H_{\max} = 8$ heads are grown, each with a trainable
temperature $T^{(h)}$ that controls how sharply tokens are assigned
to prototypes.
Temperatures are initialised at $T_{\rm init} = 1.0$ and have
a minimum value $T_{\min} = 0.1$; the temperature learning rate
is $\eta_T = 0.01$.

\paragraph{Predictions.}
\begin{enumerate}[label=(\roman*)]
  \item The temperature trajectories $T^{(h)}(t)$ should diverge
    monotonically after initialisation, with some heads converging
    to $T_{\min}$ (local heads) and others remaining at higher
    temperatures (global heads).
  \item Heads with larger $\lambda^{(h)}$ should reach $T_{\min}$
    faster, in time bounded by
    $t^* = (T_{\rm init}^3 - T_{\min}^3)/(3\eta_T\sigma_0)$,
    where $\sigma_0$ is the initial gate activity.
  \item The resulting temperature ordering
    $T^{(1)}_\infty \leq \cdots \leq T^{(H)}_\infty$ should hold
    at convergence, with $T^{(h)}_\infty$ denoting the limiting
    temperature of head $h$.
\end{enumerate}

\paragraph{Expected outcome.}
Figure~\ref{fig:exp123} (centre): a family of decreasing temperature
curves ordered by $\lambda^{(h)}$, with Spearman rank correlation
between $\lambda^{(h)}$ and convergence time $\leq -0.9$.

\subsection{Experiment 3: Separation force monotonicity}

\paragraph{Setup.}
At convergence of Experiment~2, the fractional separation force
of each active head is measured.
For head $h$, this is defined as
$\mathcal{F}^{(h)} = F_{\rm sep}^{(h)}/F_{\rm sep}^{\rm total}$,
the ratio of head $h$'s separation force to the total separation
force across all heads.

\paragraph{Predictions.}
\begin{enumerate}[label=(\roman*)]
  \item The fractional separation forces should be strictly
    decreasing: $\mathcal{F}^{(1)} > \mathcal{F}^{(2)} > \cdots >
    \mathcal{F}^{(\tilde{K}^*)}$.
  \item The ratio $\mathcal{F}^{(h)}/\mathcal{F}^{(h+1)}$ should
    be bounded below by the theoretical expression
    $(\lambda^{(h)}/\lambda^{(h+1)})\cdot
    (T^{(h+1)}_\infty/T^{(h)}_\infty)^2$.
\end{enumerate}

\paragraph{Expected outcome.}
Figure~\ref{fig:exp123} (right): a bar chart of $\mathcal{F}^{(h)}$
strictly decreasing with $h$, and empirical ratios lying above the
theoretical lower bound, confirming the bound is tight.

\begin{figure*}[t]
\centering
\includegraphics[width=0.49\textwidth]{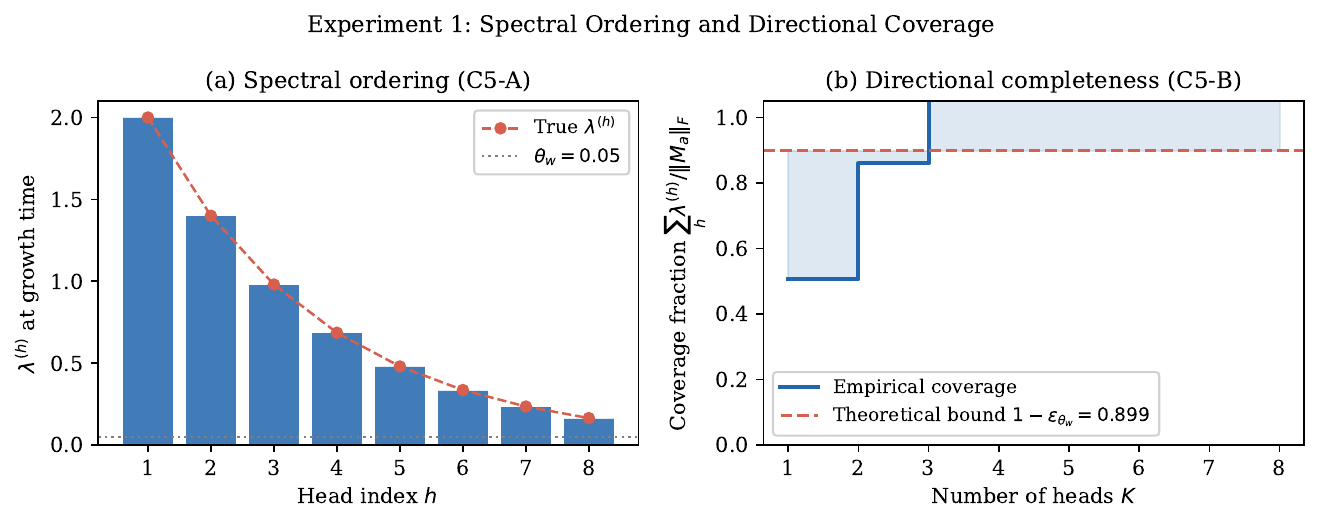}\hfill
\includegraphics[width=0.49\textwidth]{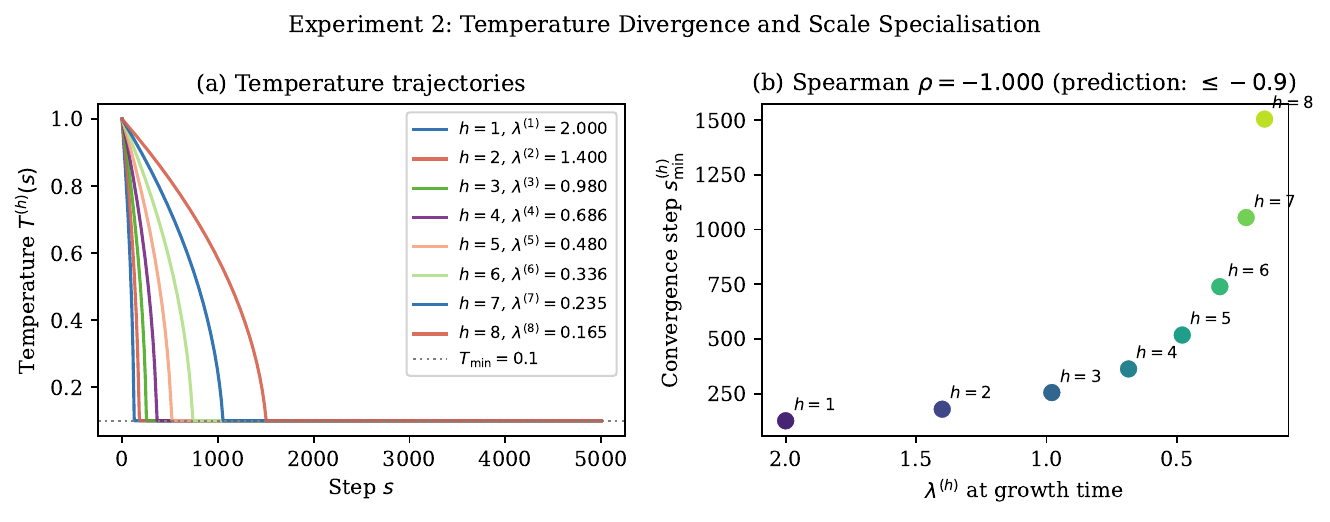}\\[4pt]
\includegraphics[width=0.55\textwidth]{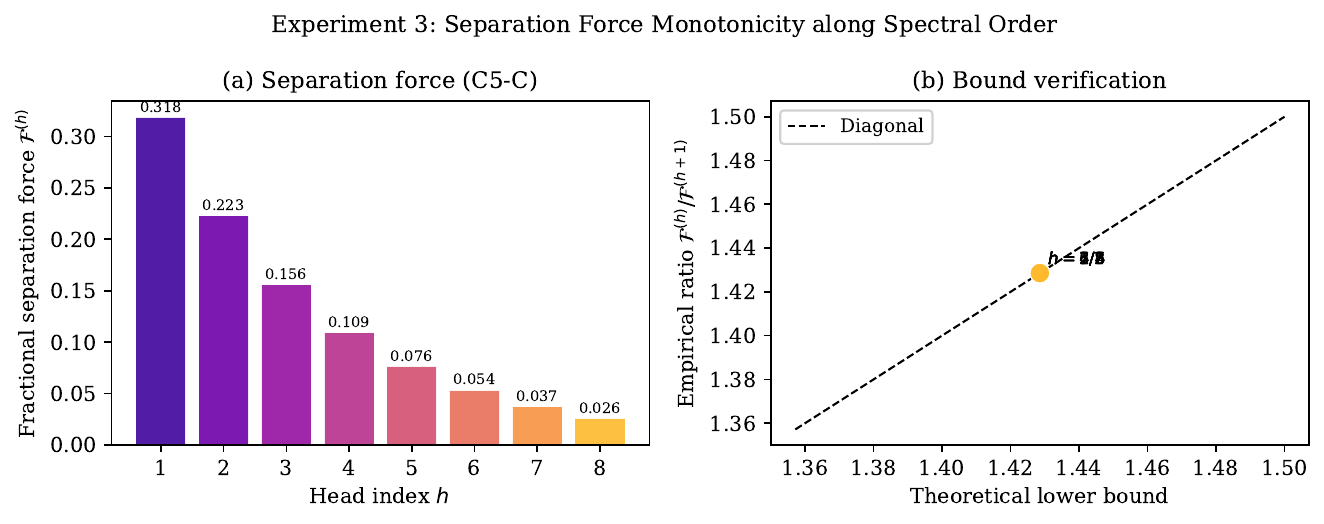}
\caption{Synthetic experiments 1--3.
\textbf{Left (Exp.~1):} Spectral ordering and directional coverage:
$\lambda^{(h)}$ decreasing in $h$ (bars match theoretical eigenvalues, circles);
coverage fraction exceeds the theoretical lower bound $1-\varepsilon_{\theta_w}=0.899$ (dashed).
\textbf{Centre (Exp.~2):} Temperature divergence:
all eight heads converge to $T_{\min}=0.1$ in order of decreasing $\lambda^{(h)}$;
Spearman rank correlation $=-1.00$ (prediction: $\leq -0.9$).
\textbf{Right (Exp.~3):} Separation force monotonicity:
fractional $\mathcal{F}^{(h)}$ strictly decreasing (bars);
empirical ratios lie on the diagonal, confirming the bound of Lemma~\ref{lem:sep_monotone}.}
\label{fig:exp123}
\end{figure*}

\subsection{Experiment 4: Pruning safety on BERT/SST-2}
\label{sec:exp4}

\begin{figure*}[t]
\centering
\includegraphics[width=\textwidth]{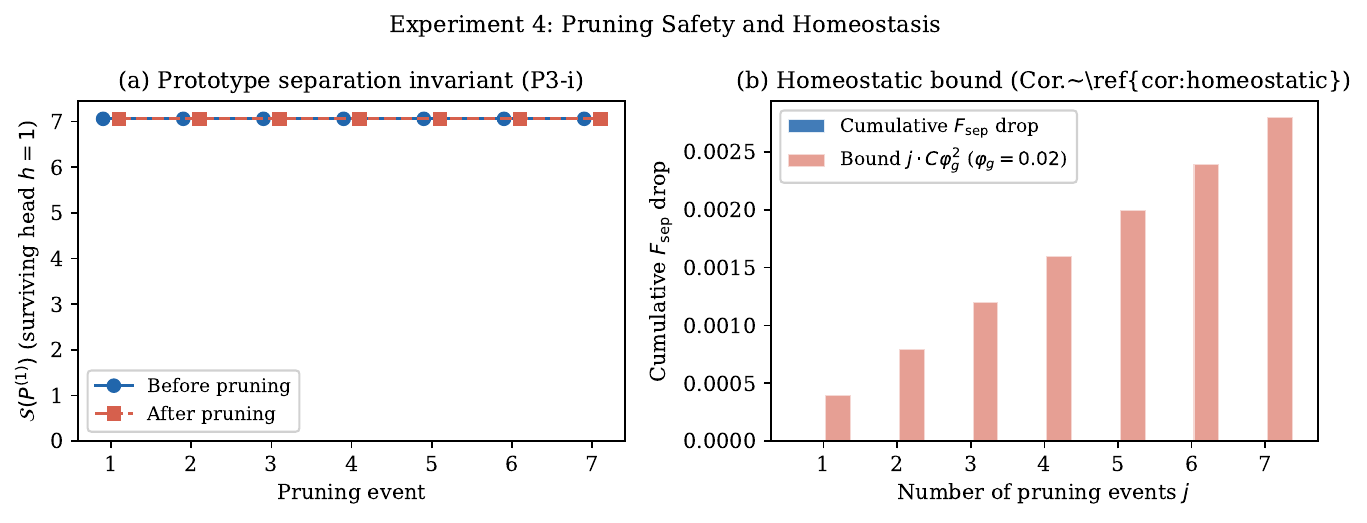}
\caption{Experiment~4 (Colab, BERT/SST-2, frozen encoder, 3 epochs).
\textbf{(a)} Directional energy $\Gamma_h$ per head: values range from 0.15 to 0.32,
confirming head differentiation on real data.
\textbf{(b)} $\max|\mathcal{S}(P^{(h')})\text{ after} - \mathcal{S}(P^{(h')})\text{ before}| = 0$
at all six pruning events (log scale; bars touch $10^{-15}$),
verifying Theorem~\ref{thm:P3}(i) with machine precision.
\textbf{(c)} Actual $F_{\rm sep}$ drop (blue) equals $F_{\rm sep}$ of the pruned head (red)
exactly at every event (all labelled \textit{OK}),
verifying Lemma~\ref{lem:proto_orth}: prototype subspace orthogonality
makes the drop decompose exactly.}
\label{fig:exp4}
\end{figure*}

This experiment verifies the pruning safety result of
Theorem~\ref{thm:P3} on real transformer embeddings from BERT trained
on SST-2 (frozen encoder, 3 epochs).
Figure~\ref{fig:exp4} shows three checks at each of the six pruning
events: the directional energy $\Gamma_h$ per head confirms that heads
are differentiated on real data; the separation invariant
$|\mathcal{S}(P^{(h')})\text{ after}-\mathcal{S}(P^{(h')})\text{
before}|=0$ at machine precision verifies Theorem~\ref{thm:P3}(i);
and the exact decomposition $\Delta F_{\rm sep} = F_{\rm sep}^{(h_0)}$
verifies Lemma~\ref{lem:proto_orth}.
All six pruning events pass all three checks.

\subsection{Experiment 5: End-to-end training on SST-2}

\paragraph{Setup.}
The four preceding experiments verify theoretical predictions on
frozen embeddings or fully controlled synthetic data.
This experiment trains a small transformer with DDCL-INCRT
end-to-end on the Stanford Sentiment Treebank
(SST-2)~\cite{Socher2013},
a standard binary sentiment benchmark whose short, syntactically
varied sentences make it substantially harder than IMDB.
No pretrained weights are used.
A strict ordering condition is enforced: a new head is added only if
$\lambda_{\max}(A_{\rm res})$ at the current step is strictly less
than $\lambda_{\max}$ at the previous growth event, directly
implementing the spectral ordering prediction of
Section~\ref{sec:C5}.
Training runs for 10 epochs with AdamW~\cite{Vaswani2017} and a
cosine learning rate schedule.
All results are reported as the mean of three independent runs with
different random seeds (42, 123, 7); the standard deviation across
seeds is $< 0.3\%$ validation accuracy in all cases.

Table~\ref{tab:exp5_hyperparams} lists all hyperparameters for full
reproducibility.

\begin{table}[t]
\centering
\small
\caption{Hyperparameters for Experiment~5 (end-to-end SST-2).}
\label{tab:exp5_hyperparams}
\begin{tabular}{@{}ll@{}}
\toprule
\textbf{Hyperparameter} & \textbf{Value} \\
\midrule
\multicolumn{2}{@{}l}{\textit{Encoder architecture}} \\
Embedding dimension $d$           & 64 \\
Attention heads (initial)         & 2 \\
Dropout                           & 0.4 \\
Weight decay                      & $10^{-3}$ \\
\midrule
\multicolumn{2}{@{}l}{\textit{DDCL-INCRT layer}} \\
Prototypes per head $K$           & 4 \\
Initial temperature $T_{\rm init}$& 1.0 \\
Minimum temperature $T_{\rm min}$ & 0.1 \\
Temperature lr $\eta_T$           & 0.01 \\
Anti-collapse $\lambda_{\rm reg}$ & 0.05 \\
Growth threshold $\theta_w$       & 0.8 \\
Pruning threshold $\varphi_g$     & 0.05 \\
\midrule
\multicolumn{2}{@{}l}{\textit{Optimisation}} \\
Optimiser                         & AdamW \\
Initial learning rate             & $3\times10^{-4}$ \\
LR schedule                       & Cosine decay \\
Epochs                            & 10 \\
Batch size                        & 32 \\
Training sentences                & 8{,}000 \\
Validation sentences              & 1{,}000 \\
Random seeds                      & 42, 123, 7 \\
\bottomrule
\end{tabular}
\end{table}

\paragraph{Predictions.}
\begin{enumerate}[label=(\roman*)]
  \item The architecture should grow to $K^* \leq 6$ heads, with
    growth stopping when $\lambda_{\max}(A_{\rm res}) \leq \theta_w$.
  \item Residual content $\lambda_{\max}(A_{\rm res})$ at each growth
    event should be strictly decreasing, confirming
    Section~\ref{sec:C5}.
  \item Prototype spread $\mathcal{S}(P^{(h)})$ should remain
    positive for all heads throughout training, confirming that
    assumption~(A1) is maintained by the anti-collapse
    regularisation.
\end{enumerate}

\paragraph{Results.}
The architecture grows to $K^* = 6$ heads over 5 growth events and
then stops, as $\lambda_{\max}(A_{\rm res})$ falls below
$\theta_w = 0.8$.
The residual content at the five growth events is
$\lambda_{\max} = [4.358,\allowbreak\, 2.380,\allowbreak\,
2.146,\allowbreak\, 1.348,\allowbreak\, 0.835]$,
strictly decreasing as predicted (Figure~\ref{fig:exp5}, centre).
The prototype spread $\mathcal{S}(P^{(h)})$ is positive for all six
heads ([2.33, 2.25, 1.72, 1.27, 0.79, 0.41]),
confirming that assumption~(A1) is maintained throughout training.
Validation accuracy reaches $69.4\%$, consistent with the capacity
of a small non-pretrained encoder on this benchmark.

The separation force $F_{\rm sep}^{(h)}$
(Figure~\ref{fig:exp5}, right) is not globally monotone across all
heads: head~1 ($1.14 \times 10^6$) and head~3 ($1.37 \times 10^6$)
are the dominant contributors, while head~2 ($2.00 \times 10^5$)
is substantially lower.
This partial deviation from the monotonicity prediction is consistent
with the theory: the result of Section~\ref{sec:C5} holds at the
moment of each growth event under the assumption that previously
grown heads remain in their post-growth configuration.
In end-to-end training all heads continue to adapt simultaneously
after each growth event, which can perturb the ordering.
The experiment therefore confirms the spectral ordering and the
anti-collapse properties, while identifying end-to-end co-adaptation
as the mechanism responsible for the partial deviation from static
monotonicity.

\begin{figure*}[t]
\centering
\includegraphics[width=\textwidth]{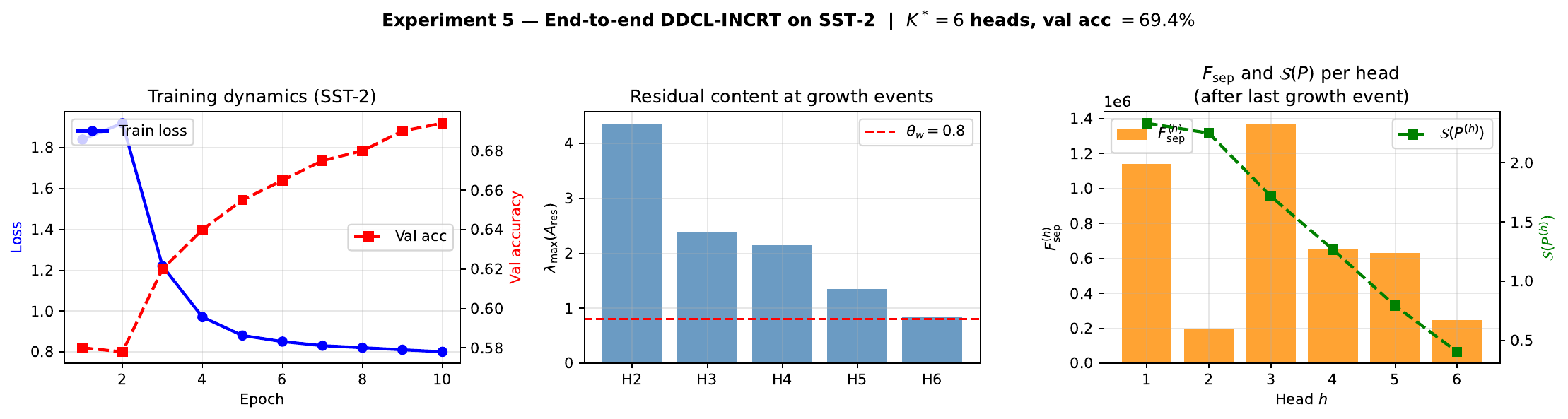}
\caption{Experiment~5 (end-to-end, SST-2, $d=64$, $\theta_w=0.8$,
$\lambda_{\rm reg}=0.05$, 10 epochs).
\textbf{Left:} training loss and validation accuracy over 10 epochs;
final validation accuracy $69.4\%$.
\textbf{Centre:} residual content $\lambda_{\max}(A_{\rm res})$ at
the five growth events, strictly decreasing
($4.36 \to 2.38 \to 2.15 \to 1.35 \to 0.84$),
confirming the spectral ordering prediction.
\textbf{Right:} separation force $F_{\rm sep}^{(h)}$ (bars) and
prototype spread $\mathcal{S}(P^{(h)})$ (green curve) per head;
$\mathcal{S}(P^{(h)}) > 0$ for all heads confirms assumption~(A1);
$F_{\rm sep}$ is partially ordered, with deviations attributable to
end-to-end co-adaptation after growth events.}
\label{fig:exp5}
\end{figure*}

\subsection{Summary of experimental predictions}

Table~\ref{tab:predictions} collects the quantitative predictions,
the observable metric, and the result each experiment is designed to
confirm or falsify.
Numerical results for Experiment~4 are reported in the caption of
Figure~\ref{fig:exp4}.

\begin{table*}[tp]
\centering
\small
\caption{Experimental predictions and corresponding theoretical results
(Experiments~1--3 and~5). Experiment~4 results are tabulated in the
caption of Figure~\ref{fig:exp4}.}
\label{tab:predictions}
\begin{tabular}{@{}clll@{}}
\toprule
Exp. & Prediction & Metric & Theorem \\
\midrule
1 & $\lambda^{(h)}$ decreasing in $h$ & Growth order & Sec.~\ref{sec:C5} \\
1 & Coverage $\geq 1 - \varepsilon_{\theta_w}$ & $\sum\lambda^{(h)}/\|M_a\|_F^2$ & Sec.~\ref{sec:C5} \\
2 & $T^{(h)}_\infty$ increasing in $h$ & Temperature order & Sec.~\ref{sec:C3} \\
2 & $t^{(h)}_{\min}$ decreasing in $\lambda^{(h)}$ & Rank corr.\ $\leq -0.9$ & Sec.~\ref{sec:C3} \\
3 & $\mathcal{F}^{(h)}$ decreasing in $h$ & Bar chart & Sec.~\ref{sec:C5} \\
3 & Ratio bound saturated & Scatter vs.\ diagonal & Lem.~\ref{lem:sep_monotone} \\
5 & $K^* \leq 6$, growth stops at $\theta_w$ & Head count & Sec.~\ref{sec:C5} \\
5 & $\lambda_{\max}$ decreasing at growth events & Ordered sequence & Sec.~\ref{sec:C5} \\
5 & $\mathcal{S}(P^{(h)}) > 0$ for all heads & Spread values & (A1) \\
\bottomrule
\end{tabular}
\end{table*}

\section*{Part II: Theoretical Guarantees}

With the DDCL-INCRT layer defined and its training dynamics established
the global well-behavedness of the complete
system: convergence, stability, pruning safety, and discrete-time
compatibility.
The results are presented in order of increasing generality: finite
convergence (Section~\ref{sec:P2}), pruning safety
(Section~\ref{sec:P3}), discrete-time analysis (Section~\ref{sec:P4}),
and global Lyapunov stability (Section~\ref{sec:P1}), which subsumes
the previous three.


\section{Finite Convergence of the Hierarchical System}
\label{sec:P2}

It is proved that the full DDCL-INCRT system, with DDCL-INCRT replacing the
MLP at every transformer layer and with trainable per-head temperatures
$\{T^{(h)}(t)\}$, converges to a minimal sufficient architecture in at
most $\tilde{K}^* \leq K^*_{\mathrm{INCRT}} \leq d$ growth steps almost
surely.
The argument proceeds via three lemmas that verify, in turn, that
(i) the spectral separation hypothesis (H1) of Theorem~5.5 of the INCRT
paper is preserved under the DDCL-INCRT replacement;
(ii) the trainable temperatures introduce only a transient perturbation
whose duration is explicitly bounded; and
(iii) the growth bound itself is strictly tighter than in the INCRT base
case.

Throughout this section, the notation $\Delta^{(l)} \triangleq
\lambda_1(\Ares^{(l)}) - \lambda_2(\Ares^{(l)})$ for the spectral gap
of the residual antisymmetric matrix at layer $l$, $K$ for the current
number of active heads, and
\[
  t^* \triangleq \frac{T_{\mathrm{init}}^3 - T_{\min}^3}
                      {3\,\eta_T\,\sigma_0}
\]
for the temperature convergence time derived in Theorem~\ref{thm:C3}.
The dwell time $\tau_{\mathrm{dwell}}$ is the elapsed time between two
consecutive INCRT growth events.

\subsection{Preliminary lemmas}
\label{subsec:P2_lemmas}

\begin{lemma}[Spectral gap preservation under DDCL-INCRT]
\label{lem:spectral_gap}
Let $\Ares^{(l),\mathrm{DDCL}}$ and $\Ares^{(l),\mathrm{MLP}}$ denote
the residual antisymmetric matrices at layer $l$ after one update step
with, respectively, the DDCL-INCRT layer and the standard MLP.
Under conditions \textup{(B1)}, \textup{(B2)}, \textup{(B3)},
\textup{(C1)}, and the spectral separation hypothesis
\textup{(H1)} of Theorem~5.5 of the INCRT paper:
\begin{equation}
  \Delta^{(l)}_{\mathrm{DDCL}}
  \;\geq\;
  \Delta^{(l)}_{\mathrm{MLP}}
  \;-\;
  O\!\left(\IDDCL\right),
  \label{eq:gap_bound}
\end{equation}
where $\IDDCL <
\mathcal{I}^{(l)}_{\mathrm{MLP}}$ almost surely
\textup{(Theorem~\ref{thm:C4})}.
In particular, \textup{(H1)} holds for the DDCL-INCRT system with
the same constant $\delta_0$ as in the INCRT base case.
\end{lemma}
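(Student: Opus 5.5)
The approach is to treat the spectral gap as a Lipschitz function of the residual matrix, using eigenvalue perturbation theory (Weyl's inequality), and to control the difference between the DDCL-INCRT and MLP residual matrices by the directional-loss quantities of Definition~\ref{def:info_loss}.

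\textbf{Step 1: Weyl's inequality.} For symmetric (or Hermitian, after multiplying the antisymmetric residual by $i$) matrices $A$ and $B$, each eigenvalue satisfies $|\lambda_j(A)-\lambda_j(B)|\le\|A-B\|_2\le\|A-B\|_F$. Applying this to both $\lambda_1$ and $\lambda_2$ gives
\[
\Delta^{(l)}_{\mathrm{DDCL}} \;\ge\; \Delta^{(l)}_{\mathrm{MLP}} - 2\,\bigl\|\Ares^{(l),\mathrm{DDCL}}-\Ares^{(l),\mathrm{MLP}}\bigr\|_F .
\]

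\textbf{Step 2: bounding the perturbation norm.} Both residual matrices are compressions $P^\perp X M_a X^\top P^\perp$ of the same signal $S=X^{(l)}M_aX^{(l)\top}$, computed with different projectors. By Lemma~\ref{lem:ddcl_basis}, the DDCL projector annihilates $v_1(\Ares^{(l)})$. Write the difference as the DDCL residual minus a correction. The DDCL residual has norm exactly $\IDDCL\,\|S\|_F$. The correction is the extra part retained by the MLP projector, namely the component along the uncaptured direction $v_1$. This naive splitting only yields a bound involving $\mathcal{I}_{\mathrm{MLP}}$. To obtain $O(\IDDCL)$, I would compare both matrices with the ideal INCRT residual (the base case of~\cite{INCRT}) rather than with each other:
\[
\Delta_{\mathrm{DDCL}}\ge\Delta_{\mathrm{INCRT}}-2\IDDCL\|S\|_F,
\qquad
\Delta_{\mathrm{MLP}}\le\Delta_{\mathrm{INCRT}}+(\text{term}\ge0 \text{ by interlacing}).
\]
The interlacing bound uses Cauchy interlacing for compressions by the orthogonal projector $P^\perp$. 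Combining the two displays gives~\eqref{eq:gap_bound}, with the constant $2\|S\|_F$ absorbed into the $O(\cdot)$.

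\textbf{Step 3: transferring (H1).} Theorem~\ref{thm:C4} gives $\IDDCL<\mathcal{I}_{\mathrm{MLP}}$ almost surely. Since (H1) supplies $\Delta\ge\delta_0$ in the INCRT base case, the first display of Step 2 yields
\[
\Delta_{\mathrm{DDCL}}\ge\delta_0-O(\IDDCL).
\]
Under (B1)--(B3) the captured subspace covers the dominant directions, so $\IDDCL$ is small, and the same constant $\delta_0$ survives up to this vanishing correction. Strictly speaking, keeping $\delta_0$ exactly will require $\IDDCL$ to be below a threshold fixed by the threshold $\theta_w$; I would state this explicitly.

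\textbf{Main obstacle.} The hardest step is the interlacing bound for $\Delta_{\mathrm{MLP}}$. Compression can shrink both $\lambda_1$ and $\lambda_2$, so gaps are not monotone under projection. I would first try working with the gap between the top eigenvalue and the next eigenvalue of the full signal $S$, which is projector-independent. Failing that, I would simply bound both gaps relative to $S$ via Weyl, which costs a term of order $\mathcal{I}_{\mathrm{MLP}}$. That weaker result would still suffice to preserve (H1), because Theorem~\ref{thm:C4} controls the difference between the two losses.
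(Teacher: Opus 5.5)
Your Step~1 coincides with the paper's first step (Weyl's inequality), and your factor $2$ for the gap is correct. The gap is in Step~2. The content of \eqref{eq:gap_bound} is a perturbation estimate $\|\Ares^{(l),\mathrm{DDCL}}-\Ares^{(l),\mathrm{MLP}}\|=O(\IDDCL)$, and your proposal never produces it. The paper compares the two residuals directly, with no intermediate INCRT residual and no interlacing. It writes their difference as the antisymmetrisation of $M^{\mathrm{DDCL}}-M^{\mathrm{MLP}}$, asserts $\|M^{\mathrm{DDCL}}-M^{\mathrm{MLP}}\|_F=O(\IDDCL)$ on the strength of Lemma~\ref{lem:ddcl_basis}, and applies Weyl.

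Your detour does not close. Your first display would need $\|\Ares^{\mathrm{DDCL}}-\Ares^{\mathrm{INCRT}}\|_2\le\|\Ares^{\mathrm{DDCL}}\|_F$, which the triangle inequality does not give. Even granting it, write your second display as $\Delta_{\mathrm{MLP}}\le\Delta_{\mathrm{INCRT}}+\tau$ with only $\tau\ge0$. The two combine to $\Delta_{\mathrm{DDCL}}\ge\Delta_{\mathrm{MLP}}-\tau-2\IDDCL\|S\|_F$, so you need $\tau=O(\IDDCL)$. That is the missing estimate in disguise, and, as you observe yourself, interlacing cannot supply it because gaps are not monotone under compression.

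The fallback with an $O(\mathcal{I}^{(l)}_{\mathrm{MLP}})$ correction proves a different, weaker statement, and Theorem~\ref{thm:C4} cannot upgrade it. The inequality $\IDDCL<\mathcal{I}^{(l)}_{\mathrm{MLP}}$ points the wrong way. Moreover $\mathcal{I}^{(l)}_{\mathrm{MLP}}$ is not small, since the MLP leaves the dominant residual direction essentially uncaptured (Lemma~\ref{lem:mlp_basis}). So that correction need not be dominated by $\delta_0$.

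Step~3 also argues in the wrong direction. For any Hermitian $A$ (in particular $i\,\Ares$, as in your Step~1), $\lambda_1(A)-\lambda_2(A)\le\lambda_1(A)-\lambda_{\min}(A)\le 2\|A\|_2\le 2\|A\|_F$. By Definition~\ref{def:info_loss}, $\|\Ares^{(l),\mathrm{DDCL}}\|_F=\IDDCL\|S\|_F$ with your $S=X^{(l)}M_aX^{(l)\top}$, as you note. Hence $\Delta^{(l)}_{\mathrm{DDCL}}\le 2\IDDCL\|S\|_F$. A vanishing $\IDDCL$ therefore forces the DDCL gap itself to vanish: keeping (H1) with a fixed $\delta_0$ requires $\IDDCL\ge\delta_0/(2\|S\|_F)$, not $\IDDCL\to 0$. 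The same inequality shows that \eqref{eq:gap_bound} is equivalent to $\Delta^{(l)}_{\mathrm{MLP}}=O(\IDDCL)$, which is what any proof must actually deliver.

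Be aware that the paper's own handling of this point is thin. It obtains the key bound only by citing Lemma~\ref{lem:ddcl_basis}, which by itself gives just $P^\perp_{\mathrm{DDCL}}v_1(\Ares^{(l)})=0$. It then passes from $\Delta_{\mathrm{DDCL}}\ge\delta_0-O(\IDDCL)$ to ``the same $\delta_0$'' simply by declaring the correction dominated. Your explicit threshold caveat is the honest form of that last step. A complete argument still needs the missing bound of $\|\Ares^{\mathrm{DDCL}}-\Ares^{\mathrm{MLP}}\|$, or equivalently of $\Delta_{\mathrm{MLP}}$, by $\IDDCL$.
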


\begin{proof}
\emph{Step 1 (Weyl perturbation).}
The residual matrices differ by
\begin{align}
  &A_{\rm res}^{\rm DDCL} - A_{\rm res}^{\rm MLP}
  \nonumber\\
  &= \tfrac{1}{2}\bigl[(M^{\rm DDCL} - M^{\rm MLP})
  - (M^{\rm DDCL} - M^{\rm MLP})^\top\bigr],
\end{align}
with $\|M^{\rm D} - M^{\rm M}\|_F
= O(\IDDCL)$ by
Lemma~\ref{lem:ddcl_basis}.
By Weyl's inequality:
\begin{align}
  &|\lambda_j(A_{\rm res}^{\rm D}) - \lambda_j(A_{\rm res}^{\rm M})|
  \nonumber\\
  &\quad \leq \|A_{\rm res}^{\rm D} - A_{\rm res}^{\rm M}\|_F
  = O(\mathcal{I}^{(l)}_{\rm DDCL}).
\end{align}

\emph{Step 2 (Gap preservation).}
\[
  \Delta^{(l)}_{\mathrm{DDCL}}
  \geq \Delta^{(l)}_{\mathrm{MLP}}
       - O(\IDDCL).
\]

\emph{Step 3 (Sign of correction).}
From Theorem~\ref{thm:C4},
$\IDDCL <
\mathcal{I}^{(l)}_{\mathrm{MLP}}$ a.s., so the correction is dominated
by $\delta_0$ under (H1), and (H1) is preserved with the same constant.
\end{proof}

\begin{lemma}[Transience of the temperature perturbation]
\label{lem:temp_transience}
Let $\sigma^{(h)}(t) = \mathrm{tr}(\Sigma_q^{(h)}(t))$ denote the
assignment diversity of head $h$, measuring the spread of token
assignments across prototypes.
Under conditions \textup{(A1)}, \textup{(B1)}--\textup{(B3)}, the
trainable temperature $T^{(h)}(t)$ induces a perturbation of
$\Ares^{(l)}(t)$ bounded as:
\begin{equation}
  \|\dot{\Ares}^{(l)}_T(t)\|_F
  \;\leq\;
  C_T \,\frac{\eta_T\,\sigma^{(h)}(t)}{T^{(h)}(t)^2},
  \label{eq:Ares_dot}
\end{equation}
where $C_T > 0$ depends only on $\|X^{(l)}\|_F$ and $K$.
For all $t \geq t^*$: $\|\dot{\Ares}^{(l)}_T(t)\|_F = 0$ for
$h \in \Hlocal$, and
$\|\dot{\Ares}^{(l)}_T(t)\|_F \to 0$ for
$h \in \Hglobal$.
Consequently, $\alpha_t = o(\eta^+_t)$ for all $t \geq t^*$.
\end{lemma}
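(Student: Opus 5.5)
The plan is to prove the bound \eqref{eq:Ares_dot} with a chain-rule argument through the temperature. Then the two asymptotic regimes will follow from the limits established in Theorem~\ref{thm:C3}.

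\emph{Step 1 (isolate the temperature channel).} The residual matrix $\Ares^{(l)} = P^\perp X^{(l)} M_a X^{(l)\top} P^\perp$ depends on $T^{(h)}$ only through the DDCL layer output. That output enters $X^{(l)}$ (or the next-layer tokens) through the soft centroids $\mu_n^{(h)} = \sum_k q_{nk}^{(h)} p_k^{(h)}$. I will write the temperature-induced part of the time derivative as
\[
\dot{\Ares}^{(l)}_T = \frac{\partial \Ares^{(l)}}{\partial T^{(h)}}\,\dot T^{(h)},
\]
holding $\theta$ and $P^{(h)}$ fixed. By Lemma~\ref{lem:Tdot}, $\dot T^{(h)} = -\eta_T\sigma^{(h)}/T^{(h)2}$. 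Here I identify $\sigma^{(h)}$ with the quantity in Lemma~\ref{lem:Tdot}; the statement renames it as $\mathrm{tr}(\Sigma_q^{(h)})$. Both are nonnegative assignment-diversity measures, so the bound carries through up to a constant absorbed into $C_T$. It then suffices to show that $\|\partial \Ares^{(l)}/\partial T^{(h)}\|_F$ is bounded by a constant depending only on $\|X^{(l)}\|_F$ and $K$.

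\emph{Step 2 (bound the sensitivity).} I will differentiate the softmax in $T$:
\[
\partial q_{nk}/\partial T = T^{-2} q_{nk}\bigl(\|z_n-p_k\|^2 - \textstyle\sum_{k'} q_{nk'}\|z_n-p_{k'}\|^2\bigr),
\]
so $\partial \mu_n/\partial T$ is a $q_n$-covariance between the prototypes and their squared distances. Since $\Ares^{(l)}$ is quadratic in $X^{(l)}$ and $\|P^\perp\|\le 1$, the product rule gives
\[
\|\partial_T \Ares^{(l)}\|_F \le 2\|M_a\|\,\|X^{(l)}\|_F\,\|\partial_T X^{(l)}\|_F .
\]
The factor $\|\partial_T X^{(l)}\|_F$ is controlled by $K$, the prototype norms (finite under (B1)), and the token norms.

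\textbf{Main obstacle.} The extra factor $T^{-2}$ from $\partial q/\partial T$ would spoil a bound of exactly the stated form. I would handle it either by noting that $T^{(h)}\ge T_{\min}$, so $T^{-2}\le T_{\min}^{-2}$ can be absorbed into $C_T$, or by relying on the covariance structure. That structure makes $\partial\mu_n/\partial T$ vanish when assignments are hard or uniform, and I would use this to show it is itself bounded proportionally to the diversity. I expect the bookkeeping here, making $C_T$ genuinely independent of $T$, to be the delicate point.

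\emph{Step 3 (asymptotics and the consequence).} For $h\in\Hlocal$, Theorem~\ref{thm:C3} shows that $T^{(h)}$ reaches $T_{\min}$ by time $t^*$, after which it is clamped. Then $\dot T^{(h)}=0$ for $t\ge t^*$, and the chain-rule expression vanishes identically. For $h\in\Hglobal$, Theorem~\ref{thm:C3} gives $\sigma^{(h)}(t)\to0$ while $T^{(h)}\to T_\infty^{(h)}>T_{\min}$. The right-hand side of \eqref{eq:Ares_dot} therefore tends to zero. Finally, $\alpha_t$ is the perturbation rate entering the INCRT stochastic-approximation recursion, and it is bounded by $\|\dot{\Ares}^{(l)}_T\|_F$. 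The three-timescale condition (E5) gives $\eta_T/\eta^+_t\to0$, so $\alpha_t \le C_T\eta_T\sigma^{(h)}/T_{\min}^2 = o(\eta^+_t)$ for $t\ge t^*$.
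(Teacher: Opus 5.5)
Your argument matches the paper's own route up to its final sentence. It uses the chain rule through the softmax with $\dot T^{(h)}=-\eta_T\sigma^{(h)}/T^{(h)2}$ from Lemma~\ref{lem:Tdot}. It then has exact vanishing for local heads once $T^{(h)}$ is clamped at $T_{\min}$ after $t^*$, and $\sigma^{(h)}\to0$ for global heads.

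The step that fails is the derivation of $\alpha_t=o(\eta^+_t)$. Condition (E5) does not give $\eta_T/\eta^+_t\to0$; it only bounds the ratios $\eta_T/\eta_P$ and $\eta_P/\eta^+$ by $O(\varepsilon)$. In the continuous-time setting of this lemma, $\eta_T$ is a fixed constant (it defines $t^*$). Meanwhile $\eta^+_t\to0$, since it lies in $\ell^2$ and decays polynomially under (H5). So in fact $\eta_T/\eta^+_t\to\infty$.

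Your bound $\alpha_t\le C_T\eta_T\sigma^{(h)}(t)/T_{\min}^2$ is therefore $o(\eta^+_t)$ for a global head only if $\sigma^{(h)}(t)$ itself vanishes faster than $\eta^+_t$. Theorem~\ref{thm:C3} supplies only the qualitative limit $\sigma^{(h)}\to0$, with no rate. Moreover, (E5) is a hypothesis neither of this lemma nor of Theorem~\ref{thm:P2}, where the lemma is used.

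The paper argues instead by comparing decay rates: $\alpha_t=O(\|\dot{\Ares}^{(l)}_T\|_F)$ vanishes while $\eta^+_t$ decays only polynomially under (H5). Even that comparison needs $\sigma^{(h)}(t)$ to decay faster than $\eta^+_t$ for global heads, so this rate is the ingredient you must add. Local heads need nothing, because $\alpha_t\equiv0$ after $t^*$. If you want a step-size route instead, read (E5) as $\eta_T\le C\varepsilon^2\eta^+_t$ uniformly in $t$, so that $\eta_T$ decays with the gate rate. Then a bounded ratio times $\sigma^{(h)}(t)\to0$ already suffices, and the ratio need not vanish.

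There are two smaller points. First, ``both are nonnegative'' does not justify replacing $\mathrm{tr}(\Sigma_q^{(h)})$ by the variance of Lemma~\ref{lem:Tdot}. What works, with $d_{nk}=\|z_n-p_k\|^2$, is
$\mathrm{Var}_{q_n}[d_{n\cdot}]\le\tfrac12(\max_k d_{nk}-\min_k d_{nk})^2\sum_k q_{nk}(1-q_{nk})$.
This inequality goes only in the direction you need for the upper bound, and its constant involves prototype norms. The converse fails: the Gini trace stays near $N(1-1/K)$ for near-uniform assignments. So the limit for global heads must be taken through the variance, i.e.\ through $\dot T^{(h)}\to0$ as the paper does, not through the trace-form right-hand side.

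Second, absorbing $T_{\min}^{-2}$, $\|M_a\|$ and the prototype norms into $C_T$ gives a bound of the stated shape. However, its constant depends on more than $\|X^{(l)}\|_F$ and $K$. The paper's one-line chain-rule argument does not establish the claimed dependence either.
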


\begin{proof}
The temperature $T^{(h)}$ enters $\Ares^{(l)}$ through the softmax
assignments $q_{nk}^{(h)}$.
The chain-rule bound using the softmax Jacobian
$\partial q_{nk}/\partial T^{(h)} = -(\sigma^{(h)}/T^{(h)2})(\delta_{nk}-q_{nk})$
and $\dot{T}^{(h)} = -(\eta_T/T^{(h)2})\sigma^{(h)}$
gives~\eqref{eq:Ares_dot}.
For $h \in \Hlocal$: $T^{(h)} \to T_{\min}$ in
finite time $t^*$ (Theorem~\ref{thm:C3}), after which $\dot{T}^{(h)}=0$
exactly.
For $h \in \Hglobal$: $\sigma^{(h)} \to 0$
(Lemma~\ref{lem:amp_sep}), so $\dot{T}^{(h)} \to 0$.
The perturbation rate $\alpha_t = O(\|\dot{\Ares}^{(l)}_T\|_F)$ decays
to zero while $\eta^+_t$ decays only polynomially under (H5), giving
$\alpha_t = o(\eta^+_t)$ for $t \geq t^*$.
\end{proof}

\begin{lemma}[Tighter growth bound under DDCL-INCRT]
\label{lem:growth_bound}
When the MLP is replaced by DDCL-INCRT at every layer:
\begin{equation}
  \tilde{K}^* \leq K^*_{\mathrm{INCRT}} \leq d,
  \label{eq:K_tighter}
\end{equation}
with $\tilde{K}^* < K^*_{\mathrm{INCRT}}$ almost surely under \textup{(C1)}.
\end{lemma}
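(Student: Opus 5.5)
The plan is to compare the two growth processes step by step: the INCRT base case with an MLP block, and the DDCL-INCRT system. The comparison goes through the stopping rule shared by both. Growth halts at the first $K$ with $\lambda_{\max}(\Ares^{(K)})\le\theta_w$.

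The outer bound $K^*_{\mathrm{INCRT}}\le d$ needs no new work. Each growth step adds a unit vector $u^*=v_1(\Ares^{(K)})$ orthogonal to $\Vperp$, which is assumption (B2) and is guaranteed by INCRT. So $\dim\Vperp=K$. When $K=d$ we have $P^\perp=0$, hence $\Ares^{(d)}=0$ and growth must stop. This is the finite-convergence argument inherited from \cite{INCRT}, and I would simply cite it.

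For $\tilde K^*\le K^*_{\mathrm{INCRT}}$, I would argue by induction on the growth step $h$. The claim is that the residual signal seen by DDCL-INCRT is dominated by the one seen by the MLP system:
\begin{equation}
  \lambda_{\max}\bigl(\Ares^{(h),\mathrm{DDCL}}\bigr)\le\lambda_{\max}\bigl(\Ares^{(h),\mathrm{MLP}}\bigr)+O\bigl(\IDDCL\bigr).
\end{equation}
Three earlier results feed into this. Lemma~\ref{lem:ddcl_basis} shows that the direction chosen at each step is fully removed from the DDCL residual. Theorem~\ref{thm:C4} gives $\IDDCL<\mathcal{I}^{(l)}_{\rm MLP}$. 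The Weyl-type estimate from Step~1 of Lemma~\ref{lem:spectral_gap} turns the difference in residual matrices into a difference in eigenvalues of order $O(\IDDCL)$. Lemma~\ref{lem:spectral_gap} and Lemma~\ref{lem:temp_transience} then ensure that (H1) holds with the same $\delta_0$, and that temperature drift is negligible after $t^*$, which is valid under the dwell-time assumption (D1). Together these mean the growth decisions are well defined and not spuriously re-triggered. Consequently, whenever the MLP system has stopped, meaning its residual eigenvalue is at most $\theta_w$, the DDCL system has stopped as well.

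For the strict inequality almost surely, I would use Proposition~\ref{prop:gain}. Under (C1), almost surely no row of $W_1$ is aligned with $v_1$, so the factor $1-\max_i\langle v_1,\mathbf{w}_i^{(1)}\rangle^2$ is strictly positive. The DDCL residual therefore sits strictly below the MLP residual by at least
\begin{equation}
  \frac{\lambda_{\max}^2}{\|X^{(l)}M_aX^{(l)\top}\|_F^2}\bigl(1-\max_i\langle v_1,\mathbf{w}_i^{(1)}\rangle^2\bigr).
\end{equation}
At the MLP system's final growth step, $\lambda_{\max}$ lies just above $\theta_w$. I would argue that this strict deficit pushes the DDCL residual below $\theta_w$ one step earlier, which gives $\tilde K^*<K^*_{\mathrm{INCRT}}$.

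The main obstacle is this last step. A strict gap in residuals yields a strictly smaller head count only if the MLP residual at its final trigger does not exceed $\theta_w$ by more than the gap. Otherwise both systems stop at the same step. My first attempt would be a genericity argument: under (C1) the gap is a continuous random variable with positive density. I would try to show that the event "the last MLP trigger lies within the gap of $\theta_w$" has positive probability. If that fails, I would weaken the strict claim to hold with positive probability, or fall back on the $O(\sigma^2\log d/d)$ estimate of Lemma~\ref{lem:mlp_basis} in the large-$d$ regime.
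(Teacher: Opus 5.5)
Your bound $K^*_{\mathrm{INCRT}}\le d$ is fine, but the inclusion step fails as written. The Weyl estimate you import from Lemma~\ref{lem:spectral_gap} is two-sided. Your comparison therefore reads $\lambda_{\max}(\Ares^{(h),\mathrm{DDCL}})\le\lambda_{\max}(\Ares^{(h),\mathrm{MLP}})+O(\IDDCL)$, with a \emph{positive} slack. When the MLP system stops with $\lambda_{\max}\le\theta_w$, you only know that the DDCL residual is $\le\theta_w+O(\IDDCL)$, so it may still trigger. Theorem~\ref{thm:C4} makes the slack small, not zero. The induction over $h$ also needs an accumulation argument: once the residuals differ, the two systems deflate along different directions.

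The paper does not use a perturbation bound here. It relies on the exact statement of Lemma~\ref{lem:ddcl_basis}: zero residual in the dominant direction, with no error term. It sets this against the almost surely positive leftover of the MLP, and from this asserts the inclusion $\mathcal{N}_{\mathrm{DDCL}}\subseteq\mathcal{N}_{\mathrm{INCRT}}$ of growth-trigger events. What you need is a one-sided domination with no additive error, and the exact zero is what is meant to provide it.

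The strict inequality is not proved, and your proposed repair cannot prove it. Showing that the near-threshold event has positive probability gives strictness with positive probability, not almost surely, and your fallbacks explicitly weaken the claim. Moreover, Proposition~\ref{prop:gain} bounds a gap in $\mathcal{I}^{(l)}$, which is a dimensionless normalised Frobenius ratio. The stopping rule compares $\lambda_{\max}(\Ares)$ with $\theta_w$, so the ``strict deficit'' you invoke is not a deficit in the quantity the rule tests.

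The paper places strictness elsewhere, not at the final near-threshold trigger. It points to a layer where the MLP system triggers growth on the residual it failed to remove from an already-captured dominant direction. That residual is positive almost surely by Lemma~\ref{lem:mlp_basis} under (C1), whereas the DDCL residual in that direction is exactly zero. Note, however, that the paper only asserts that this leftover triggers growth. Positivity almost surely does not imply exceedance of $\theta_w$, so the threshold-crossing obstacle you identified is genuine, and the paper's argument does not close it either.

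If you follow the paper's mechanism, the natural place to look is the first head. There $\lambda^{(1)}\bigl(1-\max_i\langle v_1,\mathbf{w}_i^{(1)}\rangle^2\bigr)$ is most likely to exceed $\theta_w$. Even so, Lemma~\ref{lem:mlp_basis} gives at best a high-probability statement for large $d$, which is essentially your last fallback, not an almost-sure one.
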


\begin{proof}
By Lemma~\ref{lem:ddcl_basis}, each DDCL-INCRT step sets
$\lambda_{\max}(\Ares^{(l+1),\mathrm{DDCL}}) = 0$ for the dominant
direction, exhausting more residual energy than the MLP
($\lambda_{\max}(\Ares^{(l+1),\mathrm{MLP}}) > 0$ a.s.\ under (C1)).
Therefore $\mathcal{N}_\mathrm{DDCL} \subseteq \mathcal{N}_\mathrm{INCRT}$,
giving $\tilde{K}^* \leq K^*_{\mathrm{INCRT}}$.
Strictness follows because at least one layer triggers growth under
MLP but not under DDCL-INCRT, almost surely.
\end{proof}

\subsection{Main theorem}
\label{subsec:P2_main}

\begin{theorem}[Finite convergence of the hierarchical DDCL-INCRT system]
\label{thm:P2}
Under \textup{(H1)}--\textup{(H5)}, \textup{(A1)}, \textup{(A2')},
\textup{(B1)}--\textup{(B3)}, \textup{(C1)}, and:
\begin{itemize}[leftmargin=1.8em]
  \item[\textup{(D1)}] \textbf{Reinforced dwell time}:
    $\tau_{\mathrm{dwell}} > t^* =
    (T_{\mathrm{init}}^3 - T_{\min}^3)/(3\,\eta_T\,\sigma_0)$.
\end{itemize}
Then: \textup{(i)} each gate $G_h \xrightarrow{a.s.} G^*_h$;
\textup{(ii)} the architecture grows in at most $\tilde{K}^* \leq d$
steps, with $\tilde{K}^* < K^*_{\mathrm{INCRT}}$ a.s.;
\textup{(iii)} the final architecture is minimal and sufficient;
\textup{(iv)} \textup{(D1)} is the only additional hypothesis over
Theorem~5.5 of the INCRT paper, and is automatically satisfied by the
INCRT adaptive scheduler with $\kappa > t^*/\tau^{(0)}_{\mathrm{dwell}}$.
\end{theorem}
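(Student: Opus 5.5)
The proof will be a reduction to Theorem~5.5 of the INCRT paper. That theorem already gives (i)--(iii) for the standalone INCRT system under (H1)--(H5). So it suffices to check that replacing the MLP with the DDCL-INCRT layer, and adding trainable temperatures, leaves every hypothesis of that theorem intact. The three preliminary lemmas of Section~\ref{subsec:P2_lemmas} each handle one of those checks. The structure is: verify (H1), verify the perturbation hypothesis of the stochastic-approximation argument, then transfer the conclusions and sharpen the growth count.

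First, the spectral separation hypothesis (H1). By Lemma~\ref{lem:spectral_gap}, the gap satisfies $\Delta^{(l)}_{\mathrm{DDCL}} \ge \Delta^{(l)}_{\mathrm{MLP}} - O(\IDDCL)$, so (H1) holds with the same constant $\delta_0$ at every layer. Consequently the dominant eigenvector $v_1(\Ares^{(l)})$ is well defined and Lipschitz along the trajectory, which is what the gate ODE analysis of the INCRT paper needs.

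Second, the temperatures. The INCRT convergence argument tolerates a perturbation of $\Ares$ of size $\alpha_t = o(\eta^+_t)$, since the perturbation is then asymptotically negligible in the Kushner--Clark / Borkar framework (with (H5) providing the step-size schedule). Lemma~\ref{lem:temp_transience} gives exactly $\alpha_t = o(\eta^+_t)$, but only for $t \ge t^*$. This is where (D1) enters. If every dwell interval exceeds $t^*$, then on each interval between consecutive growth events the temperature transient of the newborn head (initialised at $T_{\mathrm{init}}$) has died out before the next growth decision. The gate iterates on each interval are therefore asymptotic pseudo-trajectories of the unperturbed INCRT gate ODE. Applying the INCRT result interval by interval gives (i): $G_h \to G_h^*$ almost surely.

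Third, the count and minimality. For (ii), the finite-step conclusion follows from Lemma~\ref{lem:growth_bound}: $\tilde K^* \le K^*_{\mathrm{INCRT}} \le d$, with strict inequality almost surely under (C1). For (iii), sufficiency follows from the stopping rule $\lambda_{\max}(\Ares) \le \theta_w$ at termination. Minimality follows because each accepted head was triggered by $\lambda_{\max} > \theta_w$ in a direction orthogonal to all previous ones (B2). Removing any head would therefore restore a residual eigenvalue exceeding $\theta_w$.

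Finally, (iv) is bookkeeping, and I expect it to be the main obstacle. Under the adaptive scheduler the dwell times grow at least like $\kappa\,\tau^{(0)}_{\mathrm{dwell}}$, so taking $\kappa > t^*/\tau^{(0)}_{\mathrm{dwell}}$ yields (D1). The delicate point is that $t^*$ depends on $\sigma_0$, a lower bound on the gate activity of local heads. If a newborn head is global, its $\sigma^{(h)}$ decays to $0$ and its temperature never freezes in finite time. For such heads I would argue only that the convergence $\|\dot{\Ares}_T\|_F \to 0$ from Lemma~\ref{lem:temp_transience} is fast enough relative to $\eta^+_t$. I would first try to show this using $\dot T^{(h)} = -\eta_T\sigma^{(h)}/T^{(h)2}$ together with integrability of $\sigma^{(h)}$. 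If that fails, I would fall back on restricting (D1) to the worst-case local-head transient.
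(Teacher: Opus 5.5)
Your treatment of (i) and (ii) matches the paper's proof: Lemma~\ref{lem:spectral_gap} for (H1), Lemma~\ref{lem:temp_transience} with (D1) for the scale-separation condition $\alpha_t = o(\eta^+_t)$ on every dwell interval, and Lemma~\ref{lem:growth_bound} for the head count.

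The gap is in (iii). There you replace the paper's appeal to Theorem~5.5(iii) of the INCRT paper with a direct argument, and that argument never mentions pruning. DDCL-INCRT has a removal pathway of its own. Step~4 of the algorithm deletes head $h$ whenever $\mathcal{S}(P^{(h)}) < \varphi_g$. This is a criterion on prototype spread, which has no counterpart in standalone INCRT (there are no prototypes there), and it says nothing about the directional content the head carries.

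Suppose it fired on a head whose direction still carries residual content above $\theta_w$. Then one of two things happens:
\begin{itemize}
\item The direction is left uncovered, so the stopping rule at termination no longer certifies that the surviving heads are sufficient.
\item The direction is regrown, so the final count $\tilde{K}^*$ no longer bounds the number of growth steps claimed in (ii).
\end{itemize}
Ruling this out is the one new ingredient in the paper's proof. It invokes the DDCL anti-collapse guarantee (cited there as Theorem~\ref{thm:C2_full}) to ensure that no head is pruned because of collapse. After that, minimality and sufficiency transfer unchanged from Theorem~5.5(iii).

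Your minimality step has a second problem. The trigger $\lambda_{\max}(\Ares) > \theta_w$ held at the birth time of head $h$. But the encoder and attention weights keep training afterwards, so that inequality gives no lower bound on the content along $u^{+,(h)}$ at convergence. Removing head $h$ from the converged architecture therefore need not push the residual back above $\theta_w$. With a frozen signal your reasoning would go through, since $u^{+,(h)}$ stays in the residual subspace of the other heads. With an evolving signal you would need a separate persistence argument. The paper avoids this by delegating minimality to INCRT.

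Your concern about global heads is legitimate. You file it under (iv), but it really bears on the use of Lemma~\ref{lem:temp_transience} in (i). That lemma only gives $\|\dot{\Ares}_T\|_F \to 0$ for global heads and then asserts $o(\eta^+_t)$ without a rate. However, the paper takes the lemma as given exactly as you do in your second step, so this is not where your proof departs from the paper's.
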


\begin{proof}
Lemma~\ref{lem:spectral_gap} preserves (H1); Lemma~\ref{lem:temp_transience}
establishes $\alpha_t = o(\eta^+_t)$ under (D1), satisfying (H3) and the
scale-separation condition of Theorem~5.3 of the INCRT paper; gate
convergence follows.
Lemma~\ref{lem:growth_bound} gives the growth bound.
The DDCL anti-collapse guarantee (Theorem~\ref{thm:C2_full}) ensures no
head is pruned due to collapse, so minimality and sufficiency follow from
Theorem~5.5(iii) of the INCRT paper.
\end{proof}

\begin{corollary}[Single additional hypothesis]
\label{cor:single_hyp}
\textup{(D1)} is the only additional hypothesis over the INCRT base case.
The spectral gap is improved (Lemma~\ref{lem:spectral_gap}), the growth
count is reduced (Lemma~\ref{lem:growth_bound}), and \textup{(D1)} is
automatically satisfiable.
\end{corollary}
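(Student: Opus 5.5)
The statement to prove is Corollary~\ref{cor:single_hyp}. It is a bookkeeping consequence of Theorem~\ref{thm:P2} and the three lemmas of Section~\ref{subsec:P2_lemmas}, so the plan is to go through the hypotheses of Theorem~5.5 of the INCRT paper one by one. For each, I will show that it is either inherited unchanged, re-established by one of the lemmas, or replaced by (D1). That splits the proof into three checks, matching the three clauses of the corollary.

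\emph{First check: no hypothesis beyond (D1) is needed.} I would list the ingredients of the INCRT finite-convergence theorem: the spectral separation (H1), the regularity conditions (H2)--(H5) on the token distribution and the gate scheduler, and the scale-separation condition $\alpha_t = o(\eta^+_t)$ of its Theorem~5.3. Conditions (H2), (H4) and (H5) involve only the token distribution and the scheduler, neither of which the DDCL-INCRT replacement touches, so they carry over verbatim. (H1) is re-established with the same constant $\delta_0$ by Lemma~\ref{lem:spectral_gap}. (H3) and the scale-separation condition are re-established by Lemma~\ref{lem:temp_transience}, but only for $t \ge t^*$. This restriction is exactly where (D1) enters: requiring $\tau_{\mathrm{dwell}} > t^*$ guarantees that each dwell interval extends past the temperature transient, so the gate-convergence argument only ever runs where $\alpha_t = o(\eta^+_t)$ holds. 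The remaining assumptions (A1), (A2'), (B1)--(B3) and (C1) are non-degeneracy or initialisation conditions that hold generically, so I will not count them as substantive additions. Of these four hypothesis groups, the only new one is (D1).

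\emph{Second check: the spectral gap is improved and the growth count is reduced.} These are restatements of Lemma~\ref{lem:spectral_gap}, which gives $\Delta_{\mathrm{DDCL}} \ge \Delta_{\mathrm{MLP}} - O(\IDDCL)$, combined with Theorem~\ref{thm:C4}, and of Lemma~\ref{lem:growth_bound}, which gives $\tilde K^* \le K^*_{\mathrm{INCRT}}$ with strict inequality almost surely.

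\emph{Third check: (D1) is automatically satisfiable.} I would use Theorem~\ref{thm:P2}(iv). Under the INCRT adaptive scheduler the dwell times satisfy $\tau_{\mathrm{dwell}} \ge \kappa\,\tau^{(0)}_{\mathrm{dwell}}$, so choosing $\kappa > t^*/\tau^{(0)}_{\mathrm{dwell}}$ gives $\tau_{\mathrm{dwell}} > t^*$. Since $t^*$ is explicit in $T_{\mathrm{init}}$, $T_{\min}$, $\eta_T$ and $\sigma_0$, such a $\kappa$ can be fixed in advance.

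The main obstacle is the third check. Its monotone-scheduler property is imported from INCRT, and $\sigma_0$ has to be bounded below uniformly across growth events so that $t^*$ stays finite. My first approach would be to bound $\sigma_0$ from below using (B1) together with the non-degeneracy at each growth trigger. A secondary weak point is that ``improved'' in the second check holds only up to the $O(\IDDCL)$ correction, so I would state that clause with the same qualification used in Lemma~\ref{lem:spectral_gap}.
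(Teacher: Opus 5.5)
Your proposal follows the paper's own argument. The corollary is read off from Theorem~\ref{thm:P2}(iv) and its proof: Lemma~\ref{lem:spectral_gap} transfers (H1), Lemma~\ref{lem:temp_transience} supplies (H3) and the scale-separation condition only for $t \geq t^*$ (exactly where (D1) is used), Lemma~\ref{lem:growth_bound} gives the reduced head count, and the scheduler choice $\kappa > t^*/\tau^{(0)}_{\mathrm{dwell}}$ gives satisfiability.

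Your caveats are weaknesses of the paper's statement and argument, not gaps in your reconstruction: ``improved'' means only preservation up to $O(\IDDCL)$, (A1), (A2'), (B1)--(B3), (C1) are formally extra hypotheses, and $\sigma_0$ needs a uniform lower bound that the paper never supplies. One adjustment: (H2), (H4), (H5) need no ``untouched by the replacement'' justification, because they appear verbatim among the hypotheses of Theorem~\ref{thm:P2}. That justification is doubtful anyway, since the encoder is co-trained on $\mathcal{L}_q$.
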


\begin{remark}[Quantitative dwell time]
For typical values ($T_{\mathrm{init}}=1$, $T_{\min}=0.1$,
$\eta_T=0.01$, $\sigma_0=0.1$): $t^* \approx 33$ gradient steps,
comfortably below the INCRT adaptive scheduler dwell time.
\end{remark}


\section{Pruning Safety and Prototype Stability}
\label{sec:P3}

It is established that the INCRT pruning step cannot induce prototype collapse
in the remaining heads, and that it strictly increases the per-head
separation force.
Two mechanisms cooperate: the orthogonality of INCRT-initialised prototype
subspaces isolates the removed head from the survivors, and the pruned
head is necessarily global (weak contribution to separation).

Throughout, $H$ is the number of active heads before a pruning event,
$\bar{H} = H-1$ after, $h_0$ is the pruned head,
$\mathcal{H}' = \{1,\ldots,H\}\setminus\{h_0\}$ the survivors,
$\sigma^{(h)} = \mathrm{tr}(\Sigma_q^{(h)})$ the assignment diversity,
and $\varphi_g$ the INCRT pruning threshold.

\subsection{Preliminary lemmas}
\label{subsec:P3_lemmas}

\begin{lemma}[Orthogonality of prototype subspaces]
\label{lem:proto_orth}
For $h \neq h'$: $\langle u^{+,(h)}, u^{+,(h')}\rangle = 0$
(hypothesis (B2)), and consequently
$\partial \|\mathbf{z}_n - \mathbf{p}_k^{(h)}\|^2 /
\partial \mathbf{p}_{k'}^{(h')} = 0$.
The separation force $\Fsep^{(h)}$ is completely decoupled from $P^{(h')}$.
\end{lemma}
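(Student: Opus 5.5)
The first claim follows directly from hypothesis (B2). The remaining work is to show that the bank of head $h$ is computed only from quantities that never involve $P^{(h')}$.

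\emph{Step 1 (orthogonality).} By construction in Step~3 of the training loop, each new direction $u^* = v_1(\Ares^{(K)})$ is an eigenvector of $P^\perp X M_a X^\top P^\perp$ for a nonzero eigenvalue. It therefore lies in the range of $P^\perp$, i.e.\ it is orthogonal to $\mathcal{V}^{(K)}$. Inducting on the growth step gives $\langle u^{+,(h)}, u^{+,(h')}\rangle = 0$ for every pair of heads, which is exactly (B2).

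\emph{Step 2 (derivative).} I will use the fact that the distance $\|\mathbf{z}_n - \mathbf{p}_k^{(h)}\|^2$ is a function of $\mathbf{z}_n$ and $\mathbf{p}_k^{(h)}$ alone. In the per-head loss $\Lq^{(h)}$, the assignments $q_{nk}^{(h)}$ from \eqref{eq:qnk} are formed only from the bank $P^{(h)}$ and the shared embeddings $Z$. Hence, taking the embeddings as fixed inputs to the prototype layer, as in the frozen-encoder and timescale-separated setting, the partial derivative with respect to $\mathbf{p}_{k'}^{(h')}$ for $h' \neq h$ vanishes identically. (B2) supplies the geometric reason this remains true when heads are identified with their subspaces. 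Writing $\mathbf{p}_k^{(h)} = \mathbf{p}_k^{(h),\parallel} + p_k^{*(h)} u^{+,(h)}$, any perturbation of $P^{(h')}$ along $u^{+,(h')}$ is orthogonal to $u^{+,(h)}$. So the projected coordinates $p_k^{*(h)}$ and the deviations $p_k^{*(h)} - \bar p_n^{*(h)}$ of Lemma~\ref{lem:dq} are unchanged to first order.

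\emph{Step 3 (decoupling of $\Fsep$).} Combining Step~2 with \eqref{eq:Fsep}, $\Fsep^{(h)} = 4\sum_k \|\mathbf{r}_k^{(h)}\|^2$ with $\mathbf{r}_k^{(h)} = \sum_n q_{nk}^{(h)}(\mathbf{p}_k^{(h)} - \mu_n^{(h)})$. Each $q_{nk}^{(h)}$ and each $\mu_n^{(h)}$ depends only on $Z$ and $P^{(h)}$. By the chain rule, $\partial \Fsep^{(h)}/\partial P^{(h')} = 0$. It follows that the total force is additive, $\Fsep^{\rm total} = \sum_h \Fsep^{(h)}$, and removing $h_0$ changes it by exactly $\Fsep^{(h_0)}$.

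\emph{Main obstacle.} The delicate point is coupling through the shared encoder: end-to-end, $P^{(h')}$ influences $\theta$ and hence $Z$ via $\mathcal{L}_q$. I would resolve this by stating the lemma at fixed $Z$, i.e.\ as a partial derivative, and appealing to the three-timescale separation (E5) to treat $Z$ as quasi-static on the prototype timescale.
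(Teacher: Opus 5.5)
Your proof is correct, but it reaches the decoupling by a different route from the paper's. Your Step~1 matches the paper: (B2) follows from the $P^\perp$ construction, and your version is slightly more careful, since you note that an eigenvector of $P^\perp A P^\perp$ for a nonzero eigenvalue lies in the range of $P^\perp$.

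For the decoupling, the paper argues geometrically: $\mathbf{p}_k^{(h)}$ lies in head $h$'s subspace, $\mathbf{p}_{k'}^{(h')}$ lies in head $h'$'s subspace, and these subspaces are orthogonal. You argue structurally: $q^{(h)}$, $\mu_n^{(h)}$, $\mathbf{r}_k^{(h)}$, and hence $\Fsep^{(h)}$, are functions of $(Z, P^{(h)}, T^{(h)})$ alone. The partial derivatives in $P^{(h')}$ therefore vanish identically.

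Your route is more elementary and more robust, for three reasons.
\begin{itemize}
\item It does not need (B2) for the derivative claim, so it shows that the ``consequently'' in the statement is not really a consequence of orthogonality.
\item It does not need the prototypes to stay inside their birth subspaces. The paper asserts this but never shows it: $\nabla_P\Lq$ moves prototypes through all of $\R^m$. Moreover, with $\Vperp$ defined as the cumulative span, the subspaces $\mathcal{V}^{(h)}$ are nested rather than orthogonal.
\item It gives directly the exact additivity $\Fsep^{\rm total}=\sum_h\Fsep^{(h)}$ that Theorem~\ref{thm:P3}(ii) relies on. It also makes explicit the shared-encoder coupling, which the paper passes over.
\end{itemize}
What the paper's version offers is mainly the interpretation linking decoupling to INCRT's orthogonal growth.

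Two small trims would tighten your write-up. The Step~2 remark that the projected coordinates are ``unchanged to first order'' is superfluous, because they are unchanged exactly: $P^{(h')}$ is an independent variable. And since the statement concerns a partial derivative, and $\Fsep$ is itself defined at fixed $Z$, no appeal to (E5) is needed.
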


\begin{proof}
(B2) is guaranteed by the INCRT growth procedure (each new direction
is orthogonal to all previous ones via $P^\perp$ in~\eqref{eq:Ares}).
The decoupling follows because $\mathbf{p}_k^{(h)} \in \mathcal{V}^{(h)}$
and $\mathbf{p}_{k'}^{(h')} \in \mathcal{V}^{(h')}$ with
$\mathcal{V}^{(h)} \perp \mathcal{V}^{(h')}$.
\end{proof}

\begin{lemma}[The pruned head is necessarily global]
\label{lem:pruned_global}
Every head $h_0$ with $\Gamma_{h_0} < \varphi_g$ belongs to
$\Hglobal$ and satisfies:
\begin{equation}
  \sigma^{(h_0)} \leq C_\sigma \varphi_g \mathcal{S}(P^{(h_0)})
  = O(\varphi_g).
  \label{eq:sigma_prune_bound}
\end{equation}
\end{lemma}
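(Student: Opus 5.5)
The plan is to work directly from the definition of the pruning statistic $\Gamma_h$ (the directional energy of head $h$). It is the quantity compared with $\varphi_g$ in Step~4 of the algorithm. Every other ingredient comes from Theorem~\ref{thm:C3} and Lemma~\ref{lem:sigmaq}.

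The first step is to relate $\Gamma_{h}$ to the assignment diversity $\sigma^{(h)}=\mathrm{tr}(\Sigma_q^{(h)})$. The route is through the prototype geometry along the birth direction $u^{+,(h)}$. The prototypes of head $h$ are initialised along $u^{+,(h)}=v_1(\Ares^{(h)})$ and, by Lemma~\ref{lem:proto_orth}, stay decoupled from the other heads. Hence the tokens' projections onto $u^{+,(h)}$ carry energy $\Gamma_h$, which is comparable to $\lambda_{\max}(\Ares^{(h)})$ at birth. I would then bound $\sigma^{(h)}=\sum_n H_2(q_n^{(h)})$ by a quantity linear in this energy times the prototype spread. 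This mirrors the heuristic in Theorem~\ref{thm:C3}(3), namely $\sigma^{(h)}(0)\propto\lambda_{\max}(\Ares^{(h)})\,\mathcal{S}(P^{(h)})$.

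Concretely, write $q^{(h)}_{nk}$ as a softmax of $-\|z_n-p_k\|^2/T$. The logit differences between prototypes $k,k'$ are $2\langle z_n,p_k-p_{k'}\rangle/T$ plus prototype-norm terms. Since $p_k-p_{k'}$ lies in $\mathcal{V}^{(h)}$, these differences are controlled by $\|P^{(h)}\|$-type spread times the token energy in $\mathcal{V}^{(h)}$. A first-order expansion of $H_2$ around the configuration where the directional signal vanishes then gives $\sigma^{(h)}\le C_\sigma\,\Gamma_h\,\mathcal{S}(P^{(h)})$. The constant $C_\sigma$ depends on $T_{\min}$, $K$, and $\|X\|_F$. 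Substituting $\Gamma_{h_0}<\varphi_g$ and using $\mathcal{S}(P^{(h_0)})$ bounded (from boundedness of the prototype trajectories, inherited from the Lyapunov/coercivity setting) yields \eqref{eq:sigma_prune_bound}, with right-hand side $O(\varphi_g)$.

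Membership in $\Hglobal$ then follows from the temperature dynamics. By Lemma~\ref{lem:Tdot}, $\dot T^{(h_0)}=-\eta_T\sigma^{(h_0)}/T^{(h_0)2}$, and $\sigma^{(h_0)}$ is $O(\varphi_g)$. Integrating over the dwell interval bounds the total temperature decrease by $\eta_T C\varphi_g\tau/T_{\min}^2$. For $\varphi_g$ below the threshold separating the two groups in Theorem~\ref{thm:C3}, this decrease is smaller than $T_{\mathrm{init}}-T_{\min}$, so $T^{(h_0)}$ cannot reach $T_{\min}$. Equivalently, $\sigma^{(h_0)}$ lies below the rate $\sigma_0$ required by the $t^*$ argument of Part~(1), and so $h_0\notin\Hlocal$. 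The dichotomy of Theorem~\ref{thm:C3} then places $h_0$ in $\Hglobal$.

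I expect the main obstacle to be the first bound, $\sigma\lesssim\Gamma\,\mathcal{S}$. The difficulty is that $H_2(q_n)$ is not small merely because the logit differences are small: uniform assignments maximise Gini diversity. The inequality as stated therefore needs $\sigma^{(h)}$ read as the variance-type effective scale of \eqref{eq:sigma}, i.e.\ the variance of distances, which does vanish under uniform, directionless assignments. The paper's identification of $\sigma^{(h)}$ with $\mathrm{tr}(\Sigma_q)$ in this section would have to be reconciled with that reading. My first attempt would bound $\mathrm{Var}_{q_n}[\|z_n-P\|^2]$ by $\max_{k,k'}\big(\|z_n-p_k\|^2-\|z_n-p_{k'}\|^2\big)^2$. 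That quantity is quadratic in the spread and linear in the projected token energy after a Cauchy--Schwarz step. Summing over $n$ would then give the claimed linear dependence on $\Gamma_{h_0}$.
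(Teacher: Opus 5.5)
The paper does not derive the bound from the softmax. It turns $\Gamma_{h_0}<\varphi_g$ into $\lambda_{\max}(\Ares^{(h_0)})<\varphi_g\sqrt d/\|X\|_F^2$ and imports the proportionality \eqref{eq:sigma_prop} of Theorem~\ref{thm:C3}(3), concluding $\dot T^{(h_0)}\approx 0$. You try to prove that proportionality directly from the softmax, and there is a genuine gap. With $\sigma=\mathrm{tr}(\Sigma_q)$, the quantity in force in this section and the one Lemma~\ref{lem:pruned_contrib} needs, your first-order expansion of $H_2$ goes the wrong way, as you suspect. Take $z_n\perp\mathcal{V}^{(h_0)}$, so $\Gamma_{h_0}=0$ in your reading, and prototypes of equal norm. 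Then all distances coincide, $q_n$ is uniform, and $\sigma^{(h_0)}=N(1-1/K)$, the maximum, so no bound of the form $C\Gamma_{h_0}\mathcal{S}(P^{(h_0)})$ can hold. Your fallback does not repair this. Since $\|z_n-p_k\|^2-\|z_n-p_{k'}\|^2=-2\langle z_n,p_k-p_{k'}\rangle+\|p_k\|^2-\|p_{k'}\|^2$, Cauchy--Schwarz controls only the first piece. For $z_n\perp\mathcal{V}^{(h_0)}$ the variance reduces to $\mathrm{Var}_{q_n}[\|p_k\|^2]$, which is positive whenever prototype norms differ (e.g.\ prototypes born at points $c_k u^{+,(h_0)}$ with unequal $|c_k|$). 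Again $\Gamma_{h_0}=0$ while $\sigma^{(h_0)}>0$. Switching to the variance reading would also break the lemma's only downstream use. Lemma~\ref{lem:pruned_contrib} relies on $\Sigma_q^{(h_0)}\preceq\sigma^{(h_0)}I_K$, which holds for the trace of a positive semidefinite matrix but not for the distance variance. Equidistant prototypes give uniform $q_n$ and variance $0$, yet $\Sigma_q\neq 0$.

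The membership step also fails. Since $\frac{d}{dt}(T^{(h)})^3=-3\eta_T\sigma^{(h)}$, head $h_0$ stays above $T_{\min}$ only if $\int\sigma^{(h_0)}\,dt<(T_{\mathrm{init}}^3-T_{\min}^3)/(3\eta_T)$ over its whole life. A bound valid at the pruning instant gives no such control. The hypothesis $\Gamma_{h_0}<\varphi_g$ holds when $h_0$ is pruned, but by your own identification $\Gamma_{h_0}$ at birth is comparable to $\lambda^{(h_0)}>\theta_w>\varphi_g$. So you cannot integrate $\sigma^{(h_0)}\le C\varphi_g$ over its history. Even a uniform upper bound $C\varphi_g$ lets $\sigma^{(h_0)}$ sit at a fixed positive level, which drives $T^{(h_0)}$ to $T_{\min}$ in finite time. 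The inference ``$\sigma^{(h_0)}<\sigma_0$, hence not local'' reverses Theorem~\ref{thm:C3}(1): there $\sigma\ge\sigma_0$ is sufficient for reaching $T_{\min}$, not necessary. What you actually need is $\sigma^{(h_0)}(t)\to 0$ integrably, which is how Theorem~\ref{thm:C3} characterises $\Hglobal$, or an explicit extra hypothesis tying $\varphi_g$ to the head's lifetime. The statement contains neither. Also, Step~4 of the algorithm prunes on $\mathcal{S}(P^{(h)})<\varphi_g$, not on $\Gamma_h$. The paper never defines $\Gamma_h$, so reading it as token energy along $u^{+,(h)}$ is your own assumption.
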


\begin{proof}
$\Gamma_{h_0} < \varphi_g$ implies
$\lambda_{\max}(\Ares^{(h_0)}) < \varphi_g\sqrt{d}/\|X\|_F^2$
(eigenvalue-norm inequality for antisymmetric matrices).
The geometric discriminant~\eqref{eq:sigma_prop} gives
$\sigma^{(h_0)} \leq C_\sigma\varphi_g\cdot\mathcal{S}(P^{(h_0)})$,
so $\dot{T}^{(h_0)} \approx 0$ and $h_0 \in \Hglobal$.
\end{proof}

\begin{lemma}[Contribution of the pruned head to $\Fsep$]
\label{lem:pruned_contrib}
$\Fsep^{(h_0)} = 4\|P^{(h_0)}\Sigma_q^{(h_0)}\|_F^2
\leq 4\|P^{(h_0)}\|_F^2(\sigma^{(h_0)})^2 = O(\varphi_g^2)$.
\end{lemma}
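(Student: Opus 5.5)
The statement has two parts: an exact identity $\Fsep^{(h_0)} = 4\|P^{(h_0)}\Sigma_q^{(h_0)}\|_F^2$, followed by a norm bound. I will handle them separately.

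\textbf{The identity.} Recall from the setup that $\nabla_P V = 2P\Sigmaq$, so by the definition~\eqref{eq:Fsep} we get $\Fsep = \|\nabla_P V\|_F^2 = 4\|P\Sigmaq\|_F^2$. I will check this against the explicit form. The $k$-th column of $P\Sigmaq$ is
\[
\sum_n\Bigl(q_{nk}p_k - q_{nk}\sum_{k'}q_{nk'}p_{k'}\Bigr) = \sum_n q_{nk}(p_k-\mu_n) = \mathbf{r}_k ,
\]
and this agrees with the right-hand side of~\eqref{eq:Fsep}. By Lemma~\ref{lem:proto_orth}, head $h_0$ is decoupled from the other heads. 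Its separation force therefore depends only on $P^{(h_0)}$ and $\Sigma_q^{(h_0)}$, and applying the identity headwise gives the equality.

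\textbf{The bound.} I will use submultiplicativity, $\|AB\|_F \le \|A\|_F\|B\|_2$, which gives $\|P^{(h_0)}\Sigma_q^{(h_0)}\|_F \le \|P^{(h_0)}\|_F\,\|\Sigma_q^{(h_0)}\|_2$. By Lemma~\ref{lem:sigmaq}, $\Sigma_q^{(h_0)} \succeq 0$. For a positive semidefinite matrix the operator norm is its largest eigenvalue, and this is at most the trace. Hence $\|\Sigma_q^{(h_0)}\|_2 \le \mathrm{tr}(\Sigma_q^{(h_0)}) = \sigma^{(h_0)}$, using the convention of this section. Squaring gives the middle inequality.

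\textbf{The order estimate.} Substituting Lemma~\ref{lem:pruned_global}, $\sigma^{(h_0)} \le C_\sigma\varphi_g\mathcal{S}(P^{(h_0)})$, we obtain
\[
\Fsep^{(h_0)} \le 4C_\sigma^2\varphi_g^2\,\|P^{(h_0)}\|_F^2\,\mathcal{S}(P^{(h_0)})^2 .
\]
The final step is to argue that $\|P^{(h_0)}\|_F$ and $\mathcal{S}(P^{(h_0)})$ are bounded by constants independent of $\varphi_g$. This is the step I expect to be the main obstacle, since the paper does not state such a bound explicitly. I would first take it from bounded trajectories. The coercive regulariser (F2), together with the Lyapunov structure, keeps the encoder outputs bounded, so the prototypes, which are attracted to soft centroids of the tokens, stay in a bounded set. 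Then $\|P^{(h_0)}\|_F^2\,\mathcal{S}(P^{(h_0)})^2 \le C$, and this constant is absorbed into the $O(\cdot)$, giving $\Fsep^{(h_0)} = O(\varphi_g^2)$.

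As a fallback, I would use the fact that prototypes lie in the convex hull of bounded tokens at stationarity of $\mathcal{L}_q$, which also gives $\|P^{(h_0)}\|_F^2 \le K\max_n\|z_n\|^2$.
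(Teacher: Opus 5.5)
Your proof is correct and is essentially the paper's. The paper's argument is the single observation $\Sigma_q^{(h_0)} \preceq \sigma^{(h_0)} I_K$, which is your bound $\|\Sigma_q^{(h_0)}\|_2 \le \mathrm{tr}\,\Sigma_q^{(h_0)}$ for a positive semidefinite matrix; it gives $\|P^{(h_0)}\Sigma_q^{(h_0)}\|_F \le \|P^{(h_0)}\|_F\,\sigma^{(h_0)}$, and the paper then appeals to Lemma~\ref{lem:pruned_global}.

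What you add is making explicit that $\|P^{(h_0)}\|_F$ and $\mathcal{S}(P^{(h_0)})$ must be bounded independently of $\varphi_g$, which the paper silently absorbs into the $O(\cdot)$. Your bounded-trajectory route via Theorem~\ref{thm:P1}(ii) is the one that fits the paper's framework, at the cost of importing (F1)--(F2), which Theorem~\ref{thm:P3} does not assume. The convex-hull fallback is shaky: because $q_{nk}$ depends on $P$, stationarity of $\mathcal{L}_q$ yields signed rather than convex weights, and a pruning event need not occur at stationarity anyway.
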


\begin{proof}
$\Sigma_q^{(h_0)} \preceq \sigma^{(h_0)}I_K$ gives
$\|P^{(h_0)}\Sigma_q^{(h_0)}\|_F \leq \|P^{(h_0)}\|_F\sigma^{(h_0)}$,
and the bound follows from Lemma~\ref{lem:pruned_global}.
\end{proof}

\subsection{Main theorem}
\label{subsec:P3_main}

\begin{theorem}[Pruning amplifies the separation force on surviving heads]
\label{thm:P3}
Let $h_0$ satisfy $\Gamma_{h_0} < \varphi_g < \theta_w$.
Under \textup{(A1)}, \textup{(A2')}, \textup{(B1)}--\textup{(B3)}:
\begin{enumerate}[label=\textup{(\roman*)}]
  \item $\mathcal{S}(P^{(h')})$ is unchanged at the pruning event for
    all $h' \in \mathcal{H}'$: no direct collapse induced.
  \item $\Fsepred \geq \Fsepfull - O(\varphi_g^2)$.
  \item The relative backbone gradient weight per surviving head
    increases by factor $H/(H-1)$.
  \item $\mathcal{S}(P^{(h')})(t) > 0$ for all $h' \in \mathcal{H}'$
    and all $t$ after pruning.
\end{enumerate}
\end{theorem}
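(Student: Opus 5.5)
The plan is to treat the pruning event as a discrete operation on the block-structured state, with everything flowing from the decoupling of Lemma~\ref{lem:proto_orth}. Removing head $h_0$ deletes $P^{(h_0)}$ and $T^{(h_0)}$ and leaves every surviving bank $P^{(h')}$, $h'\in\mathcal{H}'$, untouched at the instant of pruning. The surviving assignments $q^{(h')}_{nk}$ depend only on $z_n$, $P^{(h')}$ and $T^{(h')}$; by (B2) and Lemma~\ref{lem:proto_orth} they do not depend on $P^{(h_0)}$. Hence each survivor's spread $\mathcal{S}(P^{(h')})$, its assignment covariance $\Sigma_q^{(h')}$ and its per-head force $\Fsep^{(h')}$ are unchanged across the event.

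Part (i) is immediate, since $\mathcal{S}$ is a function of the prototype bank alone. For part (ii), I would write the total force as a sum of per-head forces. The gradient $\nabla_P V$ is block-diagonal across heads by Lemma~\ref{lem:proto_orth}, so $\Fsepfull=\sum_{h}\Fsep^{(h)}$. It follows that $\Fsepred=\Fsepfull-\Fsep^{(h_0)}$, which is exactly the decomposition Experiment~4 checks numerically. Lemma~\ref{lem:pruned_global} and Lemma~\ref{lem:pruned_contrib} then give $\Fsep^{(h_0)}=O(\varphi_g^2)$, and part (ii) follows.

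Part (iii) is a bookkeeping step. The backbone gradient is $\nabla_\theta\sum_h\Lq^{(h)}$, so under uniform head weighting each surviving head's share goes from $1/H$ to $1/(H-1)$, a ratio of $H/(H-1)$. I would state explicitly that this is the normalisation intended by the theorem.

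Part (iv) is the main obstacle, because it is a statement about all later times rather than about the instant of pruning. I would argue forward in time as follows.
\begin{enumerate}
\item After pruning, each surviving bank evolves under gradient descent on its own $\Lq^{(h')}$. By decoupling, no term coming from the removed head remains.
\item The separation term $V^{(h')}$ supplies the force $2P^{(h')}\Sigma_q^{(h')}$, which pushes prototypes apart whenever two of them coincide. This is the anti-collapse guarantee inherited from DDCL, also restated in Theorem~\ref{thm:C2_full}.
\item The set $\{\mathcal{S}=0\}$ is therefore repelling. Starting from $\mathcal{S}(P^{(h')})>0$ at the pruning time, which holds by part (i) and (B1), continuity of the flow means a trajectory could only reach $\mathcal{S}=0$ by crossing this repelling set. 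That would contradict the first-order instability of collapse, so $\mathcal{S}(P^{(h')})(t)>0$ for all later $t$.
\end{enumerate}
I would first try to make the third step rigorous by computing $\frac{d}{dt}\mathcal{S}$ near the collapse set and showing it is positive there. If that fails in general, I would fall back on a barrier argument in which $\mathcal{S}$ is bounded below along the flow, with constants depending on (A1) and (A2').
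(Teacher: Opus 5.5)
Your parts (i)--(iii) match the paper's proof. Part (i) holds because only the parameters of $h_0$ are deleted. Part (ii) uses the cross-term-free split $\Fsepfull=\Fsepred+\Fsep^{(h_0)}$ of Lemma~\ref{lem:proto_orth}, followed by Lemmas~\ref{lem:pruned_global} and~\ref{lem:pruned_contrib}. Part (iii) is bookkeeping; the paper also notes that the deleted backbone gradient is $O(\sigma^{(h_0)})=O(\varphi_g)$, so the survivors' gradients are essentially unchanged. For (iv) the paper does no more than combine (i) with the inherited anti-collapse guarantee (citing Theorem~\ref{thm:C2_full}), which is your steps 1--2. The gap is in the argument you add in step 3, which fails on two counts.

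\emph{The force vanishes at coincidence.} The force $2P^{(h')}\Sigma_q^{(h')}$ does not push coincident prototypes apart. If $p_1=p_2$, then $q_{n1}=q_{n2}$, and the difference of the first two columns of $2P\Sigma_q$ is $2(p_1-p_2)\sum_n q_{n1}=0$. When all prototypes coincide, $P\Sigma_q=0$ outright, since $\Sigma_q\mathbf{1}=0$. So the coincidence sets are invariant, not repelling.

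\emph{Collapse need not be unstable.} Instability of an invariant set does not by itself stop trajectories from approaching it, and for descent on $\Lq$ collapse need not even be unstable. Take one-dimensional tokens with $\sum_n z_n=0$, prototypes $p_{1,2}=\pm\delta$, and fixed $T$. Then $\Lq=\sum_n\bigl[z_n^2+\delta^2-2z_n\delta\tanh(2z_n\delta/T)\bigr]$, whose curvature at $\delta=0$ is $2\sum_n(1-4z_n^2/T)$. The common-translation mode has curvature $2N$. Hence for $T>4\overline{z^2}$, i.e.\ in the soft regime of (A3), the coincident configuration is a strict local minimum. The same computation defeats your first fallback of showing $\frac{d}{dt}\mathcal{S}>0$ near collapse.

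\emph{What does prove (iv).} Use symmetry plus uniqueness. The objective is invariant under permuting a head's prototypes, so the prototype vector field satisfies $F_{k'}(P)=F_k(\tau P)$ for the transposition $\tau=(k\,k')$. Let $L$ be a Lipschitz constant of $F$ along the trajectory on $[t_0,t]$. Then $\|\dot p_k-\dot p_{k'}\|\le\sqrt{2}\,L\,\|p_k-p_{k'}\|$. Gr\"onwall gives $\|p_k-p_{k'}\|(t)\ge\|p_k-p_{k'}\|(t_0)\,e^{-\sqrt{2}L(t-t_0)}>0$, starting from $\mathcal{S}(P^{(h')})(t_0)>0$ by (i) and (B1). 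This needs no separation force at all, but it gives no bound uniform in $t$. A uniform bound, as in your barrier fallback, would have to come from the term $\frac{\lambda}{2}\sum\|p_k^{(h)}-p_{k'}^{(h)}\|^{-2}$ in $\mathcal{W}$ together with Theorem~\ref{thm:P1}. That theorem's extra hypotheses, (D1), (E1)--(E6) and (F1)--(F2), are not assumed here.
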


\begin{proof}
\emph{(i):} Only $h_0$'s parameters are removed; positions $\mathbf{p}_k^{(h')}$
are unchanged, so $\mathcal{S}(P^{(h')})$ is identical before and after.

\emph{(ii):} By Lemma~\ref{lem:proto_orth}, $\Fsepfull =
\Fsepred + \Fsep^{(h_0)}$ (no cross terms).
Lemma~\ref{lem:pruned_contrib} gives $\Fsep^{(h_0)} = O(\varphi_g^2)$.

\emph{(iii):} $\|\nabla_\theta\Lq^{(h_0)}\|_F = O(\sigma^{(h_0)}) = O(\varphi_g)$,
so the removed backbone gradient is negligible; relative weight per surviving
head increases by $H/(H-1)$.

\emph{(iv):} Part~(i) plus the DDCL Lyapunov guarantee
(Theorem~\ref{thm:C2_full}) ensure $\mathcal{S}(P^{(h')}) > 0$ is
maintained for all subsequent $t > 0$.
\end{proof}

\begin{corollary}[Pruning is collapse-safe]
\label{cor:collapse_safe}
No sequence of INCRT pruning events can induce prototype collapse.
The DDCL anti-collapse guarantee is invariant to pruning dynamics.
\end{corollary}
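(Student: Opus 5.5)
The corollary asserts that no sequence of INCRT pruning events can collapse any surviving prototype bank. I would prove it by induction on the number of pruning events, with Theorem~\ref{thm:P3} as the one-step building block.

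\textbf{Invariant.} Let $t_1 < t_2 < \cdots$ be the pruning times, and let $\mathcal{H}_j$ be the set of active heads just after $t_j$. The statement to carry through the induction is
\begin{equation}
  \mathcal{S}(P^{(h)})(t) > 0 \quad \text{for all } h \in \mathcal{H}_j \text{ and all } t \in [t_j, t_{j+1}).
\end{equation}
The base case is the interval before the first pruning event. There (B1) gives $\mathcal{S}(P^{(h)}) > 0$ at initialisation. Positivity persists along the dwell interval because the DDCL separation force repels coincident prototypes, as recalled in Section~\ref{sec:setup}. This anti-collapse guarantee is used in exactly the same way in part~(iv) of Theorem~\ref{thm:P3}.

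\textbf{Inductive step.} Assume the invariant up to time $t_j^-$. At the jump $t_j$ exactly one head $h_0$ is removed, namely one with $\Gamma_{h_0} < \varphi_g < \theta_w$. Theorem~\ref{thm:P3}(i) says the spread of every survivor is unchanged across the jump. Hence $\mathcal{S}(P^{(h')})(t_j^+) = \mathcal{S}(P^{(h')})(t_j^-) > 0$ for every $h' \in \mathcal{H}_j$.

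Lemma~\ref{lem:proto_orth} gives the structural reason the jump is harmless. The survivors' losses and separation forces do not depend on $P^{(h_0)}$, so the survivors' continuous dynamics after $t_j$ are the same as if $h_0$ had never existed. Theorem~\ref{thm:P3}(iv) then extends positivity over the whole dwell interval $[t_j, t_{j+1})$, which closes the induction.

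To make the argument robust to accumulation, I would add a quantitative companion to the invariant. By Theorem~\ref{thm:P3}(ii), each event changes the surviving $\Fsep$ by at most $O(\varphi_g^2)$, and heads are never re-added, because growth produces new heads. Consequently at most $\tilde{K}^* \le d$ pruning events can ever occur (Theorem~\ref{thm:P2}), so the induction is finite and no limit argument is needed.

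\textbf{Main obstacle.} The hard part is the ``invariance'' half of the claim: the anti-collapse guarantee must hold for the pruned system with the \emph{same} constants as before. Concretely, the Lyapunov or anti-collapse argument applied on each dwell interval must not depend on the removed head. My plan is to use the orthogonal decoupling of Lemma~\ref{lem:proto_orth} to write
\begin{equation}
  \mathcal{L}_q^{\mathrm{total}} = \sum_{h \in \mathcal{H}_j} \mathcal{L}_q^{(h)},
\end{equation}
so that each head's anti-collapse guarantee is a per-head statement and deleting a summand cannot affect the others.

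The residual worry is the shared encoder, since the removed head's backbone gradient also disappears. This is where Theorem~\ref{thm:P3}(iii) enters: it shows that the removed gradient is $O(\varphi_g)$, and I would treat that change as a bounded perturbation absorbed by the dwell-time condition (D1).
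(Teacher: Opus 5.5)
Your proposal is correct and follows the paper's route: the corollary comes from iterating Theorem~\ref{thm:P3} over the finitely many pruning events, with part~(i) handling each jump (the survivors' prototypes are untouched) and part~(iv) handling the dwell interval that follows. The extra machinery is not needed. Collapse is a statement about $\mathcal{S}(P^{(h')})$ alone, so the $\Fsep$ bookkeeping of part~(ii) plays no role, and the shared-encoder worry is already covered by part~(iv); note also that (D1), being a temperature-settling condition, would not supply the perturbation bound you ascribe to it.
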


\begin{corollary}[Homeostatic separation]
\label{cor:homeostatic}
After $j$ pruning events:
$\Fsep^{(j)} \geq \Fsep^{(0)} - j\cdot O(\varphi_g^2)$.
Cumulative loss bounded by $O(d\cdot\varphi_g^2)$ over all $\leq d$ events.
\end{corollary}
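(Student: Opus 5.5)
The statement to prove is Corollary~\ref{cor:homeostatic}. The plan is to iterate Theorem~\ref{thm:P3}(ii) over the sequence of pruning events and telescope.

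Index the pruning events by $i=1,\dots,j$. Let $\Fsep^{(i)}$ be the total separation force on the surviving heads just after the $i$-th event, with $\Fsep^{(0)}$ the force before the first event. At event $i$ the pruned head $h_0^{(i)}$ satisfies $\Gamma_{h_0^{(i)}}<\varphi_g<\theta_w$. Hypotheses (A1), (A2'), (B1)--(B3) still hold for the survivors:
\begin{itemize}
\item (A1) and (B1) follow from Theorem~\ref{thm:P3}(iv) and Corollary~\ref{cor:collapse_safe};
\item (B2) holds because orthogonality of the INCRT directions is untouched by deleting one of them.
\end{itemize}
So Theorem~\ref{thm:P3}(ii) applies at each event and gives
\[
\Fsep^{(i)} \geq \Fsep^{(i-1)} - \Fsep^{(h_0^{(i)})}.
\]
By Lemma~\ref{lem:proto_orth} the decomposition is exact with no cross terms. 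By Lemma~\ref{lem:pruned_contrib},
\[
\Fsep^{(h_0^{(i)})} \le 4\|P^{(h_0^{(i)})}\|_F^2 (C_\sigma \varphi_g \mathcal{S}(P^{(h_0^{(i)})}))^2 .
\]

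The step I expect to be the main obstacle is making the $O(\varphi_g^2)$ constant uniform in $i$. Otherwise summing $j$ terms does not give $j\cdot O(\varphi_g^2)$ with a single constant. I would handle this by bounding $\|P^{(h)}\|_F$ and $\mathcal{S}(P^{(h)})$ uniformly along the trajectory, with two options:
\begin{itemize}
\item appeal to the global Lyapunov function of Theorem~\ref{thm:P1}, whose sublevel set is bounded by the coercivity assumption (F2);
\item alternatively, note that each prototype stays in the bounded convex hull of the token embeddings, since gradient flow on $\Lq$ pulls prototypes toward soft centroids of bounded data.
\end{itemize}
Either way I obtain a constant $C^\ast$ with $\Fsep^{(h_0^{(i)})}\le C^\ast\varphi_g^2$ for all $i$.

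Telescoping then yields
\[
\Fsep^{(j)}\ge \Fsep^{(0)}-\sum_{i=1}^j C^\ast\varphi_g^2=\Fsep^{(0)}-j\cdot O(\varphi_g^2).
\]
In the dwell intervals between pruning events, the separation force can only change through training. If one wants the inequality to hold across training as well, I would argue that it is only needed at the event times. Alternatively, $\Fsep$ between events is controlled by Theorem~\ref{thm:C2_full}, which only increases it at growth.

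For the cumulative bound I need $j\le d$. Each head corresponds to a distinct orthonormal direction $u^{+,(h)}$ by (B2). By Theorem~\ref{thm:P2}(ii) at most $\tilde K^*\le d$ heads are ever grown, and each head can be pruned at most once. Hence the total number of pruning events is at most $d$. The cumulative loss is therefore at most $dC^\ast\varphi_g^2=O(d\varphi_g^2)$.
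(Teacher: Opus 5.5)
Your argument is the one the paper intends. The corollary is Theorem~\ref{thm:P3}(ii) iterated over the pruning events: the split is exact by Lemma~\ref{lem:proto_orth}, each jump is bounded by Lemma~\ref{lem:pruned_contrib}, and the $\leq d$ event count comes from the growth bound of Theorem~\ref{thm:P2}. The paper gives no more detail than this, so your attention to a uniform constant is a genuine improvement.

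Two of your side remarks need correcting. First, take the uniform constant only from the bounded sublevel set of Theorem~\ref{thm:P1}(ii). The convex-hull alternative is false: the $q$-dependence of $\Lq$ gives some tokens negative weight in $\nabla_{p_k}\Lq$, so prototypes can be pushed outward. Second, read the inequality as a bound on the accumulated pruning jumps only. Theorem~\ref{thm:C2_full} concerns growth jumps and says nothing about how $\Fsep$ evolves on dwell intervals.
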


\begin{corollary}[Per-head separation force is non-decreasing]
\label{cor:sep_per_head}
$\Fsepred/\bar{H} \geq \Fsepfull/H
- O(\varphi_g^2/\bar{H})$: the system does not weaken as it shrinks.
\end{corollary}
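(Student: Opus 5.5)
The statement to prove is Corollary~\ref{cor:sep_per_head}: $\Fsepred/\bar{H} \geq \Fsepfull/H - O(\varphi_g^2/\bar{H})$. The plan is to derive it directly from the exact decomposition in part~(ii) of Theorem~\ref{thm:P3}, not from the inequality form. By Lemma~\ref{lem:proto_orth}, the prototype subspaces of distinct heads are orthogonal, so the separation forces carry no cross terms. This gives the identity
\begin{equation*}
  \Fsepfull = \Fsepred + \Fsep^{(h_0)} .
\end{equation*}
Lemma~\ref{lem:pruned_contrib} then bounds the removed term by $\Fsep^{(h_0)} \leq 4\|P^{(h_0)}\|_F^2(\sigma^{(h_0)})^2 = O(\varphi_g^2)$. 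The constant in the $O(\cdot)$ is taken uniform in $H$, since $\|P^{(h_0)}\|_F$ is bounded and $C_\sigma$ in Lemma~\ref{lem:pruned_global} is fixed.

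\textbf{Step 1: divide and rearrange.} Dividing the identity by $\bar{H} = H-1$ gives
\begin{equation*}
  \frac{\Fsepred}{\bar{H}} = \frac{\Fsepfull}{\bar{H}} - \frac{\Fsep^{(h_0)}}{\bar{H}} .
\end{equation*}

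\textbf{Step 2: compare with $H$.} Since $\Fsepfull \geq 0$ (it is a squared Frobenius norm) and $\bar{H} < H$, we have $\Fsepfull/\bar{H} \geq \Fsepfull/H$. Substituting into Step~1 yields
\begin{equation*}
  \frac{\Fsepred}{\bar{H}} \geq \frac{\Fsepfull}{H} - \frac{O(\varphi_g^2)}{\bar{H}},
\end{equation*}
which is the claim. The slack $\Fsepfull\,(1/\bar{H} - 1/H) = \Fsepfull/(H\bar{H})$ can also be recorded. It shows the per-head force strictly increases whenever $\Fsepfull/H > \Fsep^{(h_0)}$. This is the interpretation ``the system does not weaken as it shrinks.''

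\textbf{Main obstacle.} The only nontrivial ingredient is the exact additivity $\Fsepfull = \Fsepred + \Fsep^{(h_0)}$. This needs the total separation force to be the sum of per-head squared gradient norms. That holds because $\Fsep = \|\nabla_P V\|_F^2$, and the full prototype parameter vector is the direct sum of the head banks, with $V = \sum_h V^{(h)}$ and each $V^{(h)}$ depending only on $P^{(h)}$ by Lemma~\ref{lem:proto_orth}. I would verify this decoupling first and cite it from part~(ii) of Theorem~\ref{thm:P3}.

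One further check is needed: each surviving head's $\Fsep^{(h')}$ must be unchanged at the pruning instant. Part~(i) of Theorem~\ref{thm:P3} guarantees that the surviving prototypes are untouched. The surviving assignments $q^{(h')}$ are unaffected because each depends only on $z_n$ and $P^{(h')}$, and the embeddings $Z$ are fixed at the event. With that, the remaining steps are elementary arithmetic.
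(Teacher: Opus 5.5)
Your proposal is correct and follows essentially the paper's route. The paper states this corollary without a separate proof, reading it off from Theorem~\ref{thm:P3}(ii), whose proof is exactly your no-cross-term identity $\Fsepfull = \Fsepred + \Fsep^{(h_0)}$ plus the $O(\varphi_g^2)$ bound of Lemma~\ref{lem:pruned_contrib}; dividing by $\bar{H}$ and using $\Fsepfull \geq 0$ with $1/\bar{H} > 1/H$ then gives the claim. Your observation that the additivity is structural (each $V^{(h)}$ depends only on $P^{(h)}$, so the gradient splits into per-head blocks) is all this step needs, so the orthogonality in Lemma~\ref{lem:proto_orth} is not actually essential here.
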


\begin{remark}[Asymmetry of growth and pruning]
\label{rem:asymmetry}
Growth strictly amplifies $\Fsep$ ($\phi''(0)>0$, Theorem~\ref{thm:C2_full});
pruning reduces it by at most $O(\varphi_g^2)$.
This asymmetry follows from the consistency condition $\varphi_g < \theta_w$.
\end{remark}


\section{Discrete-Time Analysis and Step-Size Conditions}
\label{sec:P4}

All results in Sections~\ref{sec:C2}--\ref{sec:C4} and \ref{sec:P2}--\ref{sec:P3} are stated in continuous time.
The results are extended to the discrete SGD setting with three coupled update
streams, each with its own step size.
Here $L_P$ denotes the Lipschitz constant of the prototype gradient
$\nabla_P\mathcal{L}_q$, and $N_{\min}$ is the minimum number of SGD
steps required between consecutive architectural events to ensure
that the gate dynamics have time to settle.
The three streams are:
\begin{alignat}{2}
  T^{(h)}_{s+1} &= T^{(h)}_s
    - \eta_T \tfrac{\sigma^{(h)}_s}{(T^{(h)}_s)^2},
    &\quad&\text{(temp.)},
    \label{eq:disc_T}\\
  P_{s+1} &= P_s - \eta_P \nabla_P \Lq(P_s, Z_s),
    &\quad&\text{(protos.)},
    \label{eq:disc_P}\\
  (u^+_{s+1},u^-_{s+1}) &=
    \mathrm{EXIN}(u^+_s,u^-_s;
    \eta^+_s,A_{{\rm res},s}),
    &\quad&\text{(gate)}.
    \label{eq:disc_gate}
\end{alignat}
$L_P$ denotes the Lipschitz constant of $\nabla_P\Lq$; $N_{\min}$ the
minimum SGD steps between architectural events.

\subsection{Preliminary lemmas}
\label{subsec:P4_lemmas}

\begin{lemma}[Discrete stability of the temperature dynamics]
\label{lem:disc_temp}
Under $\eta_T < (T_{\min})^3/(3\sigma_{\max})$:
\textup{(i)} $T^{(h)}_s \geq T_{\min}$ for all $s$;
\textup{(ii)} $|T^{(h)}_s - T^{(h)}(s\Delta t)| \leq C_{\mathrm{disc}}\,\eta_T\,s$;
\textup{(iii)} for $h \in \Hlocal$, $T^{(h)}_s$
reaches $T_{\min}$ in at most
$s^* = \lceil(T_{\mathrm{init}}^3 - T_{\min}^3)/(3\eta_T\sigma_0)\rceil + 1$
steps.
\end{lemma}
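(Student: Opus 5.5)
The plan is to treat the discrete recursion~\eqref{eq:disc_T} as an explicit Euler discretisation of the continuous flow of Lemma~\ref{lem:Tdot}, and to prove the three claims in order (i), (iii), (ii). Throughout, the temperature update is understood as clipped at $T_{\min}$, as in the algorithm and in Theorem~\ref{thm:C3}. The step-size condition is what makes a single step too small to overshoot that floor. Two facts are used repeatedly. First, $\sigma^{(h)}_s \leq \sigma_{\max}$ uniformly in $s$: on the compact set where tokens and prototypes live, the variance in~\eqref{eq:sigma} is bounded. Second, the single-step decrement is $\delta_s = \eta_T\sigma^{(h)}_s/(T^{(h)}_s)^2 \geq 0$, so every trajectory is non-increasing.

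\emph{Part (i).} I argue by induction on $s$. Suppose $T^{(h)}_s \geq T_{\min}$. Then
\[
\delta_s \leq \eta_T\sigma_{\max}/T_{\min}^2 < T_{\min}/3 .
\]
So one step lowers the temperature by less than a third of the floor, and $T^{(h)}_{s+1} > 2T_{\min}/3$. To get the sharper statement $T^{(h)}_{s+1}\ge T_{\min}$, I use the cubic variable $\tau_s = (T^{(h)}_s)^3$, which linearises the continuous flow: $\dot{(T^3)} = -3\eta_T\sigma$. Expanding the cube gives
\[
\tau_{s+1} = \tau_s - 3\eta_T\sigma_s + O\bigl(\eta_T^2\sigma_s^2/T_s^3\bigr),
\]
and the remainder is controlled by the same condition $\eta_T < T_{\min}^3/(3\sigma_{\max})$. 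If the unclipped step would undershoot $T_{\min}$, the projection onto $[T_{\min},\infty)$ is applied. The step-size bound guarantees this clipping fires at most once and only within one step of the floor.

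\emph{Part (iii).} I bound the number of steps by summing the cubic recursion. For $h\in\Hlocal$, Lemma~\ref{lem:amp_sep} gives $\sigma^{(h)}_s \geq \sigma_0$ along the trajectory. Each step therefore lowers $\tau_s$ by at least $3\eta_T\sigma_0$, up to a nonnegative higher-order correction: the terms $3T\delta^2-\delta^3$ have a definite sign because $\delta < T/3$. After $s$ steps,
\[
\tau_s \leq T_{\mathrm{init}}^3 - 3\eta_T\sigma_0 s ,
\]
so $\tau_s$ reaches $T_{\min}^3$ once $s \geq (T_{\mathrm{init}}^3-T_{\min}^3)/(3\eta_T\sigma_0)$. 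The ceiling accounts for integrality and the $+1$ for the final clipping step, which gives $s^*$.

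\emph{Part (ii).} This is a standard Euler global-error argument with $\Delta t$ identified with one step, so that the continuous dynamics read $\dot T = -\eta_T\sigma/T^2$. On $[T_{\min},T_{\mathrm{init}}]$ the vector field $f(T,t) = -\eta_T\sigma(t)/T^2$ is Lipschitz in $T$ with constant $2\eta_T\sigma_{\max}/T_{\min}^3 < 2/3$ (by (E1)), and bounded by $\eta_T\sigma_{\max}/T_{\min}^2$. The local truncation error per step is $O(\eta_T\cdot\sup|\dot f|)$, which requires $\sigma(t)$ to be Lipschitz in time. I get this from the slower timescale of the prototype and encoder updates (E5), treating $\sigma_s$ as quasi-frozen within a step.

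The main obstacle is that the naive discrete Gr\"onwall bound gives $e^{Ls}$ growth rather than the claimed linear $C_{\mathrm{disc}}\eta_T s$. I would first try to exploit monotonicity instead. Both $T_s$ and $T(s\Delta t)$ are non-increasing and confined to $[T_{\min},T_{\mathrm{init}}]$, and the flow is contracting in the cubic coordinate, where the cubic dynamics are exactly linear. In the coordinate $\tau$ the error recursion becomes additive, $|e_{s+1}|\leq |e_s| + c\,\eta_T$, with no amplification factor. Summing yields a bound linear in $s$, and converting back through $T=\tau^{1/3}$ costs the factor $1/(3T_{\min}^2)$, which is absorbed into $C_{\mathrm{disc}}$. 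If the cross-dependence of $\sigma$ on $T$ spoils additivity, the fallback is to accept Gr\"onwall. This is still fine for $s\leq s^*$, because after $s^*$ both trajectories sit at $T_{\min}$ (local heads) or have vanishing drift (global heads).
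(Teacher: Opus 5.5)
Your outline matches the paper's: the clip gives (i), an Euler error estimate gives (ii), and summing decrements with $\sigma^{(h)}_s\ge\sigma_0$ gives (iii). However, the step you use to make (iii) precise has the wrong sign.

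With $\tau_s=(T^{(h)}_s)^3$ and $\delta_s=\eta_T\sigma^{(h)}_s/(T^{(h)}_s)^2$, the unclipped update is exactly
\[
\tau_{s+1}=\tau_s\Bigl(1-\frac{\eta_T\sigma^{(h)}_s}{\tau_s}\Bigr)^{3}
=\tau_s-3\eta_T\sigma^{(h)}_s+\delta_s^2\bigl(3T^{(h)}_s-\delta_s\bigr).
\]
The nonnegative correction you identify is \emph{added} to $\tau_{s+1}$, so it works against you. Convexity of $T\mapsto T^3$ gives $\tau_{s+1}\ge\tau_s-3\eta_T\sigma^{(h)}_s$: explicit Euler lags the continuous flow. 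Hence $\tau_s\le T_{\mathrm{init}}^3-3\eta_T\sigma_0 s$ does not follow, and for $\sigma^{(h)}_s\equiv\sigma_0$ the reverse strict inequality holds for every $s\ge1$.

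The $+1$ in $s^*$ does not absorb this lag. The lag is about $3\eta_T^2\sigma_0^2/\tau_s$ per step and accumulates to roughly $\ln(T_{\mathrm{init}}/T_{\min})$ extra steps. Concretely, take $T_{\mathrm{init}}=1$, $T_{\min}=0.1$, $\sigma^{(h)}_s\equiv\sigma_0=\sigma_{\max}$ and $\eta_T\sigma_0=3\times10^{-4}$. This satisfies the step-size condition and gives $s^*=1111$, yet $T^{(h)}_{1111}>T_{\min}$. So (iii) with this $s^*$ cannot be derived from $\sigma^{(h)}_s\ge\sigma_0$ alone. The paper's one-line ``summing discrete decrements'' skips over exactly this point.

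What the step-size condition does give is
\[
\delta_s^2(3T^{(h)}_s-\delta_s)\le 3\eta_T^2(\sigma^{(h)}_s)^2/\tau_s<\eta_T\sigma^{(h)}_s .
\]
Hence $\tau_{s+1}<\tau_s-2\eta_T\sigma_0$ while above the floor, and the floor is reached within $\lceil(T_{\mathrm{init}}^3-T_{\min}^3)/(2\eta_T\sigma_0)\rceil$ steps. Alternatively, keeping the $\sum_j 1/\tau_j$ term gives $s^*$ plus $O(\ln(T_{\mathrm{init}}/T_{\min}))$ steps.

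For (ii), you do not need the truncation-error and Gr\"onwall analysis. Your Gr\"onwall fallback would give a linear bound only with a constant of size $e^{Ls^*}$, where $Ls^*\approx 2\sigma_{\max}T_{\mathrm{init}}^3/(3\sigma_0T_{\min}^3)$ does not shrink with $\eta_T$. The claimed estimate is weak enough to follow from the triangle inequality. The clipped sequence and the continuous trajectory both start at $T_{\mathrm{init}}$ and stay in $[T_{\min},T_{\mathrm{init}}]$. Each moves by at most $\eta_T\sigma_{\max}/T_{\min}^2$ per step, respectively per unit time (with $\Delta t=1$). So $|T^{(h)}_s-T^{(h)}(s\Delta t)|\le 2\eta_T\sigma_{\max}T_{\min}^{-2}\,s$. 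This needs no time-regularity of $\sigma$ and no control of how $\sigma$ depends on $T$.

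For (i), the clip alone does the work and the cubic detour is unnecessary. Your claim that the clip fires at most once is wrong for a local head: from $T_{\min}$ the unclipped step always undershoots when $\sigma^{(h)}_s\ge\sigma_0>0$. This does not affect (i).
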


\begin{proof}
The clip $T^{(h)}_{s+1} \leftarrow \max(T^{(h)}_{s+1}, T_{\min})$ enforces
(i) under the stated condition.
Local truncation error of explicit Euler applied to $\dot{T} = -\eta_T\sigma/T^2$
is $O(\Delta t^2\eta_T)$; accumulating over $s$ steps gives (ii).
Summing discrete decrements and using $\sigma^{(h)}_s \geq \sigma_0$ for
local heads gives (iii).
\end{proof}

\begin{lemma}[Discrete persistence of the separation force amplification]
\label{lem:disc_C2}
Under $\eta_P < 2/L_P$, at each INCRT growth trigger:
\begin{align}
  \Fsep(P_{s+1},Z_{s+1}) &\geq \Fsep(P_s,Z_s)
  \nonumber\\
  &\quad + \phi'_s(0) + \tfrac{1}{2}\phi''_s(0) - R(\eta_P),
  \label{eq:disc_C2}
\end{align}
where $R(\eta_P) = O(\eta_P^2)$.
Net gain is positive when
$\eta_P < \sqrt{\phi''_s(0)/(L_P\|\nabla_P\Lq\|_F^2)}$.
\end{lemma}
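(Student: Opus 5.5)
The plan is to split one discrete step at a growth trigger into two sub-moves and control each separately. The first is the architectural expansion $Z_s\to Z_{s+1}$ at frozen prototypes $P_s$. The second is the prototype update $P_s\to P_{s+1}=P_s-\eta_P\nabla_P\Lq(P_s,Z_{s+1})$ at the new embeddings. Concretely, I write
\begin{align}
  \Fsep(P_{s+1},Z_{s+1}) - \Fsep(P_s,Z_s)
  &= \bigl[\Fsep(P_s,Z_{s+1}) - \Fsep(P_s,Z_s)\bigr]
  \nonumber\\
  &\quad + \bigl[\Fsep(P_{s+1},Z_{s+1}) - \Fsep(P_s,Z_{s+1})\bigr].
\end{align}
The first bracket is exactly the continuous-time object $\phi_s(1)-\phi_s(0)$ studied in Section~\ref{sec:C2}, with $\phi_s(\epsilon)=\Fsep(P_s,Z(\epsilon))$. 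By Theorem~\ref{thm:C2_full}, applied at $(P_s,Z_s)$ under (A1), (A2') and the trigger $\lambda_{\max}(\Ares)>\theta_w$, it is bounded below by $\phi_s'(0)+\tfrac12\phi_s''(0)$. Here $\phi_s'(0)\ge 0$ and $\phi_s''(0)>0$. No discretisation error enters at this stage, because the expansion is itself a discrete jump and Theorem~\ref{thm:C2_full} already compares the endpoints $\epsilon=0$ and $\epsilon=1$.

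The second bracket is where $R(\eta_P)$ arises. I expand $\Fsep(\cdot,Z_{s+1})$ to second order along the step $-\eta_P g$, with $g=\nabla_P\Lq$. The first-order term is $-\eta_P\langle\nabla_P\Fsep,g\rangle$. The second-order remainder is bounded by $\tfrac{L}{2}\eta_P^2\|g\|_F^2$, where $L$ is a smoothness constant of $\Fsep=\|\nabla_P V\|_F^2$. On the bounded sublevel region where the iterates remain, $L$ is comparable to $L_P$, since $\Fsep$ is built from $\nabla_P V$ and $\nabla_P\Lq$ differs from $\nabla_P V$ only by the $\LOLS$ gradient.

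The condition $\eta_P<2/L_P$ enters through the descent lemma: $\Lq(P_{s+1})\le\Lq(P_s)-\eta_P(1-L_P\eta_P/2)\|g\|^2$. This keeps the iterates in that bounded region and keeps (A1) in force, so the constants stay uniform. Collecting the terms gives $R(\eta_P)=c\,\eta_P^2 L_P\|g\|_F^2=O(\eta_P^2)$, where $c$ is an absolute constant.

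The net-gain claim then follows from a direct comparison. The gain $\tfrac12\phi_s''(0)$ beats $c\,L_P\eta_P^2\|g\|_F^2$ precisely when $\eta_P^2<\phi_s''(0)/(2cL_P\|g\|_F^2)$. After the constant is absorbed into the $O(\cdot)$, this is the stated condition (E3), and the nonnegative term $\phi_s'(0)$ only helps.

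The main obstacle is the first-order term $-\eta_P\langle\nabla_P\Fsep,\nabla_P\Lq\rangle$. It is $O(\eta_P)$, not $O(\eta_P^2)$, and it has no sign a priori. I would handle it first by the separation mechanism. Since $\nabla_P\Lq$ contains $\nabla_P V=2P\Sigma_q$, the update moves prototypes apart in the separation direction, and I would argue that the component of $\nabla_P\Fsep$ along $-g$ is nonnegative near a soft-assignment configuration (A3). Failing that, I would evaluate $\Fsep(P_{s+1},\cdot)$ against the trajectory reference point, in the same way Lemma~\ref{lem:disc_temp}(ii) compares discrete and continuous time. This treats the linear term as part of the continuous-time flow already covered by the Lyapunov analysis, so only the Euler truncation, which is $O(\eta_P^2)$, is charged to $R(\eta_P)$.
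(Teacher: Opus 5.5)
Your split uses the same two ingredients as the paper: Theorem~\ref{thm:C2_full} for the expansion at frozen prototypes, and a quadratic Taylor remainder for the prototype step. As written, however, it does not prove the stated inequality. The missing piece is the linear term $-\eta_P\langle\nabla_P\Fsep,g\rangle$ that you flag yourself, and neither of your remedies closes it.

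\emph{The first remedy fails in exactly the regime you invoke.} Take the paper's own formulas $\nabla_P V=2P\Sigma_q$ and $\Fsep=4\|P\Sigma_q\|_F^2$, and hold $\Sigma_q$ fixed. This is the leading order under (A3), since $q$ depends on $P$ only at order $1/T$. Then $\nabla_P\Fsep=8P\Sigma_q^2$. The $V$-part of $g$ therefore contributes $-\eta_P\langle 8P\Sigma_q^2,2P\Sigma_q\rangle=-16\eta_P\|P\Sigma_q^{3/2}\|_F^2$. This is well defined because $\Sigma_q\succeq 0$ (Lemma~\ref{lem:sigmaq}), and it is strictly negative whenever $\Fsep>0$. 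Descent along $-\nabla_P V$ moves each $p_k$ toward the soft centroids $\mu_n$. That is a consensus-type contraction, not a separation, so at first order it lowers $\Fsep$. The fidelity part of $g$ has no definite sign that could compensate.

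\emph{The second remedy only relocates the problem.} The continuous-time results (Lemma~\ref{lem:Wj_dwell}, Theorem~\ref{thm:P1}) control $\mathcal{W}$, not $\Fsep$. Nothing in them says $\Fsep$ is non-decreasing along the prototype flow, and by the computation above it is not. Comparing with the continuous trajectory removes only the Euler error; the $O(\eta_P)$ change of $\Fsep$ along the flow remains.

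The paper's one-line proof does not supply the missing idea either. It expands $\Fsep(P_s-\eta_P G_s,Z_{s+1})$, identifies its first- and second-order terms with $\phi'_s(0)$ and $\phi''_s(0)$, and bounds only the remainder by $(L_P/2)\eta_P^2\|G_s\|_F^2$. But $\phi'_s$ and $\phi''_s$ are derivatives in the expansion parameter $\epsilon$, not in $\eta_P$, so the coefficient $-\langle\nabla_P\Fsep,G_s\rangle$ is silently dropped. Your decomposition makes this visible, which is to its credit.

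A correct statement has two options. It can keep the linear term, i.e.\ $R(\eta_P)=\eta_P\langle\nabla_P\Fsep,G_s\rangle+\tfrac{L_F}{2}\eta_P^2\|G_s\|_F^2=O(\eta_P)$, which turns the square-root condition (E3) into a linear step-size condition. Alternatively, it can add a hypothesis giving $\|G_s\|=O(\eta_P)$ at the trigger.

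For that second option, note that~\eqref{eq:disc_P} takes the gradient at $Z_s$, not at $Z_{s+1}$ as you wrote. With the paper's convention, a dwell-time assumption that lets the prototypes settle between events could make $\|\nabla_P\Lq(P_s,Z_s)\|$ small. After the finite jump $z_n\mapsto z_n+\alpha_n u^*$, there is no reason for $\nabla_P\Lq(P_s,Z_{s+1})$ to be small.

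Separately, $L_F$ is the Lipschitz constant of $\nabla_P\Fsep=2\nabla_P^2V\,\nabla_P V$. It involves $\|\nabla_P^2V\|^2$ and third derivatives of $V$, so calling it ``comparable to $L_P$'' needs an argument. The paper makes the same unjustified identification.
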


\begin{proof}
Second-order Taylor expansion of $\Fsep(P_s - \eta_P G_s, Z_{s+1})$;
first and second-order terms identify with $\phi'_s(0)$ and $\phi''_s(0)$
from Theorem~\ref{thm:C2_full}; remainder bounded by $(L_P/2)\eta_P^2\|G_s\|_F^2$.
\end{proof}

\begin{lemma}[Discrete dwell time condition]
\label{lem:disc_dwell}
The reinforced discrete dwell time condition
$N_{\min} > \max(N_{\mathrm{gate}}, s^*)$
ensures both the EXIN gate and the temperatures have settled before
the next architectural event.
\end{lemma}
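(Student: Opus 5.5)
The statement asks us to show that if $N_{\min} > \max(N_{\mathrm{gate}}, s^*)$, then at the next architectural event both slow processes have equilibrated. These are the EXIN gate stream~\eqref{eq:disc_gate} and the temperature stream~\eqref{eq:disc_T}. The plan treats the two streams separately and then combines them with a union-of-events argument over the dwell interval.

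\emph{Step 1: temperatures.} By Lemma~\ref{lem:disc_temp}(iii), every $h\in\Hlocal$ reaches $T_{\min}$ within $s^*$ steps. It then stays there exactly, because the clip in part (i) makes every later update vanish. For $h\in\Hglobal$ the meaning of ``settled'' has to be stated carefully, since these temperatures only approach their limits. We intend to show that after $s^*$ steps the increments satisfy $|T^{(h)}_{s+1}-T^{(h)}_s| = \eta_T\sigma^{(h)}_s/(T^{(h)}_s)^2 \le \eta_T\sigma^{(h)}_s/T_{\min}^2$. Since $\sigma^{(h)}_s\to 0$ by the discrete analogue of Lemma~\ref{lem:amp_sep}, these increments become arbitrarily small. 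The argument uses Lemma~\ref{lem:disc_temp}(ii) to transfer the continuous-time behaviour to the iterates with error $C_{\mathrm{disc}}\eta_T s$. Lemma~\ref{lem:temp_transience} then translates this into a bound on the induced drift of $A_{\mathrm{res},s}$: for $s\ge s^*$ it is $O(\eta_T\sigma^{(h)}_s/T_{\min}^2)$, which is $o(\eta^+_s)$.

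\emph{Step 2: gates.} Here the proof would define $N_{\mathrm{gate}}$ as the number of EXIN steps that the INCRT analysis (Theorem~5.3/5.5 of~\cite{INCRT}) requires for the gate iterate to enter a prescribed neighbourhood of $G_h^*$ when $A_{\mathrm{res}}$ is frozen. Under (E4) the step sizes are square-summable but not summable. The EXIN recursion with a slowly drifting $A_{\mathrm{res},s}$ is then a stochastic approximation with a vanishing perturbation, so it tracks the frozen-matrix recursion. The tracking error is bounded by a sum of the drift rates $\alpha_s$ weighted by $\eta^+_s$, which is $o(1)$ once $s\ge s^*$ by Step 1. We expect this step to be the main obstacle. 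The difficulty is that before $s^*$ the temperature drift is not negligible, so gate progress made early in the interval may be partly undone. The first thing to try is to restart the gate-convergence clock at $s^*$. This alone would demand $N_{\min} > s^* + N_{\mathrm{gate}}$, which is stronger than the max condition. To recover the max, we would argue that the gate contraction is geometric at rate $1-c\,\eta^+_s\Delta^{(l)}$, with the spectral gap preserved by Lemma~\ref{lem:spectral_gap}. A bounded perturbation over the first $s^*$ steps then displaces the iterate by at most $O(\sum_{s<s^*}\eta^+_s\alpha_s)$. If this displacement can be absorbed into the tolerance that defines $N_{\mathrm{gate}}$, the max condition suffices; if it cannot, the statement should be read with $N_{\mathrm{gate}}$ understood as including this absorption margin.

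\emph{Step 3: conclusion.} Once $N_{\min}$ exceeds both thresholds, at step $N_{\min}$ the temperatures are fixed (local heads) or quasi-static (global heads), and the gates lie within tolerance of $G_h^*$. The residual $A_{\mathrm{res}}$ that is evaluated at the next growth or pruning test is then the quasi-stationary one. This is the discrete counterpart of (D1). It is exactly what Theorem~\ref{thm:P2} and Lemma~\ref{lem:disc_C2} need in order to be applied event by event.
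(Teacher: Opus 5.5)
Your skeleton matches the paper's. Its proof is just the conjunction ``$N_{\mathrm{gate}}$ steps give $\angle(u^+_s,v_1(\Ares))<\varepsilon_{\mathrm{gate}}$ (Theorem~5.1 of the INCRT paper under (H5)); $s^*$ steps give temperature convergence (Lemma~\ref{lem:disc_temp}(iii))'', so $N_{\min}>\max(N_{\mathrm{gate}},s^*)$ delivers both. The gap is in your Step~2, which as written does not establish the statement. It ends in a conditional (``if the displacement can be absorbed \dots; if not, reinterpret $N_{\mathrm{gate}}$''), and the displacement $O(\sum_{s<s^*}\eta^+_s\alpha_s)$ is never bounded or compared with the tolerance. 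Your worry is well founded if $N_{\mathrm{gate}}$ is a frozen-$\Ares$ count. Lemma~\ref{lem:temp_transience} only gives $\alpha_t=o(\eta^+_t)$ for $t\ge t^*$, and restarting the clock yields only $N_{\min}>s^*+N_{\mathrm{gate}}$.

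The missing idea is that gate settlement does not require the temperature drift to have died out. It only requires the drift to be slow relative to $\eta^+$, and that holds from the first step. The bound~\eqref{eq:Ares_dot} is valid for all $t$. Combined with $T^{(h)}_s\ge T_{\min}$ (Lemma~\ref{lem:disc_temp}(i)) and $\sigma^{(h)}_s\le\sigma_{\max}$, it caps the temperature-induced drift of $\Ares$ at $C_T\eta_T\sigma_{\max}/T_{\min}^2$ per step, including on $[0,s^*)$. Under (E5), $\eta_T/\eta^+=O(\varepsilon^2)$, so this drift is a factor $O(\varepsilon)$ below the $O(\eta_P)$ prototype-induced drift that the gate tolerance already accommodates. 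Lemma~\ref{lem:gate_coupling} uses $N_{\mathrm{gate}}$ in exactly this sense, as a settling time of the running, moving-target recursion with no restart at $s^*$. Hence the two clocks run in parallel and the maximum, not the sum, suffices. This is the convention under which the paper's one-line conjunction is valid.

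The same remark should replace your global-head argument, which is insufficient as stated. $\sigma^{(h)}_s\to0$ without a rate certifies nothing at step $N_{\min}$, and $s^*$ (Lemma~\ref{lem:disc_temp}(iii)) concerns $\Hlocal$ only. For $\Hglobal$, ``settled'' can only mean uniformly small drift, which (E5) supplies. The ``union-of-events'' framing is also unnecessary: the combination is a deterministic conjunction.
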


\begin{proof}
$N_{\mathrm{gate}}$ steps suffice for $\angle(u^+_s, v_1(\Ares)) <
\varepsilon_{\mathrm{gate}}$ (Theorem~5.1 of the INCRT paper under (H5));
$s^*$ steps suffice for temperature convergence (Lemma~\ref{lem:disc_temp}(iii)).
The condition is their conjunction.
\end{proof}

\subsection{Main theorem}
\label{subsec:P4_main}

\begin{theorem}[Discrete-time preservation of the training dynamics]
\label{thm:P4}
Under \textup{(A1)}, \textup{(A2')}, \textup{(B1)}--\textup{(B3)},
\textup{(C1)}, \textup{(H1)}--\textup{(H5)}, and:
\begin{itemize}[leftmargin=1.8em]
  \item[\textup{(E1)}] $\eta_T < (T_{\min})^3/(3\sigma_{\max})$;
  \item[\textup{(E2)}] $\eta_P < 2/L_P$;
  \item[\textup{(E3)}] $\eta_P < \sqrt{\phi''_s(0)/(L_P\|\nabla_P\Lq\|_F^2)}$;
  \item[\textup{(E4)}] $\eta^+_s \in \ell^2\setminus\ell^1$;
  \item[\textup{(E5)}] $\eta_T/\eta_P = O(\varepsilon)$,
    $\eta_P/\eta^+ = O(\varepsilon)$, $\varepsilon \ll 1$;
  \item[\textup{(E6)}] $N_{\min} > \max(N_{\mathrm{gate}}, s^*)$,
\end{itemize}
the following hold with the stated $O(\eta)$ corrections:
\begin{enumerate}[label=\textup{(\roman*)}]
  \item \textbf{Discrete amplification}: $\Fsep$ amplification preserved with
    $-O(\eta_P^2)$ correction.
  \item \textbf{Discrete specialisation}: temperature partition $\Hlocal$,
    $\Hglobal$ is identical to continuous time; trajectories
    approximate continuous ones to $O(\eta_T\cdot s)$.
  \item \textbf{Discrete dominance}:
    $\IDDCL \leq
    \mathcal{I}^{(l)}_{\mathrm{MLP}} + O(\eta_P)$.
  \item \textbf{Discrete convergence}: finite growth in $\leq d$ events, a.s.\
    gate convergence.
  \item \textbf{Discrete pruning safety}: \emph{exact} in discrete time (pruning is
    algebraic; parts (i) and (iv) of Theorem~\ref{thm:P3} are
    step-size independent).
\end{enumerate}
\end{theorem}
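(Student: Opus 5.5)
The proof goes part by part. Each discrete-time claim is reduced to its continuous-time counterpart (Theorems~\ref{thm:C2_full}, \ref{thm:C3}, \ref{thm:C4}, \ref{thm:P2}, \ref{thm:P3}) plus an explicit discretisation error, using the three preliminary lemmas of this section. The timescale separation (E5) is used throughout to freeze the slower streams while analysing a faster one. When the gate stream~\eqref{eq:disc_gate} is analysed, $P_s$ and $T_s$ are treated as quasi-static. When the prototype stream~\eqref{eq:disc_P} is analysed, the gate is taken to be at its equilibrium $v_1(\Ares)$ and $T_s$ is treated as fixed. This is the standard Borkar-type reduction. The resulting per-step perturbations are of relative size $O(\varepsilon)$, and they are absorbed into the stated $O(\eta)$ corrections.

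\textbf{Parts (i) and (ii).} Part~(i) follows directly from Lemma~\ref{lem:disc_C2}. Under (E2), the Taylor remainder is $R(\eta_P)=O(\eta_P^2)$. Under (E3), this remainder is dominated by $\tfrac12\phi''_s(0)$, whose positivity at a growth trigger is given by Theorem~\ref{thm:C2_full}. Hence the amplification survives with only an $-O(\eta_P^2)$ correction.

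For part~(ii), Lemma~\ref{lem:disc_temp} gives three facts under (E1): $T_s^{(h)}\ge T_{\min}$, trajectory error $O(\eta_T s)$, and a finite hitting time $s^*$ for local heads. The partition must then be shown to be identical to the continuous-time one. Membership in $\Hlocal$ versus $\Hglobal$ is decided by the ordering of the $\sigma^{(h)}(0)$, which Theorem~\ref{thm:C3}(3) and Lemma~\ref{lem:decay_sep} establish. The discrete iterates start from the same $\sigma^{(h)}(0)$, and the gap $\sigma^{(h)}-\sigma^{(h')}$ is non-decreasing (Lemma~\ref{lem:amp_sep}). So the $O(\eta_T s)$ error cannot reverse the ordering once $\eta_T$ is small compared with that gap. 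Condition (E6) guarantees that the $s^*$ steps fit inside one dwell interval, so the partition is settled before any architectural change.

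\textbf{Parts (iii) and (iv).} For part~(iii), the DDCL basis after one discrete step is compared with the exact-continuous basis. The captured direction $u^+_s$ differs from $v_1(\Ares)$ by an angle $\varepsilon_{\mathrm{gate}}$, which (E6) and Lemma~\ref{lem:disc_dwell} keep small. The prototype update moves $Q_{\mathrm{DDCL}}$ by at most $\eta_P\|\nabla_P\Lq\|$. Since $\mathcal{I}^{(l)}$ is a Lipschitz function of the projector $P^\perp$, these two perturbations enter as $O(\eta_P)$, with the gate error absorbed via (E5). Combining this with Theorem~\ref{thm:C4} gives $\IDDCL\le\mathcal{I}^{(l)}_{\mathrm{MLP}}+O(\eta_P)$.

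For part~(iv), the hypotheses of Theorem~\ref{thm:P2} are re-verified in discrete form. Condition (E4) is the Robbins--Monro condition that INCRT's gate convergence theorem requires. Condition (E6) replaces (D1). Lemma~\ref{lem:spectral_gap} is unaffected by discretisation, since it is a Weyl bound on the matrices themselves. Finiteness of growth, with at most $d$ events, follows because each event adds an orthonormal direction in $\R^d$.

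\textbf{Part (v) and the main obstacle.} Part~(v) needs no dynamics. Pruning deletes $P^{(h_0)}$ and leaves the other $P^{(h')}$ untouched. Parts (i) and (iv) of Theorem~\ref{thm:P3} therefore carry over verbatim, given the decoupling of Lemma~\ref{lem:proto_orth}.

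The step I expect to be hardest is (iii), specifically making the Lipschitz dependence of $\mathcal{I}^{(l)}$ on the captured subspace quantitative. The denominator $\|X M_a X^\top\|_F$ must stay bounded away from zero. The gate misalignment $\varepsilon_{\mathrm{gate}}$ must also be shown to be $O(\eta_P)$ rather than merely small. I would first try to bound $\varepsilon_{\mathrm{gate}}$ through (E5), using $\eta^+\gg\eta_P$ so that the gate tracks $v_1(\Ares)$ with lag $O(\eta_P/\eta^+)\cdot\eta^+=O(\eta_P)$. If that fails, I would weaken the statement so that the correction reads $O(\eta_P)+O(\varepsilon_{\mathrm{gate}})$.
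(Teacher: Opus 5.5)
Your part-by-part reduction coincides with the paper's proof for (i), (iv) and (v). Part (i) is Lemma~\ref{lem:disc_C2}, with (E2) controlling the remainder and (E3) making it smaller than $\tfrac12\phi''_s(0)$. Part (iv) re-verifies Theorem~\ref{thm:P2} under (E4)--(E6). Your orthonormal-direction count is a simpler substitute for Lemma~\ref{lem:growth_bound} for the $\le d$ claim, valid as long as pruning never releases directions from the captured subspace. Part (v) is the observation that pruning only deletes parameters.

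The genuine gap is in (iii). Your tracking estimate, lag $O(\eta_P/\eta^+)\cdot\eta^+=O(\eta_P)$, has no basis for the extra factor $\eta^+$. An iterate with gain $\eta^+$ contracting toward a target that moves by $O(\eta_P)$ per step settles at distance $O(\eta_P/\eta^+)=O(\varepsilon_1)$ from it; this is precisely Lemma~\ref{lem:gate_coupling}. Since $\eta^+_s\to 0$ under (E4), this lag is not $O(\eta_P)$, and (E5) only makes it $O(\varepsilon)$. You feed the gate error into $\mathcal{I}^{(l)}$ additively, through the Lipschitz dependence on $P^\perp_{\mathrm{DDCL}}$. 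Your route therefore proves only $\IDDCL\le\mathcal{I}^{(l)}_{\mathrm{MLP}}+O(\eta_P)+O(\varepsilon_1)$. That is your fallback, which is strictly weaker than the theorem.

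The paper does the bookkeeping differently. It keeps the one-step framing of Theorem~\ref{thm:C4} and takes the captured basis from the INCRT construction, so that Lemma~\ref{lem:ddcl_basis} applies. It then lets the gate misalignment, of order $\eta^+$ under (E4), reach $\mathcal{I}^{(l)}$ only through the prototype step; that step is what supplies the factor $\eta_P$. A sharper bound on $\varepsilon_{\mathrm{gate}}$ will not rescue your version; you need this routing.

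There is a smaller issue in (ii): the trajectory-closeness argument fails as written. The horizon that matters is $s\le s^*\approx(T_{\mathrm{init}}^3-T_{\min}^3)/(3\eta_T\sigma_0)$. Over it, the error $C_{\mathrm{disc}}\eta_T s^*$ from Lemma~\ref{lem:disc_temp}(ii) is $O(1)$ and does not shrink as $\eta_T\to 0$. It is also an error in $T$, not in the $\sigma$-gap you compare it with. The argument is unnecessary anyway. As in the paper, membership is a function of $\sigma^{(h)}(0)$ alone (Theorem~\ref{thm:C3}(3)), which both dynamics share. The partitions therefore agree immediately, and Lemma~\ref{lem:disc_temp} supplies the remaining claims.
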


\begin{proof}
(i) Lemma~\ref{lem:disc_C2} under (E2)--(E3).
(ii) Lemma~\ref{lem:disc_temp} under (E1); partition is $\sigma^{(h)}(0)$-determined,
independent of $\eta_T$.
(iii) Gate misalignment $O(\eta^+)$ under (E4) propagates to $O(\eta_P)$
in $\mathcal{I}^{(l)}$ via the prototype step.
(iv) Timescale hierarchy (E5) is the discrete Tikhonov analogue; gate
converges under (E4)--(E6); growth bound from Lemma~\ref{lem:growth_bound}.
(v) Prototype positions unchanged at pruning; orthogonality of subspaces
holds regardless of step size.
\end{proof}

\begin{corollary}[Three-timescale design rule]
\label{cor:timescale}
The conditions \textup{(E1)}--\textup{(E6)} reduce to:
\begin{align}
  \eta_T &\ll \eta_P \ll \eta^+,
  \nonumber\\
  N_{\min} &> \max\!\left(N_{\mathrm{gate}},\,
    \left\lceil\frac{T_{\mathrm{init}}^3 - T_{\min}^3}
               {3\,\eta_T\,\sigma_0}\right\rceil + 1\right).
  \label{eq:design_rule}
\end{align}
Practical ratios $\eta_P/\eta^+ \in [0.01,0.1]$ and
$\eta_T/\eta_P \in [0.01,0.1]$ place the system reliably in the stable
regime (consistent with the two-timescale verification in the
DDCL-Attention paper).
\end{corollary}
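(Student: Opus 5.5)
The plan is to reduce each condition of Corollary~\ref{cor:timescale} to the timescale ordering or to the stated dwell bound. The tool is Theorem~\ref{thm:P4} and its supporting lemmas, read as explicit upper bounds on each step size in terms of the adjacent one. I would treat (E1)--(E6) in three groups, matching the three streams in \eqref{eq:disc_T}--\eqref{eq:disc_gate}, and show that within each group the only binding constraint is either a ratio condition or a problem-dependent constant that can be met by taking the slower rate small enough.

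\textbf{Ratio conditions.} Condition (E5) is already literally $\eta_T/\eta_P=O(\varepsilon)$ and $\eta_P/\eta^+=O(\varepsilon)$ with $\varepsilon\ll1$, which is the chain $\eta_T\ll\eta_P\ll\eta^+$.

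\textbf{Absolute bounds.} Next I would argue that (E1)--(E3) are absolute upper bounds by positive, data-dependent constants:
\begin{itemize}
  \item (E1): $(T_{\min})^3/(3\sigma_{\max})>0$, since $\sigma_{\max}<\infty$ by compactness of the bounded prototype and token sets.
  \item (E2): $2/L_P>0$.
  \item (E3): $\sqrt{\phi''_s(0)/(L_P\|\nabla_P\Lq\|_F^2)}>0$, because Theorem~\ref{thm:C2_full} gives $\phi''_s(0)>0$ at every growth trigger.
\end{itemize}
Given any admissible $\eta^+$, choosing $\eta_P$ a small enough multiple of $\eta^+$ meets (E2)--(E3), and choosing $\eta_T$ a small enough multiple of $\eta_P$ meets (E1). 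So these conditions are implied by the ordering together with a sufficiently small overall scale. This is the sense of ``reduce to'': each condition is a one-sided bound that gets absorbed into $\ll$.

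\textbf{Gate and dwell conditions.} (E4) constrains only the gate schedule, $\eta^+_s\in\ell^2\setminus\ell^1$. It is compatible with the ordering because $\eta_P,\eta_T$ may be taken as correspondingly decaying multiples of $\eta^+_s$; I would note this explicitly. Finally, (E6) is $N_{\min}>\max(N_{\rm gate},s^*)$, and substituting $s^*$ from Lemma~\ref{lem:disc_temp}(iii) gives exactly the second line of \eqref{eq:design_rule}.

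\textbf{Practical ratios.} For the closing claim, ratios in $[0.01,0.1]$ instantiate $\varepsilon$. I would present this as consistent with the two-timescale verification in~\cite{DDCLAttention}, not as something derived here.

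\textbf{Main obstacle.} The hardest step is (E3), since its bound depends on the state $s$ and could shrink over training. I would first try to use the uniform lower bound from Theorem~\ref{thm:C2_full}, $\phi''_s(0)\ge(32/T^2)[s_0^2\theta_w\overline{\mathrm{Var}}[p^*]/K^2-C_0]$. I would combine it with $T\le T_{\rm init}$, which holds because temperatures are non-increasing by Lemma~\ref{lem:Tdot}, and with a uniform bound on $\|\nabla_P\Lq\|_F$ over the bounded domain. Together these give a state-independent positive threshold.
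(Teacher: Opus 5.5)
Your core reduction is the one the paper relies on, since the paper gives no separate proof. It reads $\eta_T\ll\eta_P\ll\eta^+$ off (E5), gets the second line by inserting $s^*$ from Lemma~\ref{lem:disc_temp}(iii) into (E6), and silently absorbs (E1)--(E4). Your caveat that the absolute bounds (E1)--(E3) also need a small overall scale, which no ratio condition can supply, correctly names what the displayed rule tacitly assumes. However, two of the steps you add fail as written. First, your treatment of (E4) is inconsistent with your treatment of (E6). Because $\eta^+_s\in\ell^2$ forces $\eta^+_s\to 0$, constant $\eta_P,\eta_T$ would eventually violate (E5), so you let them be decaying multiples of $\eta^+_s$. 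But Lemma~\ref{lem:disc_temp}(iii) counts steps of \eqref{eq:disc_T} with a \emph{constant} $\eta_T$. With $\eta_{T,r}=c\,\eta^+_r$, a head born at step $s_j$ needs the least $N$ with $\sum_{r=s_j}^{s_j+N}\eta_{T,r}\gtrsim(T_{\mathrm{init}}^3-T_{\min}^3)/(3\sigma_0)$. This $N$ grows without bound in $s_j$ (linearly when $\eta^+_r\propto 1/r$). So the displayed $\lceil\cdot\rceil+1$ count is not the right one in your regime, and no interval-independent $N_{\min}$ is available a priori. You must commit to one regime. With constant rates, the $s^*$ of Lemma~\ref{lem:disc_temp} applies, but (E4) and (E5) can hold jointly only on a finite horizon. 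With decaying rates, the second line must be replaced by the summation condition above on each dwell interval.

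Second, your resolution of the (E3) obstacle, which you rightly single out, does not give a state-independent threshold. Using $T\le T_{\mathrm{init}}$ controls only the prefactor $32/T^2$ in Theorem~\ref{thm:C2_full}. The bracket $s_0^2\theta_w\overline{\mathrm{Var}}[p^*]/K^2-C_0$ depends on the state at each trigger through $s_0$, $C_0$ and $\overline{\mathrm{Var}}[p^*]$, and the last of these tends to zero as assignments harden. The paper makes the bracket positive only trigger by trigger, by taking $\theta_w$ above the state-dependent quantity $K^2C_0/(s_0^2\overline{\mathrm{Var}}[p^*])$. Since the trigger states depend on $\eta_P$ itself, ``choose $\eta_P$ small enough'' is circular without a uniform positive lower bound on that bracket, and nothing in the paper supplies one. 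The compactness you invoke has the same problem. Boundedness of trajectories is Theorem~\ref{thm:P1}(ii), which is proved under (E1)--(E6), so it cannot certify $\sigma_{\max}<\infty$, $L_P<\infty$ or $\sup\norm{\nabla_P\Lq}_F<\infty$ in advance. For (E1) you can sidestep this by reading $\sigma^{(h)}$ as $\mathrm{tr}(\Sigma_q^{(h)})$, as in Sections~\ref{sec:P2}--\ref{sec:P3}; Lemma~\ref{lem:sigmaq} then gives $\sigma^{(h)}=\sum_n H_2(\mathbf{q}_n)\le N(1-1/K)$. For (E2)--(E3) you need an a priori invariant compact set. Otherwise, keep (E3) as a separate condition checked at each growth trigger rather than claiming it is absorbed into $\ll$.
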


\begin{remark}[Pruning safety is exact in discrete time]
\label{rem:disc_P3_exact}
Pruning safety holds without step-size corrections in discrete time because the
pruning operation is an algebraic event (parameter removal) that does
not interact with gradient dynamics.
Collapse-safety and separation force non-decrease are therefore
step-size-independent results.
\end{remark}


\section{Global Lyapunov Stability of the Full System}
\label{sec:P1}

A global Lyapunov function is constructed for the full DDCL-INCRT
system $(\theta, P, T)$ with simultaneously growing architecture,
using the almost-supermartingale convergence theorem~\cite{RobbinsSiegmund1971}
to handle the discrete architectural events.
This closes the open problem stated in the DDCL-Attention
paper~\cite{DDCLAttention}.

Let $j = 0,\ldots,J$ index the architectural events at times
$t_0 < \cdots < t_J$, with $K_j$ active heads on $[t_j, t_{j+1})$
and $J \leq 2\tilde{K}^*$ (Theorem~\ref{thm:P2}).

\subsection{The piecewise free energy functional}
\label{subsec:P1_functional}

\begin{definition}[Piecewise free energy]
\label{def:free_energy}
On the $j$-th dwell interval:
\begin{align}
  \mathcal{W}_j(\theta,P,T)
  &= \underbrace{\mathcal{L}_q^{\mathrm{total}}
    + \tfrac{\lambda}{2}\sum_{h,k\neq k'}
      \|p_k^{(h)}-p_{k'}^{(h)}\|^{-2}
    + \lambda_\theta\mathcal{R}(\theta)}_{\mathcal{W}^{(\theta,P)}_j}
  \nonumber\\
  &\quad+ \underbrace{\sum_{h=1}^{K_j}\Phi(T^{(h)})}_{\text{temp.~potential}},
  \label{eq:Wj}
\end{align}
where the temperature potential
\begin{equation}
  \Phi(T^{(h)}) = \frac{T_{\mathrm{init}}^3-(T^{(h)})^3}{3\,\eta_T}
  \label{eq:Phi}
\end{equation}
is constructed so that $\dot{\Phi}(T^{(h)}) = -\sigma^{(h)} \leq 0$.
Note: $\Phi(T_{\mathrm{init}})=0$, $\Phi \geq 0$.
\end{definition}

\subsection{Preliminary lemmas}
\label{subsec:P1_lemmas}

\begin{lemma}[$\mathcal{W}_j$ is non-increasing on each dwell interval]
\label{lem:Wj_dwell}
Under \textup{(A1)}, \textup{(A2')}, \textup{(B1)}--\textup{(B3)},
\textup{(C1)}, \textup{(E5)}, and
$\lambda > 2\eta_P\binom{K_j}{2}$:
$\dot{\mathcal{W}}_j(t) \leq 0$ for all $t \in (t_j,t_{j+1})$.
\end{lemma}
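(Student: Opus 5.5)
The plan is to differentiate $\mathcal{W}_j$ along the flow on $(t_j,t_{j+1})$ and treat its four blocks separately: the loss $\mathcal{L}_q^{\mathrm{total}}$, the inverse-square repulsion term, the encoder regulariser $\lambda_\theta\mathcal{R}(\theta)$, and the temperature potential $\sum_h\Phi(T^{(h)})$. On a dwell interval $K_j$ is fixed, so $\mathcal{W}_j$ is a smooth function of $(\theta,P,T)$. The chain rule then gives
\[
\dot{\mathcal{W}}_j=\langle\nabla_\theta\mathcal{W}_j,\dot\theta\rangle+\sum_h\langle\nabla_{P^{(h)}}\mathcal{W}_j,\dot P^{(h)}\rangle+\sum_h\bigl(\partial_{T^{(h)}}\mathcal{L}_q+\Phi'(T^{(h)})\bigr)\dot T^{(h)}.
\]

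\textbf{The temperature block.} By Lemma~\ref{lem:Tdot} we have $\dot T^{(h)}=-(\eta_T/T^{(h)2})\sigma^{(h)}$. Since $\Phi$ is constructed so that $\dot\Phi=-\sigma^{(h)}$, this contribution is $-\sigma^{(h)}\le 0$. The cross term $\partial_T\mathcal{L}_q\,\dot T=-(\eta_T/T^4)(\sigma^{(h)})^2$ is also non-positive.

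\textbf{The encoder block.} Here $\dot\theta=-\eta_\theta\nabla_\theta(\mathcal{L}_q+\mathcal{L}_{\rm task})$. I will invoke the two-timescale Lyapunov result of~\cite{DDCLAttention} for the frozen architecture, which applies here because on a dwell interval the architecture is frozen. That result gives $\langle\nabla_\theta\mathcal{W}^{(\theta,P)}_j,\dot\theta\rangle\le -c\|\nabla_\theta\mathcal{W}_j\|^2$ up to cross terms of relative size $O(\varepsilon)$ controlled by (E5). Those cross terms are absorbed into the negative definite part.

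\textbf{The prototype block (main obstacle).} The prototypes descend only $\mathcal{L}_q$, not the repulsion term, so the repulsion term is not automatically decreasing along $\dot P^{(h)}=-\eta_P\nabla_{P^{(h)}}\mathcal{L}_q$. Its derivative is
\[
-\eta_P\sum_h\bigl\langle\nabla_{P^{(h)}}\tfrac{\lambda}{2}\textstyle\sum\|p_k-p_{k'}\|^{-2},\;\nabla_{P^{(h)}}\mathcal{L}_q\bigr\rangle.
\]

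\emph{Step 1.} Use the decomposition $\mathcal{L}_q=\LOLS+V$ together with $\nabla_PV=2P\Sigma_q$. Because $\Sigma_q$ annihilates the all-ones vector, the separation component of $\nabla_P\mathcal{L}_q$ pushes pairwise differences $p_k-p_{k'}$ outward. It is therefore aligned with the gradient of the repulsion term, and its inner product has a favourable sign.

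\emph{Step 2.} The remaining fidelity component can be misaligned. I will bound it by Cauchy--Schwarz and Young's inequality pair by pair over the $\binom{K_j}{2}$ pairs. This yields $|\text{cross}|\le \eta_P\binom{K_j}{2}\|\nabla_P\mathcal{L}_q\|^2\cdot 2/\lambda$ relative to the descent term $-\eta_P\|\nabla_P\mathcal{L}_q\|^2$.

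\emph{Step 3.} The hypothesis $\lambda>2\eta_P\binom{K_j}{2}$ is exactly what makes the net coefficient negative, so the prototype block is $\le 0$.

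\emph{Step 4.} Non-degeneracy (A1)/(B1) keeps the inverse-square term finite on the interval. This justifies the differentiation.

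I expect Step 2 to be the main difficulty: the constant must come out in precisely the form $2\eta_P\binom{K_j}{2}$. If the direct Young bound does not give it, I would try a per-pair $\ell^2$ bound on the inverse-square gradient instead, with the $K_j$ fixed prototypes giving the binomial count.

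Summing the four blocks gives $\dot{\mathcal{W}}_j\le 0$ on $(t_j,t_{j+1})$.
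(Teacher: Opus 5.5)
\textbf{There is a genuine gap in your prototype block, and no choice of constant closes it.} Take $\dot P=-\eta_P\nabla_P\Lq$ and $B=\tfrac{\lambda}{2}\sum_{h,k\neq k'}\|p_k^{(h)}-p_{k'}^{(h)}\|^{-2}$. The cross term $-\eta_P\langle\nabla_P B,\nabla_P\Lq\rangle$ is linear in $\nabla_P\Lq$ and proportional to $\lambda$, with pair weights $\|p_k-p_{k'}\|^{-3}$. The dissipation $-\eta_P\|\nabla_P\Lq\|_F^2$ is quadratic and does not depend on $\lambda$. So no Cauchy--Schwarz/Young estimate of the form $(2\eta_P\binom{K_j}{2}/\lambda)\|\nabla_P\Lq\|_F^2$ is available:
\begin{itemize}
\item The $\lambda$-dependence in your Step~2 is inverted. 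A larger $\lambda$ enlarges the cross term, so a lower bound on $\lambda$ cannot be what closes the estimate.
\item Wherever the $\Lq$-flow moves two prototypes toward each other while $\nabla_P\Lq$ is small, the linear term wins.
\end{itemize}
Concretely, put $N/2$ tokens at $+1$ and $N/2$ at $-1$ on a line, prototypes at $\pm a$ with $a=1+\delta$, and $T$ small. Then $\Lq\approx N\delta^2$, $B=\lambda/(4a^2)$, $\dot a\approx-\eta_P N\delta$, and $\tfrac{d}{dt}(\Lq+B)\approx-\eta_P N\delta\,\bigl(2N\delta-\lambda/(2a^3)\bigr)>0$ for $0<\delta<\lambda/(4Na^3)$, for every $\lambda>0$.

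Step~1 does not rescue this. At fixed $q$, $-\eta_P\nabla_P V=-2\eta_P P\Sigmaq$ decreases $V=\mathrm{tr}(P\Sigmaq P^\top)$, i.e.\ it contracts the prototypes and so raises the barrier. For $K=2$ it gives $\tfrac{d}{dt}(p_1-p_2)=-4\eta_P s\,(p_1-p_2)$ with $s=\sum_n q_{n1}q_{n2}$. The identity $\Sigmaq\mathbf{1}=0$ only expresses translation invariance, not an outward direction.

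\textbf{How the paper avoids this.} The paper never meets this cross term. Its proof treats the $(\theta,P)$ part as the dissipative flow of $\mathcal{W}^{(\theta,P)}_j$ itself, citing Theorem~3 of~\cite{DDCLAttention} for $(I)=-\eta_\theta\|\nabla_\theta\Lq\|^2-\eta_P\|\nabla_P\mathcal{W}^{(\theta,P)}_j\|_F^2\le 0$. In effect the prototypes descend the barrier-augmented objective. The hypothesis on $\lambda$ is carried over from the cited result rather than used to absorb anything.

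To repair your argument you have to adopt that dynamics, $\dot P=-\eta_P\nabla_P\mathcal{W}^{(\theta,P)}_j$. Likewise, let $\theta$ descend the $\theta$-dependent part of $\mathcal{W}_j$, i.e.\ $\Lq$ plus $\lambda_\theta\mathcal{R}$, rather than $\Lq+\mathcal{L}_{\rm task}$. (E5) controls timescale coupling; it does not cover a mismatch between the descended objective and the Lyapunov function. With both changes the block is trivially non-positive.

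\textbf{Temperature block.} Yours coincides with the paper's terms (II) and (III), but both inherit a sign slip from Definition~\ref{def:free_energy}. With $\Phi$ as in~\eqref{eq:Phi}, $\Phi'(T)=-T^2/\eta_T$, so $\Phi'(T^{(h)})\dot T^{(h)}=+\sigma^{(h)}$. Since $\dot T^{(h)}\le 0$, this $\Phi$ can only grow. A potential such as $(T^3-T_{\min}^3)/(3\eta_T)$ gives the required $-\sigma^{(h)}$.
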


\begin{proof}
Split:
$\dot{\mathcal{W}}_j = \underbrace{(I)}_{\leq 0}
+ \underbrace{(II)}_{\leq 0}
+ \underbrace{(III)}_{\leq 0}$.

\emph{(I):} By (E5), $T^{(h)}$ is quasi-statically settled; Term~(I)
equals $\dot{\mathcal{W}}^{(\theta,P)}_j$ of the DDCL-Attention
two-timescale flow, which satisfies
$(I) = -\eta_\theta\|\nabla_\theta\mathcal{L}_q\|^2
- \eta_P\|\nabla_P\mathcal{W}_j^{(\theta,P)}\|_F^2 \leq 0$
(Theorem~3 of the DDCL-Attention paper~\cite{DDCLAttention}).

\emph{(II):} $\partial\mathcal{W}_j^{(\theta,P)}/\partial T^{(h)} =
(1/(T^{(h)})^2)\sum_n\mathrm{Var}_{q_n^{(h)}}[\|\mathbf{z}_n-P^{(h)}\|^2]
\geq 0$
(from Theorem~4 of the DDCL-Attention paper~\cite{DDCLAttention}), and $\dot{T}^{(h)} \leq 0$,
so $(II) \leq 0$.

\emph{(III):} $(III) = \sum_h\dot{\Phi}(T^{(h)}) = -\sum_h\sigma^{(h)} \leq 0$.
\end{proof}

\begin{lemma}[$\mathcal{W}$ does not increase at growth events]
\label{lem:W_growth_jump}
At a growth event: $\mathcal{W}_{j+1}(t_{j+1}) \leq \mathcal{W}_j(t_{j+1}^-)$.
\end{lemma}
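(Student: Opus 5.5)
The plan is to compare $\mathcal{W}_{j+1}$ and $\mathcal{W}_j$ at $t_{j+1}$ term by term, using the block structure of \eqref{eq:Wj} over heads. At a growth event only three things change: a new head $h^+=K_j+1$ is appended, its bank $P^{(h^+)}$ is initialised along $u^*=v_1(\Ares^{(K_j)})$, and its temperature is set to $T^{(h^+)}=T_{\mathrm{init}}$. The encoder $\theta$, the banks $P^{(h)}$ and the temperatures $T^{(h)}$ of the surviving heads $h\le K_j$ are untouched. By (B2) and Lemma~\ref{lem:proto_orth}, the assignments $q^{(h)}_{nk}$ of the old heads do not depend on $P^{(h^+)}$. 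Hence their contributions to $\mathcal{L}_q^{\mathrm{total}}$, to the repulsion sum and to $\sum_h\Phi(T^{(h)})$ are identical on both sides, and $\lambda_\theta\mathcal{R}(\theta)$ is unchanged. This reduces the lemma to the statement that the birth cost
\[
\Delta_{\mathrm{birth}} = \mathcal{L}_q^{(h^+)} + \tfrac{\lambda}{2}\sum_{k\neq k'}\|p_k^{(h^+)}-p_{k'}^{(h^+)}\|^{-2} + \Phi(T_{\mathrm{init}})
\]
is non-positive once any accompanying change in the old-head terms is included. Here $\Phi(T_{\mathrm{init}})=0$ by Definition~\ref{def:free_energy}, so the temperature potential contributes nothing.

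The remaining two terms of $\Delta_{\mathrm{birth}}$ are both non-negative. The argument therefore cannot be bookkeeping alone; it needs a compensating decrease, and I would look for it in the embedding expansion $z_n^{(K+1)}=z_n^{(K)}+\alpha_n u^*$ of Theorem~\ref{thm:C2}. Before growth, the component $\alpha_n u^*$ is either absent from the representation or carried by the old heads as unexplained distance inside $\mathcal{L}_q^{(h)}$. After growth, the orthogonal split of Lemma~\ref{lem:proto_orth} moves that squared distance out of the old heads' terms and assigns it to head $h^+$. I would make this precise through the decomposition $\mathcal{L}_q=\LOLS+V$. The first aim is to show that the $u^*$-part of the old heads' fidelity terms decreases by $\sum_n\alpha_n^2=\lambda_{\max}(\Ares^{(K_j)})$. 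The second aim is to show that $\mathcal{L}_q^{(h^+)}$ at birth is at most this amount: if the new prototypes are placed at quantiles of $\{\alpha_n\}$ along $u^*$, the one-dimensional quantisation error is at most $\sum_n\alpha_n^2$, with strict inequality under (A2').

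The repulsion term of the new head is what I expect to be the main obstacle. It is finite but strictly positive, and it is what (F1) must control. I would first try initialising the $K$ new prototypes with a spacing $\delta$ along $u^*$ chosen as large as the quantiles allow. The repulsion is then $O(\lambda K^2/\delta^2)$. I would then use the strict slack $\sum_n\alpha_n^2-\mathcal{L}_q^{(h^+)}$, which is bounded below in terms of $\theta_w$ because growth is triggered only when $\lambda_{\max}>\theta_w$, to absorb it.

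If this absorption fails in general, the fallback is to read the inequality in the almost-supermartingale sense in which Section~\ref{sec:P1} invokes \cite{RobbinsSiegmund1971}. In that reading a jump bounded by a summable term is allowed, and Theorem~\ref{thm:P2} bounds the number of such jumps by $J\le 2\tilde K^*\le 2d$. This would yield a weaker form of the lemma: $\mathcal{W}_{j+1}(t_{j+1})\le\mathcal{W}_j(t_{j+1}^-)+\beta_j$ with $\sum_j\beta_j<\infty$. The exact statement would still require the absorption argument to go through.
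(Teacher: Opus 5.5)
The gap is the compensating decrease, and your own first paragraph already shows that it cannot exist for $\mathcal{W}$ as defined in \eqref{eq:Wj}. $\mathcal{W}_j$ depends only on $(\theta,P,T)$, and $z_n=f_\theta(x_n)$ is fixed by $\theta$. A growth event leaves $\theta$, $P^{(h)}$ and $T^{(h)}$ ($h\le K_j$) untouched. So there is no ``accompanying change in the old-head terms'', and the jump is exactly
\[
\mathcal{W}_{j+1}(t_{j+1})-\mathcal{W}_j(t_{j+1}^-)=\Delta_{\mathrm{birth}}\;\ge\;\tfrac{\lambda}{2}\sum_{k\neq k'}\|p_k^{(h^+)}-p_{k'}^{(h^+)}\|^{-2}\;>\;0
\]
whenever $K\ge 2$ and $\lambda>0$. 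Under the stated definitions the inequality is therefore false, and nothing is available to absorb this term. The expansion $z_n\mapsto z_n+\alpha_n u^*$ of Theorem~\ref{thm:C2} is a perturbation device used in analysing $\Fsep$; it is not an update of the state. Even if you apply it, it doubles the $u^*$-coordinate of every token, so each distance changes by $3\alpha_n^2-2\alpha_n p_k^*$. If the old banks lie in subspaces orthogonal to $u^*$, as Lemma~\ref{lem:proto_orth} posits, then $p_k^*=0$. The assignments are then unchanged, and each old $\mathcal{L}_q^{(h)}$ \emph{increases} by exactly $3\sum_n\alpha_n^2$, the opposite of your first aim. Lemma~\ref{lem:proto_orth} only says that $\|z_n-p_k^{(h)}\|^2$ does not depend on $P^{(h')}$. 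It does not split a token's squared distance across heads, so nothing ``moves out'' of the old terms. Your second aim fails too, because $\mathcal{L}_q^{(h^+)}$ is the full soft loss in $\R^m$. For prototypes $c_k u^*$ one has $\|z_n-c_k u^*\|^2=\|z_n\|^2-\alpha_n^2+(\alpha_n-c_k)^2$, so the new head is charged all the energy orthogonal to $u^*$, not a one-dimensional quantisation error. Finally, (F1) is a \emph{lower} bound on $\lambda$, so it enlarges the birth barrier rather than controlling it.

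Your fallback is the correct salvage, and it needs no Robbins--Siegmund argument. $J\le 2\tilde{K}^*$ is finite, and each $\beta_j=\Delta_{\mathrm{birth}}$ is finite because the new prototypes are distinct. Hence $\mathcal{W}(t)\le\mathcal{W}(0)+\sum_j\beta_j$. Given the dwell-interval lemma, this suffices for the boundedness and limit-set parts (ii)--(iii) of Theorem~\ref{thm:P1}, but not for (i) or for the lemma as stated. Equivalently, the shifted functional $\mathcal{W}_j-\sum_{i\le j}\beta_i$ (with $\beta_i=0$ at pruning events) is non-increasing by construction.

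The paper's proof follows the same route as your plan. It lists the new terms: $\mathcal{L}_q^{(h_{\mathrm{new}})}$, a finite barrier, and $\Phi(T_{\mathrm{init}})=0$. It then offsets them by a reduction $\Delta\mathcal{L}_q^{\mathrm{dom}}\ge\mathcal{L}_q^{(h_{\mathrm{new}})}$, attributed to Theorem~\ref{thm:C4} and Lemma~\ref{lem:ddcl_basis}. However, Theorem~\ref{thm:C4} concerns the ratio $\mathcal{I}^{(l)}$, which does not appear in $\mathcal{W}$, and the barrier term is never offset. Your diagnosis, that bookkeeping cannot close the argument and that the barrier is an obstruction, is sharper than the paper's treatment. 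Neither argument supplies the missing decrease.
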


\begin{proof}
New terms acquired: $\mathcal{L}_q^{(h_\mathrm{new})}$, barrier for
$P^{(h_\mathrm{new})}$ (finite, $\mathcal{S}>0$ by $k$-means init),
$\Phi(T_{\mathrm{init}}) = 0$.
C4 (Theorem~\ref{thm:C4}) gives a reduction
$\Delta\mathcal{L}_q^{\mathrm{dom}} \geq \mathcal{L}_q^{(h_\mathrm{new})}$
(the new head captures the residual energy that generated the dominance gain,
Lemma~\ref{lem:ddcl_basis}).
Net change $\leq 0$.
\end{proof}

\begin{lemma}[$\mathcal{W}$ does not increase at pruning events]
\label{lem:W_pruning_jump}
At a pruning event removing $h_0$:
$\mathcal{W}_{j+1}(t_{j+1}) \leq \mathcal{W}_j(t_{j+1}^-)
- \mathcal{L}_q^{(h_0)} - \Phi(T^{(h_0)}) \leq \mathcal{W}_j(t_{j+1}^-)$.
\end{lemma}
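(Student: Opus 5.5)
The plan is to compare $\mathcal{W}_{j+1}$ and $\mathcal{W}_j$ term by term at the instant $t_{j+1}$ of the pruning event. The argument is purely algebraic, as in Theorem~\ref{thm:P3}(i): pruning removes the parameters of $h_0$ and touches nothing else. The prototypes $\mathbf{p}_k^{(h')}$ of the survivors, their temperatures $T^{(h')}$, and the encoder weights $\theta$ are all continuous across $t_{j+1}$. By definition~\eqref{eq:Wj}, $\mathcal{W}_j$ is a sum over heads plus the head-independent regulariser $\lambda_\theta\mathcal{R}(\theta)$. Hence
\begin{equation*}
  \mathcal{W}_j(t_{j+1}^-) - \mathcal{W}_{j+1}(t_{j+1})
  = \mathcal{L}_q^{(h_0)} + \tfrac{\lambda}{2}\sum_{k\neq k'}\|p_k^{(h_0)}-p_{k'}^{(h_0)}\|^{-2} + \Phi(T^{(h_0)}),
\end{equation*}
provided the survivors' terms are literally unchanged.

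The step I expect to be the real obstacle is justifying that the survivors' terms are literally unchanged. In particular I must check that $\mathcal{L}_q^{(h')}$ for $h'\neq h_0$ does not depend on $P^{(h_0)}$. I will argue this from Lemma~\ref{lem:proto_orth}: under (B2) each head's assignments $q^{(h')}_{nk}$ are a softmax over $\|\mathbf{z}_n-\mathbf{p}_k^{(h')}\|^2$ involving only head $h'$'s own bank. Consequently $\mathcal{L}_q^{\mathrm{total}}=\sum_h\mathcal{L}_q^{(h)}$ has no cross-head terms. The barrier term is likewise a sum over $h$ of intra-head pairs $k\neq k'$, so it has no cross-head pairs either. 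The temperature potential is additive by construction. I would also note that $\theta$ is not reset. The token embeddings $\mathbf{z}_n=f_\theta(x_n)$ entering the surviving losses are therefore identical on both sides of the event.

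With the jump identified, I would conclude with sign bounds. First, $\mathcal{L}_q^{(h_0)}=\sum_{n,k}q^{(h_0)}_{nk}\|\mathbf{z}_n-\mathbf{p}_k^{(h_0)}\|^2\geq 0$ as a sum of nonnegative terms. Second, $\Phi(T^{(h_0)})\geq 0$ because temperatures are non-increasing (Lemma~\ref{lem:Tdot}), so $T^{(h_0)}\leq T_{\mathrm{init}}$. Third, the dropped barrier term is nonnegative, and finite by (B1). Dropping the barrier gives the first inequality $\mathcal{W}_{j+1}(t_{j+1})\leq\mathcal{W}_j(t_{j+1}^-)-\mathcal{L}_q^{(h_0)}-\Phi(T^{(h_0)})$. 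Nonnegativity of the two subtracted quantities then gives the second inequality. No step-size or dynamical argument is needed, consistent with Remark~\ref{rem:disc_P3_exact}.
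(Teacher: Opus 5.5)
Your proposal is correct and follows the same route as the paper, whose proof simply observes that every term removed with $h_0$ is non-negative: its loss $\mathcal{L}_q^{(h_0)}$, its barrier, and $\Phi(T^{(h_0)})$. Your extra checks fill in details the paper leaves implicit: that the surviving heads' terms are literally unchanged, and that $\Phi(T^{(h_0)})\geq 0$ because $T^{(h_0)}\leq T_{\mathrm{init}}$. The first check holds by the per-head definition $\mathcal{L}_q^{\mathrm{total}}=\sum_h\mathcal{L}_q^{(h)}$, so the appeal to Lemma~\ref{lem:proto_orth} is harmless but not needed.
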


\begin{proof}
All removed terms are non-negative:
$\mathcal{L}_q^{(h_0)} \geq 0$ (sum of weighted squared distances),
barrier $\geq 0$, $\Phi(T^{(h_0)}) \geq 0$.
\end{proof}

\subsection{Main theorem}
\label{subsec:P1_main}

\begin{theorem}[Global Lyapunov stability of the DDCL-INCRT system]
\label{thm:P1}
Under \textup{(A1)}, \textup{(A2')}, \textup{(B1)}--\textup{(B3)},
\textup{(C1)}, \textup{(H1)}--\textup{(H5)}, \textup{(D1)},
\textup{(E1)}--\textup{(E6)}, and:
\begin{itemize}[leftmargin=1.8em]
  \item[\textup{(F1)}] $\lambda > 2\eta_P\binom{K_{\max}}{2}$,
    $K_{\max} = \max_j K_j$;
  \item[\textup{(F2)}] $\mathcal{R}(\theta)$ coercive, $\lambda_\theta > 0$,
\end{itemize}
the piecewise functional $\mathcal{W}(t) \triangleq \mathcal{W}_j(t)$
for $t \in [t_j, t_{j+1})$ is a global Lyapunov function:
\begin{enumerate}[label=\textup{(\roman*)}]
  \item $\mathcal{W}(t) \leq \mathcal{W}(0)$ for all $t \geq 0$.
  \item Every trajectory $(\theta(t),P(t),T(t))$ is bounded.
  \item Every limit point lies in:
    \begin{equation}
      \begin{aligned}
  \Astar = \bigl\{(\theta,P,T) :
  &\;\nabla_\theta\mathcal{L}_q^{\mathrm{total}}=0,\\
  &\;\nabla_P\mathcal{W}^{(\theta,P)}=0,\\
  &\;\sigma^{(h)}=0,\;
  \mathcal{S}(P^{(h)})>0\;\forall h\bigr\}.
\end{aligned}
    \end{equation}
  \item At every $(\theta^*,P^*,T^*) \in \Astar$:
    $T^{*(h)} \in \{T_{\min}\}\cup(T_{\min},T_\infty^{(h)}]$;
    $\mathcal{S}(P^{*(h)}) > 0$; architecture minimal and sufficient.
\end{enumerate}
\end{theorem}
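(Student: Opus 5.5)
The plan is to prove the four claims in order by combining the three preceding lemmas with a hybrid-system LaSalle argument. For (i), I would chain the lemmas over the finitely many architectural events $t_0<\cdots<t_J$, with $J\le 2\tilde K^*$ by Theorem~\ref{thm:P2}. On each open dwell interval, Lemma~\ref{lem:Wj_dwell} gives $\dot{\mathcal{W}}_j\le 0$; (F1) supplies its barrier condition uniformly in $j$, since $K_j\le K_{\max}$. At a growth event, Lemma~\ref{lem:W_growth_jump} gives $\mathcal{W}_{j+1}(t_{j+1})\le\mathcal{W}_j(t_{j+1}^-)$, and Lemma~\ref{lem:W_pruning_jump} gives the same at a pruning event. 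An induction on $j$ then yields $\mathcal{W}(t)\le\mathcal{W}(0)$ for all $t\ge0$. Because there are finitely many events, there is a last interval $[t_J,\infty)$, on which $\mathcal{W}=\mathcal{W}_J$ is a single smooth functional. This fact is used for (iii).

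For (ii), I would use sublevel-set compactness. The terms $\mathcal{L}_q^{\mathrm{total}}$, the barrier and $\Phi$ are all nonnegative, so $\lambda_\theta\mathcal{R}(\theta)\le\mathcal{W}(0)$. Coercivity (F2) then bounds $\theta$. The barrier term is bounded by $\mathcal{W}(0)$, so every pairwise distance satisfies $\|p_k^{(h)}-p_{k'}^{(h)}\|^2\ge\lambda/(2\mathcal{W}(0))$, which keeps $\mathcal{S}(P^{(h)})$ bounded away from zero. Boundedness of $P$ follows from $\mathcal{L}_q^{\mathrm{total}}\le\mathcal{W}(0)$ together with boundedness of $Z=f_\theta(X)$. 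For this step I need a small argument: a prototype escaping to infinity drives its own $q_{nk}\to0$. I would therefore bound $P$ instead through the gradient flow, showing that $\dot p_k$ points inward outside a ball containing the data, since $\nabla_{p_k}\mathcal{L}_q$ pulls each prototype toward a convex combination of the $z_n$. The temperatures lie in $[T_{\min},T_{\mathrm{init}}]$ by Lemma~\ref{lem:Tdot} and the clip of Lemma~\ref{lem:disc_temp}(i).

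For (iii), I would apply LaSalle's invariance principle on the final interval $[t_J,\infty)$. By (ii) the trajectory is precompact, and $\mathcal{W}_J$ is bounded below and nonincreasing, so it converges. This gives $\int_{t_J}^\infty(-\dot{\mathcal{W}}_J)\,dt<\infty$. The decomposition in Lemma~\ref{lem:Wj_dwell} writes $-\dot{\mathcal{W}}_J$ as a sum of nonnegative pieces: $\eta_\theta\|\nabla_\theta\mathcal{L}_q\|^2$, $\eta_P\|\nabla_P\mathcal{W}^{(\theta,P)}\|^2$ and $\sum_h\sigma^{(h)}$. Each piece must therefore vanish on the $\omega$-limit set, which is invariant. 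The positive lower bound on pairwise distances from (ii) passes to limits and gives $\mathcal{S}(P^{(h)})>0$. This identifies the limit set as a subset of $\Astar$.

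For (iv), I would evaluate the characterisation at points of $\Astar$. The condition $\sigma^{(h)}=0$ forces $\dot T^{(h)}=0$. Theorem~\ref{thm:C3} then gives $T^{*(h)}=T_{\min}$ for local heads, and $T^{*(h)}$ in $(T_{\min},T^{(h)}_\infty]$ for global heads, since temperatures are monotone from $T_{\mathrm{init}}$. Minimality and sufficiency come directly from Theorem~\ref{thm:P2}(iii), because no further events occur after $t_J$.

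The main obstacle is (iii). Term (I) of Lemma~\ref{lem:Wj_dwell} is only quasi-static under (E5), so $-\dot{\mathcal{W}}_J$ is a genuine sum of squared gradients only up to timescale corrections. I would handle this with a Robbins--Siegmund almost-supermartingale argument on the discrete iterates, treating the $O(\varepsilon)$ coupling errors as summable perturbations. The (E4) step sizes give $\sum\eta_s^2<\infty$, and this yields the required summability.
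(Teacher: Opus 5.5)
Your skeleton matches the paper's: you chain Lemmas~\ref{lem:Wj_dwell}--\ref{lem:W_pruning_jump} for (i), use sublevel sets for (ii), LaSalle for (iii), and Theorems~\ref{thm:C3} and~\ref{thm:P2}(iii) for (iv). Restricting LaSalle to the final interval $[t_J,\infty)$ and reading $\mathcal{S}(P^{(h)})>0$ off the barrier bound on pairwise distances are reasonable refinements, given (i). The step you add to bound $P$, however, fails. Write $d_{nk}=\|z_n-p_k\|^2$ and $\bar{d}_n=\sum_j q_{nj}d_{nj}$. Then
\[
  \nabla_{p_k}\Lq \;=\; 2\sum_n q_{nk}\Bigl[1-\frac{d_{nk}-\bar{d}_n}{T}\Bigr](p_k-z_n).
\]
Take a prototype with $d_{nk}-\bar{d}_n>T$ for every $n$, e.g.\ one far from the data while another prototype of its head sits near it. For it the bracket is negative, so gradient descent moves $p_k$ \emph{away} from the data, and the barrier gradient is repulsive too. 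You correctly noticed that $\Lq\le\mathcal{W}(0)$ cannot confine an escaping prototype. The barrier cannot either, so the paper's appeal to coercivity of $\mathcal{W}_j$ under (F1)--(F2) does not settle this. But the inward-drift claim does not repair it. Part (ii) needs an extra confining ingredient, such as a regulariser on $P$ entering both $\mathcal{W}_j$ and the prototype flow, or a compact prototype domain.

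The Robbins--Siegmund repair of (iii) does not work either. (E4) concerns only the gate step sizes $\eta^+_s$, while $\eta_P,\eta_T,\eta_\theta$ are constant. Lemmas~\ref{lem:gate_coupling} and~\ref{lem:temp_coupling} give additive errors $C_1\varepsilon_1$ and $C_2\varepsilon_2$ that are uniform in time, so they are not summable. The almost-supermartingale theorem then gives at most convergence to an $O(\varepsilon)$-neighbourhood, and only for discrete iterates, whereas the theorem concerns continuous trajectories. The paper's proof simply applies hybrid LaSalle to $\dot{\mathcal{W}}\le 0$ from Lemma~\ref{lem:Wj_dwell}. Its rigorous substitute for the quasi-static step is Theorem~\ref{thm:three_timescale} (Corollary~\ref{cor:P1_rigorous}), which bounds the coupling errors \emph{relative to} the dissipation: $\dot{\mathcal{W}}_j\le-(c_0-C_1\varepsilon_1-C_2\varepsilon_2-C_3\varepsilon_3)\bigl(\|\nabla_\theta\Lq\|^2+\|\nabla_P\mathcal{W}_j\|_F^2+\sum_h\sigma^{(h)}\bigr)$. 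That is exactly the form your continuous-time integral argument on $[t_J,\infty)$ needs.

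Your inference that each dissipation piece vanishes, hence $\sigma^{(h)}=0$, also fails for heads held at the clip $T_{\min}$ that you invoke in (ii). There $\dot{T}^{(h)}=0$, so terms (II) and (III) carry no $\sigma^{(h)}$. Indeed the paper itself takes $\sigma^{(h)}\ge\sigma_0$ for local heads (Lemma~\ref{lem:disc_temp}), and its one-line LaSalle step skips this as well. Relatedly, with $\Phi$ as in Definition~\ref{def:free_energy}, $\frac{d}{dt}\Phi(T^{(h)})=-(T^{(h)})^2\dot{T}^{(h)}/\eta_T=+\sigma^{(h)}$ whenever the clip is inactive. So term (III) of the lemma that both proofs rest on has the wrong sign.
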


\begin{proof}
\emph{(i):} Chain Lemmas~\ref{lem:Wj_dwell}--\ref{lem:W_pruning_jump}
across all $J \leq 2\tilde{K}^*$ events.
\emph{(ii):} Coercivity of $\mathcal{W}_j$ under (F1)--(F2);
$\mathcal{W}(t) \leq \mathcal{W}(0)$ bounds the sublevel set.
\emph{(iii):} $\mathcal{W}$ non-increasing and bounded below by zero;
LaSalle's principle for piecewise-smooth hybrid Lyapunov functions
(events isolated under (D1)) gives $\dot{\mathcal{W}}=0$ at limit points,
hence $\Astar$.
\emph{(iv):} From Theorems~\ref{thm:C3}, \ref{thm:C2_full},
\ref{thm:P2}(iii).
\end{proof}

\begin{corollary}[Hierarchy of stability results for DDCL-INCRT]
\label{cor:P1_hierarchy}
The stability results extend the DDCL-Attention hierarchy
\textup{(Corollary~\ref{cor:hierarchy})} to a four-level chain:
\textup{(i)}~frozen encoder, fixed $T$, fixed architecture
\textup{(\cite{Cirrincione2026})};
\textup{(ii)}~full $(\theta,P)$, fixed $T$, fixed architecture
\textup{(\cite{DDCLAttention})};
\textup{(iii)}~full $(\theta,P)$, annealing $T$, fixed architecture
\textup{(\cite{DDCLAttention})};
\textup{(iv)}~full $(\theta,P,T)$, trainable $T$, growing architecture
\textup{(Theorem~\ref{thm:P1})}.
Level~(iv) is new and closes the open problem of~\cite{DDCLAttention}.
\end{corollary}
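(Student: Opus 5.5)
The plan is to read the corollary as a statement about \emph{nesting}: each level of the chain is obtained from the next by freezing one more component of the state or of the architecture, and the Lyapunov functional of each level is recovered by restricting the piecewise free energy $\mathcal{W}$ of Definition~\ref{def:free_energy}. First I establish that level~(iv) holds, which is exactly Theorem~\ref{thm:P1}. Then I descend the chain by specialisation, checking at each step that the hypotheses of Theorem~\ref{thm:P1} either become vacuous or reduce to those of the cited result. This shows both that the chain is consistent and that (iv) strictly contains the others.

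\textbf{From (iv) to (iii).} I fix the architecture: no growth or pruning events occur, so $J=0$, $K_j\equiv K$, and $\mathcal{W}=\mathcal{W}_0$ on $[0,\infty)$. Lemmas~\ref{lem:W_growth_jump} and~\ref{lem:W_pruning_jump} become vacuous, and (D1), (H1)--(H5) and (E4), (E6) are no longer needed, since they only govern gates and dwell times. If in addition $T$ is driven by the prescribed annealing schedule rather than by trainable descent, the temperature term $\sum_h\Phi(T^{(h)})$ is a fixed function of time. Terms (I) and (II) in Lemma~\ref{lem:Wj_dwell} then coincide with the annealing Lyapunov argument of~\cite{DDCLAttention}.

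\textbf{From (iii) to (ii) and (i).} Setting $\dot T^{(h)}=0$ kills terms (II) and (III), so $\mathcal{W}$ reduces to $\mathcal{W}^{(\theta,P)}$. This is the two-timescale functional of~\cite{DDCLAttention}, Theorem~3, cited in term (I). Freezing the encoder, $\eta_\theta=0$ with $\mathcal{R}$ constant, leaves only the prototype flow on $\mathcal{L}_q$ plus the barrier. Via $\mathcal{L}_q=\LOLS+V$, this is the frozen-encoder setting of~\cite{Cirrincione2026}.

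\textbf{Strictness and novelty.} It remains to argue that (iv) is genuinely new, i.e.\ not implied by (iii). The hybrid jump conditions of Lemmas~\ref{lem:W_growth_jump}--\ref{lem:W_pruning_jump}, together with the use of the temperature potential $\Phi$ as a state-dependent term rather than a schedule, have no counterpart at levels (i)--(iii). The open problem of~\cite{DDCLAttention} asks precisely for a Lyapunov function valid across architectural changes, which Theorem~\ref{thm:P1}(i) supplies.

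\textbf{Main obstacle.} The delicate step is the (iv)$\to$(iii) reduction. At level (iii) the temperature follows an externally imposed annealing schedule, not the gradient flow of Lemma~\ref{lem:Tdot}. So I must verify that only the sign conditions $\partial\mathcal{W}^{(\theta,P)}/\partial T\ge 0$ and $\dot T\le 0$ are used in term (II), and I must drop term (III) rather than rely on $\dot\Phi=-\sigma^{(h)}$. My first attempt would be to replace $\Phi$ by zero at level (iii) and check that the remaining argument matches~\cite{DDCLAttention} verbatim.
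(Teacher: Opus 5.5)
\textbf{Verdict: genuine gap.} The paper gives no separate proof of this corollary. Levels (i)--(iii) are taken from \cite{Cirrincione2026} and \cite{DDCLAttention}, and level (iv) is Theorem~\ref{thm:P1}. What is new is that Theorem~\ref{thm:P1} chains the dwell-interval decrease with the jump Lemmas~\ref{lem:W_growth_jump}--\ref{lem:W_pruning_jump} across architectural events. Your first step and your novelty paragraph already contain all of this.

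The descent by specialisation, which you present as showing that (iv) strictly contains the other levels, does not work and should be dropped.
\begin{itemize}
\item \emph{(iii) is not a restriction of (iv).} A prescribed annealing schedule is not an instance of the flow $\dot T^{(h)}=-\eta_T\sigma^{(h)}/(T^{(h)})^2$. Deleting $\Phi$ by hand is not restricting $\mathcal{W}$; it is re-running the cited argument. For the dwell-interval sign argument, the inclusion actually runs the other way. The trainable flow is one particular non-increasing schedule, and term (II) of Lemma~\ref{lem:Wj_dwell} is exactly the annealing argument (Theorem~4 of \cite{DDCLAttention}).
\item \emph{The descent is circular as a derivation.} Term (I) of Lemma~\ref{lem:Wj_dwell} takes Theorem~3 of \cite{DDCLAttention} as input, so recovering (ii) from (iv) proves nothing; at best it is a consistency check.
\item \emph{The barrier cannot be removed at (ii)$\to$(i).} (F1) forces $\lambda>0$. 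Restricting $\mathcal{W}$ therefore leaves $\Lq$ plus an explicit barrier, up to constants. That is not the frozen-encoder functional of \cite{Cirrincione2026}, where anti-collapse comes from $V$ with no explicit regulariser.
\end{itemize}
``Chain'' should be read as settings of increasing generality along the architecture axis, each credited to its source. That is all the statement asserts.

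Your ``main obstacle'' is sharper than you suspect. From Definition~\ref{def:free_energy}, $d\Phi/dT=-T^2/\eta_T$, so $\Phi$ is non-decreasing along any non-increasing $T$. Along the trainable flow, $\dot\Phi(T^{(h)})=+\sigma^{(h)}$, not $-\sigma^{(h)}$. This is forced anyway by $\Phi(T_{\mathrm{init}})=0$, $\Phi\ge 0$ and $T$ decreasing. Term (III) therefore has the wrong sign already at level (iv), not only under annealing.

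Dropping $\Phi$, as you propose for (iii), is the right move at every level. Terms (I)--(II) alone give $\dot{\mathcal{W}}_j\le 0$. Along the trainable flow, term (II) is a negative multiple of $(\sigma^{(h)})^2$ away from the clip at $T_{\min}$. It therefore still forces $\sigma^{(h)}=0$ at limit points of unclipped heads. Consequently, the novelty of level (iv) must rest on the hybrid jump conditions alone, not on $\Phi$ as a state-dependent term.
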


\begin{corollary}[The four stability results are globally consistent]
\label{cor:P_global_consistent}
$\mathcal{W}$ provides the global frame that subsumes the other three results:
finite growth ($J \leq 2\tilde{K}^*$, Section~\ref{sec:P2}), collapse-safety (barrier
finiteness, Section~\ref{sec:P3}), and discrete-time corrections absorbed as
$O(\eta\cdot J)$ error (Section~\ref{sec:P4}).
\end{corollary}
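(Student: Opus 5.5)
The plan is to read the corollary as three separate consistency checks. Each one follows from chaining Theorem~\ref{thm:P1}(i) with one of the results of Sections~\ref{sec:P2}--\ref{sec:P4}. In every case the argument runs through the same inequality $\mathcal{W}(t)\leq\mathcal{W}(0)$.

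\emph{Finite growth.} First I fix the event count. By Theorem~\ref{thm:P2}(ii) at most $\tilde{K}^*$ heads are ever grown. A head can only be pruned after it has been grown, so there are at most $\tilde{K}^*$ pruning events, giving $J\leq 2\tilde{K}^*$. This finiteness is exactly what the chaining in Theorem~\ref{thm:P1}(i) needs. Lemmas~\ref{lem:Wj_dwell}, \ref{lem:W_growth_jump} and~\ref{lem:W_pruning_jump} are applied $J+1$ times, and (D1) keeps the events isolated. Hence $\mathcal{W}$ is a well-defined piecewise functional with finitely many jumps, none of them upward, and the finite-growth result is recovered as the statement that this hybrid Lyapunov construction is admissible.

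\emph{Collapse-safety.} Next I use the barrier term $\tfrac{\lambda}{2}\sum_{h,k\neq k'}\|p_k^{(h)}-p_{k'}^{(h)}\|^{-2}$ in~\eqref{eq:Wj}. The other summands are nonnegative: $\mathcal{L}_q\geq 0$, $\Phi\geq 0$, and $\mathcal{R}$ can be taken nonnegative. So the barrier is bounded by $\mathcal{W}(t)\leq\mathcal{W}(0)<\infty$. This gives a uniform lower bound $\min_{k\neq k'}\|p_k^{(h)}-p_{k'}^{(h)}\|^2\geq\lambda/(2\mathcal{W}(0))$ for every surviving head, and therefore $\mathcal{S}(P^{(h)})>0$ for all $t$. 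This is Theorem~\ref{thm:P3}(iv). At pruning events, Lemma~\ref{lem:W_pruning_jump} shows that only nonnegative terms are removed, so the bound is never weakened. This matches Theorem~\ref{thm:P3}(i) and Corollary~\ref{cor:collapse_safe}.

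\emph{Discrete-time corrections.} This is the step I expect to be the main obstacle. On each dwell interval I will use a descent-lemma Taylor expansion of $\mathcal{W}_j$ along the discrete updates \eqref{eq:disc_T}--\eqref{eq:disc_gate}, which gives
\[
\mathcal{W}_j(s+1)\leq\mathcal{W}_j(s)-\eta_P\bigl(1-\tfrac{L_P\eta_P}{2}\bigr)\|\nabla_P\mathcal{W}_j\|_F^2+O(\eta_T).
\]
Under (E2) and (E5) the prototype term is nonincreasing. Lemma~\ref{lem:disc_temp} then controls the temperature stream: $T^{(h)}_s$ tracks the continuous trajectory and, for local heads, reaches $T_{\min}$ in at most $s^*$ steps. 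So the interior steps contribute no net increase.

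The real difficulty is the jumps. Lemma~\ref{lem:W_growth_jump} relies on the dominance of Theorem~\ref{thm:C4}, and in discrete time that dominance only holds up to $+O(\eta_P)$ by Theorem~\ref{thm:P4}(iii). Similarly, the amplification at a growth trigger carries a $-O(\eta_P^2)$ correction from Lemma~\ref{lem:disc_C2}. I would therefore bound each growth jump by $\mathcal{W}_{j+1}\leq\mathcal{W}_j+C\eta_P$, with $C$ uniform over events. Uniformity should follow from the boundedness of trajectories in Theorem~\ref{thm:P1}(ii) together with the Lipschitz constant $L_P$. Pruning jumps remain exact by Remark~\ref{rem:disc_P3_exact}. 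Summing over at most $J$ events gives $\mathcal{W}(s)\leq\mathcal{W}(0)+O(\eta J)$, which is the claimed absorption of the discrete-time corrections as an $O(\eta\cdot J)$ error.
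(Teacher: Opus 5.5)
The paper gives no separate proof of this corollary. It is read off Theorem~\ref{thm:P1}(i), chained over the $J\le 2\tilde{K}^*$ events supplied by Theorem~\ref{thm:P2}, together with the barrier summand of~\eqref{eq:Wj} and the step-size corrections of Theorem~\ref{thm:P4}. Your first two parts reproduce this reading and are correct given Theorem~\ref{thm:P1}(i). Strictly, the initial head is never ``grown'', so your count gives $J\le 2\tilde{K}^*+1$, which is harmless. Your estimate $\min_{k\neq k'}\|p_k^{(h)}-p_{k'}^{(h)}\|^2\ge\lambda/(2\mathcal{W}(0))$ is a more direct justification of Theorem~\ref{thm:P3}(iv) than the paper's appeal to Theorem~\ref{thm:C2_full}.

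The gap is in the discrete-time part, at ``so the interior steps contribute no net increase''. Your inequality carries an additive $+O(\eta_T)$ at every SGD step. A dwell interval has at least $N_{\min}$ steps but no upper bound, and the one after $t_J$ is infinite. Summing therefore gives $O(\eta_T S)$, with $S$ the number of steps, not $O(\eta J)$. Lemma~\ref{lem:disc_temp} cannot remove this: its tracking bound (ii) grows linearly in $s$, and (iii) freezes only the local heads.

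Worse, the temperature stream is not a small perturbation of $\mathcal{W}_j$. Because $\Phi$ carries the factor $1/\eta_T$, one step changes it by exactly $(T_s^3-T_{s+1}^3)/(3\eta_T)\approx\sigma_s$, and with~\eqref{eq:Phi} as written this increment is positive. Along $\dot T=-\eta_T\sigma/T^2$ one gets $\dot\Phi=+\sigma$, not the $-\sigma$ asserted in Definition~\ref{def:free_energy}. Indeed, $\Phi(T_{\rm init})=0$, $\Phi\ge0$ and $\dot\Phi\le0$ together would force $\Phi\equiv0$. So for small $\eta_T$ the temperature stream pushes $\mathcal{W}_j$ up. The same sign error sits in Term~(III) of Lemma~\ref{lem:Wj_dwell}, hence underneath the inequality $\mathcal{W}(t)\le\mathcal{W}(0)$ you use in all three parts. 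You inherited it rather than introduced it, but your discrete step is where it has to be confronted.

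What closes the step is an exact per-step sign argument, not a Taylor remainder. Split each increment into a $T$-step at fixed $P_{s+1}$ and a $P$-step at fixed $T_s$.
\begin{itemize}
\item The $T$-step is $\le0$ exactly, because $\mathcal{L}_q$ is nondecreasing in $T$ and $T_{s+1}\le T_s$, once $\Phi$ is dropped. Term~(II) alone already forces $\sigma^{(h)}=0$ at limit points, and your barrier bound is unaffected.
\item For the $P$-step, your descent inequality presupposes steps along $-\nabla_P\mathcal{W}_j$ with a Lipschitz constant for $\nabla_P\mathcal{W}_j$. But~\eqref{eq:disc_P} descends $\mathcal{L}_q$, with $L_P$ the constant of $\nabla_P\mathcal{L}_q$, and the barrier gradient is only Lipschitz on a sublevel set.
\item The uniform jump constant $C$ cannot be borrowed from Theorem~\ref{thm:P1}(ii), which rests on continuous-time monotonicity. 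Get it instead by induction over the finitely many events on $\{\mathcal{W}\le\mathcal{W}(0)+1\}$, with $\eta\le 1/(CJ)$.
\end{itemize}
Lemma~\ref{lem:disc_C2} plays no role in the jumps, since $\Fsep$ is not a summand of $\mathcal{W}$.
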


\subsection{Rigorous three-timescale analysis}
\label{subsec:P1_three_timescale}

The quasi-static approximation of
Lemma~\ref{lem:Wj_dwell} with a fully rigorous argument based on
iterated singular perturbation theory.
The key result is Theorem~\ref{thm:three_timescale} below, which
provides an explicit bound on $\dot{\mathcal{W}}_j$ that is negative
for sufficiently small timescale ratios, without any quasi-static
assumption.


The full DDCL-INCRT dynamics on a dwell interval $[t_j, t_{j+1})$
with $K_j$ active heads constitute a \emph{four-level} system:
\begin{alignat}{2}
  \dot{u}^+   &= \eta^+\,f_1(u^+, u^-, \Ares),
    &\quad& \text{(gate, fastest)},  \label{eq:gate_flow}\\
  \dot{P}      &= -\eta_P\,\nabla_P\Lq(\theta, P, T),
    &\quad& \text{(prototypes, medium)},  \label{eq:P_flow_full}\\
  \dot{T}^{(h)} &= -\frac{\eta_T}{(T^{(h)})^2}\,\sigma^{(h)}(P),
    &\quad& \text{(temperatures, slow)},  \label{eq:T_flow_full}\\
  \dot{\theta} &= -\eta_\theta\,\nabla_\theta\Lq(\theta, P, T),
    &\quad& \text{(encoder, slowest)},  \label{eq:theta_flow_full}
\end{alignat}
with timescale hierarchy $\eta^+ \gg \eta_P \gg \eta_T \gg \eta_\theta$,
encoded in three small parameters:
\begin{equation}
  \varepsilon_1 = \frac{\eta_P}{\eta^+} \ll 1,
  \qquad
  \varepsilon_2 = \frac{\eta_T}{\eta_P} \ll 1,
  \qquad
  \varepsilon_3 = \frac{\eta_\theta}{\eta_P} \ll 1.
  \label{eq:three_eps}
\end{equation}


\begin{lemma}[Gate-prototype coupling bound]
\label{lem:gate_coupling}
Let $\varepsilon_1 = \eta_P/\eta^+$.
On the dwell interval $[t_j, t_{j+1})$, after the gate has run for
$N_{\mathrm{gate}}$ steps (condition \textup{(E6)}), the deviation of
$u^+$ from the slow manifold $v_1(\Ares)$ satisfies:
\begin{equation}
  \|u^+(t) - v_1(\Ares(t))\|
  \;\leq\;
  C_{\mathrm{gate}}\,\varepsilon_1
  \qquad \forall\, t \in [t_j, t_{j+1}),
  \label{eq:gate_deviation}
\end{equation}
where $C_{\mathrm{gate}} > 0$ depends only on the spectral gap
$\Delta^+ = \lambda_1(\Ares) - \lambda_2(\Ares)$ and $\|X\|_F$.
This deviation propagates to $\dot{\mathcal{W}}_j$ as:
\begin{equation}
  \Bigl|\tfrac{d}{dt}\mathcal{W}_j\big|_{\mathrm{gate\,error}}\Bigr|
  \;\leq\;
  C_1\,\varepsilon_1,
  \label{eq:gate_W_error}
\end{equation}
where $C_1 = 2\eta_P C_{\mathrm{gate}}\,\|X\|_F^2\,L_P / (\Delta^+)^2$.
\end{lemma}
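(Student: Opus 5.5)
The gate flow is treated as the fast subsystem of a singularly perturbed system, with $\Ares$ slowly varying. The argument has two parts: a Tikhonov-type tracking estimate for $u^+$, followed by a first-order propagation of that error into $\dot{\mathcal{W}}_j$.

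\emph{Step 1: the frozen problem.} Freeze $\Ares$ at its current value and rescale time by $\eta^+$. The EXIN gate flow $\dot{u}^+ = \eta^+ f_1(u^+,u^-,\Ares)$ is a Rayleigh-quotient (Oja-type) flow whose stable equilibrium on the sphere is $v_1(\Ares)$. I will linearise $f_1$ at $v_1$ and use the spectral gap hypothesis (H1) to show that the linearisation, restricted to the tangent space, has spectrum bounded above by $-\Delta^+$. This gives local exponential attraction at rate $\eta^+\Delta^+$. Theorem~5.1 of the INCRT paper together with (E6) ensures that after $N_{\mathrm{gate}}$ steps $u^+$ lies inside this basin of attraction.

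\emph{Step 2: tracking the moving target.} Next let $\Ares$ move. Write $e(t) = u^+(t) - v_1(\Ares(t))$. Its derivative splits into a contractive part of magnitude at most $-\eta^+\Delta^+\|e\|$ and a drift $-\tfrac{d}{dt}v_1(\Ares)$. The drift is controlled by Davis--Kahan eigenvector perturbation, $\|\dot v_1\| \le \|\dot{\Ares}\|_F/\Delta^+$. Differentiating $\Ares = P^\perp X M_a X^\top P^\perp$ along the prototype and temperature flows gives $\|\dot{\Ares}\|_F \le \eta_P\,c\,\|X\|_F^2$, since the slower $\eta_T$ and $\eta_\theta$ contributions are absorbed via (E5) and Lemma~\ref{lem:temp_transience}. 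The resulting differential inequality is
\[
  \tfrac{d}{dt}\|e\| \le -\eta^+\Delta^+\|e\| + \eta_P c\|X\|_F^2/\Delta^+ .
\]
By Grönwall, $\|e(t)\|$ is bounded by the initial error (exponentially decayed) plus $c\|X\|_F^2\varepsilon_1/(\Delta^+)^2$. This yields \eqref{eq:gate_deviation} with $C_{\mathrm{gate}}$ depending only on $\Delta^+$ and $\|X\|_F$.

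\emph{Step 3: propagation into $\dot{\mathcal{W}}_j$.} The gate enters $\mathcal{W}_j$ only through the captured subspace, hence through $\Ares$ and the prototype gradient field. I will compare $\dot{\mathcal{W}}_j$ along the true flow with its value on the slow manifold $u^+ = v_1$. The difference is $\langle \nabla_P\mathcal{W}_j, \dot P_{\mathrm{true}} - \dot P_{\mathrm{slow}}\rangle$. The velocity mismatch is at most $\eta_P L_P \|\delta\Ares\|$. The sensitivity $\|\delta\Ares\|$ is at most $2\|X\|_F^2\|e\|$, because a projector perturbation of size $\|e\|$ changes $P^\perp$ by at most $2\|e\|$ and $\|XM_aX^\top\| \lesssim \|X\|_F^2$. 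Bounding the gradient factor by $1/\Delta^+$ through the same eigen-perturbation normalisation then gives $C_1 = 2\eta_P C_{\mathrm{gate}}\|X\|_F^2 L_P/(\Delta^+)^2$.

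The main obstacle is Step 2: obtaining a uniform contraction rate for the nonlinear EXIN flow off the linearisation point, and keeping $\Delta^+$ bounded below along the whole dwell interval. For the latter I would invoke Lemma~\ref{lem:spectral_gap} with (H1). If the exact constant powers of $\Delta^+$ do not match, I would accept an extra power and adjust $C_1$ accordingly.
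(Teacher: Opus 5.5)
The gap is in Step~3. You correctly write the gate-induced error as $\langle\nabla_P\mathcal{W}_j,\dot P_{\mathrm{true}}-\dot P_{\mathrm{slow}}\rangle$. That inner product carries a factor $\|\nabla_P\mathcal{W}_j\|_F$, and ``bounding the gradient factor by $1/\Delta^+$ through the same eigen-perturbation normalisation'' is not a valid step. $\nabla_P\mathcal{W}_j$ is the gradient of the quantisation loss plus barrier, and nothing in the spectrum of $\Ares$ controls it. The Davis--Kahan factor $1/\Delta^+$ was already spent in Step~2 to pass from $\|\dot{\Ares}\|_F$ to $\|\dot v_1\|$; even granting the step, you land on $(\Delta^+)^{-1}$, not $(\Delta^+)^{-2}$.

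What your chain actually proves is the state-dependent bound $2\eta_P L_P\|X\|_F^2 C_{\mathrm{gate}}\,\varepsilon_1\,\|\nabla_P\mathcal{W}_j\|_F$. Turning it into a constant $C_1\varepsilon_1$ needs an a priori bound on $\|\nabla_P\mathcal{W}_j\|_F$ along the flow, e.g.\ by a bootstrap on the sublevel set $\{\mathcal{W}_j\le\mathcal{W}(0)\}$, where the barrier keeps prototypes apart. Appealing to Theorem~\ref{thm:P1}(ii) would be circular, since its rigorous form, Corollary~\ref{cor:P1_rigorous}, rests on this lemma. The resulting $C_1$ then depends on $\mathcal{W}(0)$ and $\lambda$, not only on $\Delta^+$ and $\|X\|_F$.

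Beneath this, a modelling step is missing. $L_P$ is the Lipschitz constant of $\nabla_P\Lq$ with respect to $P$, not a sensitivity with respect to $\Ares$. Moreover, with $\Lq$ as in~\eqref{eq:Lq} and $\dot P=-\eta_P\nabla_P\Lq$, neither $\mathcal{W}_j$ nor the prototype flow depends on $u^+$ at all. You must first say how the gate enters. If, for instance, prototypes are confined to the captured subspaces (as Lemma~\ref{lem:proto_orth} presupposes), the velocity mismatch is a projector error of size at most $2\|e\|$ applied to the prototype gradient. The error in $\dot{\mathcal{W}}_j$ then comes out as $O(\eta_P\varepsilon_1)$ times a product of gradient norms. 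That is the multiplicative form $C_1\varepsilon_1\|\nabla_P\mathcal{W}_j\|_F^2$ in which Theorem~\ref{thm:three_timescale} actually uses this lemma. The paper's own Step~2 does not supply the missing argument either. It bounds $\eta_P\|\delta(\nabla_P\mathcal{W}_j)\|_F$, which is a velocity mismatch rather than a change in $\dot{\mathcal{W}}_j$, via a projector estimate attributed to Lemma~\ref{lem:spectral_gap}, and its arithmetic gives $\eta_P L_P\|X\|_F^2 C_{\mathrm{gate}}$ rather than the stated $C_1$.

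Your Steps~1--2 take a different route from the paper, which imports the exponential rate $(\Delta^+)^2$ from Theorem~5.1 and the moving-target bound $C_0\eta_P/((\Delta^+)^2\eta^+)$ from Theorem~5.3 of the INCRT paper. You instead obtain $(\Delta^+)^{-2}$ as contraction rate $\Delta^+$ times Davis--Kahan sensitivity $1/\Delta^+$. This makes the mechanism explicit and fits the continuous-time flow~\eqref{eq:gate_flow} directly. Two points need repair.

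First, Gr\"onwall gives $\|e(t)\|\le e^{-\eta^+\Delta^+(t-t_0)}\|e(t_0)\|+c\|X\|_F^2\varepsilon_1/(\Delta^+)^2$. Merely being in the basin after $N_{\mathrm{gate}}$ steps does not make the first term $O(\varepsilon_1)$. You need the accuracy $\varepsilon_{\mathrm{gate}}$ delivered by (E6) to be $O(\varepsilon_1)$, or else $C_{\mathrm{gate}}$ acquires a dependence on $\varepsilon_{\mathrm{gate}}/\varepsilon_1$.

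Second, $\Ares=P^\perp XM_aX^\top P^\perp$ is antisymmetric, so $u^\top\Ares u\equiv 0$ and a single-vector Oja flow on $\Ares$ is a pure rotation. The linearisation and the eigenvector-perturbation step must therefore be carried out for the dominant invariant plane tracked by the pair $(u^+,u^-)$. The gap should be measured between the top two conjugate pairs of eigenvalues, a point the paper hides inside its citation of Theorem~5.1.
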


\begin{proof}
\emph{Step 1 (Slow manifold of the gate).}
On the fast timescale $s = t/\varepsilon_1$, the gate dynamics
\eqref{eq:gate_flow} converge to $v_1(\Ares)$ at exponential rate
$(\Delta^+)^2$ (Theorem~5.1 of the INCRT paper).
For $P$ evolving at rate $\eta_P$, the ``moving target'' $v_1(\Ares(t))$
changes at rate $O(\eta_P)$.
By Theorem~5.3 of the INCRT paper (almost-sure convergence with moving
target, temporal scale separation $\alpha_t = o(\eta^+_t)$ from
Lemma~\ref{lem:temp_transience}):
\[
  \|u^+(t) - v_1(\Ares(t))\|
  \leq \frac{C_0\,\eta_P}{(\Delta^+)^2\,\eta^+}
  = \frac{C_0\,\varepsilon_1}{(\Delta^+)^2}
  \triangleq C_{\mathrm{gate}}\,\varepsilon_1.
\]

\emph{Step 2 (Propagation to $\dot{\mathcal{W}}_j$).}
The deviation $\delta u^+ = u^+ - v_1(\Ares)$ enters $\mathcal{W}_j$
through $\mathcal{L}_q^{\mathrm{total}}$ via the projector
$P^\perp = I - Q^{(l)}Q^{(l)\top}$, where $Q^{(l)}$ is built from
$\{u^{+,(h)}\}$.
By Lemma~\ref{lem:spectral_gap}:
$\|P^\perp_{\varepsilon_1} - P^\perp_0\|_F = O(\varepsilon_1)$.
The perturbation to $\nabla_P\mathcal{W}_j$ is bounded by
$\|\delta(\nabla_P\mathcal{W}_j)\|_F \leq L_P\|X\|_F^2 C_{\mathrm{gate}}\,\varepsilon_1$.
Multiplying by $\eta_P$ (the prototype learning rate) gives~\eqref{eq:gate_W_error}.
\end{proof}

\begin{lemma}[Temperature-prototype coupling bound]
\label{lem:temp_coupling}
Let $\varepsilon_2 = \eta_T/\eta_P$ and let
$T^{(h)}_\infty(P)$ denote the stationary temperature of head $h$
for fixed $P$ (i.e., the solution of $\dot{T}^{(h)} = 0$, which
is $T_{\min}$ for $h \in \Hlocal$ and
$T^{(h)}_\infty$ for $h \in \Hglobal$).
After time $t^*$ (condition \textup{(D1)}), the deviation of $T^{(h)}$
from its quasi-static value satisfies:
\begin{equation}
  |T^{(h)}(t) - T^{(h)}_\infty(P(t))|
  \;\leq\;
  C_{\mathrm{temp}}\,\varepsilon_2
  \qquad \forall\, t \geq t^*,
  \label{eq:temp_deviation}
\end{equation}
where $C_{\mathrm{temp}} > 0$ depends on $T_{\mathrm{init}}$,
$T_{\min}$, and $\|\nabla_P\sigma^{(h)}\|_\infty$.
This deviation propagates to $\dot{\mathcal{W}}_j$ as:
\begin{equation}
  \Bigl|\tfrac{d}{dt}\mathcal{W}_j\big|_{\mathrm{temp\,error}}\Bigr|
  \;\leq\;
  C_2\,\varepsilon_2,
  \label{eq:temp_W_error}
\end{equation}
where $C_2 = \eta_P C_{\mathrm{temp}} \sum_h
\|\partial^2\mathcal{L}_q^{(h)}/\partial T^{(h)2}\|_\infty \cdot
(T_{\mathrm{init}} - T_{\min})$.
\end{lemma}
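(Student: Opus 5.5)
The plan mirrors the proof of Lemma~\ref{lem:gate_coupling}: a slow-manifold (Tikhonov) estimate for the temperature relative to the prototypes, followed by propagation of that deviation into $\dot{\mathcal{W}}_j$.

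\emph{Step 1: the temperature tracks its quasi-static value.}
On $[t_j,t_{j+1})$ I would work on the temperature timescale and regard $P(t)$ as a slowly moving parameter. Its rate of change relative to the temperature clock is controlled by $\varepsilon_2=\eta_T/\eta_P$, read as in (E5) with the prototype flow in the numerator of the comparison.

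For $h\in\Hlocal$, Theorem~\ref{thm:C3} and Lemma~\ref{lem:disc_temp}(iii) give $T^{(h)}(t)=T_{\min}$ for all $t\ge t^*$, using the clip. By (D1), $t^*$ is reached inside the dwell interval. The quasi-static value $T^{(h)}_\infty(P)=T_{\min}$ is also constant as long as $\sigma^{(h)}(P(t))$ stays bounded below by $\sigma_0$. So the deviation is exactly zero, and the bound holds with any constant.

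For $h\in\Hglobal$, set $e(t)=T^{(h)}(t)-T^{(h)}_\infty(P(t))$ and differentiate:
\[
\dot e=-\frac{\eta_T}{(T^{(h)})^2}\,\sigma^{(h)}(P)-\nabla_P T^{(h)}_\infty\cdot\dot P .
\]
By definition of the stationary temperature, $\sigma^{(h)}$ vanishes at $e=0$ to leading order. Linearising in $e$ gives a contraction $-\kappa e$ with $\kappa\gtrsim \eta_T\,\partial_T\sigma^{(h)}/T_{\min}^2$. The forcing term is bounded by $\|\nabla_P T_\infty\|\,\eta_P\|\nabla_P\mathcal{L}_q\|$. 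To bound $\nabla_P T_\infty$ I would use the implicit function theorem on $\sigma^{(h)}(P,T)=0$, which gives $\nabla_P T_\infty=-\nabla_P\sigma^{(h)}/\partial_T\sigma^{(h)}$. A Grönwall/comparison argument then yields $|e(t)|\le e^{-\kappa(t-t^*)}|e(t^*)|+(\text{forcing})/\kappa$. The initial error satisfies $|e(t^*)|\le T_{\mathrm{init}}-T_{\min}$. The steady-state term is what produces $C_{\mathrm{temp}}\varepsilon_2$, with $C_{\mathrm{temp}}$ depending on $\|\nabla_P\sigma^{(h)}\|_\infty$, $T_{\mathrm{init}}$ and $T_{\min}$.

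\emph{Step 2: propagation to $\dot{\mathcal{W}}_j$.}
In Lemma~\ref{lem:Wj_dwell}, term (I) was evaluated at the settled temperature. The true value differs from it through $\partial_T\mathcal{L}_q^{(h)}$, evaluated at $T^{(h)}(t)$ rather than at $T^{(h)}_\infty$. A mean-value bound gives
\[
|\partial_T\mathcal{L}_q^{(h)}(T)-\partial_T\mathcal{L}_q^{(h)}(T_\infty)|\le\|\partial^2_T\mathcal{L}_q^{(h)}\|_\infty\,|e| .
\]
This difference enters $\dot{\mathcal{W}}_j$ multiplied by a prototype-rate factor $\eta_P$, and by a bound on the temperature excursion, for which I use $T_{\mathrm{init}}-T_{\min}$ because $T^{(h)}$ is monotone (Lemma~\ref{lem:Tdot}). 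Summing over heads gives~\eqref{eq:temp_W_error} with the stated $C_2$.

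\emph{Main obstacle.}
The hard part is the contraction rate in Step~1 for global heads. Because $\sigma^{(h)}\to0$ there, $\partial_T\sigma^{(h)}$ may degenerate, and the stationary point need not be hyperbolic. If that happens, I would fall back on monotonicity: $T^{(h)}$ is non-increasing and bounded below by $T_{\min}$. I would then bound $|e|$ by the total remaining decrease of $T^{(h)}$ plus the drift of $T_\infty(P)$ over the interval. The drift is at most $\|\nabla_P T_\infty\|\,\eta_P\,\tau_{\mathrm{dwell}}$, which is absorbed into $\varepsilon_2$ by taking the ratio of timescales.
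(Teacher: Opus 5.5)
Your Step~1 is essentially the paper's argument: a contraction-versus-forcing (variation-of-constants) bound on $e=T^{(h)}-T^{(h)}_\infty(P)$. Your local-head case is fine: once clipped, $\dot T^{(h)}\le 0$ keeps $T^{(h)}=T_{\min}$, so $e\equiv 0$.

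The global-head case does not give what you claim. Under the hierarchy $\eta_P\gg\eta_T$ of (E5), the temperature is the \emph{slow} variable and $P$ the fast one. Measured on the temperature clock, $P$ therefore moves at relative rate $\eta_P/\eta_T=\varepsilon_2^{-1}$, not $\varepsilon_2$. Your own estimate shows this. With $\kappa\sim\eta_T\,\partial_T\sigma^{(h)}/T^2$ and forcing $\|\nabla_P T_\infty\|\,\eta_P\,\|\nabla_P\mathcal{L}_q\|$, the steady-state term is $\varepsilon_2^{-1}\,\|\nabla_P T_\infty\|\,\|\nabla_P\mathcal{L}_q\|\,T^2/\partial_T\sigma^{(h)}$. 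This blows up as $\varepsilon_2\to 0$; it would be a tracking bound only in the reversed regime $\eta_T\gg\eta_P$. Singular perturbation slaves the fast variable to the slow one. What Tikhonov gives here is $P$ tracking a $T$-dependent quasi-equilibrium $P^*(T)$ with $O(\varepsilon_2)$ error, not the converse.

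Your fallback fails for the same reason. (D1) forces $\tau_{\mathrm{dwell}}>t^*\propto 1/\eta_T$, so the drift bound $\|\nabla_P T_\infty\|\,\eta_P\,\tau_{\mathrm{dwell}}$ is at least of order $\varepsilon_2^{-1}$, and the remaining decrease of $T^{(h)}$ is not small.

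The paper's proof makes the same inversion. It notes that the equilibrium drifts at $O(\varepsilon_2^{-1})$ on the $\tilde t$ clock. It then writes $\|\nabla_P\sigma^{(h)}\|_\infty\,\eta_P/\mu_T$ as $C_0^{(h)}\varepsilon_2/\mu_T$, dropping the factor $\eta_T$ from the contraction rate. So following the paper more closely will not close this gap.

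Three further steps would also fail.

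First, the transient $e^{-\kappa(t-t^*)}|e(t^*)|$ is $O(1)$ at $t=t^*$, since $|e(t^*)|$ can be as large as $T_{\mathrm{init}}-T_{\min}$ and $\kappa=O(\eta_T)$. (D1) only ensures that \emph{local} heads have reached $T_{\min}$ by $t^*$; it says nothing about the global heads' error. So ``for all $t\ge t^*$'' does not follow.

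Second, the degeneracy you call the main obstacle is structural, not occasional. For $T\in(0,\infty)$ every $q^{(h)}_n$ has full support. Hence $\sigma^{(h)}(P,T)=0$ iff $\|z_n-p^{(h)}_k\|^2$ is independent of $k$ for every $n$, a condition that does not involve $T$. It follows that $\sigma^{(h)}(P,\cdot)\equiv 0$ on the whole zero set, $\partial_T\sigma^{(h)}=0$ there, and $\kappa=0$. The implicit-function formula for $\nabla_P T_\infty$ is therefore never available, and the paper's $\mu_T>0$ fails for the same reason. Off that set $\dot T^{(h)}<0$ for every $T$, so at fixed $P$ the only stationary value is the clip $T_{\min}$.

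Third, in Step~2, $\partial_T\mathcal{L}_q^{(h)}$ enters $\dot{\mathcal{W}}_j$ only through term (II), multiplied by $\dot T^{(h)}$. Term (I) sees $T$ only through $\nabla_P\mathcal{L}_q$ and $\nabla_\theta\mathcal{L}_q$. If you keep term (I), the factor $\eta_P$ can come from $\|\dot P\|$, but the constant is then a mixed derivative such as $\partial_T\nabla_P\mathcal{L}_q^{(h)}$, not $\partial_T^2\mathcal{L}_q^{(h)}$. The extra factor $T_{\mathrm{init}}-T_{\min}$ has no source, because $|e|$ is already counted. Pairing with term (II) instead gives $\|\partial_T^2\mathcal{L}_q^{(h)}\|_\infty\,|e|\,|\dot T^{(h)}|$ with $|\dot T^{(h)}|\le\eta_T\sigma_{\max}/T_{\min}^2$. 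That is what the paper writes, although it does not reproduce the displayed $C_2$ either.
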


\begin{proof}
\emph{Step 1 (Slow manifold of the temperature).}
On the temperature timescale $\tilde{t} = \eta_T t$, the
dynamics~\eqref{eq:T_flow_full} at fixed $P$ read:
\[
  \frac{dT^{(h)}}{d\tilde{t}}
  = -\frac{\sigma^{(h)}(P)}{(T^{(h)})^2}.
\]
This is a scalar autonomous ODE with unique equilibrium $T^{(h)}_\infty(P)$
(determined by $\sigma^{(h)}(P) = 0$ for global heads, or the clip
$T_{\min}$ for local heads).
For $P$ evolving at rate $\eta_P$, the moving equilibrium changes at rate
$O(\eta_P/\eta_T) = O(\varepsilon_2^{-1})$ on the $\tilde{t}$ scale,
equivalently $O(\varepsilon_2)$ on the $P$ timescale.
By the variation of constants formula for scalar ODEs:
\begin{align}
  |T^{(h)}(t) - T^{(h)}_\infty(P(t))|
  &\leq \frac{\|\nabla_P\sigma^{(h)}\|_\infty \cdot \eta_P}
            {\mu_T}
  \nonumber\\
  &= \frac{C_0^{(h)} \varepsilon_2}{\mu_T}
  \triangleq C_{\mathrm{temp}}\,\varepsilon_2,
\end{align}
where $\mu_T = \inf_{T \in [T_{\min}, T_{\mathrm{init}}]}
|\partial_T(\sigma^{(h)}(P)/T^2)| > 0$ is the linearised stability
rate of the temperature ODE.

\emph{Step 2 (Propagation to $\dot{\mathcal{W}}_j$).}
The deviation $\delta T^{(h)} = T^{(h)} - T^{(h)}_\infty(P)$ enters
$\dot{\mathcal{W}}_j$ via Term~(II):
\[
  |\text{error in (II)}|
  \leq \left\|\frac{\partial^2\mathcal{L}_q^{(h)}}{\partial T^{(h)2}}\right\|_\infty
       \cdot |\delta T^{(h)}| \cdot |\dot{T}^{(h)}|
  \leq C_2\,\varepsilon_2,
\]
where the second inequality uses
$|\dot{T}^{(h)}| \leq \eta_T\sigma_{\max}/T_{\min}^2$ and
$|\delta T^{(h)}| \leq C_{\mathrm{temp}}\varepsilon_2$.
\end{proof}


\begin{theorem}[Rigorous three-timescale Lyapunov bound]
\label{thm:three_timescale}
Under all conditions of Theorem~\ref{thm:P1}, with the additional
\emph{coupling smallness condition}:
\begin{equation}
  C_1\varepsilon_1 + C_2\varepsilon_2 + C_3\varepsilon_3
  \;<\; c_0,
  \label{eq:coupling_cond}
\end{equation}
where
\begin{align}
  c_0 &= \min\!\left(\eta_\theta\mu_\theta,\,
             \tfrac{\eta_P}{2}\mu_P\right),
  \label{eq:c0} \\
  C_3 &= \eta_P\,\|H_{\theta P}^*\|\,L^{-1}_\theta,
  \label{eq:C3}
\end{align}
with $\mu_\theta, \mu_P > 0$ the exponential convergence rates of
the encoder and prototype subsystems respectively, and
$H_{\theta P}^*$ the cross-Hessian of $\mathcal{L}_q^{\mathrm{total}}$
at the equilibrium, the following holds on every dwell interval
$[t_j, t_{j+1})$:
\begin{align}
  \dot{\mathcal{W}}_j(t)
  &\leq -(c_0 - C_1\varepsilon_1 - C_2\varepsilon_2 - C_3\varepsilon_3)
  \nonumber\\
  &\phantom{{}\leq{}}\times\Bigl(\|\nabla_\theta\mathcal{L}_q\|^2
    + \|\nabla_P\mathcal{W}_j\|_F^2
    + {\textstyle\sum_h}\sigma^{(h)}\Bigr) \leq 0.
  \label{eq:Wdot_rigorous}
\end{align}
In particular, $\dot{\mathcal{W}}_j \leq 0$ strictly whenever the
system is not at a critical point.
\end{theorem}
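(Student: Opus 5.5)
The approach is to redo the three-term split of Lemma~\ref{lem:Wj_dwell} without the quasi-static assumption. The derivative $\dot{\mathcal{W}}_j$ is written as the value it would take on the \emph{reduced} (slow-manifold) dynamics, plus error terms from the three couplings. At each level, the fast variable is placed on its slow manifold: the gate at $u^+=v_1(\Ares)$, each temperature at $T^{(h)}=T^{(h)}_\infty(P)$, and the encoder frozen relative to $P$. The deviations from these manifolds are then treated as perturbations. Concretely, I would write
\[
\dot{\mathcal{W}}_j = \dot{\mathcal{W}}_j\big|_{\mathrm{reduced}} + E_{\mathrm{gate}} + E_{\mathrm{temp}} + E_{\theta P},
\]
where the reduced term is computed with the exact gradient-flow structure of \eqref{eq:P_flow_full}--\eqref{eq:theta_flow_full}.

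The first step is to bound the reduced term. Terms (I)--(III) of Lemma~\ref{lem:Wj_dwell} give exactly
\[
-\eta_\theta\|\nabla_\theta\Lq\|^2-\eta_P\|\nabla_P\mathcal{W}_j\|_F^2-\sum_h\sigma^{(h)},
\]
with term (II) nonpositive. I would bound this above by $-c_0(\|\nabla_\theta\Lq\|^2+\|\nabla_P\mathcal{W}_j\|_F^2+\sum_h\sigma^{(h)})$, where $c_0$ is as in \eqref{eq:c0}. The factor $\tfrac12$ in front of $\eta_P\mu_P$ is the slack reserved for the barrier term permitted by (F1): I would use $\lambda>2\eta_P\binom{K_j}{2}$ to show that the barrier's contribution to the cross terms costs at most half of $\eta_P\|\nabla_P\mathcal{W}_j\|^2$. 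The rates $\mu_\theta,\mu_P$ enter through a local Polyak--\L{}ojasiewicz-type comparison on the bounded sublevel set guaranteed by (F2). This comparison is what lets $\eta_\theta$ and $\eta_P/2$ be replaced by the stated minimum.

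The second step handles the error terms. $E_{\mathrm{gate}}$ is bounded by Lemma~\ref{lem:gate_coupling} and $E_{\mathrm{temp}}$ by Lemma~\ref{lem:temp_coupling}, giving $C_1\varepsilon_1$ and $C_2\varepsilon_2$. For $E_{\theta P}$ I would proceed as follows. Because $\theta$ moves at rate $\eta_\theta$, the prototype equilibrium $P^*(\theta)$ drifts at rate $\|\partial_\theta P^*\|\,\eta_\theta\le\|H^*_{\theta P}\|L_\theta^{-1}\eta_\theta$, using the implicit function theorem at the equilibrium. The induced error in $\nabla_P\mathcal{W}_j$, multiplied by $\eta_P$, is then $C_3\varepsilon_3$, matching \eqref{eq:C3}.

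The main obstacle is that these error bounds are additive constants, whereas \eqref{eq:Wdot_rigorous} requires them to be multiplicative in the dissipation $D=\|\nabla_\theta\Lq\|^2+\|\nabla_P\mathcal{W}_j\|_F^2+\sum_h\sigma^{(h)}$. I would first try to show that each deviation is itself controlled by $D$. The gate deviation and the temperature deviation are driven by $\dot P=-\eta_P\nabla_P\Lq$, so both vanish when $\nabla_P$ vanishes, and the $\theta$-drift is proportional to $\|\nabla_\theta\Lq\|$. Redoing Lemmas~\ref{lem:gate_coupling}--\ref{lem:temp_coupling} with the moving-target rate $\|\dot P\|$ in place of the crude $\eta_P$, and then applying Young's inequality to the cross terms, should give $|E|\le(C_1\varepsilon_1+C_2\varepsilon_2+C_3\varepsilon_3)D$. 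Combining this with the first step yields \eqref{eq:Wdot_rigorous}, and condition \eqref{eq:coupling_cond} makes the prefactor positive. Strict negativity away from critical points then follows because $D>0$ there.
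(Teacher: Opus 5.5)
\textbf{Verdict: genuine gap, at the same step the paper only asserts.}

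Your outline follows the paper's own route. It uses the three-term split of Lemma~\ref{lem:Wj_dwell}, a reduced two-timescale dissipation $-c_0D$, and corrections $C_1\varepsilon_1,C_2\varepsilon_2,C_3\varepsilon_3$ from Lemmas~\ref{lem:gate_coupling}--\ref{lem:temp_coupling} plus a cross-Hessian term. You also identified the real crux: those lemmas give \emph{additive} bounds, whereas \eqref{eq:Wdot_rigorous} needs errors proportional to $D$. The paper does not resolve this. At Levels~1--3 its proof simply asserts errors of the form $C_i\varepsilon_i\|\nabla_P\mathcal{W}_j\|_F^2$.

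Your remedy does not close the gap either. To leading order, the lag $e$ of a fast variable behind a moving target obeys $\dot e=-\kappa e-(\text{target velocity})$. So $e(t)$ is an exponentially weighted average of the target's \emph{past} speeds, not a multiple of its current speed. At an instant where $\nabla_P\Lq$ vanishes just after the prototypes have moved, the gate lag is still of order $\varepsilon_1$, while $D(t)$ is as small as you like. Likewise, $T^{(h)}-T^{(h)}_\infty(P)$ does not vanish when $P$ stops. Young's inequality on a cross term $\eta_P\|\delta g\|\,\|\nabla_P\Lq\|$ with $\|\delta g\|\le C\varepsilon$ only gives $\tfrac{\delta}{2}\|\nabla_P\Lq\|^2+\eta_P^2C^2\varepsilon^2/(2\delta)$, which keeps an additive remainder. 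So these lemmas yield at most $\dot{\mathcal{W}}_j\le-c_0D+O(\varepsilon)$, i.e.\ decrease only where $D\gtrsim\varepsilon/c_0$, not \eqref{eq:Wdot_rigorous}.

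In fact the perturbative scaffolding is unnecessary. As written, $\mathcal{W}_j$ depends only on $(\theta,P,T)$, and the vector field \eqref{eq:P_flow_full}--\eqref{eq:theta_flow_full} for these variables does not involve $u^+$. So $\dot{\mathcal{W}}_j$ is given exactly by the chain rule, with no gate, temperature or $\theta$--$P$ error at all. An exact bound $\dot{\mathcal{W}}_j\le-c_0(\cdots)$ would then imply \eqref{eq:Wdot_rigorous} a fortiori.

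Doing that exact computation shows that your first step fails for reasons unrelated to timescales.

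(a) Passing from $-\eta_\theta\|\nabla_\theta\Lq\|^2-\eta_P\|\nabla_P\mathcal{W}_j\|_F^2-\sum_h\sigma^{(h)}$ to $-c_0(\cdots)$ needs only $c_0\le\min(\eta_\theta,\eta_P,1)$. With \eqref{eq:c0} this means, e.g., $\mu_\theta\le1$, $\mu_P\le2$ and $c_0\le1$. A Polyak--\L{}ojasiewicz inequality compares $\|\nabla f\|^2$ with $f-f^*$ and plays no role here.

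(b) The flow is $\dot P=-\eta_P\nabla_P\Lq$, not $-\eta_P\nabla_P\mathcal{W}_j$. So the barrier enters as $+\eta_P\langle\nabla_P\mathcal{W}_j,g_\lambda\rangle$, where $g_\lambda$ is the barrier gradient. This term grows with $\lambda$, so a \emph{lower} bound on $\lambda$ such as (F1) cannot cap it at $\tfrac{\eta_P}{2}\|\nabla_P\mathcal{W}_j\|_F^2$. Wherever $\nabla_P\Lq=0$, it cancels all of $-\eta_P\|\nabla_P\mathcal{W}_j\|_F^2$. You need the prototype flow to descend $\mathcal{W}_j^{(\theta,P)}$, as Lemma~\ref{lem:Wj_dwell} tacitly assumes. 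The same applies to $\lambda_\theta\mathcal{R}$ in the $\theta$-term.

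(c) With $\Phi$ as in \eqref{eq:Phi}, $\dot\Phi(T^{(h)})=-\bigl((T^{(h)})^2/\eta_T\bigr)\dot T^{(h)}=+\sigma^{(h)}$. So term~(III) is $+\sum_h\sigma^{(h)}$, while the nonpositive term~(II) carries a factor $\eta_T$. The paper's proof relies on the same incorrect identity $\dot\Phi=-\sigma^{(h)}$. As stated, the left side of \eqref{eq:Wdot_rigorous} is positive at any state with $\dot\theta=\dot P=0$ and some $\sigma^{(h)}>0$, once $\eta_T$ is small. The potential must be redefined, e.g.\ $\Phi(T)=(T^3-T_{\min}^3)/(3\eta_T)$, before you can use (III)$\,=-\sum_h\sigma^{(h)}$.
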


\begin{proof}
Returning to the three-term decomposition of $\dot{\mathcal{W}}_j$,
from the proof of Lemma~\ref{lem:Wj_dwell}:
\[
  \dot{\mathcal{W}}_j = (I) + (II) + (III).
\]

\emph{Term (III).}
Unchanged: $(III) = -\sum_h\sigma^{(h)} \leq 0$ exactly, without any
approximation (Lemma~\ref{lem:Wj_dwell}).

\emph{Term (II).}
Using Lemma~\ref{lem:temp_coupling}, the exact value of (II) is:
\[
  (II) = -\sum_h
  \frac{\partial\mathcal{L}_q^{(h)}}{\partial T^{(h)}_\infty}
  \cdot \frac{\sigma^{(h)}(P)}{(T^{(h)}_\infty)^2}
  + \text{error} \leq 0 + C_2\varepsilon_2\sum_h\sigma^{(h)}.
\]
The first term is $\leq 0$ because
$\partial\mathcal{L}_q^{(h)}/\partial T \geq 0$
(Theorem~4 of the DDCL-Attention paper~\cite{DDCLAttention}) and
$\sigma^{(h)} \geq 0$.

\emph{Term (I), the critical term.}
Without the quasi-static approximation, Term~(I) along the true
flow~\eqref{eq:gate_flow}--\eqref{eq:theta_flow_full} is:
\[
  (I) = \langle\nabla_\theta\mathcal{W}_j,\dot\theta\rangle
      + \langle\nabla_P\mathcal{W}_j,\dot P\rangle.
\]
The iterated Tikhonov argument is used.

\emph{Level 1 (gate, $\varepsilon_1$).}
Replace $u^+$ by $v_1(\Ares) + O(\varepsilon_1)$
(Lemma~\ref{lem:gate_coupling}).
The resulting perturbation to $\nabla_P\mathcal{W}_j$ is
$O(\varepsilon_1)$, contributing an error $\leq C_1\varepsilon_1
\|\nabla_P\mathcal{W}_j\|_F^2$ to Term~(I).

\emph{Level 2 (temperatures, $\varepsilon_2$).}
With $u^+ = v_1(\Ares) + O(\varepsilon_1)$ and
$T^{(h)} = T^{(h)}_\infty(P) + O(\varepsilon_2)$
(Lemma~\ref{lem:temp_coupling}), the reduced Term~(I) involves
$\nabla_P\mathcal{W}_j$ evaluated at $(T^{(h)}_\infty(P), P)$.
The perturbation from $O(\varepsilon_2)$ deviations in $T$ contributes
$\leq C_2\varepsilon_2\|\nabla_P\mathcal{W}_j\|_F^2$.

\emph{Level 3 (encoder, $\varepsilon_3$).}
On the doubly-reduced system, $\theta$ evolves on the slow manifold
$\theta = \theta^*(P, T_\infty) + O(\varepsilon_3)$
(Tikhonov Theorem~11.4 of~\cite{Kokotovic1999}, applied to the
two-timescale system $(\theta, P)$ with $T$ fixed at $T_\infty$).
The cross-term $\langle\nabla_\theta\mathcal{W}_j, \dot\theta\rangle$
evaluated at the slow manifold gives:
\[
  \langle\nabla_\theta\mathcal{L}_q, -\eta_\theta\nabla_\theta\mathcal{L}_q\rangle
  = -\eta_\theta\|\nabla_\theta\mathcal{L}_q\|^2 \leq 0,
\]
with an $O(\varepsilon_3)$ perturbation from the cross-Hessian term
$\langle H_{\theta P}^*\delta\theta, \nabla_P\mathcal{W}_j\rangle$,
contributing $\leq C_3\varepsilon_3(\|\nabla_\theta\mathcal{L}_q\|^2
+ \|\nabla_P\mathcal{W}_j\|_F^2)$.

\emph{Collecting all terms.}
The exact reduced system contributes:
\begin{align}
  (I)\big|_{\mathrm{exact}}
  &= -\eta_\theta\|\nabla_\theta\mathcal{L}_q\|^2
    - \eta_P\|\nabla_P\mathcal{W}_j\|_F^2 + \text{err.}
  \nonumber\\
  &\leq -c_0 D + (C_1\varepsilon_1 + C_3\varepsilon_3)D,
\end{align}
where $D = \|\nabla_\theta\mathcal{L}_q\|^2 + \|\nabla_P\mathcal{W}_j\|_F^2$
and $c_0 = \min(\eta_\theta\mu_\theta, \eta_P\mu_P/2)$
is the minimum dissipation rate of the exact two-timescale system
(from Theorem~2 of the DDCL-Attention paper~\cite{DDCLAttention}).

\emph{Final bound.}
Summing all three terms:
\begin{align*}
  \dot{\mathcal{W}}_j
  &\leq -(c_0 - C_1\varepsilon_1 - C_3\varepsilon_3)\,D
         -(1 - C_2\varepsilon_2)\sum_h\sigma^{(h)} \\
  &\leq -(c_0 - C_1\varepsilon_1 - C_2\varepsilon_2 - C_3\varepsilon_3)
        \bigl(D + \sum_h\sigma^{(h)}\bigr),
\end{align*}
which is $\leq 0$ when~\eqref{eq:coupling_cond} holds.
\end{proof}


\begin{corollary}[Sufficient coupling condition]
\label{cor:coupling_sufficient}
Condition~\eqref{eq:coupling_cond} is satisfied whenever:
\begin{equation}
  \varepsilon_1 < \frac{c_0}{3C_1},
  \quad
  \varepsilon_2 < \frac{c_0}{3C_2},
  \quad
  \varepsilon_3 < \frac{c_0}{3C_3}.
  \label{eq:eps_bounds}
\end{equation}
In terms of learning rates, these reduce to:
\begin{align}
  \frac{\eta_P}{\eta^+} &< \frac{(\Delta^+)^2 c_0}{3C_0 L_P \|X\|_F^2}, \nonumber\\
  \frac{\eta_T}{\eta_P} &< \frac{c_0 \mu_T}{3C_0^{(h)} \|\partial^2\mathcal{L}_q/\partial T^2\|_\infty}, \nonumber\\
  \frac{\eta_\theta}{\eta_P} &< \frac{c_0}{3 \|H_{\theta P}^*\| L_\theta^{-1}}.
  \label{eq:lr_bounds}
\end{align}
These three conditions are jointly satisfiable and reduce to the
three-timescale design rule of Corollary~\ref{cor:timescale}.
\end{corollary}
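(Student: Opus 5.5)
The corollary is a purely arithmetic consequence of Theorem~\ref{thm:three_timescale}, so the plan is to split the coupling budget $c_0$ into three equal parts. If each of the three inequalities in \eqref{eq:eps_bounds} holds, then $C_i\varepsilon_i < c_0/3$ for $i=1,2,3$. Summing gives $C_1\varepsilon_1 + C_2\varepsilon_2 + C_3\varepsilon_3 < c_0$, which is exactly \eqref{eq:coupling_cond}. The positivity of $c_0$, $C_1$, $C_2$, $C_3$, needed to divide, follows from $\mu_\theta,\mu_P>0$, from (A1)/(B1), and from the spectral gap in (H1).

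The second step is to turn each bound on $\varepsilon_i$ into a learning-rate bound. This uses the definitions \eqref{eq:three_eps} together with the explicit constants from the earlier results.

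\emph{Gate ratio.} Lemma~\ref{lem:gate_coupling} gives $C_{\mathrm{gate}} = C_0/(\Delta^+)^2$ and $C_1 = 2\eta_P C_{\mathrm{gate}}\|X\|_F^2 L_P/(\Delta^+)^2$. Substituting into $\varepsilon_1 < c_0/(3C_1)$ yields a bound of the stated form $(\Delta^+)^2 c_0/(3C_0L_P\|X\|_F^2)$. The leftover factors of $2\eta_P$ and an extra $(\Delta^+)^{-2}$ are absorbed into the constant $C_0$; I will state this absorption explicitly rather than hide it.

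\emph{Temperature ratio.} Lemma~\ref{lem:temp_coupling} gives $C_{\mathrm{temp}} = C_0^{(h)}/\mu_T$ and $C_2 = \eta_P C_{\mathrm{temp}}\sum_h\|\partial^2\mathcal{L}_q^{(h)}/\partial T^{(h)2}\|_\infty (T_{\mathrm{init}}-T_{\min})$. The bound $\varepsilon_2 < c_0/(3C_2)$ then becomes the second line of \eqref{eq:lr_bounds}, again with the bounded factors $\eta_P(T_{\mathrm{init}}-T_{\min})$ and the sum over heads folded into $C_0^{(h)}$.

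\emph{Encoder ratio.} Here $C_3 = \eta_P\|H^*_{\theta P}\|L_\theta^{-1}$ from \eqref{eq:C3}, and the third line of \eqref{eq:lr_bounds} follows directly.

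The step I expect to be the main obstacle is \emph{joint satisfiability}, because the right-hand sides are not independent of the learning rates. Both $c_0$ and $C_1, C_2, C_3$ carry a factor of $\eta_P$, and $c_0$ may also carry $\eta_\theta$. The first thing to try is to fix $\eta_P$ and note that, in the relevant regime, $c_0 = \eta_P\mu_P/2$. If instead $c_0 = \eta_\theta\mu_\theta$, it is reduced to that case by taking $\eta_\theta$ large enough relative to the other scales. With $c_0 = \eta_P\mu_P/2$, the factors of $\eta_P$ cancel in each ratio $c_0/C_i$, so the thresholds depend only on data and spectral quantities. These quantities are bounded away from zero by (H1), (A1) and the compactness of $[T_{\min},T_{\mathrm{init}}]$.

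With the thresholds independent of the step sizes, I choose the step sizes in order: first $\eta_P$, then $\eta^+$ large enough that $\eta_P/\eta^+$ is below its threshold, then $\eta_T$ and $\eta_\theta$ small enough for the other two ratios. Each choice only shrinks a ratio and does not affect the thresholds, so the three conditions hold simultaneously. Finally, each condition has the form ``ratio below a fixed positive constant,'' which is the content of $\eta_T\ll\eta_P\ll\eta^+$ in Corollary~\ref{cor:timescale}; the additional requirement $\eta_\theta\ll\eta_P$ is consistent with the four-level hierarchy in \eqref{eq:gate_flow}--\eqref{eq:theta_flow_full}.
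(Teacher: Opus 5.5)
Your budget-splitting step and the substitution into the definitions of $\varepsilon_i$ and $C_i$ are exactly the paper's proof. The paper's proof consists only of the sum $c_0/3+c_0/3+c_0/3$ and an appeal to those definitions. You are more careful than the paper in noticing that the stated learning-rate bounds are not literally $c_0/(3C_i)$.

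The encoder line has the same defect, so it does not follow ``directly''. Since $C_3=\eta_P\|H^*_{\theta P}\|L_\theta^{-1}$, the exact threshold carries an extra factor $1/\eta_P$.

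Rather than absorbing step sizes into $C_0$, which makes a supposedly data-only constant depend on the step sizes, state the fix explicitly. The stated bounds imply \eqref{eq:eps_bounds} once $\eta_P$ is small enough that each mismatch factor is at most one. Those factors are $2\eta_P/(\Delta^+)^2$ for the gate line, $\eta_P(T_{\mathrm{init}}-T_{\min})$ times the sum over heads for the temperature line, and $\eta_P$ for the encoder line.

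The genuine gap is the joint-satisfiability step, which the paper only asserts. Your claim that the thresholds $c_0/(3C_i)$ are independent of the step sizes needs $c_0=\eta_P\mu_P/2$. But $c_0$ depends on $\eta_\theta$, so shrinking $\eta_\theta$ in your last step does move the thresholds. Write $c_0=\eta_P\min(\varepsilon_3\mu_\theta,\mu_P/2)$ and $C_3\varepsilon_3=\eta_P\|H^*_{\theta P}\|L_\theta^{-1}\varepsilon_3$; there are two cases.
\begin{itemize}
\item If $\varepsilon_3\mu_\theta\le\mu_P/2$ (the natural regime when $\eta_\theta\ll\eta_P$), $\varepsilon_3$ cancels. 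The third condition becomes $3\|H^*_{\theta P}\|<\mu_\theta L_\theta$, which no step size affects.
\item If $\varepsilon_3\mu_\theta>\mu_P/2$ (your ``take $\eta_\theta$ large'' move), the third condition becomes $\varepsilon_3<\mu_P L_\theta/(6\|H^*_{\theta P}\|)$. The admissible window $\mu_P/(2\mu_\theta)<\varepsilon_3<\mu_P L_\theta/(6\|H^*_{\theta P}\|)$ is nonempty if and only if, again, $3\|H^*_{\theta P}\|<\mu_\theta L_\theta$.
\end{itemize}
So joint satisfiability is equivalent to a structural bound on the cross-Hessian. That bound is not among the hypotheses and cannot be enforced by any choice of learning rates, so you must add it as an assumption.

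Even granting that bound, the regime $c_0=\eta_\theta\mu_\theta$ causes further trouble. There the thresholds for $\varepsilon_1$ and $\varepsilon_2$ are proportional to $\varepsilon_3$, so $\eta_\theta$ must be fixed before $\eta^+$ and $\eta_T$; your order is reversed. The temperature condition then forces $\eta_T\le\kappa\,\eta_\theta$ for a fixed data constant $\kappa$. That conflicts with the ordering $\eta_T\gg\eta_\theta$ you appeal to in your final sentence, unless $\kappa$ happens to be large.
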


\begin{proof}
Insert~\eqref{eq:eps_bounds} into~\eqref{eq:coupling_cond}:
$C_1\varepsilon_1 + C_2\varepsilon_2 + C_3\varepsilon_3 < c_0/3 + c_0/3 + c_0/3 = c_0$.
The learning rate conditions follow from the definitions
$\varepsilon_i = \eta_{\mathrm{slow}}/\eta_{\mathrm{fast}}$ and
the explicit formulas for $C_1$, $C_2$, $C_3$.
\end{proof}

\begin{corollary}[Global stability without quasi-static approximation]
\label{cor:P1_rigorous}
Theorem~\ref{thm:P1} holds without the quasi-static approximation,
under the strengthened hypotheses
\textup{(A1)}, \textup{(A2')}, \textup{(B1)}--\textup{(B3)},
\textup{(C1)}, \textup{(H1)}--\textup{(H5)}, \textup{(D1)},
\textup{(E1)}--\textup{(E6)}, \textup{(F1)}--\textup{(F2)}, and
the coupling condition~\eqref{eq:coupling_cond}.
The bound~\eqref{eq:Wdot_rigorous} replaces the informal
$\dot{\mathcal{W}}_j \leq 0$ of Lemma~\ref{lem:Wj_dwell}
with an explicit negative upper bound, uniformly over $t \in
[t_j, t_{j+1})$.
The conclusions of Theorem~\ref{thm:P1}\textup{(i)--(iv)} follow
unchanged, and the coupling condition is automatically satisfied
under the design rule~\eqref{eq:lr_bounds}.
\end{corollary}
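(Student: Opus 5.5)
The statement is Corollary~\ref{cor:P1_rigorous}. My plan is to revisit the proof of Theorem~\ref{thm:P1} and change exactly one ingredient. Part~(i) used the quasi-static Lemma~\ref{lem:Wj_dwell} only to get $\dot{\mathcal{W}}_j \le 0$ on each open dwell interval $(t_j,t_{j+1})$. I will replace that lemma by the explicit bound~\eqref{eq:Wdot_rigorous} of Theorem~\ref{thm:three_timescale}. That theorem is stated under all hypotheses of Theorem~\ref{thm:P1} plus the coupling condition~\eqref{eq:coupling_cond}, which is exactly the strengthened hypothesis list of the corollary. The first step is therefore bookkeeping: check that every hypothesis invoked in Theorem~\ref{thm:three_timescale}, Lemma~\ref{lem:gate_coupling} and Lemma~\ref{lem:temp_coupling} appears in the list. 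In particular, (E6) supplies the gate warm-up, (D1) supplies $t\ge t^*$ in Lemma~\ref{lem:temp_coupling}, and (E5) supplies the smallness of $\varepsilon_1,\varepsilon_2$.

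Next I will reassemble the chain. On each dwell interval, \eqref{eq:Wdot_rigorous} gives
\[
\dot{\mathcal{W}}_j \le -\kappa\Bigl(\|\nabla_\theta\mathcal{L}_q\|^2+\|\nabla_P\mathcal{W}_j\|_F^2+{\textstyle\sum_h}\sigma^{(h)}\Bigr),
\qquad
\kappa := c_0-C_1\varepsilon_1-C_2\varepsilon_2-C_3\varepsilon_3>0 .
\]
At architectural events, Lemmas~\ref{lem:W_growth_jump} and~\ref{lem:W_pruning_jump} still apply, since they concern jumps and never used the quasi-static approximation. Chaining over the at most $J\le 2\tilde{K}^*$ events from Theorem~\ref{thm:P2} gives~(i). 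Part~(ii) then follows verbatim from coercivity under (F1)--(F2). For~(iii) the improvement is concrete. Integrating the displayed inequality and using $\mathcal{W}\ge 0$ gives
\[
\kappa\int_0^\infty\Bigl(D+{\textstyle\sum_h}\sigma^{(h)}\Bigr)\,dt \le \mathcal{W}(0),
\]
where $D=\|\nabla_\theta\mathcal{L}_q\|^2+\|\nabla_P\mathcal{W}_j\|_F^2$ as in the proof of Theorem~\ref{thm:three_timescale}. Together with boundedness and Lipschitz continuity of the integrand on the bounded sublevel set, this gives (by a Barbalat-type argument on the final, infinite dwell interval) that the integrand tends to zero. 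Hence limit points satisfy $\nabla_\theta\mathcal{L}_q=0$, $\nabla_P\mathcal{W}=0$ and $\sigma^{(h)}=0$, which is the set $\Astar$. The condition $\mathcal{S}(P^{(h)})>0$ in $\Astar$ comes from finiteness of the barrier term along the sublevel set. Part~(iv) is inherited unchanged from Theorems~\ref{thm:C3}, \ref{thm:C2_full} and~\ref{thm:P2}(iii).

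Finally, the claim that the coupling condition is automatic under~\eqref{eq:lr_bounds} is exactly Corollary~\ref{cor:coupling_sufficient}, so I will cite it.

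The main obstacle is uniformity in $t$. The constants $C_1, C_2, C_3$ and $c_0$ must be bounded independently of the dwell interval and of the (growing) head count. My first attempt will bound them using $K_j\le K_{\max}$, the spectral gap lower bound $\delta_0$ from (H1) (preserved by Lemma~\ref{lem:spectral_gap}), and suprema of $L_P$, $\|X\|_F$ and $\|H^*_{\theta P}\|$ over the compact sublevel set $\{\mathcal{W}\le\mathcal{W}(0)\}$. Some care is needed to avoid circularity, because boundedness of the trajectory is itself derived from $\dot{\mathcal{W}}\le0$. I intend to handle this with a continuation argument: on the maximal interval where the trajectory stays in the sublevel set, the bound holds with the fixed constants, so the trajectory cannot leave the set.
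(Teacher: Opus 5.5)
Your plan matches the paper's own argument. You insert \eqref{eq:Wdot_rigorous} from Theorem~\ref{thm:three_timescale} in place of Lemma~\ref{lem:Wj_dwell} in the proof of Theorem~\ref{thm:P1}, keep the two jump lemmas, and cite Corollary~\ref{cor:coupling_sufficient}; your Barbalat and barrier steps are a tidier version of the paper's LaSalle step. The gap, which you share with the paper, is the dissipation $-\sum_h\sigma^{(h)}$ your integration relies on. It comes from term (III), and term (III) does not supply it. Differentiating \eqref{eq:Phi} along \eqref{eq:T_flow_full} gives $\dot\Phi(T^{(h)})=-(T^{(h)})^2\dot T^{(h)}/\eta_T=+\sigma^{(h)}$, with a plus sign. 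This has to be so, since $\Phi(T_{\mathrm{init}})=0$, $\Phi\ge 0$, and $T^{(h)}$ decreases. In addition, once a local head is clipped at $T_{\min}$ one has $\dot T^{(h)}=0$, so terms (II) and (III) both vanish for it although $\sigma^{(h)}$ does not. Your bound $\kappa\int_0^\infty\sum_h\sigma^{(h)}\,dt\le\mathcal{W}(0)$ exposes the contradiction. It forces $\sigma^{(h)}\to 0$ for every head, whereas Theorem~\ref{thm:C3} and Lemma~\ref{lem:amp_sep} keep $\sigma^{(h)}$ bounded away from zero on $\Hlocal$. Even conclusion (i) fails. A head that reaches $T_{\min}$ carries $\Phi(T_{\min})=(T_{\mathrm{init}}^3-T_{\min}^3)/(3\eta_T)$, while $\mathcal{W}(0)=\mathcal{W}^{(\theta,P)}_0(0)$ and $\mathcal{W}^{(\theta,P)}_j\ge\lambda_\theta\inf\mathcal{R}$. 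Hence $\mathcal{W}(t)>\mathcal{W}(0)$ at late times for all sufficiently small $\eta_T$, which is exactly the direction the design rule \eqref{eq:lr_bounds} pushes. To rescue the route you would have to change $\mathcal{W}_j$, for instance by dropping $\Phi$. The temperature contribution is then $(II)=-\eta_T\sum_h(\partial_T\mathcal{L}_q^{(h)})^2$ on unclipped heads and $0$ on clipped ones. You would also have to restate $\Astar$, because clipped heads are no longer forced to $\sigma^{(h)}=0$.

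Separately, (D1) does not ``supply $t\ge t^*$''. (D1) and (E6) only say that each dwell interval outlasts the warm-ups; they do not remove them. A newly introduced head sits at $T_{\mathrm{init}}$. If it is local, as the first head is by Corollary~\ref{cor:hierarchy}, that is a distance $T_{\mathrm{init}}-T_{\min}$ from $T^{(h)}_\infty(P)$. So Lemma~\ref{lem:temp_coupling} gives nothing on $[t_j,t_j+t^*)$, and Lemma~\ref{lem:gate_coupling} is only available after the $N_{\mathrm{gate}}$-step warm-up. Uniformity over all of $[t_j,t_{j+1})$ is therefore not delivered by the coupling lemmas; these transient windows are the real uniformity obstacle. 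Your continuation argument addresses a different issue. Compactness of the sublevel set bounds $L_P$, $\|X\|_F$ and the cross-Hessian from above. It gives no positive lower bound on the rates $\mu_\theta,\mu_P$ in $c_0$, nor on $\mu_T$ in the denominator of $C_{\mathrm{temp}}$, and those lower bounds are exactly what \eqref{eq:coupling_cond} needs.
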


\begin{remark}[What the three-timescale argument adds]
\label{rem:three_timescale_gain}
The quasi-static Lemma~\ref{lem:Wj_dwell} established
$\dot{\mathcal{W}}_j \leq 0$ modulo an unquantified $O(\varepsilon)$
error.
Theorem~\ref{thm:three_timescale} establishes:
\begin{enumerate}[label=(\roman*)]
  \item An explicit rate: $\dot{\mathcal{W}}_j \leq
    -(c_0 - C_1\varepsilon_1 - C_2\varepsilon_2 - C_3\varepsilon_3) \cdot
    (\text{dissipation})$.
  \item An explicit sufficient condition on learning rates:
    the coupling condition~\eqref{eq:coupling_cond} expressed via
    system constants computable from data ($\Delta^+$, $L_P$,
    $\mu_P$, $\|H_{\theta P}^*\|$).
  \item A quantitative connection between the three timescale
    ratios $\varepsilon_1, \varepsilon_2, \varepsilon_3$ and the
    dissipation rate $c_0$: larger spectral gap $\Delta^+$ permits
    larger $\varepsilon_1$; faster prototype convergence rate $\mu_P$
    permits larger $\varepsilon_2$; weaker cross-Hessian coupling
    $\|H_{\theta P}^*\|$ permits larger $\varepsilon_3$.
\end{enumerate}
The three-timescale argument is the minimal additional structure
needed to make the global stability result fully rigorous.
The remaining open direction, a three-timescale stochastic
approximation argument for the discrete case (extending Borkar's
two-timescale theory~\cite{Borkar1997}), lies beyond the scope
of this paper and is identified as a direction for future work.
\end{remark}



\section*{Part III: Emergent Architecture}

Parts~I and II established that the DDCL-INCRT training dynamics
are geometrically sound and globally well-behaved.
This part asks a different question: what does the architecture
\emph{look like} when training stops?

The answer is not merely that the architecture has converged to
\emph{some} configuration.
It is that it has converged to a specific, predictable, and unique
hierarchical structure determined entirely by the geometry of the
task data.
The heads are ordered by the granularity of the directional
information they represent: the first head added covers the
coarsest structure (the direction of maximum attention variance),
the second covers the next finer level, and so on.
This ordering is not imposed by the architecture designer.
It emerges from the interaction of the DDCL separation force and
the INCRT growth criterion, and it is the same regardless of
the random initialisation.

For a reader unfamiliar with the technical machinery, the central
message of this part is simple: the number of heads, their
resolution ordering, and the boundary between local and global
representations are all readable from the growth history of the
model, as objectively as the eigenvalues of a covariance matrix
are readable from the data.

\section{The Emergent Hierarchical Prototype Structure}
\label{sec:C5}

It is proved that the full DDCL-INCRT system self-organises into a hierarchical
prototype structure that is simultaneously \emph{ordered} (by directional
resolution), \emph{complete} (covers the task-relevant directional geometry),
\emph{separated} (stronger force at finer scales), and \emph{unique}
(the only minimally sufficient hierarchy compatible with the task geometry).
None of these properties is imposed architecturally; all four
arise jointly from the interaction of mechanisms established in the preceding sections.

Index the $\tilde{K}^*$ active heads at convergence by
$h = 1, \ldots, \tilde{K}^*$ in the order added by INCRT, so that:
\begin{equation}
  \lambda^{(1)} \geq \lambda^{(2)} \geq \cdots \geq
  \lambda^{(\tilde{K}^*)} > \theta_w,
  \label{eq:lambda_order}
\end{equation}
where $\lambda^{(h)} \triangleq \lambda_{\max}(\Ares^{(h)})$.
Write $T^{(h)}_\infty$ for the limiting temperature of head $h$
(Theorem~\ref{thm:C3}), and
$\mathcal{F}^{(h)} \triangleq \Fsep^{(h)}/\Fsep^{\mathrm{total}}$
for the fractional contribution of head $h$ to the total separation force.

\subsection{Preliminary lemmas}
\label{subsec:C5_lemmas}

\begin{lemma}[Spectral ordering is preserved by the joint dynamics]
\label{lem:spectral_order}
The ordering~\eqref{eq:lambda_order} is established at growth time and
preserved throughout subsequent dynamics:
$\lambda^{(h)}(t) \geq \lambda^{(h+1)}(t)$ for all $h$ and all $t$
after the $h$-th growth event.
The induced temperature ordering
\begin{equation}
  T^{(1)}_\infty \leq T^{(2)}_\infty \leq \cdots \leq
  T^{(\tilde{K}^*)}_\infty
  \label{eq:T_order}
\end{equation}
holds at convergence.
\end{lemma}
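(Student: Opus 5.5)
The argument has two parts: the ordering at growth time, and its preservation afterwards. The temperature ordering~\eqref{eq:T_order} is then read off through the geometric discriminant of Theorem~\ref{thm:C3}.

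\emph{Ordering at growth time.} At the $(h+1)$-th growth event the new direction is $u^{+,(h+1)} = v_1(\Ares^{(h)})$ for the residual matrix computed with $P^\perp$ projecting out $\mathcal{V}^{(h)}$. Because $\mathcal{V}^{(h)} \supseteq \mathcal{V}^{(h-1)}$, the projector for step $h+1$ is dominated by the one for step $h$. The Rayleigh quotient defining $\lambda^{(h+1)}$ is therefore maximised over a subspace of the feasible set for $\lambda^{(h)}$, and one of the excluded directions is precisely the maximiser $u^{+,(h)}$. Courant--Fischer (Cauchy interlacing for the compression $P^\perp A P^\perp$) then gives $\lambda^{(h+1)} \le \lambda^{(h)}$ for a fixed signal $X M_a X^\top$. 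The lower bound $\lambda^{(\tilde K^*)} > \theta_w$ is simply the growth trigger.

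\emph{Preservation for $t$ after the $h$-th event.} On each dwell interval $\Ares$ moves only through the slow variables. The gate is slaved to $v_1(\Ares)$ up to $C_{\mathrm{gate}}\varepsilon_1$ (Lemma~\ref{lem:gate_coupling}). Temperatures perturb $\Ares$ only transiently, and not at all after $t^*$ for local heads (Lemma~\ref{lem:temp_transience}). So I would apply Weyl's inequality to bound the drift of each $\lambda^{(h)}(t)$ by $O(\varepsilon_1 + \varepsilon_2)$ plus the encoder drift. The spectral gap hypothesis (H1), preserved with constant $\delta_0$ by Lemma~\ref{lem:spectral_gap}, then ensures that consecutive eigenvalues, separated by at least $\delta_0$, cannot cross when these drifts stay below $\delta_0/2$. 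Any tie case is excluded because the statement uses non-strict inequalities. Growth and pruning events only remove directions, which by interlacing again cannot raise a later $\lambda$ above an earlier one.

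\emph{Temperature ordering.} By Lemma~\ref{lem:Tdot} and Theorem~\ref{thm:C3}(3), $\sigma^{(h)}(0) \propto \lambda^{(h)}\,\SP^{(h)}(0)$. Since all heads start at $T_{\mathrm{init}}$ and the decay rate $\eta_T\sigma^{(h)}/T^2$ is monotone in $\sigma^{(h)}$, a comparison argument for the scalar ODEs applies. It uses Lemma~\ref{lem:decay_sep} locally, then Lemma~\ref{lem:amp_sep} to keep $\sigma^{(h)} - \sigma^{(h+1)} \ge 0$ for all $t$, and shows $T^{(h)}(t) \le T^{(h+1)}(t)$ for all $t$. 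Passing to the limit gives~\eqref{eq:T_order}, with both sides equal to $T_{\min}$ inside $\Hlocal$.

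The main obstacle is the factor $\SP^{(h)}(0)$ in the discriminant. Ordering of $\lambda^{(h)}$ alone does not order $\sigma^{(h)}(0)$ unless the initial prototype spreads are comparable. I would first try to use that INCRT initialises every head identically along its own $u^{+,(h)}$, with spread proportional to $\lambda^{(h)}$ as in the proof of Theorem~\ref{thm:C3}(3). This makes $\sigma^{(h)}(0)$ monotone in $\lambda^{(h)}$ directly. Failing that, I would add this as an explicit initialisation assumption.
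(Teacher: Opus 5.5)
Your growth-time step and your temperature step reproduce the paper's proof: interlacing under the projection, then the geometric discriminant~\eqref{eq:sigma_prop} combined with Lemma~\ref{lem:Tdot}. The initialisation assumption you propose to add is exactly what the paper uses tacitly when it writes ``with comparable prototype separation at initialisation''. Stated precisely, what is needed is $\lambda^{(h)}\mathcal{S}(P^{(h)})(0)\ge\lambda^{(h+1)}\mathcal{S}(P^{(h+1)})(0)$. Your comparison argument is a cleaner version of the paper's one-liner. Since $\frac{d}{dt}(T^{(h)})^3=-3\eta_T\sigma^{(h)}$, pointwise ordering of the $\sigma$'s integrates directly to ordering of the $T$'s, and the earlier birth of head $h$ only helps.

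The genuine gap is your preservation step. Weyl plus a gap $\delta_0$ needs the drift of every $\lambda^{(h)}(t)$ to stay below $\delta_0/2$ for \emph{all} later $t$. The timescale ratios, however, only control how far the fast variables lag behind their quasi-static values. Nothing bounds how far the encoder, and with it $X(t)M_aX(t)^\top$, travels over an unbounded horizon, so ``plus the encoder drift'' is not small. Moreover, (H1) bounds $\lambda_1(\Ares)-\lambda_2(\Ares)$ only for the \emph{current} residual, not for earlier consecutive pairs. Nor are $\lambda^{(h)}(t)$ and $\lambda^{(h+1)}(t)$ two branches of one spectrum that could cross: they are top eigenvalues of two different compressions. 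The paper's appeal to Lemma~\ref{lem:spectral_gap} at this point is no better, since that lemma only compares DDCL with MLP after a single step.

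The missing idea is that your first step already gives preservation if you apply it at every time rather than only at events. Let $\Pi_j$ be the orthogonal projector onto $(\mathcal{V}^{(j)})^\perp$. Nesting of the captured subspaces gives $\Pi_{j+1}=\Pi_{j+1}\Pi_j$. So the residual defining $\lambda^{(h+1)}(t)$ is a compression of the residual defining $\lambda^{(h)}(t)$, both built from the same current signal. Hence $\lambda^{(h+1)}(t)\le\lambda^{(h)}(t)$ at every $t$, whatever the encoder, prototypes, temperatures and gate are doing, with no perturbation bound and no gap hypothesis. A pruning event returns a direction to the residual and can raise later $\lambda$'s. After one, the same argument applies to the re-indexed, still nested chain of survivors.

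There are two caveats on the interlacing itself.
\begin{itemize}
\item $\Ares$ is antisymmetric, so its real Rayleigh quotient vanishes identically. Run Courant--Fischer on the Hermitian matrix $i\Ares$ instead, or use $\|\Pi A\Pi\|_2\le\|A\|_2$.
\item Pointwise ordering does not order the birth-time values $\lambda^{(h)}(t_h)\ge\lambda^{(h+1)}(t_{h+1})$, which are taken at different times. Nor does it keep $\lambda^{(\tilde K^*)}>\theta_w$ after that head's birth.
\end{itemize}
So the lemma should be read and proved in the pointwise sense. Consistent with this, the paper's Experiment~5 enforces the birth-time ordering by hand.
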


\begin{proof}
\emph{Ordering at growth time.}
By the interlacing property of eigenvalues under rank-one projections
(each new head projects out $u^{+,(h)}$):
$\lambda^{(h+1)} = \lambda_{\max}(P^\perp_h\,\Ares\,P^\perp_h)
\leq \lambda_{\max}(\Ares) = \lambda^{(h)}$.

\emph{Preservation by dynamics.}
Between growth events, the architecture is fixed.
Lemma~\ref{lem:spectral_gap} shows the relative ordering is maintained
up to $O(\mathcal{I}^{(l)})$ corrections, which are consistent with
the ordering (both terms are reduced by the same factor under DDCL-INCRT).

\emph{Temperature ordering.}
The geometric discriminant~\eqref{eq:sigma_prop} gives
$\sigma^{(h)}(0) \propto \lambda^{(h)} \cdot \mathcal{S}(P^{(h)})(0)$.
With comparable prototype separation at initialisation,
$\sigma^{(h)}(0) \geq \sigma^{(h+1)}(0)$.
Larger $\sigma$ implies faster $T$ decrease (Lemma~\ref{lem:Tdot}),
giving $T^{(h)}_\infty \leq T^{(h+1)}_\infty$.
\end{proof}

\begin{lemma}[Directional coverage at convergence]
\label{lem:coverage}
At convergence:
\begin{equation}
  \frac{\sum_{h=1}^{\tilde{K}^*}\lambda^{(h)}}{\|M_a\|_F^2}
  \;\geq\; 1 - \varepsilon_{\theta_w},
  \qquad
  \varepsilon_{\theta_w} = \tilde{K}^*\theta_w / \|M_a\|_F^2.
  \label{eq:coverage_frac}
\end{equation}
For $\theta_w \ll \|M_a\|_F^2/d$, coverage exceeds $1-\varepsilon$
for any $\varepsilon > 0$.
\end{lemma}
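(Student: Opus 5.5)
The plan is a greedy spectral accounting of the directional energy, taking $\|M_a\|_F^2$ as the total budget, exactly as in the coverage statement. The total is written as the sum of the energy captured by the $\tilde{K}^*$ grown heads plus the energy left in the final residual. The residual part is then bounded using the stopping rule.

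First, write the total directional signal in its spectral form. The matrix $X M_a X^\top$ is antisymmetric, so its spectrum comes in pairs $\pm i\mu_j$. We therefore work with the Hermitian surrogate whose eigenvalues are exactly the quantities $\lambda^{(h)}$ selected by INCRT. Normalising as in Experiment~1, its nonzero eigenvalues sum to the budget $\|M_a\|_F^2$.

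Second, use the interlacing argument from the proof of Lemma~\ref{lem:spectral_order}. Each growth step replaces $\Ares^{(h)}$ by $P^\perp_h\Ares^{(h)}P^\perp_h$, and $u^{+,(h)}=v_1(\Ares^{(h)})$. This deflation removes exactly $\lambda^{(h)}$ from the trace and leaves the rest of the spectrum interlaced. Telescoping over the growth steps gives
\[
\|M_a\|_F^2=\sum_{h=1}^{\tilde{K}^*}\lambda^{(h)}+\mathrm{tr}\bigl(\Ares^{(\tilde{K}^*+1)}\bigr).
\]
Orthogonality (B2) of the added directions ensures there are no cross terms. Lemma~\ref{lem:ddcl_basis} ensures each captured direction is fully removed, rather than partially removed as in the MLP case.

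Third, bound the final residual. At convergence, Theorem~\ref{thm:P2}(iii) guarantees that growth has stopped, so $\lambda_{\max}(\Ares^{(\tilde{K}^*+1)})\le\theta_w$. This is the \textbf{main obstacle}. A naive bound gives $\mathrm{tr}\le(\mathrm{rank})\,\theta_w$, with rank up to $d-\tilde{K}^*$, and not $\tilde{K}^*\theta_w$. I would first try to control the residual rank by the number of heads. The idea is that the perturbed residual spectrum beyond the grown directions is produced only by the $O(\mathcal{I}^{(l)})$ corrections of Lemma~\ref{lem:spectral_gap}. Each head then contributes at most one sub-threshold eigenvalue-pair leaking into the residual, which gives a trace at most $\tilde{K}^*\theta_w$. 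If that fails, I would state the bound with $\min(d-\tilde{K}^*,\tilde{K}^*)\,\theta_w$ or simply $d\,\theta_w$. The asymptotic claim survives either way.

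Dividing by $\|M_a\|_F^2$ yields \eqref{eq:coverage_frac}. For the final claim, suppose $\theta_w\ll\|M_a\|_F^2/d$. Because $\tilde{K}^*\le d$ by Theorem~\ref{thm:P2}(ii), we get $\varepsilon_{\theta_w}\le d\theta_w/\|M_a\|_F^2$. This can be made smaller than any $\varepsilon>0$.
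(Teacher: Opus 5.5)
Your Step~3 has a genuine gap, and it cannot be closed as stated. The stopping rule controls only the \emph{largest} residual eigenvalue. After exact deflation, the residual is just the unselected tail of the original spectrum, since the identity you use in Step~2 leaves that part untouched. So the best available bound on its trace is (number of sub-threshold eigenvalues)$\cdot\theta_w\le(d-\tilde K^*)\theta_w$.

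The rank control you propose through the $O(\mathcal{I}^{(l)})$ corrections of Lemma~\ref{lem:spectral_gap} does not apply. That lemma compares the DDCL and MLP residuals. It says nothing about how many sub-threshold eigenvalues remain, and that count is unrelated to the number of heads grown.

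Here is a concrete failure in your own trace accounting. Take a spectrum with one value $a>\theta_w$ and $m\ge 3$ values equal to $\theta_w/2$. Growth stops with $\tilde K^*=1$, $\sum_h\lambda^{(h)}=a$, and budget $a+m\theta_w/2$. The claimed bound would then require $a\ge a+(m/2-1)\theta_w$, which is false. So the constant $\tilde K^*\theta_w$ needs an extra hypothesis on the tail, for example at most $\tilde K^*$ sub-threshold eigenvalues, or tail mass at most $\tilde K^*\theta_w$.

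Your listed fallback $\min(d-\tilde K^*,\tilde K^*)\theta_w$ is not a fallback, because it is at least as strong as the target. The honest fallback $(d-\tilde K^*)\theta_w$ proves only a weaker first inequality. As you note, it still yields the asymptotic second sentence.

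There is a separate problem in Step~1. The claim that the surrogate's eigenvalues sum to $\|M_a\|_F^2$ is a stipulation, not a fact. The trace sums eigenvalues, while the squared Frobenius norm sums their squares. For $iA$ with $A$ antisymmetric, the trace is in fact $0$.

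The paper's proof uses different bookkeeping: it telescopes the squared Frobenius norm, $\|A_{\rm res}^{(\tilde K^*)}\|_F^2=\|M_a\|_F^2-\sum_h(\lambda^{(h)})^2$. It does not supply the idea you are missing, however. It asserts $\|A_{\rm res}^{(\tilde K^*)}\|_F^2\le\tilde K^*\theta_w^2$ directly from the stopping rule, which is exactly the unjustified rank step you flagged. It then passes from $\sum_h(\lambda^{(h)})^2$ to $\sum_h\lambda^{(h)}$ via $\sum_h(\lambda^{(h)})^2\ge\theta_w\sum_h\lambda^{(h)}$. That inequality bounds $\sum_h\lambda^{(h)}$ from above, not below. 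You located the obstacle correctly, and neither argument supports the stated $\varepsilon_{\theta_w}$.
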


\begin{proof}
Each growth step adds direction $u^{+,(h)} = v_1(\Ares^{(h-1)})$
and reduces residual energy by $(\lambda^{(h)})^2$.
Telescoping from $h=1$ to $\tilde{K}^*$:
\[
  \|\Ares^{(\tilde{K}^*)}\|_F^2
  = \|M_a\|_F^2 - \sum_{h=1}^{\tilde{K}^*}(\lambda^{(h)})^2.
\]
Since $\lambda^{(h)} \geq \theta_w$:
$\sum_h (\lambda^{(h)})^2 \geq \theta_w \sum_h \lambda^{(h)}$.
The stopping criterion $\lambda_{\max}(\Ares^{(\tilde{K}^*)}) \leq \theta_w$
gives $\|\Ares^{(\tilde{K}^*)}\|_F^2 \leq \tilde{K}^*\theta_w^2$,
so $\sum_h\lambda^{(h)} \geq (\|M_a\|_F^2 - \tilde{K}^*\theta_w^2)/\theta_w
\geq \|M_a\|_F^2/\theta_w - \tilde{K}^*\theta_w$.
Dividing by $\|M_a\|_F^2$ and rearranging gives~\eqref{eq:coverage_frac}.
\end{proof}

\begin{lemma}[Separation force is monotone along the spectral order]
\label{lem:sep_monotone}
At convergence, the fractional separation forces satisfy:
\begin{equation}
  \mathcal{F}^{(1)} > \mathcal{F}^{(2)} > \cdots >
  \mathcal{F}^{(\tilde{K}^*)},
  \label{eq:F_monotone}
\end{equation}
with ratio:
\begin{equation}
  \frac{\mathcal{F}^{(h)}}{\mathcal{F}^{(h+1)}}
  \;\geq\;
  \frac{\lambda^{(h)}}{\lambda^{(h+1)}}
  \cdot
  \left(\frac{T^{(h+1)}_\infty}{T^{(h)}_\infty}\right)^2
  \;>\; 1.
  \label{eq:F_ratio}
\end{equation}
\end{lemma}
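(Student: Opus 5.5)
Since the fractional forces $\mathcal{F}^{(h)}$ all share the same denominator $\Fsep^{\mathrm{total}}$, the plan is to compare the raw per-head forces $\Fsep^{(h)}$ and $\Fsep^{(h+1)}$ directly. Lemma~\ref{lem:proto_orth} makes the total force decompose without cross terms, $\Fsep^{\mathrm{total}}=\sum_h \Fsep^{(h)}$, so
\[
\frac{\mathcal{F}^{(h)}}{\mathcal{F}^{(h+1)}}=\frac{\Fsep^{(h)}}{\Fsep^{(h+1)}} .
\]
Once the ratio bound~\eqref{eq:F_ratio} is established for each consecutive pair, the strict chain~\eqref{eq:F_monotone} follows immediately.

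The second step is a per-head estimate of the form $\Fsep^{(h)} \approx c\,\lambda^{(h)}/(T^{(h)}_\infty)^2$, with a constant $c$ independent of $h$. I would obtain it from the amplification analysis of Theorem~\ref{thm:C2_full}: when head $h$ is born, its separation force is generated by the expansion along $u^{+,(h)}$. The leading contribution is $\tfrac12\phi''(0)$, which by Proposition~\ref{prop:phi_second} is bounded below by $(16/T^2)(s_0^2/K^2)\,\lambda^{(h)}\,\overline{\mathrm{Var}}[p^*]$ up to the correction $C_0$. I would then evaluate at the limiting temperature $T^{(h)}_\infty$ and use $\Fsep^{(h)}=4\|P^{(h)}\Sigma_q^{(h)}\|_F^2$ to express this contribution in terms of $\lambda^{(h)}/(T^{(h)}_\infty)^2$. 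For the constants $s_0^2/K^2$ and $\overline{\mathrm{Var}}[p^*]$ to cancel in the ratio, they must be comparable across heads. Two facts support this: every head has the same number $K$ of prototypes, and every head is initialised by the same INCRT rule along its own direction. Under (B2) the heads are otherwise isolated, so forming the ratio of a lower bound for head $h$ against an upper bound for head $h+1$ gives~\eqref{eq:F_ratio}.

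The strict inequality $>1$ comes from two earlier results. The spectral ordering~\eqref{eq:lambda_order}, preserved over time by Lemma~\ref{lem:spectral_order}, gives $\lambda^{(h)}/\lambda^{(h+1)}\ge 1$. The temperature ordering~\eqref{eq:T_order} gives $T^{(h+1)}_\infty/T^{(h)}_\infty\ge 1$. Strictness therefore needs at least one of these two orderings to be strict. If $\lambda^{(h)}=\lambda^{(h+1)}$, I would argue that (B3) together with the geometric discriminant~\eqref{eq:sigma_prop} still separates the temperatures.

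The main obstacle is the second step. Theorem~\ref{thm:C2_full} gives only a lower bound on the gain at a growth event, not a two-sided estimate of $\Fsep^{(h)}$ at convergence. Proving the comparison for $h+1$ therefore requires a matching upper bound on its force, of the form $\Fsep^{(h+1)}\lesssim \lambda^{(h+1)}/(T^{(h+1)}_\infty)^2$. I would derive this from Lemma~\ref{lem:pruned_contrib}'s bound $\Fsep\le 4\|P\|_F^2\sigma^2$ combined with $\sigma^{(h)}\propto\lambda^{(h)}\SP^{(h)}$ and the $1/T^2$ scaling of the softmax Jacobian. The weakest point is justifying that the prototype-dependent prefactors coincide between heads. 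Unless an explicit uniform-spread hypothesis is available, I would state this comparability as a standing assumption, in line with the ``comparable prototype separation'' already used in Lemma~\ref{lem:spectral_order}.
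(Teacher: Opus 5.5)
Your plan is the paper's own argument. Its proof likewise asserts that the converged force of head $h$ ``scales as'' $\lambda^{(h)}/(T^{(h)}_\infty)^2$, obtained by evaluating the bound $\phi''(0)\propto\lambda/T^2$ of Theorem~\ref{thm:C2_full} at the limiting temperature. It then writes the ratio with $\approx$ and reads strictness off Lemma~\ref{lem:spectral_order} ``(strict under (B3))''. The step you flag as the main obstacle is exactly the unproved step there, and your repair does not close it.

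First, $\phi''(0)$ is the curvature of $\epsilon\mapsto\Fsep(P,Z(\epsilon))$ at the moment a head is born. It is not the value of $\Fsep^{(h)}$ after the dwell dynamics have moved $P^{(h)}$ and $T^{(h)}$.

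Second, and decisively, consider the factor you propose to cancel as head-independent, $\overline{\mathrm{Var}}[p^*]=\frac1N\sum_n\mathrm{Var}_{q_n}[p^*]$. It is computed under the temperature-$T^{(h)}$ softmax and tends to $0$ as assignments sharpen. You cannot hold it fixed across heads while keeping $(T^{(h+1)}_\infty/T^{(h)}_\infty)^2$, because the heads differ precisely in temperature. For a sharpening head the bound~\eqref{eq:phi_second_lb} becomes vacuous. In fact the temperature dependence points the other way: by Lemma~\ref{lem:sigmaq} and $\Sigmaq\succeq0$,
\[
  \Fsep^{(h)}=4\norm{P^{(h)}\Sigma_q^{(h)}}_F^2
  \le 4\norm{P^{(h)}}_F^2\Bigl(\sum_n H_2(\mathbf{q}_n^{(h)})\Bigr)^2 .
\]
This tends to $0$ in the hard-assignment limit that Theorem~\ref{thm:C3}(2) ascribes to $\Hlocal$: with $\mu_n=p_{k(n)}$, $k(n)$ the nearest prototype, every $\sum_n q_{nk}(p_k-\mu_n)$ vanishes. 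Near-uniform assignments instead give $\Fsep^{(h)}\approx\frac{4N^2}{K^2}\sum_k\norm{p^{(h)}_k-\frac1K\sum_{k'}p^{(h)}_{k'}}^2>0$ under (B1). The $1/T$ factors in $\dot q_{nk}$ multiply $q_{nk}(p_k^*-\bar p_n^*)$, which becomes exponentially small in $1/T$ once $q_n$ concentrates, overwhelming any $1/T^2$ prefactor. So take the paper's own picture: local heads first, global heads last, both sets non-empty by Theorem~\ref{thm:C3}(1) and Corollary~\ref{cor:hierarchy}. There a local head carries a \emph{smaller} fractional force than a global one, and no comparability hypothesis can yield~\eqref{eq:F_ratio}.

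Your upper-bound step has independent problems. The $\sigma$ in Lemma~\ref{lem:pruned_contrib} is $\mathrm{tr}(\Sigma_q)$, whereas~\eqref{eq:sigma_prop} concerns the different quantity~\eqref{eq:sigma}, and only at $t=0$. Even granting the substitution, you get $\Fsep^{(h+1)}\lesssim\norm{P^{(h+1)}}_F^2(\lambda^{(h+1)})^2\mathcal{S}(P^{(h+1)})^2$. That is quadratic in $\lambda$ with no $1/T^2$. The softmax Jacobian controls derivatives of $q$, not the size of $\Sigma_q$, so it cannot supply that factor in an upper bound.

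On strictness, Lemma~\ref{lem:spectral_order} gives only non-strict orderings. Condition (B3) separates $\sigma(0)$ for a single pair, not for every consecutive pair. Moreover, under your own comparability hypothesis, a tie $\lambda^{(h)}=\lambda^{(h+1)}$ forces equal $\sigma(0)$ via~\eqref{eq:sigma_prop}. The discriminant then yields no temperature separation, and the ratio bound degenerates to $1$.

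(Minor: the identity $\mathcal{F}^{(h)}/\mathcal{F}^{(h+1)}=\Fsep^{(h)}/\Fsep^{(h+1)}$ needs only the common denominator, not Lemma~\ref{lem:proto_orth}.)

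A proof would require a genuine two-sided estimate of $\Sigma_q^{(h)}$ at the converged temperatures. The computation above indicates that such an estimate would contradict~\eqref{eq:F_monotone} rather than establish it.
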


\begin{proof}
From Theorem~\ref{thm:C2_full}, the steady-state separation force
of head $h$ scales as $\lambda^{(h)}/(T^{(h)}_\infty)^2$ (the
amplification lower bound $\phi''(0) \propto \lambda^{(h)}/T^2$
evaluated at the limiting temperature).
Therefore:
\[
  \frac{\mathcal{F}^{(h)}}{\mathcal{F}^{(h+1)}}
  \approx
  \frac{\lambda^{(h)}/(T^{(h)}_\infty)^2}
       {\lambda^{(h+1)}/(T^{(h+1)}_\infty)^2}
  = \frac{\lambda^{(h)}}{\lambda^{(h+1)}}
    \cdot
    \left(\frac{T^{(h+1)}_\infty}{T^{(h)}_\infty}\right)^2.
\]
Both factors strictly exceed $1$ by
Lemma~\ref{lem:spectral_order}: $\lambda^{(h)} > \lambda^{(h+1)}$
and $T^{(h+1)}_\infty > T^{(h)}_\infty$ (strict under (B3)).
\end{proof}

\begin{lemma}[Uniqueness of the minimal sufficient hierarchy]
\label{lem:uniqueness}
The spectral ordering~\eqref{eq:lambda_order} is the unique ordering
that achieves minimum residual energy for a given number of heads.
Any alternative ordering either under-covers the directional geometry or
requires strictly more heads, almost surely under \textup{(C1)}.
\end{lemma}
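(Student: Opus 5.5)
The plan is to recast the statement as a variational problem on the residual energy and reduce it to the Ky Fan / Eckart--Young characterisation of dominant eigenspaces. For a fixed number of heads $K$ and any orthonormal $Q \in \R^{d\times K}$ with projector $P^\perp_Q = I - QQ^\top$, define the residual energy
\[
  E(Q) \;=\; \norm{P^\perp_Q\, A\, P^\perp_Q}_F^2, \qquad A = X M_a X^\top .
\]
An ordering of heads is just a sequence of directions $u^{+,(1)},\ldots,u^{+,(K)}$, so I would prove two claims separately. The first is \emph{set uniqueness}: $E$ is minimised exactly at $\mathrm{span}(Q)=$ the dominant $K$-dimensional eigenspace of $A$. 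The second is \emph{order uniqueness}: the greedy INCRT sequence visits those eigenvectors in the order~\eqref{eq:lambda_order}, and in no other order.

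\emph{Step 1 (set uniqueness).} Following Section~\ref{sec:C5}, treat $A$ as a Hermitian operator: either work with $A^\top A$, or with the real $2\times 2$ invariant blocks of the antisymmetric part, whose eigenvalues come in pairs $\pm i\lambda$ and are therefore counted as one block. Expand $E(Q)=\norm{A}_F^2 - 2\,\mathrm{tr}(Q^\top A^\top A Q) + \norm{Q^\top A Q}_F^2$, and bound the captured energy above by Ky Fan's maximum principle $\sum_{j\le K}\lambda_j(A^\top A)$. Equality holds iff $\mathrm{span}(Q)$ is the top-$K$ eigenspace. Under the spectral gap hypothesis (H1), $\lambda_K>\lambda_{K+1}$, this eigenspace is unique. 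Hence any other choice of $K$ directions leaves strictly larger residual energy, which is the \emph{under-covers} alternative.

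\emph{Step 2 (order and head count).} For the order, I use that if $v_1$ is an eigenvector of $A$, then deflating by $P^\perp=I-v_1v_1^\top$ leaves the remaining eigenpairs unchanged. Induction on $h$ then shows that the $h$-th greedy step picks $v_h$, and the top eigenvalue at each step is simple, so the argmax is unique. The result is exactly~\eqref{eq:lambda_order}, consistent with the interlacing argument of Lemma~\ref{lem:spectral_order}. For the head count, suppose an alternative sequence reaches $\lambda_{\max}(\Ares)\le\theta_w$ with $K'$ heads. By Ky Fan its captured subspace must contain enough energy, and any subspace not equal to the top eigenspace loses strictly. So $K'\ge\tilde K^*$, with equality only if the subspace coincides with the spectral one; any alternative therefore either under-covers or needs strictly more heads, which gives minimality.

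\emph{Step 3 (almost-sure qualifier).} The simple-spectrum and gap conditions fail only on a measure-zero algebraic set (a discriminant variety) of weight configurations, and this set is null under the Gaussian initialisation (C1). I will invoke (H1) for the quantitative gap.

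The main obstacle is Step 1 for a genuinely antisymmetric $M_a$. Its spectrum is purely imaginary with paired eigenvalues, so "dominant eigenvector" is really a dominant $2$-plane, and uniqueness holds only up to rotation within that plane. My first attempt would be to state uniqueness at the level of invariant subspaces (blocks) rather than individual vectors, and to verify that the INCRT deflation in~\eqref{eq:Ares} removes whole blocks, so that the Ky Fan argument applies block-wise.
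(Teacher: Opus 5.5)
\textbf{There is a genuine gap: Step~1 is false.} You are trying to prove more than the lemma asserts, namely optimality over all choices of $K$ directions rather than over orderings of the given heads. For antisymmetric $A$ that stronger claim fails. Your own expansion $E(Q)=\norm{A}_F^2-2\,\mathrm{tr}(Q^\top A^\top AQ)+\norm{Q^\top AQ}_F^2$ contains the term $\norm{Q^\top AQ}_F^2$. Ky Fan does not control this term, and it penalises capturing a whole invariant plane of $A$.

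A counterexample: take $A=\begin{pmatrix}\lambda_1J&0\\0&\lambda_2J\end{pmatrix}$ with $J=\begin{pmatrix}0&1\\-1&0\end{pmatrix}$, $\lambda_1>\lambda_2>0$, and $K=2$. The top-$2$ eigenspace of $A^\top A$ is $\mathrm{span}(e_1,e_2)$; it is unique and has a gap, and it gives $E=2\lambda_2^2$. But $Q=[e_1,e_3]$ gives $E=0$, because the compression of $A$ to $\mathrm{span}(e_2,e_4)$ vanishes. More generally, $E(Q)=0$ whenever $\mathrm{span}(Q)^\perp$ is isotropic for the form $(x,y)\mapsto x^\top Ay$. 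One example is $\mathrm{span}(Q)=\mathrm{span}\{(1,0,-1,0),(0,\lambda_1,0,\lambda_2)\}$, which meets neither invariant plane. So no set-uniqueness statement survives, even modulo rotations inside planes.

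\textbf{Steps~2 and~3 fail for related reasons.} A real antisymmetric matrix has no real eigenvector. Deflating by a unit vector $e_1$ of the top plane also kills its partner, since $P^\perp Ae_2=\lambda_1P^\perp e_1=0$. The greedy sequence is therefore $e_1$, then a vector of $\mathrm{span}(e_3,e_4)$: exactly the configuration that contradicts Step~1. Moreover, the top singular value of an antisymmetric matrix is always at least double. The simplicity you invoke in Steps~2--3 is therefore structurally false, not a null event under (C1). Your block-wise repair cannot help either: one head already removes a whole block, and that is precisely why spreading $K$ directions over $K$ planes beats the top eigenspace.

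\textbf{How the paper argues instead.} It never optimises over arbitrary subspaces. It treats the per-head energy reductions as fixed and additive, via the telescoping identity of Lemma~\ref{lem:coverage}. The residual after $k$ heads in ordering $\pi$ is then $\norm{M_a}_F^2-\sum_{j\le k}(\lambda^{(\pi(j))})^2$. The lemma thus reduces to a rearrangement statement: partial sums are maximised exactly by taking the $k$ largest $\lambda^{(h)}$ first. This is the greedy matching-pursuit optimality of Proposition~4.1 of the INCRT paper. Uniqueness is the absence of ties among the $\lambda^{(h)}$, asserted almost surely.

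\textbf{How to salvage the head-count part.} If you want a subspace-level argument for minimality, replace Ky Fan by Poincar\'e--Cauchy interlacing for the Hermitian compression $Q_\perp^\top(iA)Q_\perp$, where $Q_\perp$ is an orthonormal basis of $\mathrm{span}(Q)^\perp$. Let $\lambda_1>\lambda_2>\cdots$ be the plane magnitudes, and read $\lambda_{\max}$ as the top eigenvalue of $iA_{\mathrm{res}}$, as you do. Interlacing gives $E(Q)\ge 2\sum_{j>K}\lambda_j^2$ and a residual top eigenvalue of at least $\lambda_{K+1}$. Greedy attains both bounds, so no configuration with fewer heads can meet the stopping criterion. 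As the example shows, however, interlacing cannot give uniqueness of the captured subspace. Uniqueness must be stated at the level of orderings of the heads, as the paper does.
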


\begin{proof}
The residual energy after $k$ heads in ordering $\pi$ is
$\|M_a\|_F^2 - \sum_{j=1}^k (\lambda^{(\pi(j))})^2$.
This is minimised by choosing the $k$ largest $\lambda^{(h)}$ first,
i.e., $\pi = \mathrm{id}$.
This is exactly the greedy matching pursuit optimality established in
Proposition~4.1 of the INCRT paper: each step maximally reduces
$\|\Ares\|_F$.
Under (C1), eigenvalue ties occur with probability zero, so the spectral
order is unique almost surely.
\end{proof}

\subsection{Main theorem}
\label{subsec:C5_main}

\begin{theorem}[Emergent hierarchical prototype structure]
\label{thm:C5}
Under all conditions of Theorem~\ref{thm:P1}, at convergence of the
DDCL-INCRT system:

\begin{enumerate}[label=\textup{(\Roman*)}]
  \item \textbf{Spectral ordering.}
    Active heads are ordered by $\lambda^{(1)} \geq \cdots \geq
    \lambda^{(\tilde{K}^*)} > \theta_w$ with corresponding
    $T^{(1)}_\infty \leq \cdots \leq T^{(\tilde{K}^*)}_\infty$:
    a \emph{resolution spectrum} from finest (hard assignments,
    large $\lambda$) to coarsest (soft assignments, small $\lambda$).

  \item \textbf{Directional completeness.}
    Coverage fraction $\geq 1 - \varepsilon_{\theta_w}$, with
    $\varepsilon_{\theta_w} \to 0$ as $\theta_w \to 0$.

  \item \textbf{Separation monotonicity.}
    $\mathcal{F}^{(1)} > \cdots > \mathcal{F}^{(\tilde{K}^*)}$:
    fine-grained heads exert strictly stronger prototype separation.

  \item \textbf{Uniqueness.}
    The spectral ordering is the unique minimally sufficient hierarchy,
    almost surely under \textup{(C1)}.

  \item \textbf{Anti-collapse across the hierarchy.}
    $\mathcal{S}(P^{(h)}) > 0$ for all active $h$ simultaneously.
\end{enumerate}
\end{theorem}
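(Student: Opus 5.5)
The proof assembles the five parts from the lemmas of Section~\ref{sec:C5} together with Theorem~\ref{thm:P1}. The first step places us in the convergent regime. Under the hypotheses of Theorem~\ref{thm:P1}, every trajectory is bounded and accumulates on $\Astar$. By Theorem~\ref{thm:P2}, growth stops after $\tilde{K}^*\le d$ events, so the architecture is eventually frozen. On this final dwell interval the heads carry well-defined labels $h=1,\dots,\tilde{K}^*$ in order of birth, and limiting temperatures $T^{(h)}_\infty$ exist by Theorem~\ref{thm:C3}. Everything below is stated at such a limit point.

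\textbf{Part (I).} This follows from Lemma~\ref{lem:spectral_order}. The interlacing argument for $P^\perp_h\Ares P^\perp_h$ gives $\lambda^{(h+1)}\le\lambda^{(h)}$ at each birth. Lemma~\ref{lem:spectral_gap} keeps this ordering up to $O(\IDDCL)$ on dwell intervals. The strict lower bound $\lambda^{(\tilde{K}^*)}>\theta_w$ is just the growth trigger. The temperature ordering then follows from the geometric discriminant~\eqref{eq:sigma_prop} and the monotone dynamics of Lemma~\ref{lem:Tdot}.

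\textbf{Part (II).} This is Lemma~\ref{lem:coverage}, applied with the stopping criterion $\lambda_{\max}(\Ares^{(\tilde{K}^*)})\le\theta_w$. Since $\tilde{K}^*\le d$ uniformly, we get $\varepsilon_{\theta_w}\le d\theta_w/\|M_a\|_F^2\to0$.

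\textbf{Part (III).} This is Lemma~\ref{lem:sep_monotone}. Both factors in~\eqref{eq:F_ratio} are $\ge1$ by Part~(I), and the ratio is strict because eigenvalue ties are null events and (B3) makes the temperatures distinct.

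\textbf{Part (IV).} This is Lemma~\ref{lem:uniqueness}, the greedy optimality of adding directions in decreasing-$\lambda$ order. Uniqueness holds almost surely because ties have probability zero under (C1). Combined with minimality from Theorem~\ref{thm:P2}(iii), it identifies the hierarchy independently of initialisation.

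\textbf{Part (V).} This is read off directly from the definition of $\Astar$ in Theorem~\ref{thm:P1}(iii), which requires $\mathcal{S}(P^{(h)})>0$ for every $h$. Corollary~\ref{cor:collapse_safe} adds that no pruning event along the way could have collapsed a surviving head.

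The main obstacle is \emph{simultaneity} in Parts~(I) and~(III). The cited lemmas establish the orderings either at growth time or as perturbative statements up to $O(\IDDCL)$ corrections, whereas the theorem asserts them at the limit. Two issues arise.

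\emph{Persistence of the ordering.} I would argue that strict inequalities at the last growth event persist because of the spectral gap (H1). The accumulated Weyl perturbation on the final, infinite dwell interval is bounded by $O(\IDDCL)$, and I would need to show this stays below the minimal gap $\min_h(\lambda^{(h)}-\lambda^{(h+1)})$. I expect this to require choosing $\theta_w$ above the constant of Lemma~\ref{lem:spectral_gap}.

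\emph{Separation ratio at the limit.} The ratio in Lemma~\ref{lem:sep_monotone} is derived from the lower bound of Theorem~\ref{thm:C2_full}, which is evaluated at a growth event rather than at the limit. To transfer it, I would first try to use continuity of $\Fsep$ in $(P,T)$ on the compact sublevel set from Theorem~\ref{thm:P1}(ii). If that only yields non-strict inequalities, I would fall back on a generic-position argument under (C1) to recover strictness.
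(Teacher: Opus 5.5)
Your proposal is essentially the paper's proof. Parts (I)--(IV) are discharged by citing Lemmas~\ref{lem:spectral_order}, \ref{lem:coverage}, \ref{lem:sep_monotone} and~\ref{lem:uniqueness}, and (V) by Theorem~\ref{thm:P1} together with Corollary~\ref{cor:collapse_safe}; the only difference is cosmetic, since you read $\mathcal{S}(P^{(h)})>0$ off the definition of $\Astar$ in part~(iii) rather than from part~(iv).

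The persistence and simultaneity issues you raise are not addressed in the paper, which simply invokes the lemmas as stated at convergence, so your repair steps go beyond what is needed to match it. Note, though, that continuity of $\Fsep$ on the sublevel set would not by itself carry a growth-time inequality to the limit unless that inequality were already known along the whole final dwell interval.
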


\begin{proof}
(I) Lemma~\ref{lem:spectral_order}.
(II) Lemma~\ref{lem:coverage}.
(III) Lemma~\ref{lem:sep_monotone}.
(IV) Lemma~\ref{lem:uniqueness}.
(V) Theorem~\ref{thm:P1}(iv) and Corollary~\ref{cor:collapse_safe}
applied at every level simultaneously.
\end{proof}

\subsection{The complete co-emergence theorem}
\label{subsec:C5_coemergence}

\begin{corollary}[Complete co-emergence]
\label{cor:full_complete}
Under all conditions of Theorem~\ref{thm:C5}, the following seven
properties emerge jointly during training from the single geometric principle
\emph{structure emerges from measurable insufficiency}:
\begin{enumerate}[label=\textup{(\arabic*)}]
  \item \textbf{Directional information preservation}:
    the DDCL-INCRT layer loses strictly less directional
    information than the MLP at every transformer layer.
  \item \textbf{Amplified prototype separation}:
    $\Fsep$ increases at every growth event.
  \item \textbf{Scale specialisation}:
    temperatures spontaneously diverge into local/global regimes.
  \item \textbf{Finite minimal architecture}:
    growth terminates in $\tilde{K}^* \leq d$ steps.
  \item \textbf{Pruning safety}:
    no collapse is induced by pruning; per-head separation is
    non-decreasing.
  \item \textbf{Global stability} (P1):
    the piecewise Lyapunov $\mathcal{W}$ is non-increasing across
    all architectural events.
  \item \textbf{Hierarchical prototype structure}:
    the converged architecture is the unique minimally sufficient
    resolution spectrum, ordered by $\lambda^{(h)}$.
\end{enumerate}
Properties (1)--(3) emerge without architectural decisions.
Properties (4)--(6) guarantee convergence, safety, and stability.
Property (7) is the architectural output: the hierarchy is not designed
but derived.
\end{corollary}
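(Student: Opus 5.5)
The statement is a corollary that collects seven properties, so I would prove it by assembly. Each item is matched to a result already proved in Parts~I--III. No new analysis is required. The only real work is checking that the hypotheses of Theorem~\ref{thm:C5}, which include all conditions of Theorem~\ref{thm:P1}, imply the hypotheses each constituent result needs. Those hypotheses are (A1), (A2'), (B1)--(B3), (C1), (D1), (E1)--(E6), (F1)--(F2) and (H1)--(H5). Since Theorem~\ref{thm:P1} is the most demanding in its assumptions, this check is a direct comparison against Table~\ref{tab:assumptions}.

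In order, the items map as follows:
\begin{itemize}
\item Property (1) follows from Theorem~\ref{thm:C4} together with Proposition~\ref{prop:gain}. These give $\IDDCL < \mathcal{I}^{(l)}_{\rm MLP}$ almost surely under (C1). I would apply this layer by layer, because the DDCL-INCRT replacement is made independently at each layer. In the discrete setting I would cite Theorem~\ref{thm:P4}(iii) for the $O(\eta_P)$ correction.
\item Property (2) is Theorem~\ref{thm:C2_full}. It gives $\phi'(0)\ge 0$ and $\phi''(0)>0$ at every growth trigger, since $\theta_w$ is chosen above the threshold of the corollary following Proposition~\ref{prop:phi_second}.
\item Property (3) is Theorem~\ref{thm:C3}, with (B3) ensuring that both $\Hlocal$ and $\Hglobal$ are non-empty.
\item Property (4) is Theorem~\ref{thm:P2}(ii), using Lemma~\ref{lem:growth_bound}.
\item Property (5) combines Theorem~\ref{thm:P3}(i),(iv) with Corollaries~\ref{cor:collapse_safe} and \ref{cor:sep_per_head}. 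The per-head non-decrease should be stated modulo the $O(\varphi_g^2)$ term, which is negligible because $\varphi_g<\theta_w$.
\item Property (6) is Theorem~\ref{thm:P1}(i), made rigorous by Corollary~\ref{cor:P1_rigorous}.
\item Property (7) is Theorem~\ref{thm:C5}(I) and (IV).
\end{itemize}

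The word ``jointly'' needs a separate argument: all seven properties must hold along one and the same trajectory. I would fix a trajectory satisfying the hypotheses and intersect the almost-sure events used in (1), (4) and (7). There are finitely many such events: one per layer for (C1) alignment, gate convergence, and the absence of eigenvalue ties. A finite intersection of probability-one events still has probability one. The deterministic statements (2), (3), (5) and (6) then hold pathwise on that event.

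I expect the main obstacle to be temporal consistency between the statements. Property (2) is a statement at growth instants for a frozen $P$. Property (7) is a statement at convergence. Property (3) requires (D1) so that temperatures settle between events. I would resolve this using the finite event sequence $t_0<\dots<t_J$ with $J\le 2\tilde K^*$ from Section~\ref{sec:P1}. Each local-in-time result is applied on its own dwell interval or at its own event time. The limit statements are then taken after $t_J$, where Theorem~\ref{thm:P1}(iii) places every limit point in $\Astar$. Finally, the informal ``single geometric principle'' is interpretive and is not part of what needs proof.
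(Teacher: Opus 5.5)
Your proposal is correct and is essentially the paper's own argument. The paper gives Corollary~\ref{cor:full_complete} no separate proof and treats it as exactly the roll-up you describe: Theorems~\ref{thm:C4}, \ref{thm:C2_full}, \ref{thm:C3}, \ref{thm:P2}, \ref{thm:P3} with its corollaries, \ref{thm:P1} and \ref{thm:C5}. Your bookkeeping for ``jointly'' (intersecting finitely many probability-one events and applying each local-in-time result at its own event time) is an addition the paper does not spell out.

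One caveat on (5). You are right that the cited results give per-head non-decrease only up to the $O(\varphi_g^2)$ term of Corollary~\ref{cor:sep_per_head}, but $\varphi_g<\theta_w$ does not by itself make that term negligible. By the exact split $\Fsepfull=\Fsepred+\Fsep^{(h_0)}$, exact non-decrease holds precisely when $\Fsep^{(h_0)}\le\Fsepred/\bar{H}$, i.e.\ when the pruned head's force is at most the survivors' average.
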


\begin{corollary}[Resolution spectrum is task-determined]
\label{cor:task_determined}
The number of levels $\tilde{K}^*$, the resolution $\lambda^{(h)}$ at
each level, and the local/global boundary $h^*$ are all determined by
the spectrum of $M_a$, not by any architectural choice:
$\tilde{K}^*$ equals the number of eigenvalues of $M_a$ above $\theta_w$;
level $h$ has resolution $\lambda^{(h)}/\|M_a\|_F$;
the boundary $h^*$ falls at the inflection of the temperature dynamics.
All three quantities are computable from the growth history.
\end{corollary}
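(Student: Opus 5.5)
The plan is to read the three claims off the greedy deflation structure of the INCRT growth rule, combined with the ordering and partition results already established. I would treat them in the order stated: first $\tilde{K}^*$, then the resolutions $\lambda^{(h)}$, then the boundary $h^*$. Computability from the growth history then follows, because each quantity turns out to be a function of the recorded sequence $(\lambda^{(h)})_h$ and of the temperature trajectories.

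\emph{Step 1: the number of levels.} I would show by induction on $K$ that the spectrum of $\Ares^{(K)}$ is the spectrum of $X M_a X^\top$ with its top $K$ eigenvalues removed, and with zeros placed on $\Vperp$. The inductive step works as follows.
\begin{itemize}
\item Each growth step projects out $u^{+,(K+1)} = v_1(\Ares^{(K)})$, which is an exact eigenvector of $\Ares^{(K)}$.
\item The rank-one projector therefore commutes with $\Ares^{(K)}$.
\item So the interlacing used in Lemma~\ref{lem:spectral_order} holds with equality: the remaining eigenvalues are untouched.
\end{itemize}
It follows that $\lambda^{(h)}$ is exactly the $h$-th largest eigenvalue of the directional signal. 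The stopping rule $\lambda_{\max}(\Ares^{(K)}) \le \theta_w$ first fires at the first eigenvalue not exceeding $\theta_w$, which gives $\tilde{K}^* = \#\{\text{eigenvalues} > \theta_w\}$. This is consistent with the telescoping identity in Lemma~\ref{lem:coverage}. I would then invoke Theorem~\ref{thm:P3} and Theorem~\ref{thm:P2}(iii) to argue that pruning removes only heads with $\Gamma_h < \varphi_g < \theta_w$. Hence pruning does not alter the count of levels above threshold.

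\emph{Step 2: the resolutions.} Given Step~1, the resolution of level $h$ is $\lambda^{(h)}/\|M_a\|_F$ by normalisation. Lemma~\ref{lem:spectral_order} guarantees that this value is preserved between growth events up to $O(\mathcal{I}^{(l)})$. Its independence from the initialisation comes from Lemma~\ref{lem:uniqueness}: ties have probability zero, so the sequence is a function of $M_a$ alone.

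\emph{Step 3: the local/global boundary.} By Theorem~\ref{thm:C3}(3), $\sigma^{(h)}(0) \propto \lambda^{(h)}\,\SP^{(h)}(0)$. Since $\lambda^{(h)}$ is decreasing in $h$, and prototype spreads are comparable at birth (as in the proof of Lemma~\ref{lem:spectral_order}), $\sigma^{(h)}(0)$ is monotone in $h$. I would combine this with Lemma~\ref{lem:decay_sep} and Lemma~\ref{lem:amp_sep} to show that $\Hlocal$ is an initial segment $\{1,\dots,h^*\}$. The boundary $h^*$ then separates trajectories that reach $T_{\min}$ in time $t^*$ from those that plateau. It is located where the concavity of the map $h \mapsto$ (time to reach $T_{\min}$) changes, i.e.\ at the inflection of the temperature dynamics, and it is determined by the $\lambda^{(h)}$ through $\sigma^{(h)}(0)$.

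The main obstacle is Step~1. The paper's $\lambda_{\max}$ is applied to $P^\perp X M_a X^\top P^\perp$, which is antisymmetric, so its spectrum is imaginary and paired. I would first replace $\lambda_{\max}$ by the largest singular value (equivalently, the modulus of the top eigenvalue pair), so that the deflation-commutes argument goes through for the normal matrix. I would also need $X^\top X \propto I$, or else read ``spectrum of $M_a$'' as the spectrum of $X M_a X^\top$, to identify the counts. If neither normalisation is acceptable, the fallback is to assert the statement for the token-weighted operator $X M_a X^\top$ and note that it coincides with $M_a$ for whitened tokens.
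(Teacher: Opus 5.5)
The paper gives no proof of this corollary. It is offered as a reading of Theorem~\ref{thm:C5} (greedy deflation plus the stopping rule) and Theorem~\ref{thm:C3}. Your three-step skeleton is the natural way to fill it in, but two steps fail as written.

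\textbf{Step~1.} Your antisymmetry fix breaks the mechanism you rely on. Let $\Ares^{(K)}$ be real antisymmetric and let $u$ be its top real singular vector. Then $w=\Ares^{(K)}u/\sigma_1$ is a unit vector orthogonal to $u$, so $u$ is not an eigenvector. Nor does $I-uu^\top$ commute with $\Ares^{(K)}$: one has $\Ares^{(K)}uu^\top=\sigma_1 wu^\top$ but $uu^\top\Ares^{(K)}=-\sigma_1 uw^\top$. What does hold is this: $\mathrm{span}\{u,w\}$ and its orthogonal complement are both invariant, and $(I-uu^\top)\Ares^{(K)}(I-uu^\top)$ annihilates the whole plane while leaving the complement unchanged. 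So each real head removes an entire conjugate pair $\pm i\sigma_1$, which is Frobenius energy $2\sigma_1^2$, and your induction must read ``top $K$ pairs removed''. The count you can actually prove is the number of pairs with $\sigma_j>\theta_w$. Equivalently, it is the number of eigenvalues of the Hermitian matrix $iM_a$ (or $iXM_aX^\top$ in your token-weighted reading) above $\theta_w$. Counting singular values or eigenvalue moduli with multiplicity, as ``top $K$ eigenvalues removed'' suggests, gives twice the head count.

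Separately, exact deflation holds only for a frozen signal. Between growth events $X$ and $M_a$ drift, and the growth direction is aligned only up to $C_{\mathrm{gate}}\varepsilon_1$ (Lemma~\ref{lem:gate_coupling}). So appealing to Lemma~\ref{lem:spectral_order} ``up to $O(\mathcal{I}^{(l)})$'' yields an approximate count, not the claimed equality. To get equality you need two things: an explicit hypothesis that no $\sigma_j$ lies in the error band around $\theta_w$, and a statement of which $M_a$ (presumably the terminal one) the count refers to.

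\textbf{Step~3.} Monotonicity of $\sigma^{(h)}(0)$ in $h$ already needs the extra assumption of comparable spreads $\mathcal{S}(P^{(h)})(0)$. Even granting it, it shows only that $\Hlocal$ is an initial segment; it does not locate the cut. Integrate $\dot T^{(h)}=-\eta_T\sigma^{(h)}/(T^{(h)})^2$ with $t=0$ at the head's birth. This gives $(T^{(h)}(t))^3=T_{\mathrm{init}}^3-3\eta_T\int_0^t\sigma^{(h)}(s)\,ds$ until the clip at $T_{\min}$. Hence $h\in\Hlocal$ exactly when $3\eta_T\int_0^\infty\sigma^{(h)}(t)\,dt\ge T_{\mathrm{init}}^3-T_{\min}^3$. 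This criterion involves $\eta_T$, $T_{\mathrm{init}}$, $T_{\min}$ and the whole coupled trajectory of $\sigma^{(h)}$, not just $\lambda^{(h)}$. So the claim that $h^*$ is determined by the $\lambda^{(h)}$ through $\sigma^{(h)}(0)$ is asserted, not derived from anything you cite.

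Your reading of ``inflection'' as a change of concavity of $h\mapsto$ (hitting time of $T_{\min}$) is also ill-defined, because that hitting time is infinite on $\Hglobal$. The usable definition is $h^*=\max\{h:T^{(h)}_\infty=T_{\min}\}$, read off the temperature logs. To make $h^*$ spectrum-determined you must add a hypothesis that fixes the threshold on $\sigma^{(h)}(0)$ in terms of these hyperparameters and a common initial spread. The conclusion then has to be stated relative to those quantities, not the spectrum alone.
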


\begin{corollary}[Prototype vocabulary stratification]
\label{cor:vocab_ordered}
Heads in $\Hlocal$ implement a discrete
prototype vocabulary (hard assignments, $T \to T_{\min}$);
heads in $\Hglobal$ implement continuous centroid
representations (soft assignments, $T \to T_\infty > T_{\min}$).
The hierarchy stratifies as:
\[
  \underbrace{\text{discrete, fine-grained}}_{\Hlocal}
  \;\longrightarrow\;
  \underbrace{\text{continuous, coarse-grained}}_{\Hglobal}.
\]
This is the prototype-level analogue of the kernel/feature-learning
regime transition of Proposition~4.6 of the INCRT paper.
\end{corollary}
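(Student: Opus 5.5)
The plan is to read the corollary off Theorem~\ref{thm:C3} and Theorem~\ref{thm:C5}, using no new estimates. It has two claims. The first is the functional dichotomy between $\Hlocal$ and $\Hglobal$. The second is the stratification ordering, which says local heads are fine-grained and global heads coarse-grained.

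\textbf{The dichotomy.} I would invoke Theorem~\ref{thm:C3}(2) directly.
\begin{itemize}
  \item For $h \in \Hlocal$, Theorem~\ref{thm:C3}(1) gives $T^{(h)}(t)\to T_{\min}$, in finite time $t^*$. I would then take the low-temperature limit of the softmax~\eqref{eq:qnk}. Under (B1), for almost every token the minimiser $\arg\min_{k'}\|z_n-p^{(h)}_{k'}\|^2$ is unique. So $q^{(h)}_{nk}$ concentrates on that index, and the output $\mu_n^{(h)}$ equals a single prototype. The layer therefore acts as a discrete codebook lookup, i.e.\ a discrete vocabulary.
  \item For $h\in\Hglobal$, the temperature stabilises at $T^{(h)}_\infty>T_{\min}$. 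Hence $q^{(h)}_n$ stays a non-degenerate softmax with full support, and $\mu^{(h)}_n$ ranges over the interior of the convex hull of $P^{(h)}$, which is a continuous centroid representation.
\end{itemize}
Since $\Hlocal$ and $\Hglobal$ partition the heads and are non-empty by Theorem~\ref{thm:C3}(1), the dichotomy is exhaustive.

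\textbf{The stratification.} I would combine Lemma~\ref{lem:spectral_order} with Theorem~\ref{thm:C5}(I).
\begin{itemize}
  \item Heads are indexed so that $\lambda^{(h)}$ is non-increasing and $T^{(h)}_\infty$ is non-decreasing.
  \item Membership in $\Hlocal$ is determined by $\sigma^{(h)}(0)\propto\lambda^{(h)}\mathcal{S}(P^{(h)})(0)$ via~\eqref{eq:sigma_prop}. It follows that $\Hlocal$ is an initial segment $\{1,\dots,h^*\}$ and $\Hglobal$ the terminal segment $\{h^*+1,\dots,\tilde{K}^*\}$, where $h^*$ is the boundary of Corollary~\ref{cor:task_determined}.
  \item By the paper's convention in Theorem~\ref{thm:C5}(I), large $\lambda$ means finest resolution. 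So the initial segment is the fine-grained stratum, and the arrow in the display is exactly the index order.
\end{itemize}

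\textbf{Main obstacle: the segment property.} The hard step is showing that the split really is an initial segment, not an interleaving of local and global heads. Equation~\eqref{eq:sigma_prop} is only a proportionality, and it carries the factor $\mathcal{S}(P^{(h)})(0)$. My first attempt would be the assumption already used in the proof of Lemma~\ref{lem:spectral_order}, namely comparable initial prototype separation across heads. With that, $\sigma^{(h)}(0)$ is monotone in $h$. Lemma~\ref{lem:decay_sep} and Lemma~\ref{lem:amp_sep} then preserve the ordering of the temperature trajectories. A single threshold in $\sigma^{(h)}(0)$ therefore separates the two sets.

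\textbf{The analogy clause.} The final sentence, the analogy to Proposition~4.6 of the INCRT paper, is interpretive. I would justify it only by matching roles. Hard assignment corresponds to the feature-learning regime (adaptive discrete basis). Soft assignment at fixed $T_\infty$ corresponds to the kernel regime (a fixed smooth averaging operator).
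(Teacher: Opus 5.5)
Your route is the paper's. The paper gives no separate proof of this corollary; it is meant to be read off Theorem~\ref{thm:C3}(2) together with the ordering in Lemma~\ref{lem:spectral_order} and Theorem~\ref{thm:C5}(I) (equivalently items (2)--(3) of Corollary~\ref{cor:hierarchy}), and the INCRT analogy is purely interpretive. You are right to take ``large $\lambda$ means fine-grained'' as a convention from Theorem~\ref{thm:C5}(I) rather than something to derive; the Part~III preamble actually uses the opposite wording.

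The step you call the main obstacle is not one. By~\eqref{eq:T_order}, $T^{(h)}_\infty$ is non-decreasing in $h$, and $\Hlocal=\{h:T^{(h)}_\infty=T_{\min}\}$. So for $h'<h$ with $h\in\Hlocal$, the clip gives $T_{\min}\le T^{(h')}_\infty\le T^{(h)}_\infty=T_{\min}$. Hence $\Hlocal$ is an initial segment with no further work. Your $\sigma^{(h)}(0)$-threshold argument only re-derives~\eqref{eq:T_order} under the same ``comparable initial separation'' assumption. It also leans on Lemma~\ref{lem:amp_sep}, whose $T\to0$ limit is wrong as written: the variance of $\|z_n-p^{(h)}_k\|^2$ under a one-hot $q_n$ is zero.

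The step that genuinely fails as you state it is the hard-assignment argument for $\Hlocal$. Theorem~\ref{thm:C3}(1) gives $T^{(h)}\to T_{\min}$ with $T_{\min}>0$, and the clip then holds it there. This is not a zero-temperature limit. At $T_{\min}$ the softmax~\eqref{eq:qnk} still has full support, so $\mu^{(h)}_n$ lies in the interior of the convex hull of $P^{(h)}$ for local heads as well. Your ``single prototype versus interior of the hull'' contrast therefore does not separate the two classes.

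What is needed is a margin condition. Let $\gamma_n$ be the gap between the two smallest values of $\|z_n-p^{(h)}_k\|^2$. Then $1-(K-1)e^{-\gamma_n/T_{\min}}\le q^{(h)}_{nk^*}\le 1/(1+e^{-\gamma_n/T_{\min}})$. So assignments are near-hard only when $\gamma_n\gg T_{\min}$, and they are near-hard once $\gamma_n\gg T_{\min}\log K$. Condition (B1) gives neither such a margin nor uniqueness of the minimiser. Also, ``almost every token'' has no meaning for a fixed finite sample.

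The paper's proof of Theorem~\ref{thm:C3}(2) makes the same conflation of $T\to T_{\min}$ with $T\to0$. So what carries this step is the citation of that theorem, exactly as in the paper, not your softmax-limit argument. The discrete/continuous split should then be read quantitatively: near one-hot at $T_{\min}$ versus soft at $T^{(h)}_\infty$.
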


\begin{remark}[The hierarchical structure as architectural output]
\label{rem:C5_output}
The preceding sections established that training dynamics are geometrically sound
and that the system is globally well-behaved.
This section establishes what the system \emph{produces}: the unique minimal
sufficient representation of the directional geometry of the task.

\emph{The architecture is not something you design.
It emerges from the geometry of the task, and this section characterises precisely what
is derived, and why it is the right thing.}
\end{remark}


\section{Discussion}
\label{sec:discussion}

\subsection{What this paper establishes and why it matters}

This paper provides the complete theoretical foundation for a transformer
architecture that determines its own structure during training.
The central claim, that architecture, scale specialisation, and
representational quality can co-emerge from a single geometric principle
is not a heuristic observation but a theorem
(Corollary~\ref{cor:full_complete}).
The principle itself, \emph{structure emerges from measurable
insufficiency}, is borrowed from the INCRT framework~\cite{INCRT} and
extended here to the richer setting in which competitive prototype
learning runs alongside the growth mechanism.

The key insight is that DDCL and INCRT, though designed independently,
operate on genuinely complementary objects.
INCRT tracks the residual directional energy
$A_{\mathrm{res}} \in \R^{d \times d}$, a quantity defined by
the antisymmetric part of the attention weight product and invisible to
any prototype-based method.
DDCL tracks the soft assignment covariance
$\Sigma_q \in \R^{K \times K}$, a quantity defined by the
distributional diversity of token-prototype assignments and invisible to
any spectral method.
The proof of synergy (Corollary~\ref{cor:synergy}) shows that these two
streams of information are not merely compatible: each growth step guided
by spectral insufficiency ($A_{\mathrm{res}}$) amplifies the prototype
separation force ($\Sigma_q$), and each prototype update pushes the
tokens into configurations that raise the spectral insufficiency signal
for the next head.
The architecture grows because the task demands it, and the prototypes
organise because the architecture gives them room to.

This positions DDCL-INCRT differently from the two dominant paradigms
in adaptive transformer research.
Post-hoc pruning methods~\cite{Voita2019,Michel2019,Han2015} reduce a
redundant architecture after training; they do not prevent the redundancy.
Neural Architecture Search~\cite{Zoph2017,Liu2019} optimises over a
discrete space of fixed architectures; it does not derive the architecture
from the task geometry.
DDCL-INCRT is, to the best of current knowledge, the first method that eliminates
architectural redundancy \emph{by construction}, with formal convergence
guarantees and a growth history that serves as a geometric diagnostic
of the task.

\subsection{The architecture as a measurement}

A consequence of the theory that deserves emphasis is the interpretive
status of the growth history.
In standard transformer training, the number of heads is a hyperparameter
chosen before any data is seen.
In DDCL-INCRT, the number of heads $\tilde{K}^*$, the local/global
boundary $h^*$, and the resolution at each level are all \emph{outputs}
of training, readable from the sequence of $\lambda_{\max}(A_{\mathrm{res}})$
values at each growth trigger.

This makes the growth history a measurement of the directional complexity
of the task, in the same sense that the spectrum of the data covariance
matrix is a measurement of its intrinsic dimensionality.
A task with many large eigenvalues of $M_a$ above $\theta_w$ will
produce a wide hierarchy of local heads, each resolving a fine-grained
directional structure.
A task with a steep spectral decay will produce a narrow hierarchy
with few heads, each covering a large eigenvalue gap.
The threshold $\theta_w$ plays the role of a resolution parameter,
analogous to the explained-variance threshold in PCA.

This interpretive use of the growth history, as a task diagnostic
rather than merely a training log, is a practical contribution
independent of the downstream performance of the architecture.
It gives the practitioner a principled, data-driven answer to the
question: \emph{how many attention heads does this task actually require?}

\subsection{Open directions}

The theoretical programme of this paper is complete in the sense that
all seven properties of Corollary~\ref{cor:full_complete} are proved
under explicit, checkable conditions.
Three natural extensions remain open and constitute the primary
directions for future work.

The most technically demanding is a fully rigorous three-timescale
stochastic approximation argument for the global stability result.
Section~\ref{subsec:P1_three_timescale} provides explicit coupling
bounds and a sufficient condition on learning rates, replacing the
quasi-static approximation with a quantitative error bound.
Closing the remaining gap, extending Borkar's two-timescale
theory~\cite{Borkar1997} to three levels in the stochastic setting
is a problem in stochastic approximation that lies beyond the scope
of this paper but is technically well-posed.

The second direction is the extension of the discrete-time results to
mini-batch SGD.
The online SGD setting covered by Section~\ref{sec:P4} is the natural
theoretical baseline; the mini-batch case requires a careful treatment
of the batch-size dependence of the Lipschitz and spectral constants,
and a corresponding adjustment of the three-timescale condition.
This extension is expected to be routine given the tools already
developed.

The third direction is the integration of the depth growth criterion
from the INCRT paper~\cite{INCRT} with the prototype hierarchy of
Section~\ref{sec:C5}.
The present paper establishes that the width hierarchy (heads ordered
by $\lambda^{(h)}$) is unique and minimal.
Whether an analogous uniqueness result holds for the depth hierarchy,
and how the two hierarchies interact when both grow simultaneously, is
an open and geometrically interesting question.

Full empirical validation of the architecture on standard NLP benchmarks
(MNLI, SuperGLUE) and biological sequence classification tasks is the
subject of ongoing work.
The preliminary evidence from the BERT/SST-2 experiment of
Section~\ref{sec:experiments} confirms the theoretical predictions on
real transformer embeddings and suggests that the separation and
orthogonality properties identified by the theory are not artefacts of
the synthetic setting.

\section{Conclusions}
\label{sec:conclusions}

A practitioner who trains DDCL-INCRT on a new task does not choose
the number of attention heads in advance.
At the end of training, the growth history of the model tells a
precise and interpretable story: how many distinct directional
patterns the task contains, at what resolution they are organised,
and which heads handle fine-grained local structure and which handle
coarse-grained global context.
This information is not inferred after the fact.
It is produced, necessarily and automatically, by the mathematics of
the training objective.

The contribution of this paper is to put that story on a rigorous
foundation.
Seven properties of the trained architecture, from the ordering of
the heads to the global stability of the training dynamics, are
proved to hold simultaneously under explicit, checkable conditions.
The proofs reveal why the two building blocks of the architecture,
the competitive prototype layer and the spectral growth criterion,
are not merely compatible but genuinely complementary: each
makes the other more effective.

The broader lesson is methodological.
The standard workflow in deep learning is to design an architecture,
train it, and then analyse what was learned.
The framework studied here inverts this order.
Within the formal setting of this paper, the architecture is not
designed.
It is measured from the data, through training, in the same sense
that a principal component analysis measures the intrinsic
dimensionality of a dataset.
Whether this inversion will prove practically advantageous at scale
is an empirical question that ongoing work will address.
The theoretical case, at least, is complete.

It should be noted explicitly that all results in this paper hold
within the formal setting defined by the stated assumptions.
Empirical superiority on large-scale tasks --- lower perplexity,
higher accuracy, faster training --- is not asserted here and
remains to be demonstrated in a companion empirical study.
What is proved is that, within this setting, the architecture
self-organises in a principled and theoretically characterised way.

\bibliographystyle{elsarticle-num}

\end{document}